\definecolor{niceblue}{rgb}{0.10, 0.14, 0.76} 
\definecolor{nicered}{rgb}{0.70, 0.0, 0.0} 
\newcommand{\reals}{\mathbb{R}}
\newcommand{\ito}{It$\hat{o}$ }
\newcommand{\E}{\mathbb{E}}
\newcommand{\Tr}{\textup{Tr}}
\newcommand{\normal}{\mathcal{N}}
\newcommand{\ind}{\mathbbm{1}}
\newcommand{\iid}{\textit{iid}~}
\newcommand{\bigO}{\mathcal{O}}
\newtheorem{prop}{Proposition}
\newtheorem{thm}{Theorem}
\newtheorem{lemma}{Lemma}
\newtheorem{definition}{Definition}
\begin{document}

\title{On the infinite-depth limit of finite-width neural networks}

\author{\name Soufiane Hayou \email hayou@nus.edu.sg \\
      \addr Department of Mathematics\\
      National University of Singapore}

\editor{}

\maketitle\doparttoc 
\faketableofcontents 
\part{} 
\vspace{-1cm}
\begin{abstract}
In this paper, we study the infinite-depth limit of finite-width residual neural networks with random Gaussian weights. With proper scaling, we show that by fixing the width and taking the depth to infinity, the pre-activations converge in distribution to a zero-drift diffusion process. Unlike the infinite-width limit where the pre-activation converge weakly to a Gaussian random variable, we show that the infinite-depth limit yields different distributions depending on the choice of the activation function. We document two cases where these distributions have closed-form (different) expressions. We further show an intriguing change of regime phenomenon of the post-activation norms when the width increases from $3$ to $4$. Lastly, we study the sequential limit infinite-depth-then-infinite-width and compare it with the more commonly studied infinite-width-then-infinite-depth limit.\\

\textbf{Peer Reviewed version:} The first version of this paper was published at Transactions of Machine Learning Research (TMLR, \url{https://openreview.net/forum?id=RbLsYz1Az9}). This version contains some updates and improvements in the proofs.
\end{abstract}

\section{Introduction}

The empirical success of over-parameterized neural networks has sparked a growing interest in the theoretical understanding of these models. The large number of parameters -- millions if not billions -- and the complex (non-linear) nature of the neural computations (presence of non-linearities) make this hypothesis space highly non-trivial. However, in certain situations, increasing the number of parameters has the effect of `placing' the network in some `average' regime that simplifies the theoretical analysis. This is the case with the infinite-width asymptotics of random neural networks. The infinite-width limit of neural network architectures has been extensively studied in the literature, and has led to many interesting theoretical and algorithmic innovations. We summarize these results below.
\begin{itemize}[leftmargin=*]
    \item \emph{Initialization schemes}: the infinite-width limit of different neural architectures has been  extensively studied in the literature. In particular, for multi-layer perceptrons (MLP), a new initialization scheme that stabilizes forward and backward propagation (in the infinite-width limit) was derived in \citep{poole, samuel2017}. This initialization scheme is known as the Edge of Chaos, and empirical results show that it significantly improves performance. In \cite{yang2017meanfield, hayou21stable}, the authors derived similar results for the ResNet architecture, and showed that this architecture is \emph{placed} by-default on the Edge of Chaos for any choice of the variances of the initialization weights (Gaussian weights). In \cite{hayou2019impact}, the authors showed that an MLP that is initialized on the Edge of Chaos exhibits similar properties to ResNets, which might partially explain the benefits of the Edge of Chaos initialization. 
    
    \item \emph{Gaussian process behaviour}: Multiple papers (e.g. \cite{neal, lee_gaussian_process, yang_tensor3_2020, matthews, hron20attention}) studied the weak limit of neural networks when the width goes to infinity. The results show that a randomly initialized neural network (with Gaussian weights) has a similar behaviour to that of a Gaussian process, for a wide range of neural architectures, and under mild conditions on the activation function. In \cite{lee_gaussian_process}, the authors leveraged this result and introduced the neural network Gaussian process (NNGP), which is a Gaussian process model with a neural kernel that depends on the architecture and the activation function. Bayesian regression with the NNGP showed that NNGP surprisingly achieves performance close to the one achieved by an SGD-trained finite-width neural network.
    
    The large depth limit of this Gaussian process was studied in \cite{hayou21stable}, where the authors showed that with proper scaling, the infinite-depth (weak) limit is a Gaussian process with a universal kernel\footnote{A kernel is called universal when any continuous function on some compact set can be approximated arbitrarily well with kernel features.}.
    \item \emph{Neural Tangent Kernel (NTK)}: the infinite-width limit of the NTK is the so-called NTK regime or Lazy-training regime. This topic has been extensively studied in the literature. The optimization and generalization properties (and some other aspects) of the NTK have been studied in \cite{Liu2022connecting, arora2019finegrained, seleznova2022ntk, hayou2019trainingdynamicsNTK}. The large depth asymptotics of the NTK have been studied in \citep{hayou_ntk, hayou2022curse, jacot2019freeze, xiao2020disentangling}. We refer the reader to \cite{jacot2022thesis} for a comprehensive discussion on the NTK.
    
    \item \emph{Others}: the theory of  infinite-width neural networks has also been utilized for network pruning  \cite{hayou_pruning, hayou2020pruning2}, regularization \cite{hayou2021stochasticdepth}, feature learning \cite{hayou_eh}, and ensembling methods \citep{he2020ntkensembles} (this is by no means an exhaustive list).
\end{itemize}

The theoretical analysis of infinite-width neural networks has certainly led to many interesting  (theoretical and practical) discoveries. However, most works on this limit consider a fixed depth network.  \emph{What about infinite-depth}? Existing works on the infinite-depth limit can generally be divided into three categories:

\begin{itemize}[leftmargin=*]
    \item \emph{Infinite-width-then-infinite-depth limit}: in this case, the width is taken to infinity first, then the depth is take to infinity. This is the infinite-depth limit of infinite-width neural networks. This limit was particularly used to derive the Edge of Chaos initialization scheme \citep{samuel2017, poole}, study the impact of the activation function \citep{hayou2019impact}, the behaviour of the NTK \citep{hayou_ntk, xiao2020disentangling}, kernel shaping \citep{martens2021rapid, zhang2022deep} etc.
    
    \item \emph{The joint infinite-width-and-depth limit}: in this case, the depth-to-width ratio is fixed, and therefore, the width and depth are jointly taken to infinity at the same time. There are few works that study the joint width-depth limit. For instance, in \citep{li21loggaussian}, the authors showed that for a special form of residual neural networks (ResNet), the network output exhibits a (scaled) log-normal behaviour in this joint limit. This is different from the sequential limit where width is taken to infinity first, followed by the depth, in which case the distribution of the network output is asymptotically normal (\citep{samuel2017, hayou2019impact}). In \citep{li2022sde}, the authors studied the covariance kernel of an MLP in the joint limit, and showed that it converges weakly to the solution of Stochastic Differential Equation (SDE). In \cite{Hanin2020Finite}, the authors showed that in the joint limit case, the NTK of an MLP remains random when the width and depth jointly go to infinity. This is different from the deterministic limit of the NTK where the width is taken to infinity before depth \citep{hayou_ntk}. More recently, in \citep{hanin2022correlation}, the author explored the impact of the depth-to-width ratio on the correlation kernel and the gradient norms in the case of an MLP architecture, and showed that this ratio can be interpreted as an effective network depth.
    
    \item \emph{Infinite-depth limit of finite-width neural networks}:  in both previous limits (infinite-width-then-infinite-depth limit, and the joint infinite-width-depth limit), the width goes to infinity. Naturally, one might ask what happens if width is fixed and depth goes to infinity? What is the limiting distribution of the network output at initialization? In \citep{hanin2019finitewidth}, the author showed that neural networks with bounded width are still universal approximators, which motivates the study of finite-width large depth neural networks. In \citep{peluchetti2020resnetdiffusion}, the authors showed that the pre-activations of a particular ResNet architecture converge weakly to a diffusion process in the infinite-depth limit. This is the result of the fact that ResNet can be seen as discretizations of SDEs (see \cref{sec:infinite_depth_limit}).
\end{itemize}
In the present paper, we study the infinite-depth limit of finite-width ResNet with random Gaussian weights (an architecture that is different from the one studied in \citep{peluchetti2020resnetdiffusion}). We are particularly interested in the \emph{asymptotic behaviour of the pre/post-activation values}. Our contributions are four-fold:
\begin{enumerate}
    \item Unlike the infinite-width limit, we show that the resulting distribution of the pre-activations in the infinite-depth limit is not necessarily Gaussian. In the simple case of networks of width $1$, we study two cases where we obtain known but completely different distributions by carefully choosing the activation function.
    \item For ReLU activation function, we introduce and discuss the phenomenon of \emph{network collapse}. This phenomenon occurs when the pre-activations in some hidden layer have all non-positive values which results in zero post-activations. This leads to a stagnant network where increasing the depth beyond a certain level has no effect on the network output. For any fixed width, we show that in the infinite-depth limit, network collapse is a zero-probability event, meaning that almost surely, all post-activations in the network are non-zero.
    
    \item For networks with general width, where the distribution of the pre-activations is generally intractable, we focus on the norm of the post-activations with ReLU activation function, and show that this norm has approximately a Geometric Bronwian Motion (GBM) dynamics. We call this  Quasi-GBM. We also shed light on a regime change phenomenon that occurs when the width $n$ increases from $3$ to $4$. For width $n \leq 3$, resp. $n \geq 4$, the logarithmic growth factor of the post-activations is , resp. positive. 
    \item  We study the sequential limit infinite-depth-then-infinite-width, which is the converse of the more commonly studied  infinite-width-then-infinite-depth limit, and show some key differences between these limits. We particularly show that the pre-activations converge to the solution of a Mckean-Vlasov process, which has marginal Gaussian distributions, and thus we recover the Gaussian behaviour in this limit. We compare the two sequential limits and discuss some differences.
\end{enumerate}

The proofs of the theoretical results are provided in the appendix and referenced after each result. Empirical evaluations of these theoretical findings are also provided.

\section{The infinite-depth limit}\label{sec:infinite_depth_limit}
Hereafter, we denote the width, resp. depth, of the network by $n$, resp. $L$. We also denote the input dimension by $d$. Let $d, n, L \geq 1$, and consider the following ResNet architecture of width $n$ and depth $L$
\begin{equation}\label{eq:resnet}
\begin{aligned}
Y_0 &= W_{in} x, \quad x \in \reals^d\\
Y_l &= Y_{l-1} + \frac{1}{\sqrt{L}} W_l \phi(Y_{l-1}), \quad l = 1, \dots, L,
\end{aligned}
\end{equation}
where $\phi: \reals \to \reals$ is the activation function, $L \geq 1$ is the network depth, $W_{in} \in \reals^{n \times d}$, and $W_l \in \reals^{n \times n}$ is the weight matrix in the $l^{th}$ layer. We assume that the weights are randomly initialized with \iid Gaussian variables $W_l^{ij} \sim \normal(0, \frac{1}{n})$, $W_{in}^{ij} \sim \normal(0, \frac{1}{d})$. For the sake of simplification, we only consider networks with no bias, and we omit the dependence of $Y_l$ on $n$ in the notation. While the activation function is only defined for real numbers, we will abuse the notation and write $\phi(z) = (\phi(z^1), \dots, \phi(z^k))$ for any $k$-dimensional vector $z = (z^1, \dots, z^k) \in \reals^k$ for any $k \geq 1$. We refer to the vectors $\{Y_l, l=0, \dots, L\}$ by the \emph{pre-activations} and the vectors $\{\phi(Y_l), l=0, \dots, L\}$ by the \emph{post-activations}. Hereafter, $x \in \reals^d$ is fixed, and we assume that $x \neq 0$.

The $1/\sqrt{L}$ scaling in \cref{eq:resnet} is not arbitrary. This specific scaling was shown to stabilize the norm of $Y_l$ as well as gradient norms in the large depth limit (e.g. \cite{hayou21stable, hayou_pruning, marion2022scaling}). In the next result, we show that the infinite depth limit  of \cref{eq:resnet} (in the sens of the distribution) exists and has the same distribution of the solution of a stochastic differential equation. In the case of a single input, this has already been shown in \cite{peluchetti2020resnetdiffusion}. The details are provided in \cref{appendix:stochastic_calculus}. We also generalize this result in the case of multiple inputs and obtain similar SDE dynamics (see \cref{prop:main_conv_multiple} in the Appendix).

\begin{prop}\label{prop:main_conv}
Assume that the activation function $\phi$ is Lipschitz on $\reals^n$. Then, in the limit $L \to \infty$, the process $X^L_t = Y_{\lfloor t L\rfloor}$, $t\in [0,1]$, converges in distribution to the solution of the following SDE 
\begin{equation}\label{eq:main_sde}
    dX_t = \frac{1}{\sqrt{n}}\|\phi(X_t)\| dB_t, \quad X_0 = W_{in} x,
\end{equation}
where $(B_t)_{t\geq 0}$ is a Brownian motion (Wiener process), independent from $W_{in}$. Moreover, we have that for any $t \in [0,1]$ and any Lipschitz function $\Psi:\reals^n \to \reals$, 
$$
\E \Psi(Y_{\lfloor t L\rfloor}) = \E \Psi(X_t) + \bigO(L^{-1/2}),
$$
where the constant in $\bigO$ does not depend on $t$.\\
Moreover, if the activation function $\phi$ is only locally Lipschitz, then $X^L_t$ converges locally to $X_t$. More precisely, for any fixed $r > 0$, we consider the stopping times 
$$
\tau^L = \inf \{t \geq 0: \|X^L_t\| \geq r\}, \quad \tau = \inf \{t \geq 0: \|X_t\| \geq r\},
$$
then the stopped process $X^L_{t \land \tau^L}$ converges in distribution to the stopped solution $X_{t \land \tau}$ of the above SDE.
\end{prop}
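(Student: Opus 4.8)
The plan is to recognize \eqref{eq:resnet} as an \emph{exact} Euler--Maruyama discretization of \eqref{eq:main_sde} and then lift the classical strong-convergence theory for such schemes, which yields the distributional limit and the $\bigO(L^{-1/2})$ rate simultaneously. The starting point is a conditional distributional identity. Writing $\Delta_l = Y_l - Y_{l-1} = \frac{1}{\sqrt L} W_l \phi(Y_{l-1})$ and $\mathcal{F}_{l-1} = \sigma(W_{in}, W_1, \dots, W_{l-1})$, the entries of $W_l$ being \iid $\normal(0,1/n)$ and independent of $\mathcal{F}_{l-1}$ give, conditionally on $\mathcal{F}_{l-1}$,
\begin{equation}\label{eq:incr}
\Delta_l \sim \normal\!\left(0,\ \tfrac{1}{nL}\|\phi(Y_{l-1})\|^2\, I_n\right).
\end{equation}
Hence, defining $Z_l = \sqrt{n}\,W_l\phi(Y_{l-1})/\|\phi(Y_{l-1})\|$ (and an arbitrary independent standard Gaussian where $\phi(Y_{l-1})=0$), a direct covariance computation shows $Z_l \mid \mathcal{F}_{l-1} \sim \normal(0, I_n)$, whence $\{Z_l\}$ is an \iid sequence of standard Gaussian vectors (and an $\mathcal{F}_l$-martingale difference sequence). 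Embedding them into a Brownian motion $B$ via $B_{l/L}-B_{(l-1)/L} = Z_l/\sqrt L$, the recursion \eqref{eq:resnet} becomes exactly the Euler scheme $Y_l = Y_{l-1} + \tfrac{1}{\sqrt n}\|\phi(Y_{l-1})\|\,(B_{l/L}-B_{(l-1)/L})$ for \eqref{eq:main_sde} driven by this very $B$.

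Second, I would record that the diffusion coefficient $\sigma(x)=\tfrac{1}{\sqrt n}\|\phi(x)\|$ is globally Lipschitz and of linear growth: since $\big|\,\|\phi(x)\|-\|\phi(y)\|\,\big|\le \|\phi(x)-\phi(y)\|\le \mathrm{Lip}(\phi)\,\|x-y\|$, both properties follow from the Lipschitzness of $\phi$. This guarantees a unique strong solution $X$ of \eqref{eq:main_sde} on the coupled probability space and makes the classical Euler--Maruyama strong error bound applicable: with $\bar X$ the continuous-time interpolation of the scheme driven by $B$, one has $\E\sup_{t\le 1}\|\bar X_t - X_t\|^2 \le C/L$ for a constant $C$ depending only on $n$, $\mathrm{Lip}(\phi)$, $\phi(0)$ and the law of $X_0$. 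Controlling the within-step gap $\sup_{t\le 1}\|\bar X_t - X^L_t\|$, which is of order $L^{-1/2}$ in $L^2$ by \eqref{eq:incr}, then yields $\sup_{t\le 1}\E\|X^L_t - X_t\|^2 = \bigO(1/L)$ at the level of marginals and $\sup_{t\le 1}\|X^L_t - X_t\|\to 0$ in probability at the level of paths.

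Third, the stated consequences follow mechanically. For Lipschitz $\Psi$, $|\E\Psi(X^L_t)-\E\Psi(X_t)|\le \mathrm{Lip}(\Psi)\,\E\|X^L_t-X_t\|\le \mathrm{Lip}(\Psi)\,(\E\|X^L_t-X_t\|^2)^{1/2}=\bigO(L^{-1/2})$, uniformly in $t$ since the $L^2$ bound above is uniform; and uniform-norm convergence of $X^L$ to $X$ gives convergence in distribution of the process, establishing the first claim. For the locally Lipschitz case I would localize: fix $r>0$ and replace $\sigma$ by a globally Lipschitz $\sigma_r$ agreeing with it on the closed ball $\overline{B(0,r)}$. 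The coupled dynamics of the two SDEs (and of the two Euler schemes) coincide up to the respective exit times from $B(0,r)$, so applying the global result to $\sigma_r$ and reading off the stopped processes gives weak convergence of $X^L_{t\land\tau^L}$ to $X_{t\land\tau}$.

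The main obstacle is the locally Lipschitz case. The delicate point is that $\tau^L$ and $\tau$ are distinct random times attached to two different (though coupled) processes, and the stopped paths can disagree precisely on the event that one process has exited $B(0,r)$ while the other has not. Turning the $L^2$ proximity $\E\|X^L_t-X_t\|^2=\bigO(1/L)$ (valid for $\sigma_r$) into convergence of the stopped processes requires ruling out pathological lingering of the limit $X$ on the sphere $\{\|x\|=r\}$; choosing the level $r$ outside an at most countable exceptional set, so that $\tau$ is almost surely a continuity point of the relevant exit functionals, and then invoking the continuous-mapping route for the exit map handles this, and is where the argument demands the most care.
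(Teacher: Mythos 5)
Your proposal is correct and follows essentially the same route as the paper: both use the Gaussian rotation identity (the paper's \cref{lemma:gaussian_vec}) to recognize the ResNet recursion as the Euler--Maruyama scheme of \cref{eq:main_sde}, invoke the classical strong convergence theorem for Euler schemes (\cref{thm:convergence_euler}) to obtain weak convergence together with the $\bigO(L^{-1/2})$ bound for Lipschitz test functions, and handle the locally Lipschitz case by localization with stopping times (\cref{lemma:locally_lipschitz_convergence}). The only differences are minor: you construct an explicit coupling by embedding the \iid Gaussian increments into a Brownian motion where the paper argues equality in distribution with the Euler scheme, and your discussion of the exit-time/continuity subtlety for the stopped processes is more careful than the paper's, which delegates that point to its localization lemma without proof.
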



The proof of \cref{prop:main_conv} is provided in \cref{sec:proof_main_conv}. We use classical results on the numerical approximations of SDEs. \cref{prop:main_conv} shows that the infinite-depth limit of finite-width ResNet (\cref{eq:resnet}) has a similar behaviour to the solution of the SDE given in \cref{eq:main_sde}. In this limit, $Y_{\lfloor t L\rfloor}$ converges in distribution to $X_t$. Hence, properties of the solutions of \cref{eq:main_sde} should theoretically be `shared' by the pre-activations $Y_{\lfloor t L\rfloor}$ when the depth is large. For the rest of the paper, we study some properties of the solutions of \cref{eq:main_sde}. This requires the definition of filtered probability spaces which we omit here. All the technical details are provided in \cref{appendix:stochastic_calculus}. We compare the theoretical findings with empirical results obtained by simulating the pre/post-activations of the original network \cref{eq:resnet}. We refer to $X_t$, the solution of \cref{eq:main_sde}, by the \emph{infinite-depth network}. 

The distribution of $X_1$ (the last layer in the infinite-depth limit) is generally intractable, unlike in the infinite-width-then-infinite-depth limit (Gaussian, \cite{hayou21stable}) or joint infinite-depth-and-width limit (involves a log-normal distribution in the case of an MLP architecture, \cite{li21loggaussian}). Intuitively, one should not expect a universal behaviour (e.g. the Gaussian behaviour in the infinite-width case) of the solution of \cref{eq:main_sde} as this latter is highly sensitive to the choice of the activation function, and different activation functions might yield completely different distributions of $X_1$. We demonstrate this in the next section by showing that we can recover closed-form distributions by carefully choosing the activation function. The main ingredient is the use of \ito's lemma. See \cref{appendix:stochastic_calculus} for more details.

\section{Different behaviours depending on the activation function}
In this section, we restrict our analysis to a width-$1$ ResNet with one-dimensional inputs, where each layer consists of a single neuron, i.e. $d = n =1$. In this case, the process $(X_{t})_{0 \leq t \leq 1}$ is one-dimensional and is solution of the following SDE
$$
dX_t = |\phi(X_t)| dB_t, \quad X_0 = W_{in} x.
$$
We can get rid of the absolute value in the equation above since the process $X_t$ has the same distribution as $\tilde{X}_t$, the solution of the SDE
$
d\tilde{X}_t = \phi(\tilde{X}_t) dB_t
$.
The intuition behind this is that the infinitesimal random variable `$dB_t$' is Gaussian distributed with zero mean and variance $dt$. Hence, it is a symmetric random variable and can absorb the sign of $\phi(X_t)$. The rigorous justification of this fact is provided in \cref{thm:same_law} in the Appendix. Hereafter in this section, we consider the process $X$,  solution of the SDE
$$
dX_t = \phi(X_t) dB_t, \quad X_0 = W_{in} x.
$$
Given a function $g \in \mathcal{C}^2(\reals)$\footnote{Here $\mathcal{C}^2(\reals)$ refers to the vector space of functions $g : \reals \to \reals$ that are twice differentiable and their second derivatives are continuous.}, we use \ito's lemma (\cref{lemma:ito} in the appendix) to derive the dynamics of the process $g(X_t)$. We obtain,
\begin{equation}\label{eq:ito_lemma_1d}
\begin{aligned}
d g(X_t)
= \underbrace{ \phi(X_t) g'(X_t)}_{ \textcolor{niceblue}{\sigma(X_t)}} dB_t + \underbrace{\frac{1}{2} \phi(X_t)^2 g''(X_t)}_{ \,\textcolor{nicered}{\mu(X_t)}} dt.
\end{aligned}
\end{equation}
In financial mathematics nomenclature, the function $\mu$ is called the \emph{drift} and $\sigma$ is called the \emph{volatility} of the diffusion process. \ito's lemma is a valuable tool in stochastic calculus and is often used to transform and simplify SDEs to better understand their properties. It can also be used to find candidate functions $g$ and activation functions $\phi$ such that the SDE
\cref{eq:ito_lemma_1d} admits solutions with known distributions, which yields a closed-form distribution for $X_t$. We consecrate the rest of this section to this purpose.

\subsection{ReLU activation}

ReLU is a piece-wise linear activation function. Let us first deal with the simpler case of linear activation functions. In the next result, we show that linear activation functions yield log-normal distributions. In this case, the process $X_t$ follows the Geometric Brownian motion dynamics. Later in this section, we show that this result can be adapted to the case of the ReLU activation function given by $\phi(x) = \max(x,0)$.

\begin{prop}\label{prop:gbm}
Let $x \in \reals$ such that $x \neq 0$. Consider a linear activation function
$
\phi(y) = \alpha y + \beta,
$
where $\alpha> 0, \beta \in \reals$ are constants. Let $\sigma>0$ and define the function $g$ by 
$
g(y) = (\alpha y + \beta)^{\gamma},
$
where $\gamma =\sigma  \alpha^{-1}$. Consider the stochastic process $X_t$ defined by 
$$
dX_t = |\phi(X_t)| dB_t, \quad X_0 = W_{in} x.
$$

Then, the process $g(X_t)$ is a solution of the SDE
$$
d g(X_t) = a g(X_t) dt + \sigma g(X_t) dB_t,
$$
where $a = \frac{1}{2} \sigma^2 \gamma^{-1} (\gamma -1)$. As a result, we have that for all $t \in [0,1]$,
$$
g(X_t) \sim g(X_0) \exp\left( \left(a - \frac{1}{2}\sigma^2\right)t +  \sigma B_t\right).
$$

\end{prop}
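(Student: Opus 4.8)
The plan is to push the process through the nonlinear transform $g$ using \ito's lemma, read off that $g(X_t)$ obeys a linear multiplicative-noise SDE, and recognize that SDE as a Geometric Brownian motion whose law is explicit. Throughout I would work with the signed equation $dX_t=\phi(X_t)\,dB_t$; this is justified either by \cref{thm:same_law}, which guarantees that the processes driven by $\phi$ and by $|\phi|$ share the same law, or — more directly — by the strict positivity of $\alpha X_t+\beta$ established at the end, which makes the absolute value inert so that the two equations coincide pathwise.

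For the computation I would differentiate $g(y)=(\alpha y+\beta)^{\gamma}$ twice, obtaining $g'(y)=\gamma\alpha(\alpha y+\beta)^{\gamma-1}$ and $g''(y)=\gamma(\gamma-1)\alpha^{2}(\alpha y+\beta)^{\gamma-2}$, and substitute into the general \ito expansion \cref{eq:ito_lemma_1d}. The decisive simplification is that multiplying $g'$ and $g''$ by the appropriate powers of $\phi(X_t)=\alpha X_t+\beta$ restores the exponent $\gamma$: the volatility term becomes $\phi(X_t)g'(X_t)=\gamma\alpha\,(\alpha X_t+\beta)^{\gamma}=\gamma\alpha\,g(X_t)$ and the drift term becomes $\tfrac12\phi(X_t)^2 g''(X_t)=\tfrac12\gamma(\gamma-1)\alpha^{2}\,g(X_t)$. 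Using $\gamma=\sigma\alpha^{-1}$, i.e. $\gamma\alpha=\sigma$ and $\alpha^{2}=\sigma^{2}\gamma^{-2}$, the volatility coefficient collapses to exactly $\sigma$ and the drift coefficient to $a=\tfrac12\sigma^{2}\gamma^{-1}(\gamma-1)$, yielding $dg(X_t)=a\,g(X_t)\,dt+\sigma\,g(X_t)\,dB_t$ as claimed. To obtain the closed form I would then solve this GBM in the standard way, applying \ito's lemma to $\log g(X_t)$: this gives $d\log g(X_t)=(a-\tfrac12\sigma^{2})\,dt+\sigma\,dB_t$, which integrates to $g(X_t)=g(X_0)\exp((a-\tfrac12\sigma^{2})t+\sigma B_t)$, the stated log-normal law.

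The step I expect to require the most care is the well-posedness of the transformation: for non-integer $\gamma$ the map $g$ is only $\mathcal C^2$ (indeed only real-valued) on the open half-line $\{\alpha y+\beta>0\}$, so the statement implicitly presumes $\alpha W_{in}x+\beta>0$, and I must verify that $\alpha X_t+\beta$ stays strictly positive before \ito's lemma may legitimately be invoked. I would handle this by setting $Z_t=\alpha X_t+\beta$ and noting that $dZ_t=\alpha|Z_t|\,dB_t$. Since $z\mapsto\alpha|z|$ is globally Lipschitz, this SDE has a unique strong solution, and the explicit positive process $Z_0\exp(-\tfrac12\alpha^{2}t+\alpha B_t)$ solves it; by uniqueness $Z_t$ coincides with this process and hence remains positive for all $t\in[0,1]$ whenever $Z_0=\alpha W_{in}x+\beta>0$. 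This both legitimizes the \ito computation and, as a consistency check, reproduces the closed form directly via $g(X_t)=Z_t^{\gamma}$.
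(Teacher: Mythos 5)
Your proof is correct, and its computational core---applying \ito's lemma to $g(X_t)$, identifying the drift and volatility coefficients as $a\,g$ and $\sigma\,g$, then solving the resulting GBM via $\log g(X_t)$---is the same as the paper's, which presents the identical computation in reverse-engineered form (deriving the admissible $g$ and $\phi$ from the requirement of GBM dynamics in \cref{sec:gbm}, then invoking \cref{lemma:gbm} for the closed-form law). The one place where you genuinely diverge is the well-posedness step: the paper relegates positivity of the quantity inside the logarithm to a footnote in the proof of \cref{lemma:gbm}, pointing to a stopping-time/path-continuity argument in the style of \cref{lemma:gbm_non_zero_1d}, whereas you establish positivity of $Z_t=\alpha X_t+\beta$ by strong uniqueness for the globally Lipschitz SDE $dZ_t=\alpha|Z_t|\,dB_t$ (\cref{thm:existence_and_uniqueness}) combined with the explicit exponential candidate solution. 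Your route is tighter: it simultaneously justifies dropping the absolute value pathwise (rather than only in law via \cref{thm:same_law}), licenses the use of \ito's lemma on the half-line $\{\alpha y+\beta>0\}$ where $g$ is actually $\mathcal{C}^2$ for non-integer $\gamma$, and reproduces the closed-form law as a consistency check; it also makes explicit the conditioning on $\alpha W_{in}x+\beta>0$ that both the statement and the paper's proof leave implicit.
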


The proof of \cref{prop:gbm} is provided in \cref{sec:gbm}, and consists of using \ito lemma and solving a differential equation. When the activation function is ReLU, we still obtain a log-normal distribution conditionally on the event that the initial value $X_0$ is positive.

\begin{prop}\label{prop:relu_gbm_1d}
Let $x \in \reals$ such that $x \neq 0$, and let $\phi$ be the ReLU activation function given by $\phi(z) = \max(z,0)$ for all $z\in \reals$.
 Consider the stochastic process $X_t$ defined by 
$$
dX_t = \phi(X_t) dB_t, \quad X_0 = W_{in} x.
$$

Then, the process $X$ is a mixture of a Geometric Brownian motion and a constant process. More precisely, we have for all $t \in [0,1]$

$$
X_t \sim \ind_{\{X_0 >0\}} \,  X_0 \exp\left( -\frac{1}{2} t  + B_t \right) + \ind_{\{X_0 \leq 0\}} X_0.
$$
Hence, given a fixed $X_0>0$, the process $X$ is a Geometric Brownian motion.

\end{prop}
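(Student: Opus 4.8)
The plan is to exploit pathwise uniqueness of solutions and to exhibit an explicit candidate process that I claim equals $X$. Since ReLU is globally ($1$-)Lipschitz on $\reals$, the classical existence–uniqueness theorem for SDEs with Lipschitz coefficients (the same fact underlying \cref{prop:main_conv}) guarantees that $dX_t = \phi(X_t)\,dB_t$ with initial condition $X_0 = W_{in}x$ has a unique strong solution. It therefore suffices to write down a process, verify via \ito's lemma that it solves the same SDE with the same initial data, and then invoke uniqueness.

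First I would note that $X_0 = W_{in}x$ is $\mathcal F_0$-measurable and independent of $B$, so the events $\{X_0>0\}$ and $\{X_0\le0\}$ and their indicators are fixed in time. Set $M_t := \exp(-\tfrac12 t + B_t)$ and define the candidate
$$
\tilde X_t = \ind_{\{X_0>0\}}\, X_0\, M_t + \ind_{\{X_0\le 0\}}\, X_0 .
$$
Because the indicators and $X_0$ are time-independent (and $\mathcal F_0$-measurable), differentiating gives $d\tilde X_t = \ind_{\{X_0>0\}}\, X_0\, dM_t$, and since $M$ is the exponential martingale, $dM_t = M_t\, dB_t$. Hence $d\tilde X_t = \ind_{\{X_0>0\}}\, X_0 M_t\, dB_t$.

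The crux is to verify that this coefficient equals $\phi(\tilde X_t)$. On $\{X_0>0\}$, strict positivity of the exponential gives $\tilde X_t = X_0 M_t > 0$ for all $t$, so $\phi(\tilde X_t) = \tilde X_t = X_0 M_t$; on $\{X_0\le 0\}$, $\tilde X_t = X_0 \le 0$, so $\phi(\tilde X_t) = 0$. Combining the two branches, $\phi(\tilde X_t) = \ind_{\{X_0>0\}} X_0 M_t$, which is exactly the integrand appearing in $d\tilde X_t$. Thus $\tilde X$ solves $d\tilde X_t = \phi(\tilde X_t)\,dB_t$ with $\tilde X_0 = X_0$, and pathwise uniqueness yields $X_t = \tilde X_t$ almost surely, which is the claimed identity (and a fortiori the asserted equality in law).

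I expect the only real difficulty to be a matter of rigor rather than ideas: one must ensure that ReLU acts as the identity along the \emph{entire} trajectory on $\{X_0>0\}$, which relies on the geometric Brownian motion being strictly positive for all $t$ simultaneously (not merely at each fixed $t$), so that the nonlinearity never truncates and the dynamics reduce to the linear SDE $dX_t = X_t\,dB_t$. The complementary observation is that $0$ is an absorbing point: on $\{X_0\le0\}$ both volatility and drift vanish, the constant process solves the SDE, and no probability mass can leak from the nonpositive into the positive branch. Gluing these two regions together via uniqueness completes the argument.
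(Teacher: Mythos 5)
Your proof is correct, and it takes a genuinely different route from the paper's. The paper (\cref{sec:relu_1d}) analyzes the unknown solution directly: it introduces the stopping time $\tau = \inf\{t : X_t \leq 0\}$, observes that $X$ obeys the linear SDE $dX_t = X_t\,dB_t$ on $[0,\tau]$, and then proves $\mathbb{P}(\tau = \infty)=1$ (\cref{lemma:gbm_non_zero_1d}) by a continuity argument at $\tau$: if $\tau(\omega)<\infty$ then path continuity forces $X_\tau(\omega)=0$, contradicting the exponential formula, an event of probability zero. You instead run a guess-and-verify argument: write down the explicit candidate $\tilde X_t = \ind_{\{X_0>0\}}X_0 e^{-t/2+B_t} + \ind_{\{X_0\le 0\}}X_0$, check via It\^o's lemma and the strict positivity of the exponential that it solves the same SDE (the only point needing care is pulling the $\mathcal{F}_0$-measurable factor $\ind_{\{X_0>0\}}X_0$ inside the stochastic integral, which is legitimate since the integrand stays predictable and locally square-integrable), and invoke pathwise uniqueness under the Lipschitz coefficient. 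Your route buys two things: it sidesteps the stopping-time lemma entirely, since positivity is a trivial property of the explicit exponential rather than something to be established for the abstract solution, and it delivers the stronger conclusion $X_t=\tilde X_t$ almost surely (pathwise identity) rather than merely equality in law. What the paper's route buys is generalizability: the stopping-time/absorption analysis is exactly the template reused later for general width $n$ (\cref{lemma:tau_general_zeta,lemma:tau_genera_n_relu}), where no closed-form candidate exists and a verification argument is unavailable.
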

\begin{figure}[h!]
    \centering
    \begin{subfigure}[b]{0.49\textwidth}
         \centering
         \includegraphics[width=\textwidth]{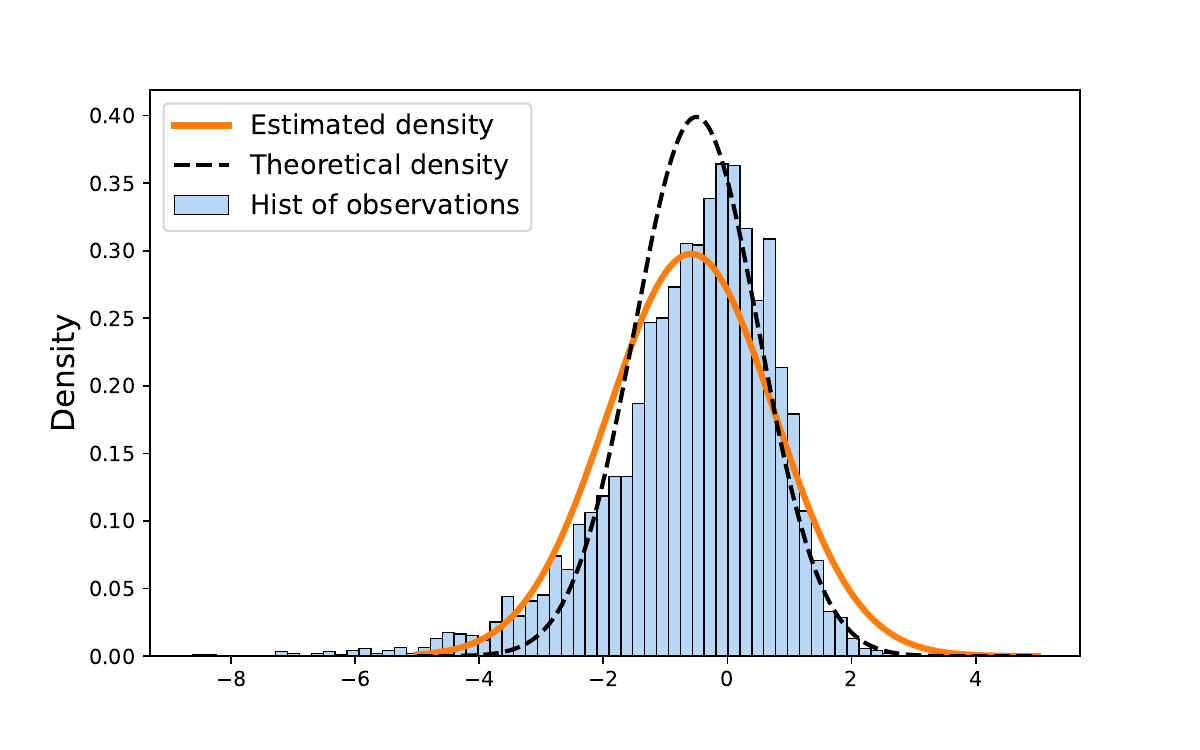}
         \caption{Distribution of $\log(Y_L)$ with $L=5$}
         \label{fig:gbm_depth5}
     \end{subfigure}
     \hfill
     \begin{subfigure}[b]{0.49\textwidth}
         \centering
         \includegraphics[width=\textwidth]{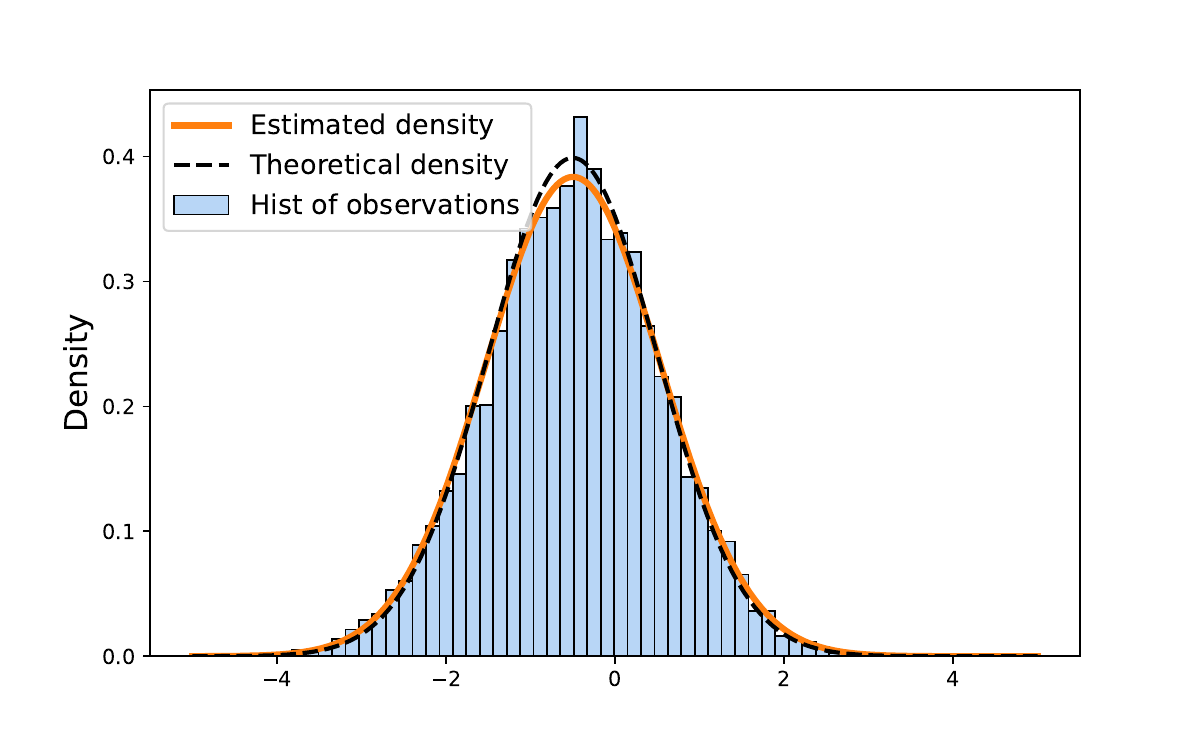}
         \caption{Distribution of $\log(Y_L)$ with $L=50$}
         \label{fig:gbm_depth50}
     \end{subfigure}
    \begin{subfigure}[b]{0.49\textwidth}
         \centering
         \includegraphics[width=\textwidth]{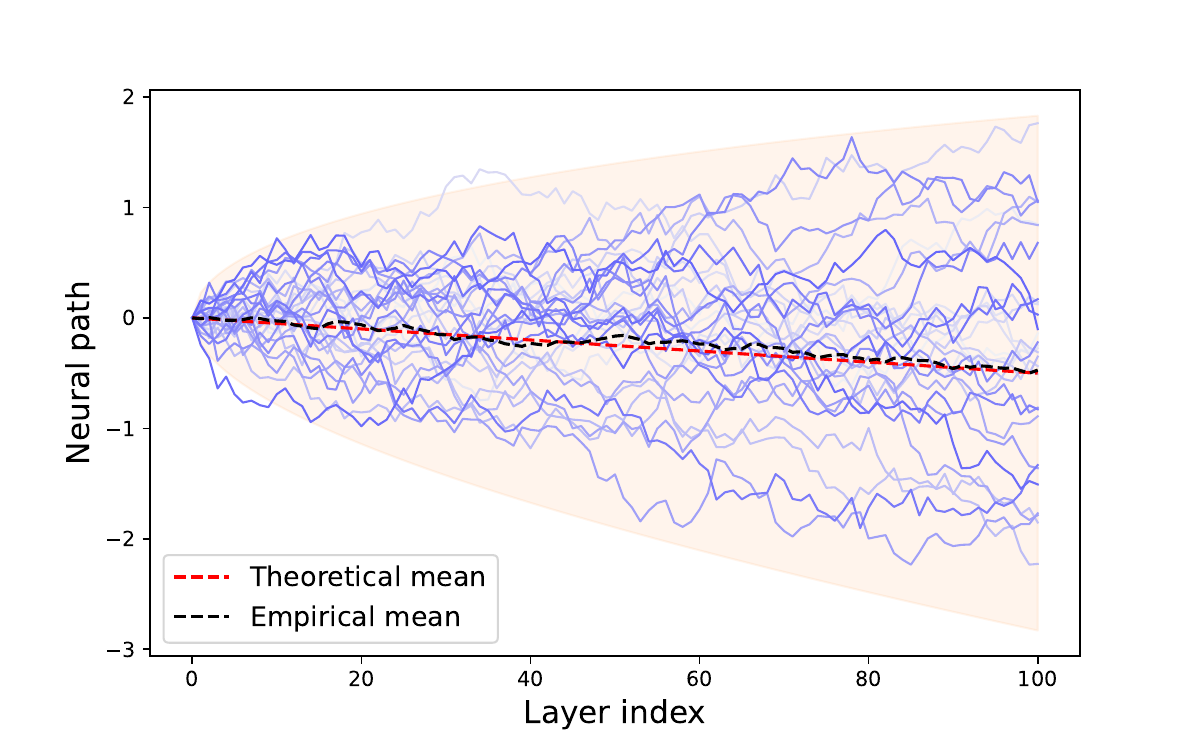}
         \caption{Neural path $(\log(Y_l)_{1 \leq l \leq L}$ with $L=100$}
         \label{fig:gbm_neural_path_log_depth100}
     \end{subfigure}
     \hfill
     \begin{subfigure}[b]{0.49\textwidth}
         \centering
         \includegraphics[width=\textwidth]{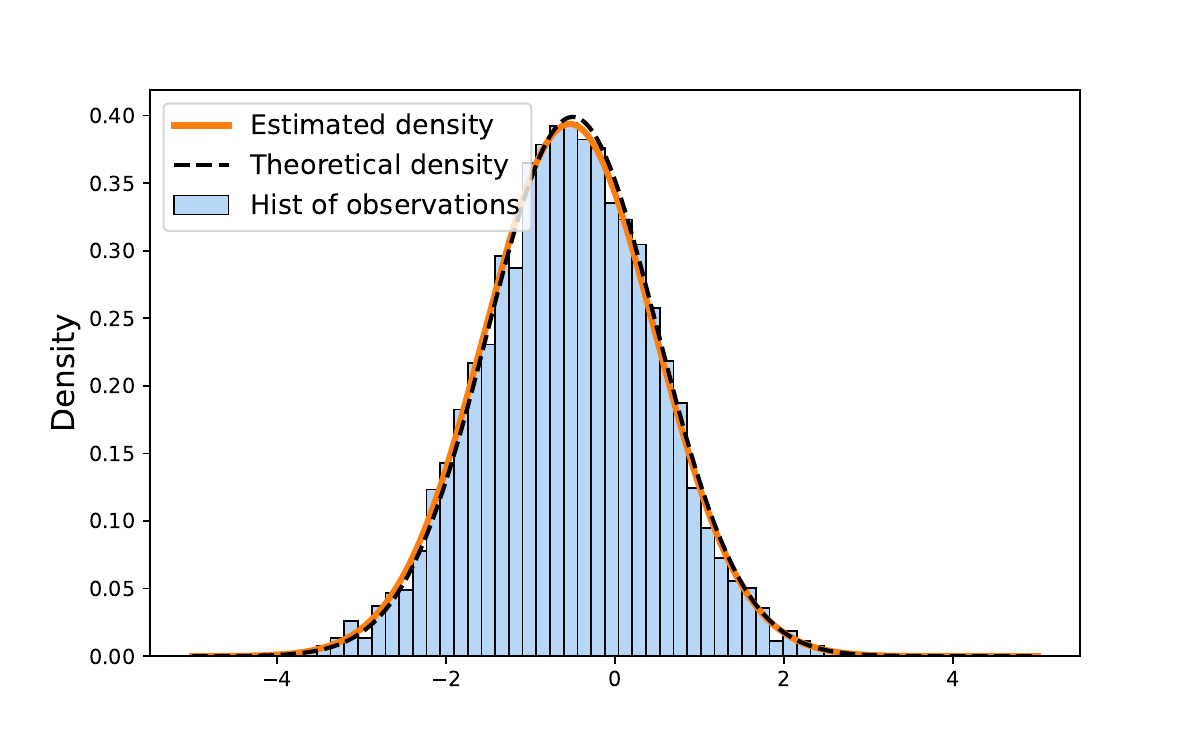}
         \caption{Distribution of $\log(Y_L)$ with $L=100$}
         \label{fig:gbm_depth50}
     \end{subfigure}     
    \begin{subfigure}[b]{0.49\textwidth}
         \centering
         \includegraphics[width=\textwidth]{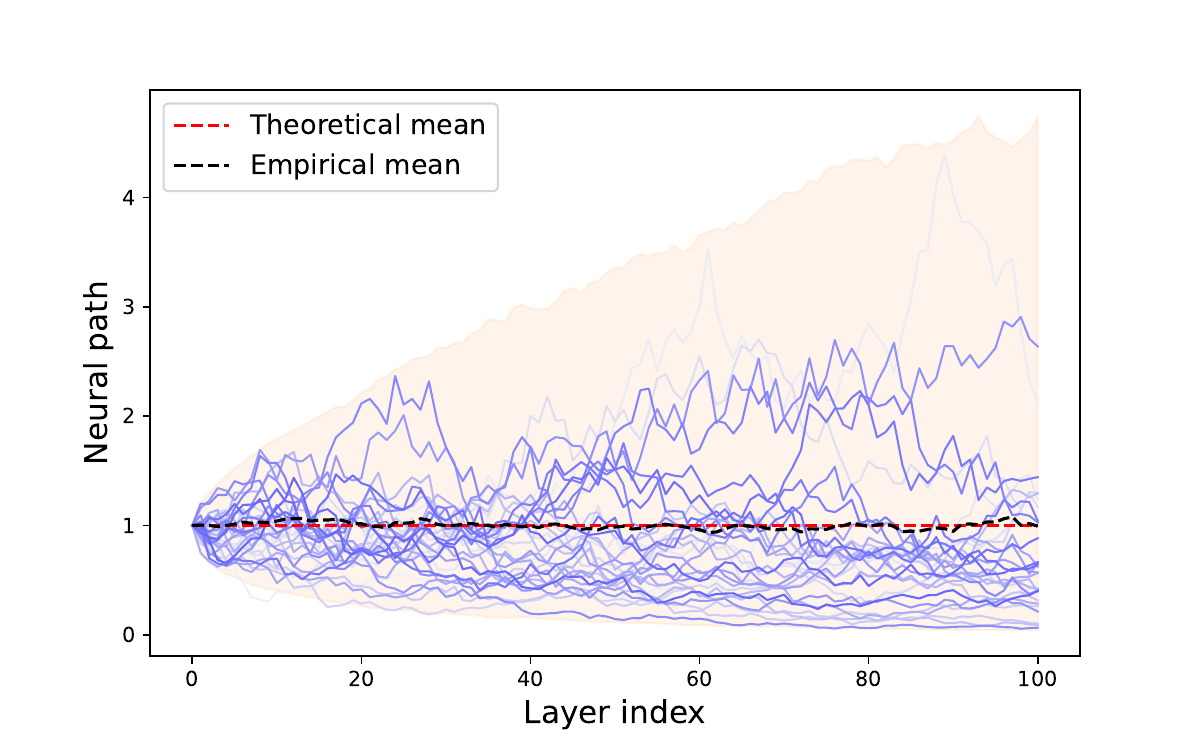}
         \caption{Neural path $(Y_l)_{1 \leq l \leq L}$ with $L=100$}
         \label{fig:gbm_neural_path_depth100}
     \end{subfigure}
     \hfill
     \begin{subfigure}[b]{0.49\textwidth}
         \centering
         \includegraphics[width=\textwidth]{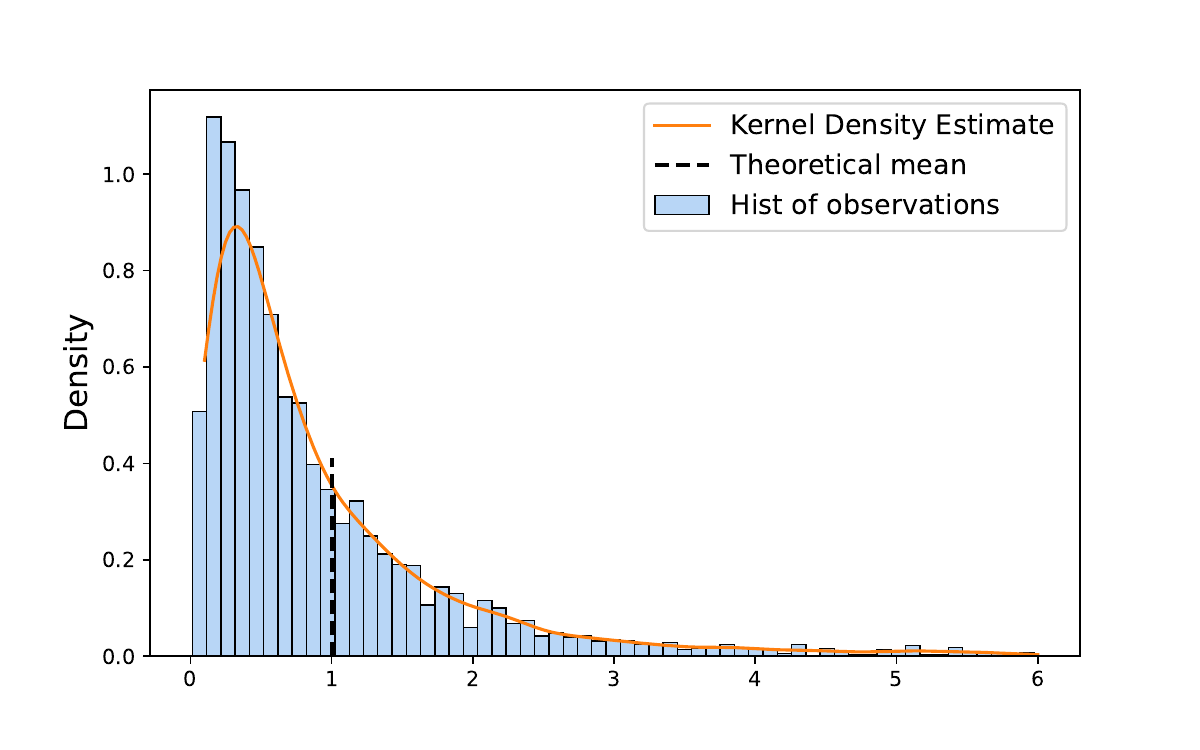}
         \caption{Distribution of $Y_L$ with $L=100$}
         \label{fig:gbm_x1_depth100}
     \end{subfigure}    
    \caption{Empirical verification of \cref{prop:gbm}. \textbf{(a), (b), (d)} Histograms of $\log(Y_L)$ and based on $N=5000$ simulations for depths $L \in \{5, 50, 100\}$ with $Y_0 = 1$. Estimated density (Gaussian kernel estimate) and theoretical density (Gaussian) are  illustrated on the same graphs. 
    \textbf{(c), (e)} 30 Simulations of the sequence $(\log(Y_l))_{ l\leq L}$ (c) and the sequence $(Y_l)_{ l\leq L}$ (e). We call such sequences Neural paths. The results are reported for depth $L=100$, with $Y_0 = 1$, $\phi$ being the ReLU activation. The theoretical mean of $\log(Y_l)$ is given by $m(l) = -\frac{l}{2 L}$ and that of $Y_l$ is equal to $Y_0=1$. We also illustrate the $99\%$ confidence intervals, based on the theoretical prediction for $\log(Y_l)$ (\cref{prop:gbm}), and the empirical Quantiles for $Y_l$. \textbf{ (f)} Histogram of $Y_L$ based on $N=5000$ simulations for depth $L=100$. }
    \label{fig:gbm_figs}
\end{figure}
The proof of \cref{prop:relu_gbm_1d} is provided in \cref{sec:relu_1d}. We show that conditionally on $X_0>0$, with probability $1$, the process $X_t$ is positive for all $t \in [0,1]$\footnote{In \cref{sec:relu_1d}, we show that the stopping $\tau = \inf \{t \geq 0: \textrm{ s.t. } X_t \leq 0 \}$ is infinite almost surely, which is stronger that what we need. This is a classic result in stochastic calculus.}. When $X_t>0$, the ReLU activation is just the identity function, which justifies the similarity between this result and the one obtained with linear activations (\cref{prop:gbm}). Conversely, if $X_0<0$, the process is constant equal to $X_0$ since the updates `$dX_t$' are equal to zero in this case. A rigorous justification of this is given for general width $n$ later in the paper (\cref{lemma:constant_after_hit}).  An empirical verification of \cref{prop:gbm} is provided in \cref{fig:gbm_figs} where we compare the theoretical results to simulations of the \emph{neural paths} $(Y_{l})_{1 \leq l \leq L}$ and $(\log(Y_{l}))_{1 \leq l \leq L}$  from the original (finite-depth) ResNet given by \cref{eq:resnet}. We observe an excellent match with theoretical predictions for depths $L =50$ and $L=100$. In the case of a small depth ($L=5$), the theoretical distribution does not fit well the empirical one (obtained by simulations), which is expected since the dynamics of $X$ describe (only) the infinite-depth limit of the ResNet.
More figures are provided in \cref{sec:additional_experiments}.\\
\textit{Remark:} notice that the log-normal behaviour is a result of the fact that we only consider the case $n=1$ (width one). Indeed, the single neuron case forces ReLU to act like a linear activation when $X_0 > 0$, and like a `zero' activation when $X_0 \leq 0$. For general width $n \geq 1$, such behaviour does not hold in general, and usually some coordinates of $X_t$ will be negative while others are non-negative, which implies that the volatility term $\|\phi(X_t)\|$ has non-trivial dependence on $X_t$. We discuss this in more details in \cref{sec:general_width}. In the next section, we illustrate a case of an exotic (non-standard) activation function that yields a completely different closed-form distribution of $X_t$.
\subsection{Exotic activation}
The next result shows that with a particular choice of the activation function $\phi$ and mapping $g$, the stochastic process $g(X_t)$ is the solution of well-known type of SDEs known as the Ornstein-Uhlenbeck SDEs. In this case, the activation function is non-standard and involves the inverse of the imaginary error function, a variant of the error function.
\begin{wrapfigure}{r}{0.3\textwidth}
  \begin{center}
    \includegraphics[width=0.33\textwidth]{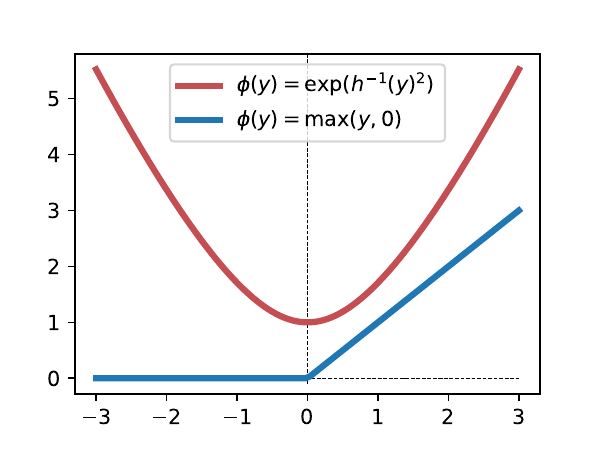}
  \end{center}
  \vspace{-0.5cm}
  \caption{Exotic Activation.}\label{fig:exotic_act}
  \vspace{-0.5cm}
\end{wrapfigure}
\begin{prop}[Ornstein-Uhlenbeck neural networks]\label{prop:ou_nn}
Let $x \in \reals$ such that $x \neq 0$. Consider the following activation function $\phi$
$$
\phi(y) = \exp(h^{-1}(\alpha y + \beta)^2),
$$
where $\alpha, \beta \in \reals$ are constants and $h^{-1}$ is the inverse function of the imaginary error function given by $h(z) = \frac{2}{\sqrt{\pi}} \int_{0}^z e^{t^2} dt$\footnote{Although the name might be misleading, the imaginary error function is real valued, and it has a an inverse $h^{-1}$ that is continuous and increasing.}. Let $g$ be the function defined by 
$$
g(y) = \alpha \sqrt{\pi} h^{-1}(\alpha y + \beta).
$$
Consider the stochastic process $X_t$ defined by\footnote{in \cref{sec:ou_process}, we show that the activation function $\phi$ is only locally Lipschitz. Hence, the solution of this SDE exists only in the local sense and the convergence in distribution of $Y_{\lfloor t L\rfloor}$ to $X_t$ is also in the local sense (\cref{prop:main_conv}). However, by continuity of the Brownian path, the stopping times $\tau^L$ and $\tau$ diverge almost surely when $r$ goes to infinity. Therefore, the conclusion of \cref{prop:ou_nn} remains true for all $t \in [0,1]$. Technical details are provided in \cref{sec:ou_process}. } 
$$
dX_t = |\phi(X_t)| dB_t, \quad X_0 = W_{in} x.
$$
Then, the stochastic process $g(X_t)$ follows the Ornstein-Uhlenbeck dynamics on $(0,1]$ given by 
$$
dg(X_t) = a g(X_t) dt + 2 a dB_t,\quad g(X_0) = g(W_{in}x),
$$
where $a = \frac{\pi \alpha^2}{4}$. As a result, conditionally on $X_0$ (fixed $X_0$), we have that for all $t \in [0,1]$,
$$
g(X_t) \sim \mathcal{N}\left( g(X_0) e^{-a t}, \frac{\pi}{2} ( 1 - e^{-2a t})\right),
$$
and the process $X_t$ is distributed as $X_t \sim \alpha^{-1} (h(\alpha^{-1} \pi^{-1/2} \mathcal{N}\left( g(X_0) e^{-a t},  \frac{\pi}{2} ( 1 - e^{-2a t})\right))- \beta)$. 
\end{prop}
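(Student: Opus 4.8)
The plan is to use \ito's lemma to turn the intractable diffusion $X_t$ into the explicitly solvable process $g(X_t)$, and then invert the (bijective) map $g$ to read off the law of $X_t$. First I would note that $\phi(y)=\exp(h^{-1}(\alpha y+\beta)^2)$ is strictly positive, so $|\phi|=\phi$ and the driving equation is simply $dX_t=\phi(X_t)\,dB_t$; in particular \cref{thm:same_law} is not even needed here. The computational engine is the derivative of the inverse imaginary error function: since $h'(z)=\frac{2}{\sqrt\pi}e^{z^2}>0$ everywhere, $h$ is a strictly increasing $C^\infty$ bijection of $\reals$, and the inverse function theorem gives $(h^{-1})'(u)=\frac{\sqrt\pi}{2}\,e^{-h^{-1}(u)^2}$. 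Consequently $g\in C^\infty(\reals)\subset\mathcal C^2(\reals)$, so \ito's lemma in the form \eqref{eq:ito_lemma_1d} applies.

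Abbreviating $F=h^{-1}(\alpha X_t+\beta)$, I would compute $g'(X_t)=\alpha^2\sqrt\pi\,(h^{-1})'(\alpha X_t+\beta)=\frac{\alpha^2\pi}{2}e^{-F^2}$ and, differentiating once more, $g''(X_t)=-\frac{\alpha^3\pi^{3/2}}{2}\,F\,e^{-2F^2}$. The point of the construction is that the factor $\phi=e^{F^2}$ exactly cancels these exponentials: the volatility becomes $\phi(X_t)g'(X_t)=\frac{\alpha^2\pi}{2}=2a$, a \emph{constant}, and the drift becomes $\tfrac12\phi(X_t)^2g''(X_t)=-\frac{\alpha^3\pi^{3/2}}{4}F$. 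Since $g(X_t)=\alpha\sqrt\pi\,F$, i.e. $F=g(X_t)/(\alpha\sqrt\pi)$, the drift is linear in $g(X_t)$, equal to $-a\,g(X_t)$ with $a=\frac{\pi\alpha^2}{4}$. This is precisely the (mean-reverting) Ornstein--Uhlenbeck SDE, and the initial condition $g(X_0)=g(W_{in}x)$ is immediate.

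From the representation $g(X_t)=g(X_0)e^{-at}+2a\int_0^t e^{-a(t-s)}\,dB_s$, the conditional law (given $X_0$) is Gaussian with mean $g(X_0)e^{-at}$ and variance $\frac{(2a)^2}{2a}\big(1-e^{-2at}\big)=2a\big(1-e^{-2at}\big)$, which is the Gaussian marginal stated in the proposition. To transfer this law back to $X_t$, I would invert $g$: since $h^{-1}$ is a continuous increasing bijection of $\reals$, so is $g$, and solving $z=\alpha\sqrt\pi\,h^{-1}(\alpha y+\beta)$ for $y$ gives $g^{-1}(z)=\alpha^{-1}\big(h(\alpha^{-1}\pi^{-1/2}z)-\beta\big)$. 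Applying this continuous map to the Gaussian variable $g(X_t)$ yields exactly the stated distribution $X_t\sim\alpha^{-1}\big(h(\alpha^{-1}\pi^{-1/2}\,\normal(g(X_0)e^{-at},\,2a(1-e^{-2at})))-\beta\big)$.

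The main obstacle is not the algebra but the well-posedness flagged in the footnote: $\phi$ grows like $e^{F^2}$, so it is only \emph{locally} Lipschitz, and \cref{prop:main_conv} then only guarantees local convergence of $Y_{\lfloor tL\rfloor}$ and a solution of the $X$-SDE up to the exit time of a ball of radius $r$. The clean way I would resolve this is to reverse the construction: take $Z_t$ to be the globally-defined OU process solving $dZ_t=-a Z_t\,dt+2a\,dB_t$, set $X_t:=g^{-1}(Z_t)$, and verify by \ito's lemma applied to $g^{-1}\in\mathcal C^2$ (here $g'>0$ everywhere, so $g^{-1}$ is $C^2$) that this $X_t$ indeed solves $dX_t=\phi(X_t)\,dB_t$ with zero drift. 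Because $Z_t$ never explodes and $g^{-1}:\reals\to\reals$ is continuous, $X_t$ is finite for all $t\in[0,1]$ almost surely; and by continuity of the Brownian path the stopping times $\tau^L,\tau$ diverge a.s. as $r\to\infty$, which upgrades the local conclusion of \cref{prop:main_conv} to a statement valid on all of $(0,1]$, matching the proposition.
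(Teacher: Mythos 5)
Your computation is correct and the core of your argument (It\^o's lemma applied to $g$, exact cancellation of the $e^{F^2}$ factors giving constant volatility $2a$ and linear drift, then the standard OU solution formula) is exactly the engine of the paper's proof in \cref{sec:ou_process}. Two remarks are worth making. First, your derivation actually exposes two typos in the stated proposition: the drift you obtain is $-a\,g(X_t)\,dt$ (mean-reverting), not $+a\,g(X_t)\,dt$, and this is what the paper's own appendix derivation gives and what the stated conclusion $g(X_0)e^{-at}$ requires; likewise the conditional variance is $2a(1-e^{-2at})=\frac{\pi\alpha^2}{2}(1-e^{-2at})$, which matches the displayed $\frac{\pi}{2}(1-e^{-2at})$ only when $\alpha^2=1$ (the setting of the paper's figures), so your claim that your variance ``is the Gaussian marginal stated'' is true only up to this typo. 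Second, and more substantively, you handle the well-posedness obstacle differently from the paper. The paper localizes: it introduces $\tau_N=\inf\{t\geq 0: |X_t|\geq N\}$, applies It\^o's lemma on $[0,\tau_N)$ where $\phi$ is Lipschitz, and asserts $\tau_N\to\infty$ a.s.\ ``by continuity of paths'' --- an assertion that tacitly presupposes a global non-exploding solution. Your reverse construction $X_t:=g^{-1}(Z_t)$, with $Z$ the globally defined OU process, supplies precisely that missing global solution: $Z$ never explodes, $g^{-1}:\reals\to\reals$ is a $\mathcal{C}^2$ increasing bijection, and local pathwise uniqueness (from the local Lipschitz property) identifies any solution of the $X$-SDE with $g^{-1}(Z)$ up to each $\tau_N$, forcing $\tau_N\to\infty$. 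This is a cleaner and more self-contained resolution of the explosion issue than the paper's, at the cost of one extra It\^o computation for $g^{-1}$. Your observation that $\phi>0$ makes $|\phi|=\phi$, so that \cref{thm:same_law} is unnecessary here, is also correct and consistent with how the paper implicitly proceeds.
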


\cref{fig:exotic_act} shows the graph of the activation function $\phi(y) = \exp(h^{-1}(y)^2)$ mentioned in \cref{prop:ou_nn} with $\alpha = 1$ and $\beta =0$. With this choice of the activation function, the infinite-depth network output $X_1$ has the distribution $g^{-1}\left( \mathcal{N}\left( g(X_0) e^{-a t}, 2 ( 1 + e^{-2a t})\right)\right)$ (conditionally on $X_0$), where $g$ is given in the statement of the proposition. This distribution, although easy to simulate, is different from both the Gaussian distribution that we obtain in the infinite-width limit and the log-normal distribution associated with ReLU activation. This confirms that not only do neural networks exhibit completely different behaviours when the ratio depth-to-width is large, but in this case, that their behaviour is very sensitive to the choice of the activation function.

\begin{figure}[h!]
    \centering
    \begin{subfigure}[b]{0.49\textwidth}
         \centering
         \includegraphics[width=\textwidth]{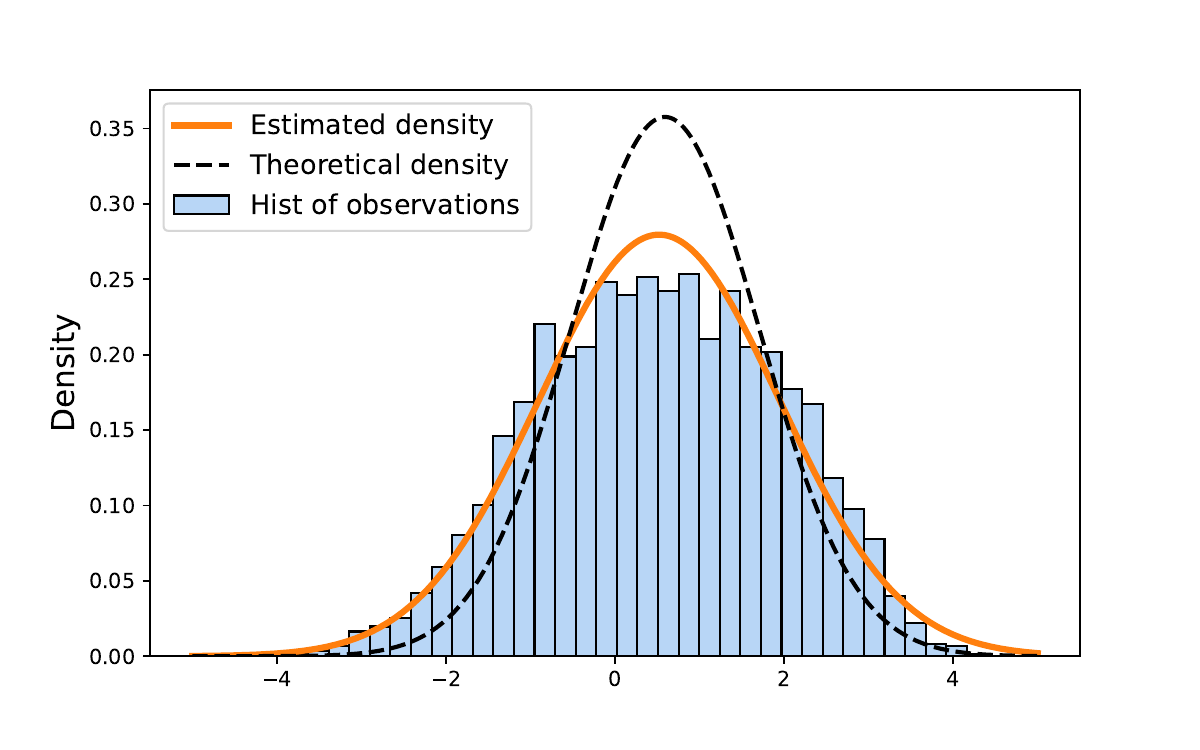}
         \caption{Distribution of $g(Y_L)$ with $L=5$}
         \label{fig:gbm_depth5}
     \end{subfigure}
     \hfill
     \begin{subfigure}[b]{0.49\textwidth}
         \centering
         \includegraphics[width=\textwidth]{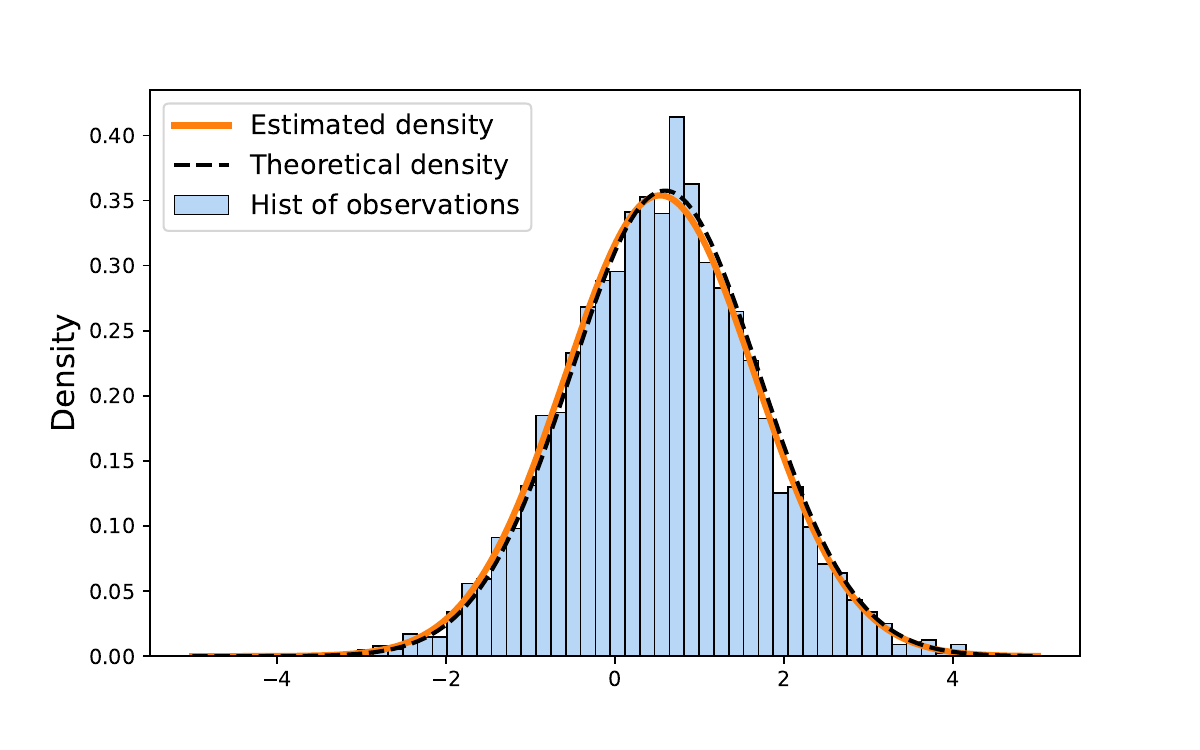}
         \caption{Distribution of $g(Y_L)$ with $L=50$}
         \label{fig:gbm_depth50}
     \end{subfigure}
    \begin{subfigure}[b]{0.49\textwidth}
         \centering
         \includegraphics[width=\textwidth]{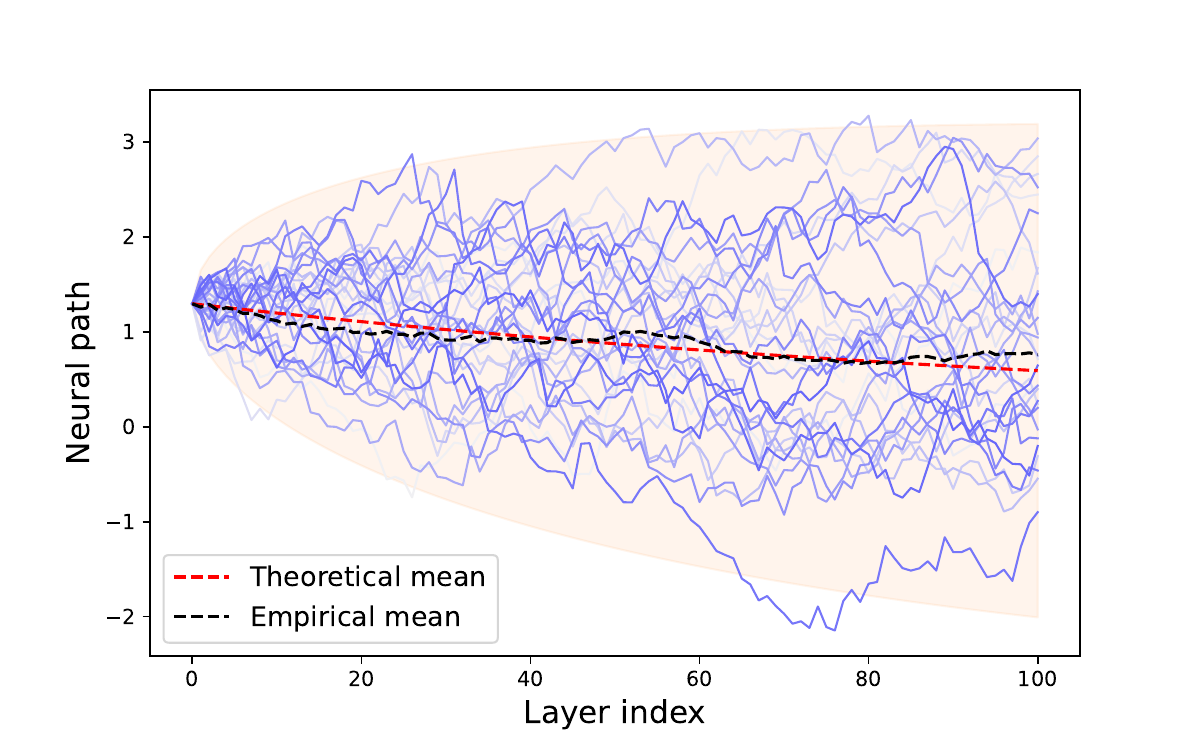}
         \caption{Neural path $(g(Y_l)_{1 \leq l \leq L}$ with $L=100$}
         \label{fig:gbm_neural_path_log_depth100}
     \end{subfigure}
     \hfill
     \begin{subfigure}[b]{0.49\textwidth}
         \centering
         \includegraphics[width=\textwidth]{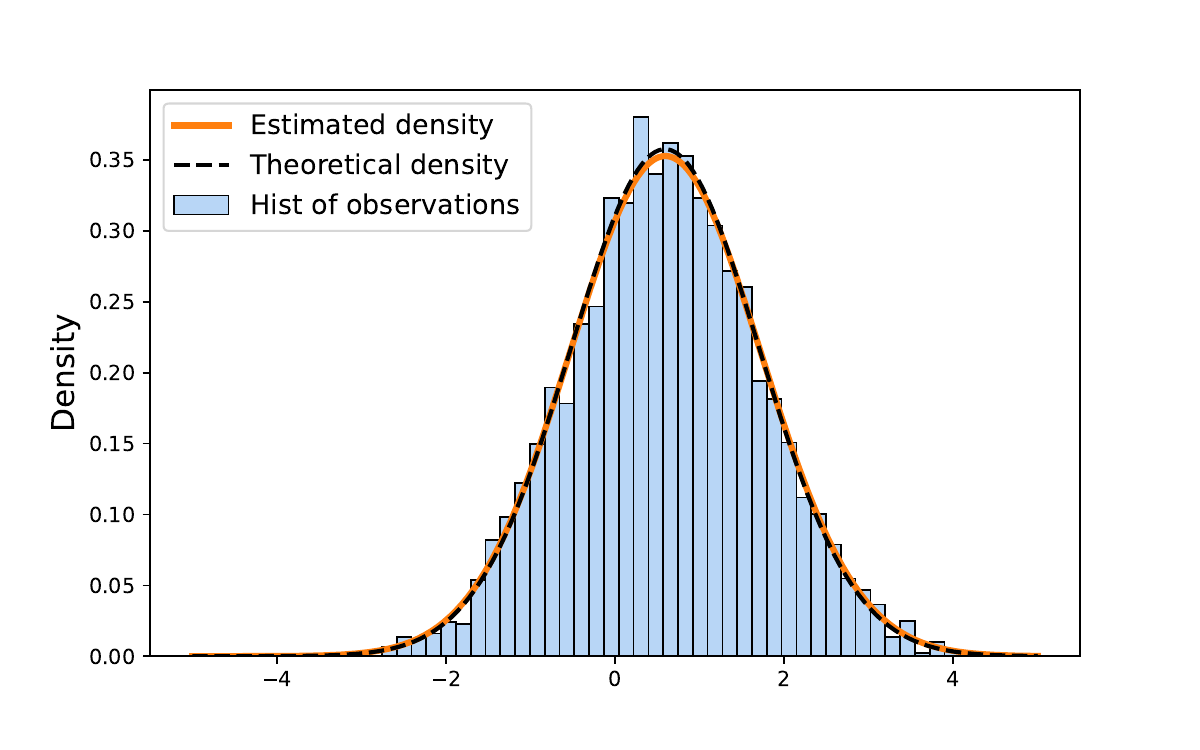}
         \caption{Distribution of $g(Y_L)$ with $L=100$}
         \label{fig:gbm_depth50}
     \end{subfigure}     
    \begin{subfigure}[b]{0.49\textwidth}
         \centering
         \includegraphics[width=\textwidth]{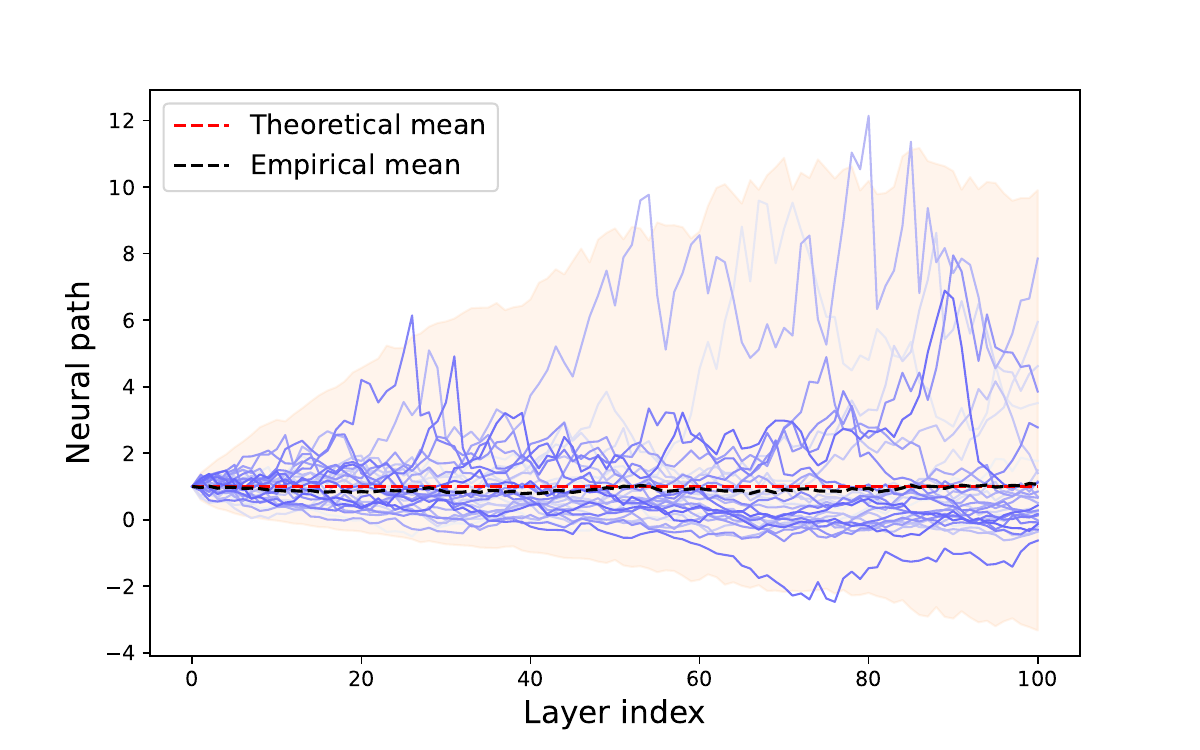}
         \caption{Neural path $(Y_l)_{1 \leq l \leq L}$ with $L=100$}
         \label{fig:gbm_neural_path_depth100}
     \end{subfigure}
     \hfill
     \begin{subfigure}[b]{0.49\textwidth}
         \centering
         \includegraphics[width=\textwidth]{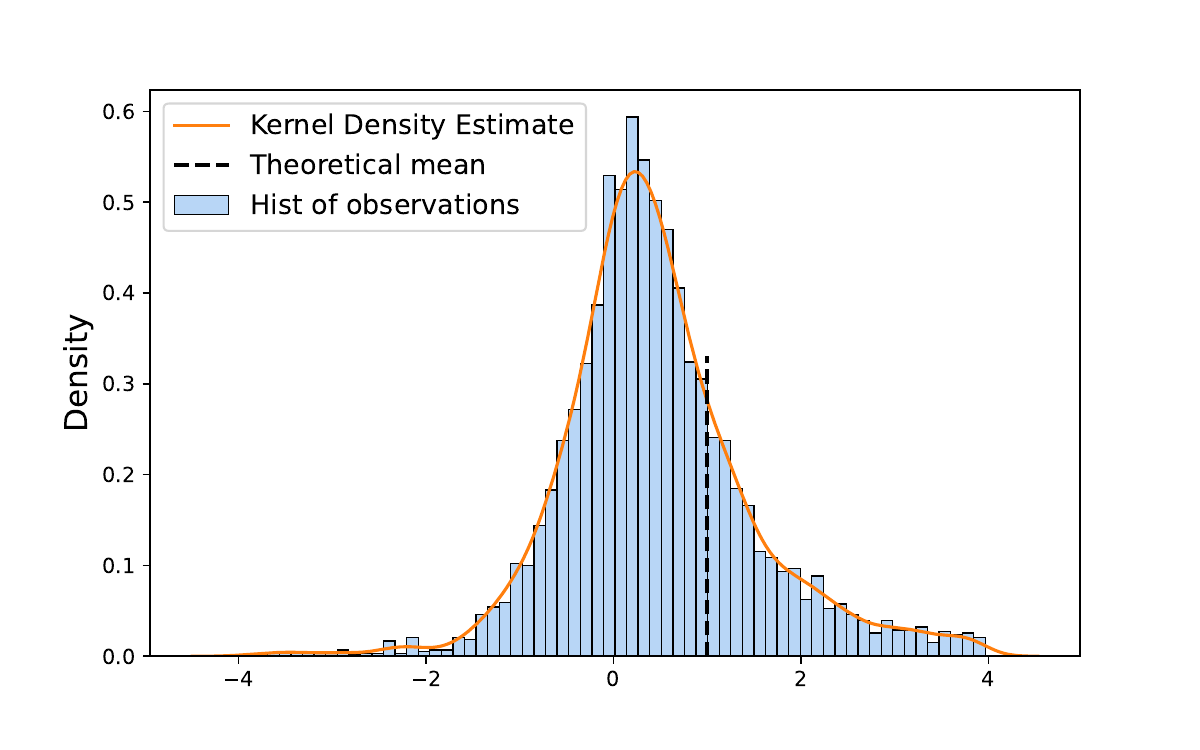}
         \caption{Distribution of $Y_L$ with $L=100$}
         \label{fig:gbm_x1_depth100}
     \end{subfigure}    
    \caption{Empirical verification of \cref{prop:ou_nn}. \textbf{(a), (b), (d)} Histograms of $g(Y_L)$ based on $N=5000$ simulations for depths $L \in \{5, 50, 100\}$ with $Y_0 = 1$. Estimated density (Gaussian kernel estimate) and theoretical density (Gaussian) are  illustrated on the same graphs. 
    \textbf{(c), (e)} 30 Simulations of the neural paths $(g(Y_l))_{ l\leq L}$ (c) and $(Y_l)_{ l\leq L}$ (e). The results are reported for depth $L=100$, with $Y_0 = 1$, $\phi$ is given un \cref{prop:ou_nn}. The theoretical mean of $g(Y_l)$ (conditionally on $Y_0$) is approximated by $m(l) = g(Y_0) e^{-\frac{\pi l}{3 L}}$ and that of $Y_l$ is equal to $Y_0=1$. We also illustrate the $99\%$ confidence intervals, based on the theoretical prediction for $g(Y_l)$ (\cref{prop:gbm}), and the empirical Quantiles for $Y_l$. \textbf{ (f)} Histogram of $Y_L$ based on $N=5000$ simulations for depth $L=100$. }
    \label{fig:ou_figs}
\end{figure}
The results of \cref{prop:ou_nn} are empirically confirmed in \cref{fig:ou_figs}. The original ResNet given by \cref{eq:main_sde} with depth $L=100$ exhibit very similar behaviour to that of the SDE.

\section{General width $n\geq 1$}\label{sec:general_width}
Let $n\geq 1$ and $x \in \reals^d$ such that $x \neq 0$. Consider the process $X$ given by the SDE

\begin{equation}\label{eq:main_general_n}
 dX_t = \frac{1}{\sqrt{n}} \|\phi(X_t)\| dB_t, \quad X_0 = W_{in} x,   
\end{equation}
where $\phi$ is the activation function, and $B$ is an $n$-dimensional Brownian motion, independent from $W_{in}$.  Intuitively, if for some $s$, $\|\phi(X_s)\| = 0$, then for all $t \geq s$, $X_t = X_s$ since the increments '$dX_t$' are all zero for $t\geq s$. This holds for any choice of the activation function $\phi$, provided that the process $X$ exists, i.e. the SDE has a unique solution. We summarize this in the next lemma.

\begin{lemma}[Collapse]\label{lemma:constant_after_hit}
Let $x \in \reals^d$ such that $x \neq 0$, and $\phi : \reals \to \reals$ be a Lipschitz function. Let $X$ be the solution of the SDE given by \cref{eq:main_general_n}. Assume that for some $s \geq 0$, $\phi(X_s) = 0$. Then, for all $t\geq s$, $X_t = X_s$, almost surely.
\end{lemma}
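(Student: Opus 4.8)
The plan is to exploit the fact that the SDE in \cref{eq:main_general_n} has zero drift, so that the deviation of $X$ from its value at time $s$ is a pure stochastic integral that can be controlled by itself. Writing $K$ for the Lipschitz constant of $\phi$ and integrating the SDE from $s$ to $t$ gives $X_t - X_s = \frac{1}{\sqrt n}\int_s^t \|\phi(X_u)\|\, dB_u$. The crucial observation is that, since $\phi$ acts coordinate-wise and $\phi(X_s) = 0$, we have $\|\phi(X_u)\| = \|\phi(X_u) - \phi(X_s)\| \le K\|X_u - X_s\|$ for every $u \ge s$. Thus the integrand is bounded by the very deviation we are trying to estimate, which sets up a self-referential (Gr\"onwall-type) inequality whose only solution is zero.

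Concretely, I would fix $T \in (s, 1]$ and set $g(T) = \E\big[\sup_{s \le t \le T}\|X_t - X_s\|^2\big]$. Applying the Burkholder--Davis--Gundy inequality to the martingale $t \mapsto \int_s^t \|\phi(X_u)\| \, dB_u$, together with the Lipschitz bound above, yields $g(T) \le C\,\E\int_s^T \|\phi(X_u)\|^2\, du \le C K^2 \int_s^T g(u)\, du$ for a universal constant $C$ (the factor $1/n$ from the diffusion coefficient and the $n$ coordinates of $B$ cancel). Since $g(s) = 0$ and $g$ is finite — standard linear-growth SDE theory gives $\E\sup_{t \le 1}\|X_t\|^2 < \infty$ because $\phi$ is Lipschitz — Gr\"onwall's inequality forces $g \equiv 0$ on $[s,1]$. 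Hence $\sup_{s \le t \le 1}\|X_t - X_s\| = 0$ almost surely, which is exactly the claim $X_t = X_s$ for all $t \ge s$.

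A cleaner conceptual route leads to the same conclusion: the constant path $t \mapsto X_s$ solves \cref{eq:main_general_n} started from $X_s$ at time $s$ (its diffusion coefficient vanishes there, since $\phi(X_s)=0$), and because $x \mapsto \frac{1}{\sqrt n}\|\phi(x)\|$ is globally Lipschitz the SDE enjoys pathwise uniqueness; restarting the dynamics at time $s$ and invoking uniqueness identifies $X$ with this constant path. I would present the Gr\"onwall estimate as the main argument, since it is self-contained and avoids appealing to the restart machinery.

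The step requiring the most care is the treatment of the lower limit $s$, which in the intended applications (such as the collapse event for ReLU) is a stopping time rather than a deterministic time. I would handle this by writing $\int_s^t(\cdots)\,dB_u = \int_0^t \ind_{\{u > s\}}(\cdots)\,dB_u$, observing that $\ind_{\{u>s\}}\|\phi(X_u)\|$ is progressively measurable so that the It\^o isometry and the BDG inequality apply verbatim, and by using the strong Markov property for the restarting argument. The remaining pieces — the Gr\"onwall integration and the coordinate-wise Lipschitz bound on $\|\phi(X_u)\|$ — are routine.
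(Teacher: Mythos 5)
Your proposal is correct, and in fact it contains the paper's argument as your ``cleaner conceptual route'': the paper proves this lemma by reducing it to \cref{lemma:zero_after_hit}, i.e.\ observing that the constant path is a solution of \cref{eq:main_general_n} with vanishing diffusion coefficient at $X_s$, and invoking uniqueness of the strong solution for Lipschitz coefficients (\cref{thm:existence_and_uniqueness}). What you present as the main argument is genuinely different: instead of citing uniqueness as a black box, you inline the estimate that underlies it, namely the self-bounding inequality $\|\phi(X_u)\| \le K\|X_u - X_s\|$ on the collapse event, followed by BDG/It\^o isometry and Gr\"onwall applied to $g(T) = \E\bigl[\sup_{s\le t\le T}\|X_t - X_s\|^2\bigr]$. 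The trade-off is clear: the paper's route is a two-line deduction once the existence--uniqueness theorem is on the table, while yours is self-contained and, more importantly, is explicit about the point the paper glosses over --- that $\phi(X_s)=0$ is a random event (and $s$ a stopping time in the intended application to ReLU collapse), so one must either multiply through by the $\mathcal{F}_s$-measurable indicator of that event before taking expectations, or restart the dynamics at $s$ via the (strong) Markov property before appealing to uniqueness. Your handling of that step via progressive measurability of $\ind_{\{u>s\}}\|\phi(X_u)\|$ is the right fix; just note that in the Gr\"onwall computation the indicator must also be carried inside the integrand (which works because $\ind_A\|\phi(X_u)\| \le K \ind_A \|X_u - X_s\|$ holds pointwise, both sides vanishing off $A$), so the argument closes as you claim.
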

\cref{lemma:constant_after_hit} is a particular case of \cref{lemma:zero_after_hit} in the Appendix. The proof consists of using the uniqueness of the solution of \cref{eq:main_general_n} when the volatility term is Lipschitz. This result is trivial in the finite depth case (\cref{eq:resnet}). When there exists $s$ such that $\phi(X_s)=0$, the process $X$ becomes constant (equal to $X_s$) for all $t \geq s$ (almost surely). We call this phenomenon \emph{process collapse}. In the case of finite-depth networks (\cref{eq:resnet}), we call the same phenomenon \emph{network collapse}. Understanding when, and whether, such event occurs is useful since it has significant implications on the the large depth behaviour of neural networks. Indeed, if such event occurs, it would mean that increasing depth has no effect on the network output after some time $s$ (or approximately, after layer index $\lfloor s L \rfloor$). In the next result, we show that under mild conditions on the activation function, process collapse is a zero-probability event. 

\subsection{Network collapse}
The next result gives (mild) sufficient conditions on the activation function so that the process $X$ almost surely does not collapse. In the proof, we use \ito's lemma in the multi-dimensional case, which states that for any function $g : \reals^n \to \reals $ that is $ \mathcal{C}^2(\reals^n)$, we have that 
\begin{align*}
d g(X_t) = \nabla g(X_t)^\top dX_t + \frac{1}{2n} \|\phi(X_t)\|^2 \Tr\left[\nabla^2 g(X_t)\right].
\end{align*}

\begin{lemma}\label{lemma:tau_general_zeta}
Let $x \in \reals^d$ such that $x \neq 0$, and consider the stochastic process $X$ given by the following SDE
$$
dX_t =\frac{1}{\sqrt{n}} \|\phi(X_t)\| dB_t \, ,\quad  t\in [0,\infty), \quad X_0 = W_{in} x, 
$$
where $\phi(z) : \reals \to \reals$ is Lipschitz, injective, $\mathcal{C}^2(\reals)$ and satisfies $\phi(0)=0$, and $\phi'$ and $\phi'' \phi$ are bounded on $\reals$, and $(B_t)_{t \geq 0}$ is an $n$-dimensional Brownian motion independent from $W_{in} \sim \normal(0, d^{-1} I)$. Let $\tau$ be the stopping time given by 
$$
\tau = \min \{ t\geq 0 : \phi(X_t)\ = 0\}.
$$
Then, we have that 
$$
\mathbb{P}\left(\tau = \infty \right) = 1.
$$
\end{lemma}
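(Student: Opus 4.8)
\emph{Plan.} First I would record two elementary reductions. Since $X_0 = W_{in}x$ with $x\neq 0$ and $W_{in}$ Gaussian, each coordinate $X_0^i \sim \normal(0,\|x\|^2/d)$ is non-degenerate, so $\mathbb{P}(X_0 = 0)=0$ and the process starts away from the origin almost surely. Next, because $\phi$ is injective with $\phi(0)=0$, the scalar equation $\phi(z)=0$ has the unique solution $z=0$; hence $\|\phi(X_t)\|=0$ if and only if $X_t=0$, so $\tau$ is exactly the first hitting time of the origin by $X$. Finally, from $\phi(0)=0$ and $K:=\sup_{\reals}|\phi'|<\infty$ the mean value theorem gives $|\phi(y)|\le K|y|$, whence $\|\phi(x)\|^2\le K^2\|x\|^2$ for every $x\in\reals^n$; this linear bound is the only quantitative property of $\phi$ I will use (it also makes the volatility globally Lipschitz, so \cref{eq:main_general_n} is well posed and non-explosive, which I invoke below).

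The key idea is a Lyapunov function that blows up at the origin but whose Itô drift stays bounded. I would take $g(x)=-\log\|x\|$, which is $\mathcal{C}^2$ on $\reals^n\setminus\{0\}$ with $\nabla g(x)=-x/\|x\|^2$ and $\Tr[\nabla^2 g(x)]=(2-n)/\|x\|^2$. Plugging into the multi-dimensional Itô formula stated above,
\begin{equation*}
dg(X_t)=-\frac{1}{\sqrt{n}}\frac{X_t^\top}{\|X_t\|^2}\|\phi(X_t)\|\,dB_t+\frac{2-n}{2n}\frac{\|\phi(X_t)\|^2}{\|X_t\|^2}\,dt .
\end{equation*}
Using $\|\phi(X_t)\|^2\le K^2\|X_t\|^2$, the drift is bounded above by $K^2/2$ for every $n\ge 1$ (it is nonpositive when $n\ge 2$, and at most $K^2/2$ when $n=1$). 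Hence $g(X_t)-\tfrac{K^2}{2}t$ is a local supermartingale on $\{t<\tau\}$, while $g(X_t)\to+\infty$ as $X_t$ approaches the origin.

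I would then convert this into a hitting estimate by localizing on an annulus. For integers $m$ with $1/m<\|X_0\|$ and reals $R>\max(1,\|X_0\|)$, $T>0$, set $\tau_m=\inf\{t:\|X_t\|\le 1/m\}$, $\sigma_R=\inf\{t:\|X_t\|\ge R\}$, and $\theta=\tau_m\wedge\sigma_R\wedge T$. On $[0,\theta]$ the process stays in the annulus $\{1/m\le\|x\|\le R\}$, where $\nabla g$ and $\|\phi\|$ are bounded, so the stochastic integral is a true martingale and taking expectations gives $\E[g(X_\theta)]\le g(X_0)+\tfrac{K^2}{2}T$. On $\{\tau_m\le\sigma_R\wedge T\}$ one has $g(X_\theta)=\log m$, and elsewhere $g(X_\theta)\ge-\log R$; combining the two bounds yields
\begin{equation*}
\mathbb{P}\!\left(\tau_m\le\sigma_R\wedge T\right)\le\frac{g(X_0)+\tfrac{K^2}{2}T+\log R}{\log m}\xrightarrow[m\to\infty]{}0 .
\end{equation*}
Since $\tau_m\uparrow\tau$, letting $m\to\infty$ gives $\mathbb{P}(\tau\le\sigma_R\wedge T)=0$; then letting $R\to\infty$ (so that $\sigma_R\to\infty$ almost surely by non-explosion) and $T\to\infty$ gives $\mathbb{P}(\tau<\infty)=0$, i.e. $\tau=\infty$ almost surely.

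The main obstacle — and the reason the statement holds for \emph{every} width, not only $n\ge 2$ — is controlling the sign of the Itô drift of $g$ uniformly in $n$: the Laplacian term $(2-n)/\|x\|^2$ is favorable only for $n\ge 2$, whereas for $n=1$ it is positive, so one genuinely needs the linear bound $\|\phi(x)\|^2\le K^2\|x\|^2$ (hence the hypotheses $\phi(0)=0$ and $\phi'$ bounded) to cap the drift by the constant $K^2/2$. For $n\ge 2$ there is a shorter route: by Knight's theorem $X_t-X_0=\beta_{A_t}$ for an $n$-dimensional Brownian motion $\beta$ with clock $A_t=\tfrac1n\int_0^t\|\phi(X_s)\|^2\,ds$, and polarity of points for Brownian motion in dimension $\ge 2$ immediately gives $X_t\neq 0$ for all $t$; but this collapses at $n=1$, which is exactly why I prefer the unified Lyapunov estimate above. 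The remaining smoothness hypotheses ($\phi\in\mathcal{C}^2$, $\phi''\phi$ bounded) are not needed for the estimate itself beyond guaranteeing that \cref{eq:main_general_n} has a unique non-exploding solution to which Itô's formula applies.
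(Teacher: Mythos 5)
Your proof is correct, and it takes a genuinely different route from the paper's. The paper applies It\^o's formula directly to $g(x)=\log\|\phi(x)\|$ -- the quantity that actually diverges to $-\infty$ at $\tau$ -- and uses \emph{all} of the hypotheses ($\phi\in\mathcal{C}^2$, $\phi'$ bounded, $\phi''\phi$ bounded) to show that the drift and volatility of this scalar process are uniformly bounded; since an It\^o process with bounded coefficients is a.s.\ finite at every finite time, path continuity on $\{\tau<\infty\}$ yields a contradiction. You instead use injectivity and $\phi(0)=0$ to identify $\tau$ with the hitting time of the origin, and run a Lyapunov argument on $-\log\|x\|$, where the only quantitative input is the linear bound $\|\phi(x)\|\le K\|x\|$. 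What your route buys: the smoothness hypotheses on $\phi$ become superfluous to the estimate (well-posedness already follows from $\phi$ Lipschitz), you get a quantitative hitting bound $\mathbb{P}(\tau_m\le\sigma_R\wedge T)=\bigO(1/\log m)$ rather than a pure dichotomy, and you cleanly isolate why the result holds for $n=1$ as well as $n\ge 2$ (your polarity remark for $n\ge2$ is also correct, via the time-change representation). What the paper's route buys: the explicit drift/volatility decomposition of $\log\|\phi(X_t)\|$ is not a throwaway -- it is exactly the computation reused in the ReLU extension (\cref{lemma:tau_genera_n_relu}, via the smooth approximations $\phi_m$) and in \cref{thm:norm_post_act}, so the paper's choice of Lyapunov function sets up machinery needed downstream, whereas yours is self-contained but specific to this lemma. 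One small point to tighten: in the inequality $\E[g(X_\theta)]\le g(X_0)+\tfrac{K^2}{2}T$ the initial value $g(X_0)=-\log\|X_0\|$ is random, so you should either condition on $X_0$ (legitimate, since $W_{in}$ is independent of $B$, and $X_0\neq 0$ a.s.) and conclude $\mathbb{P}(\tau\le\sigma_R\wedge T\mid X_0)=0$ a.s., or replace $g(X_0)$ by $\E[-\log\|X_0\|]$, which is finite because the log singularity is integrable against the Gaussian law of $X_0$; either patch is routine.
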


The proof of \cref{lemma:tau_general_zeta} is provided in \cref{sec:proofs_lemmas_tau}. Many standard activation functions satisfy the conditions of \cref{lemma:tau_general_zeta}. Examples include Hyperbolic Tangent $\textrm{Tanh}(z) = \frac{e^{2z}  - 1}{e^{2z} + 1}$, and smooth versions of ReLU activation such as GeLU given by $\phi_{GeLU}(z) = z \Psi(z)$ where $\Psi$ is the cumulative distribution function of the standard Gaussian variable, and Swish (or SiLU) given by $\phi_{Swish}(z) = z h(z)$ where $h(z) = (1 + e^{-z})^{-1}$ is the Sigmoid function.
The result of \cref{lemma:tau_general_zeta} can be extended to the case when $\phi$ is the ReLU function with miner changes.

\begin{lemma}\label{lemma:tau_genera_n_relu}
Consider the stochastic process \eqref{eq:main_sde} given by the SDE
$$
dX_t = \frac{1}{\sqrt{n}} \|\phi(X_t)\| dB_t \, ,\quad  t\in [0,\infty), \quad X_0 = W_{in} x, 
$$
where $\phi$ is the ReLU activation function, and $(B_t)_{t \geq 0}$ is an $n$-dimensional Brownian motion independent from $W_{in} \sim \normal(0, d^{-1} I)$. Let $\tau$ be the stopping time given by 
$$
\tau = \min \{ t\geq 0 : \|\phi(X_t)\| = 0\} = \min \{ t\geq 0 : \,  \forall i \in [n], \, X^i_t \leq 0\}.
$$
Then, we have that 
$$
\mathbb{P}\left(\tau = \infty \, \huge| \, \|\phi(X_0)\| > 0\right) = 1.
$$
As a result, we have that 
$$
\mathbb{P}(\tau = \infty) = 1 - 2^{-n}.
$$
\end{lemma}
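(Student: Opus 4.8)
The plan is to prove the conditional statement first, since the unconditional probability then follows from a short computation on the initial layer. Throughout, I would work with the scalar process $R_t = \|\phi(X_t)\|^2 = \sum_{i=1}^n (X_t^i)_+^2$, where $(\cdot)_+ = \max(\cdot,0)$. Since $X$ has continuous paths and $z \mapsto (z_+)^2$ is continuous, $R$ is a continuous process, and $\tau = \inf\{t \geq 0 : R_t = 0\}$ is its first hitting time of the origin. The whole argument rests on showing that $R$ behaves like a geometric Brownian motion with a bounded (random) drift, so that once started strictly positive it can never reach $0$.

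The first step is to derive the dynamics of $R$. The function $f(z) = (z_+)^2$ is only $\mathcal{C}^1$: its derivative $f'(z) = 2z_+$ is continuous, but $f''(z) = 2\ind_{z>0}$ jumps at $0$. I would therefore invoke the Meyer--Itô (Itô--Tanaka) formula rather than the classical Itô lemma. Because $f'$ is continuous, the second-derivative measure has no atom at $0$, so no local-time term appears; applying the formula coordinatewise and summing gives
\begin{equation*}
dR_t = \frac{N_t}{n}\, R_t\, dt + \frac{2}{\sqrt n}\,\|\phi(X_t)\| \sum_{i=1}^n (X_t^i)_+ \, dB_t^i, \qquad N_t := \#\{i : X_t^i > 0\},
\end{equation*}
and a direct computation of the quadratic variation yields $d\langle R\rangle_t = \frac{4}{n} R_t^2\, dt$, using $\|\phi(X_t)\|^2 = \sum_i (X_t^i)_+^2 = R_t$.

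Next, on the stochastic interval $[0,\tau)$ where $R_t > 0$, I would apply Itô's lemma to $\log R_t$. Combining the drift of $R$, the Itô correction $-\tfrac12 R_t^{-2}\, d\langle R\rangle_t = -\tfrac{2}{n}\, dt$, and the observation that the martingale part of $\log R$ is a continuous local martingale with deterministic quadratic variation $\tfrac{4}{n} t$ (hence equal to $\tfrac{2}{\sqrt n} W_t$ for a standard Brownian motion $W$, by Lévy's characterization), I obtain
\begin{equation*}
\log R_t = \log R_0 + \int_0^t \Big(\frac{N_s}{n} - \frac{2}{n}\Big)\, ds + \frac{2}{\sqrt n}\, W_t, \qquad t \in [0,\tau).
\end{equation*}
Since $0 \le N_s \le n$, the drift integrand is bounded by $1$ in absolute value. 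I would then argue by contradiction: on $\{R_0 > 0\} \cap \{\tau < \infty\}$, continuity forces $R_t \to 0$ and thus $\log R_t \to -\infty$ as $t \uparrow \tau$, whereas the right-hand side stays finite because $\big|\int_0^\tau (N_s/n - 2/n)\, ds\big| \le \tau < \infty$ and $W_\tau$ is finite almost surely. This contradiction shows $\mathbb{P}(\tau = \infty \mid \|\phi(X_0)\| > 0) = 1$.

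Finally, for the unconditional claim I would decompose over the initial layer. On $\{\|\phi(X_0)\| = 0\}$ the process has already collapsed at $t = 0$, so $\tau = 0$ there; combined with the conditional result this gives $\mathbb{P}(\tau = \infty) = \mathbb{P}(\|\phi(X_0)\| > 0)$. Since $X_0 = W_{in} x$ has independent coordinates $X_0^i = \sum_j W_{in}^{ij} x^j \sim \normal(0, \|x\|^2/d)$, non-degenerate because $x \neq 0$, each is non-positive with probability $\tfrac12$, so $\mathbb{P}(\|\phi(X_0)\| = 0) = \mathbb{P}(\forall i : X_0^i \le 0) = 2^{-n}$ and hence $\mathbb{P}(\tau = \infty) = 1 - 2^{-n}$. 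The main obstacle is the first step: justifying the generalized Itô formula for the non-smooth map $(z_+)^2$ and confirming that no local-time term contributes — this is precisely where the argument must deviate from the smooth case of \cref{lemma:tau_general_zeta}. Once the clean geometric-Brownian-motion identity for $\log R_t$ is in hand, both the non-collapse statement and the counting of the initial sign pattern are immediate.
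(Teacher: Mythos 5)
Your proof is correct, but it takes a genuinely different --- and considerably more direct --- route than the paper. The paper cannot apply its smooth-case argument (\cref{lemma:tau_general_zeta}) to ReLU, so it builds an approximation machine: it smooths ReLU into $\phi_m(z)=\int_0^z h(mu)\,du$, introduces the approximating diffusion $X^m$, proves $L^2$ closeness via Gronwall (\cref{lemma:convergence_Xm_X}), $L^1$ convergence of the log-ratios (\cref{lemma:convergence_L1}), and small-ball density estimates (\cref{lemma:uppder_bound_delta}), and then kills the collapse probability by a shell decomposition of $\|\phi(X_0)\|$, Markov's inequality, and two applications of dominated convergence. You instead attack the non-smoothness at its source: since $f(z)=(z_+)^2$ is convex with Lipschitz derivative, the Meyer--It\^o formula applies coordinate-wise with second-derivative measure $f''(da)=2\,\ind_{\{a>0\}}\,da$, which is absolutely continuous, so the occupation-times formula turns the local-time integral into $\int_0^t \ind_{\{X^i_s>0\}}\,d\langle X^i\rangle_s$ and no local-time term survives; from there the log-transform and the hitting-time contradiction are exactly the paper's own technique in \cref{lemma:tau_general_zeta} and \cref{lemma:gbm_non_zero_1d}, and the final computation $\mathbb{P}(\forall i: X^i_0\le 0)=2^{-n}$ is the same in both proofs. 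A pleasant byproduct of your route is that your identity for $\log\|\phi(X_t)\|^2$ is (up to the factor $\tfrac12$) precisely the representation that the paper later re-derives with the full smoothing machinery in \cref{thm:norm_post_act}, since $\|\phi'(X_s)\|^2=N_s=\#\{i:X^i_s>0\}$ for ReLU; what the paper's longer route buys is that it stays entirely within classical $\mathcal{C}^2$ It\^o calculus, and its intermediate lemmas are reused in that later proof. Two minor points you should tighten: the drift integrand $N_s/n-2/n$ is bounded by $2$, not by $1$, when $n\le 2$ (boundedness is all that matters); and rather than invoking L\'evy's characterization on the random interval $[0,\tau)$, it is cleaner to note that a continuous local martingale whose quadratic variation is bounded by $4\tau/n<\infty$ on $\{\tau<\infty\}$ has an almost-sure finite limit as $t\uparrow\tau$ (or to use the Dambis--Dubins--Schwarz time change) --- this finiteness is all the contradiction requires.
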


The proof of \cref{lemma:tau_genera_n_relu} relies on a particular choice of a sequence of functions 
$(\phi_m)_{m \geq 1}$ that approximate the ReLU activation $\phi$. Details are provided in \cref{sec:proofs_lemmas_tau}.

The result of \cref{lemma:tau_genera_n_relu} shows that for all $T>0$, with probability $1$, if there exists $j \in [n]$ such that $X_0^j > 0$, then for all $t \in [0,T]$, there exists a coordinate $i$ such that $X^i_t>0$, which implies that the volatility of the process $X$ given by $\frac{1}{\sqrt{n}} \|\phi(X_t)\|$ does not vanish in finite time $t$. Notably, this implies that for any $t \in [0,1]$, the norm of post-activations given by  $\|\phi(X_t)\|$ does not vanish (with probability 1). This is important as it ensures that the vector $\phi(X_t)$, which represents the post-activations in the infinite-depth network, does not vanish, and therefore the process $X_t$ does not get stuck in 
\begin{wrapfigure}{r}{0.45\textwidth}
  \begin{center}
    \includegraphics[width=0.4\textwidth]{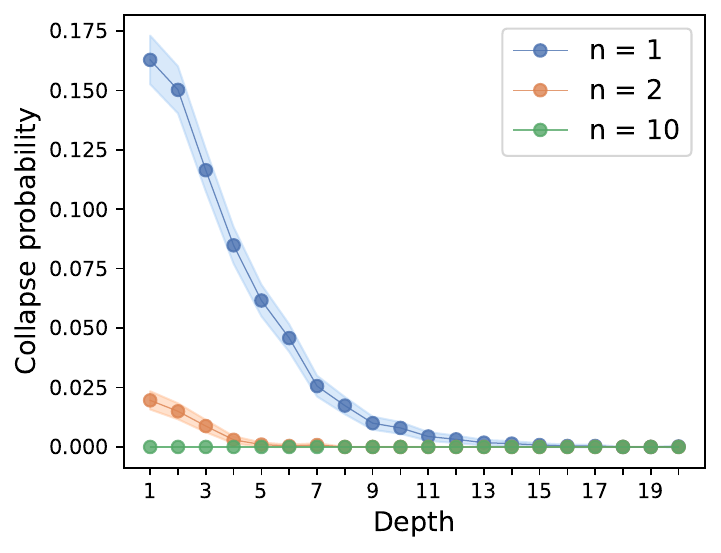}
  \end{center}
  \vspace{-1em}
  \caption{Probability of the event $\{ \exists l \in [L] \textrm{ such that } \phi(Y_l) = 0\}$ (collapse) for varying widths and depths. The probability and the $95\%$ confidence intervals are estimated using $N = 5000$ samples. }
  \label{fig:killed_probs}
\end{wrapfigure}
an absorbent point. The dependence between the coordinates of the process $X_t$ is crucial in this result. In the opposite case where $X_t$ are independent, the event $\{\|\phi(X_t)\|=0\}$ has probability $2^{-n}$. Notice also that this result holds only in the infinite-depth limit. With finite-depth ResNet (\cref{eq:resnet}) with ReLU activation, it is not hard to show that the network collapse event $\{\exists l \in [L], \textrm{ s.t. }\|\phi(Y_{\lfloor t L \rfloor})\|=0\}$ has non-zero probability. However, as the depth increases, the probability of network collapse goes to zero. \cref{fig:killed_probs} shows the probability of network collapse for a finite-width and depth ResNet (\cref{eq:resnet}). As the depth $L$ increases, it becomes unlikely that the network collapses. This is in agreement with our theoretical prediction that the infinite-depth network represented by the process $X_t$ has zero-probability collapse event, conditionally on the fact that $\|\phi(X_0)\|>0$. The probability of neural collapse also decreases with width, which is expected, since it becomes less likely to have all pre-activations non-positive as the width increases.
\subsection{Post-activation norm}
As a result of \cref{lemma:tau_genera_n_relu}, conditionally on $\| \phi(X_0)\| > 0$, we can safely consider manipulating functions that require positiveness such as the logarithm of the norm of the post-activations. In the next result, we show that the norm of the post-activations has a distribution that
\begin{figure}[h]
     \centering
     \begin{subfigure}[b]{0.42\textwidth}
         \centering
         \includegraphics[width=\textwidth]{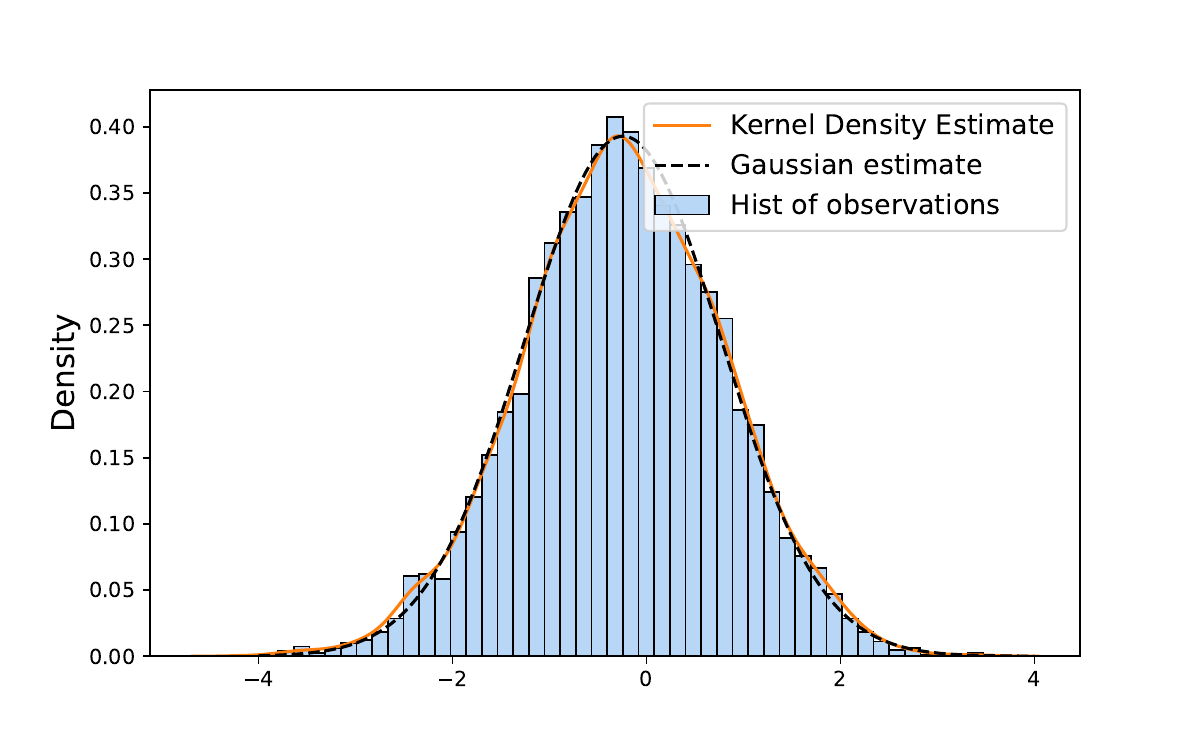}
         \caption{$n=2$}
     \end{subfigure}
     \begin{subfigure}[b]{0.42\textwidth}
         \centering
         \includegraphics[width=\textwidth]{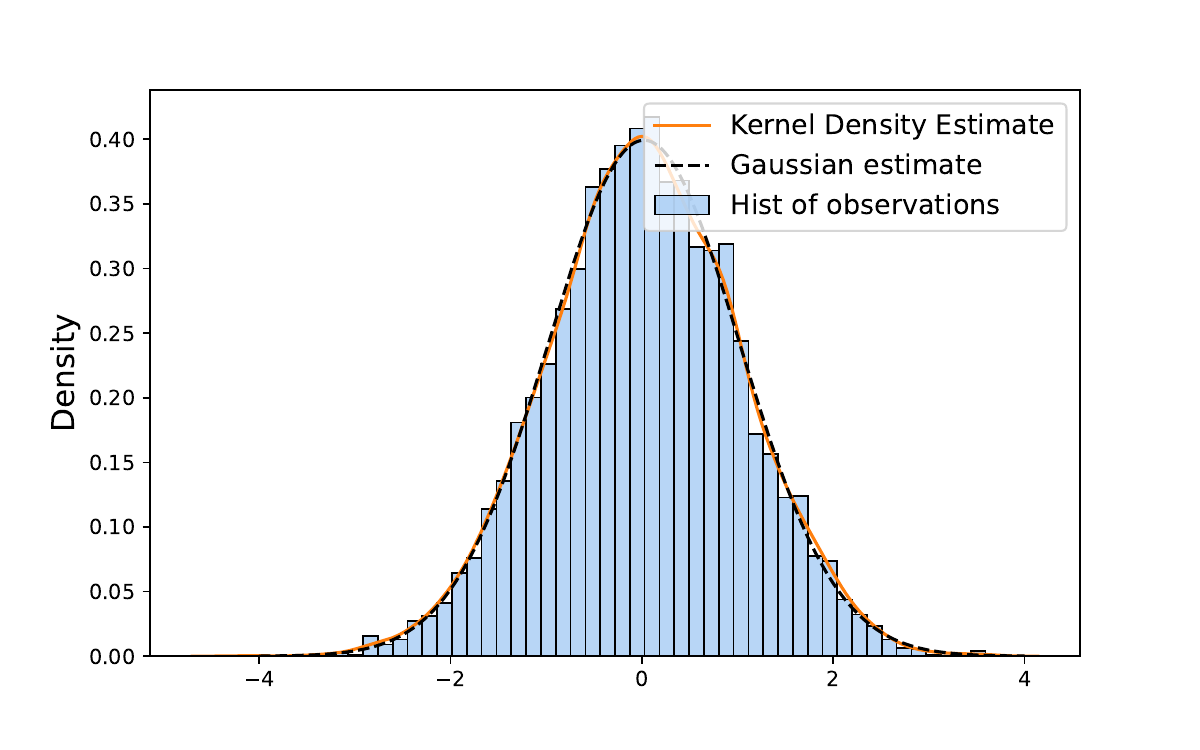}
         \caption{$n=3$}
     \end{subfigure}
     \begin{subfigure}[b]{0.42\textwidth}
         \centering
         \includegraphics[width=\textwidth]{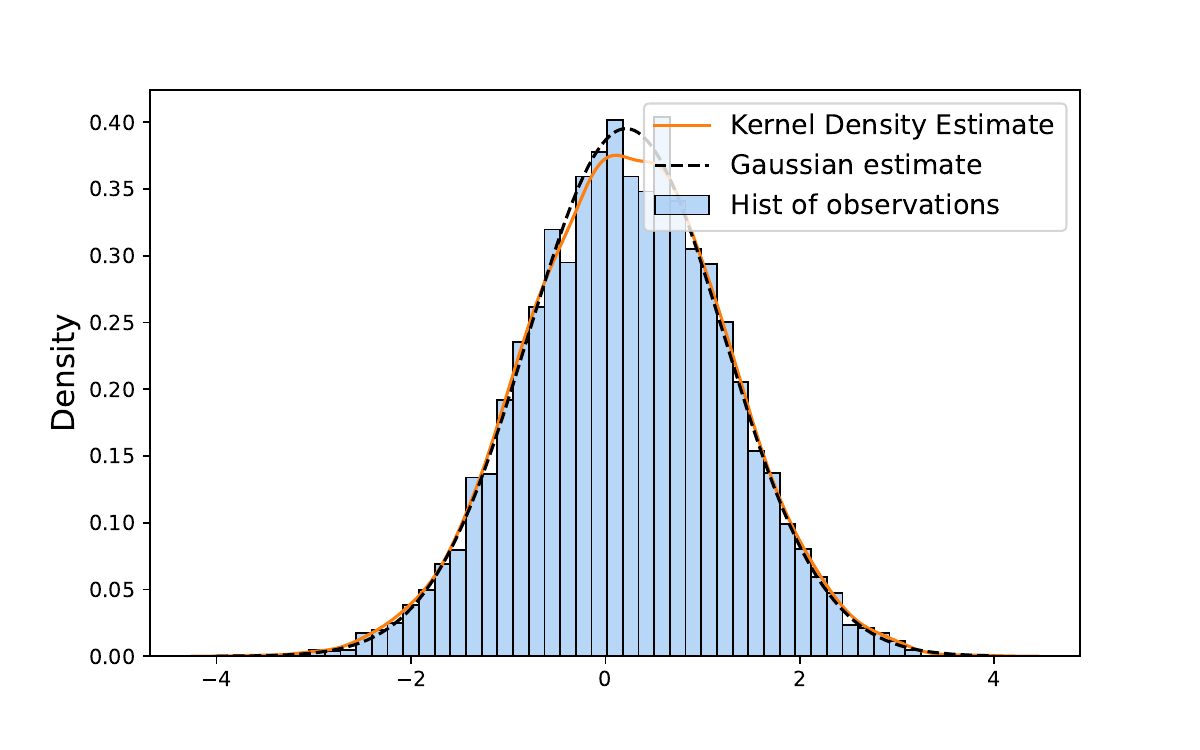}
         \caption{$n=4$}
     \end{subfigure}
     \begin{subfigure}[b]{0.42\textwidth}
         \centering
         \includegraphics[width=\textwidth]{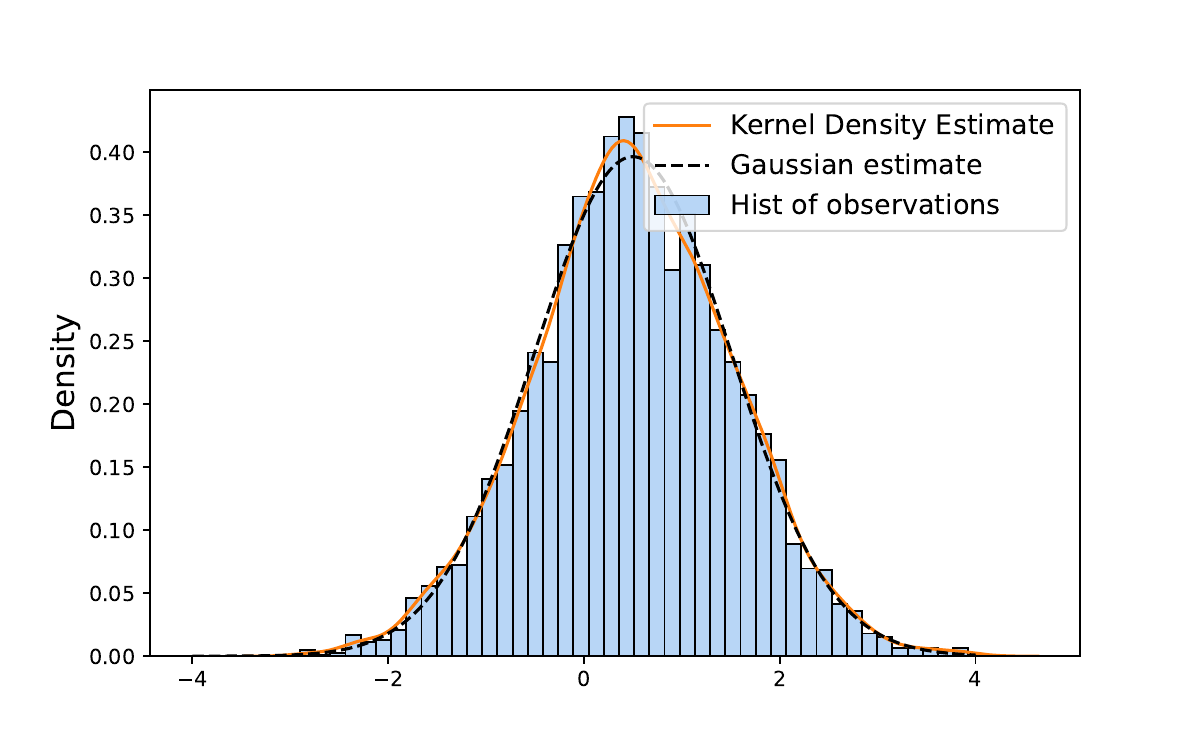}
         \caption{$n=6$}
     \end{subfigure}
     \begin{subfigure}[b]{0.42\textwidth}
         \centering
         \includegraphics[width=\textwidth]{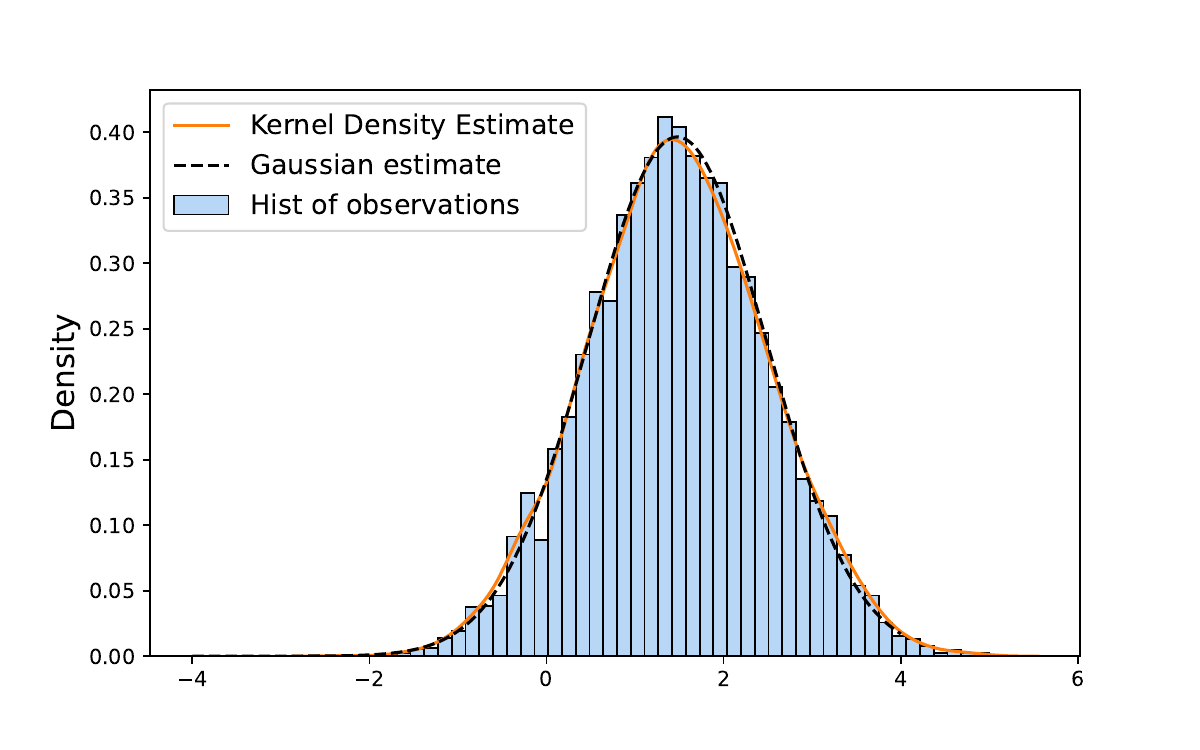}
         \caption{$n=20$}
     \end{subfigure}
     \begin{subfigure}[b]{0.42\textwidth}
         \centering
         \includegraphics[width=\textwidth]{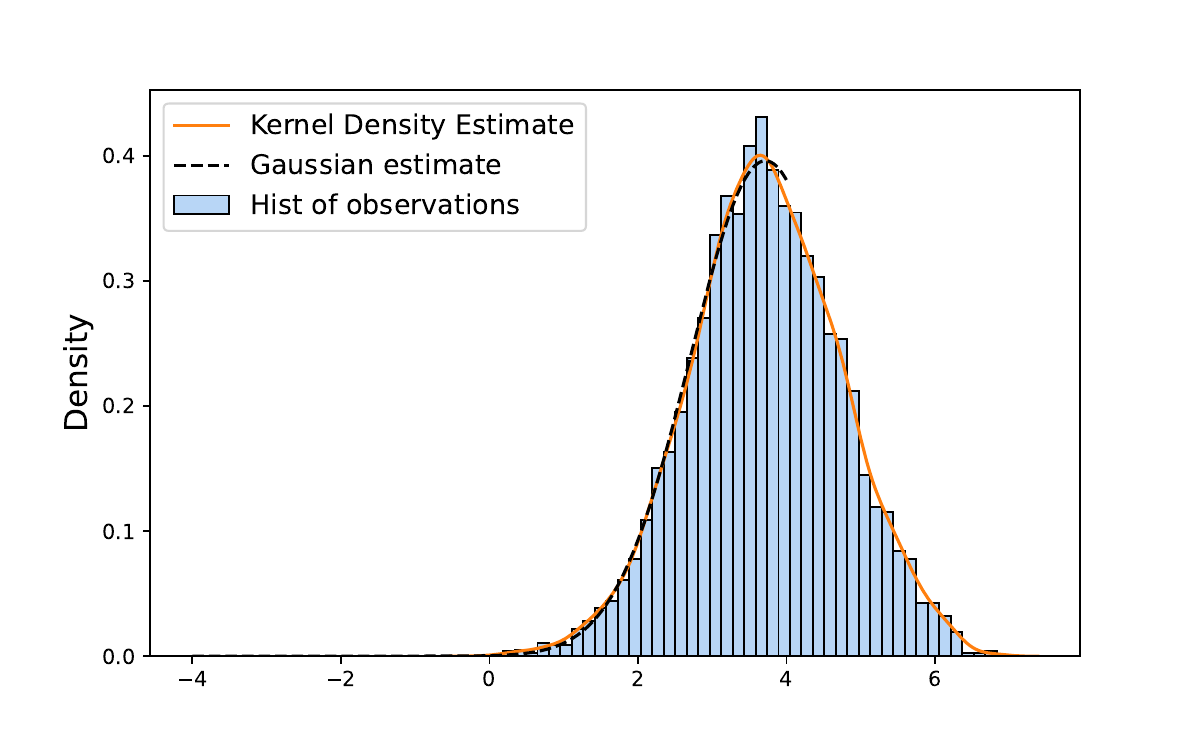}
         \caption{$n=100$}
     \end{subfigure}
        \caption{Histogram of  $\sqrt{n} \log(\|\phi(Y_{L})\|/ \|\phi(Y_{0})\|)$ for depth $L = 100$ and different widths $n \in \{2,3,4,6,20, 100\}$ based on $N=5000$ simulations. Gaussian density estimate and (Gaussian) kernel density estimates are shown. We observe a great match between the best Gaussian estimate and the empirical distribution, which confirms the quasi-log-normal theoretical predictions from \cref{thm:norm_post_act}.}
        \label{fig:quasi_gaussian_behaviour}
\end{figure}
resembles the log-normal distribution. We call this Quasi Geometric Brownian Motion distribution (Quasi-GBM).

\begin{thm}[Quasi-GBM behaviour of the post-activations norm]\label{thm:norm_post_act}
We have that for all $t \in [0,1]$,
$$
\|\phi(X_t)\| = \|\phi(X_0)\| \exp\left(\frac{1}{\sqrt{n}} \hat{B}_t + \frac{1}{n} \int_{0}^t \mu_s ds\right), \quad \textrm{almost surely,}
$$
where $\mu_s =  \frac{1}{2}\|\phi'(X_s)\|^2 - 1$, and $(\hat{B})_{t \geq 0}$ is a one-dimensional Brownian motion. As a result, for all $0 \leq s\leq t \leq 1$
$$
\E\left[ \log \left( \frac{\|\phi(X_t)\|}{\|\phi(X_s)\|} \right) \huge| \, \|\phi(X_0)\| > 0 \right] = \left(\frac{(1- 2^{-n})^{-1}}{4} - \frac{1}{n}\right) (t-s).
$$
Moreover, for $n\geq 2$, we have 
$$
\textup{Var}\left[ \log \left( \frac{\|\phi(X_t)\|}{\|\phi(X_s)\|} \right) \huge| \, \|\phi(X_0)\| > 0 \right] \leq  \left(n^{-1/2} + \Gamma_{s,t}^{1/2}\right)^2 (t-s),
$$
where $\Gamma_{s,t} = \frac{1}{4}\int_{s}^t \,\left(\left(\E \phi'(X^1_u) \phi'(X^2_u) - \frac{(1 - 2^{-n})^2}{4}\right) + n^{-1}\left(\frac{1 - 2^{-n}}{2} - \E \phi'(X^1_u) \phi'(X^2_u)\right)\right) du$. 
\end{thm}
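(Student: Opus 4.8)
The plan is to apply It\^o's lemma to the post-activation norm and read off the drift. Write $N_t = \|\phi(X_t)\|$ and $S_t = N_t^2 = \sum_{i=1}^n \phi(X^i_t)^2$. Since $\phi(z)^2 = \max(z,0)^2$ is continuously differentiable with Lipschitz derivative $2\phi(z)$ and an a.e.\ second derivative $2\,\ind_{\{z>0\}}$ having no atomic part, the generalized (It\^o--Tanaka) formula applies with \emph{no} local-time contribution. Using $dX^i_t = \frac{1}{\sqrt n} N_t\, dB^i_t$ and $d\langle X^i,X^j\rangle_t = \frac1n N_t^2\,\delta_{ij}\,dt$, I get $dS_t = \frac{2}{\sqrt n} N_t \sum_i \phi(X^i_t)\,dB^i_t + \frac1n N_t^2 \|\phi'(X_t)\|^2\,dt$, where $\phi'(z)=\ind_{\{z>0\}}$ gives $\sum_i \ind_{\{X^i_t>0\}} = \|\phi'(X_t)\|^2$. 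Applying It\^o to $\log S_t$ (legitimate on $E := \{\|\phi(X_0)\|>0\}$, where \cref{lemma:tau_genera_n_relu} guarantees $S_t>0$ for all $t$ a.s.) and halving yields $d\log N_t = \frac{1}{\sqrt n} N_t^{-1}\sum_i \phi(X^i_t)\,dB^i_t + \frac1n\big(\tfrac12\|\phi'(X_t)\|^2 - 1\big)\,dt$. The martingale part $\hat B_t := \int_0^t N_s^{-1}\sum_i \phi(X^i_s)\,dB^i_s$ has $d\langle \hat B\rangle_t = N_t^{-2}\sum_i \phi(X^i_t)^2\,dt = dt$, so by L\'evy's characterization $\hat B$ is a standard one-dimensional Brownian motion, and integrating gives the claimed identity with $\mu_s = \tfrac12\|\phi'(X_s)\|^2 - 1$.

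\textbf{Part 2 (the mean).} Integrating, $\log(N_t/N_s) = \frac1{\sqrt n}(\hat B_t - \hat B_s) + \frac1n\int_s^t \mu_u\,du$. The integrand $N_s^{-1}\sum_i\phi(X^i_s)$ is bounded by $1$, so $\hat B$ is a true martingale and its increment has conditional mean zero given $E \in \mathcal{F}_0$; hence $\E[\log(N_t/N_s)\mid E] = \frac1n\int_s^t \E[\mu_u\mid E]\,du$. It then remains to compute $\E[\|\phi'(X_u)\|^2\mid E] = \sum_i \mathbb{P}(X^i_u>0\mid E)$. The key reduction is $\{X^i_u>0\}\subseteq E$: on $E^c$ one has $\phi(X_0)=0$, so by \cref{lemma:constant_after_hit} the process is frozen with every coordinate $\le 0$, making $X^i_u>0$ impossible there. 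Thus $\mathbb{P}(X^i_u>0\mid E) = \mathbb{P}(X^i_u>0)/(1-2^{-n})$. Granting the unconditional identity $\mathbb{P}(X^i_u>0)=\tfrac12$ (see below), exchangeability over coordinates gives $\E[\|\phi'(X_u)\|^2\mid E] = \frac n2(1-2^{-n})^{-1}$, so $\E[\mu_u\mid E] = \frac n4 (1-2^{-n})^{-1} - 1$ is constant in $u$, and the stated linear-in-$(t-s)$ mean follows.

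\textbf{Part 3 (the variance).} I write $\log(N_t/N_s) = A + C$ with $A = \frac1{\sqrt n}(\hat B_t-\hat B_s)$ and $C = \frac1n\int_s^t\mu_u\,du$, and use $\mathrm{Var}(A+C)\le (\sqrt{\mathrm{Var}(A)} + \sqrt{\mathrm{Var}(C)})^2$ (Cauchy--Schwarz on the covariance). Here $\mathrm{Var}(A\mid E) = \frac1n(t-s)$ gives the $n^{-1/2}$ term. For $C$, a double application of Cauchy--Schwarz reduces $\int_s^t\!\int_s^t \mathrm{Cov}(\mu_u,\mu_v)\,du\,dv$ to $(t-s)\int_s^t \mathrm{Var}(\mu_u)\,du$, so $\mathrm{Var}(C\mid E)\le \frac{t-s}{n^2}\int_s^t \mathrm{Var}(\mu_u\mid E)\,du$. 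Expanding $\mathrm{Var}(\mu_u\mid E) = \tfrac14\sum_{i,j}\mathrm{Cov}(\ind_{\{X^i_u>0\}},\ind_{\{X^j_u>0\}}\mid E)$ and separating the $n$ diagonal terms from the $n(n-1)$ off-diagonal terms (the latter equal to $\E[\phi'(X^1_u)\phi'(X^2_u)\mid E]$ minus a product of marginals, by exchangeability) produces, after collecting the $O(1)$ and $n^{-1}$ contributions, exactly the integrand defining $\Gamma_{s,t}$. Combining the two bounds gives $(n^{-1/2}+\Gamma_{s,t}^{1/2})^2(t-s)$.

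\textbf{Main obstacle.} The crux is the innocuous-looking identity $\mathbb{P}(X^i_u>0) = \tfrac12$ for all $u\in[0,1]$: it is precisely what makes $\E[\mu_u\mid E]$ constant in time and the mean linear in $t-s$. For $n=1$ it reduces to \cref{prop:relu_gbm_1d} (the sign is frozen), but for general $n$ it is delicate, because the ReLU volatility $\frac1{\sqrt n}\|\phi(X_t)\|$ is \emph{not} invariant under the flip $X^i\mapsto -X^i$ (only the positive part contributes), so it does not follow from a naive reflection symmetry. I would establish it by conditioning on the path of the common clock $\theta_r = \frac1n\|\phi(X_r)\|^2$, exploiting that each coordinate is a driftless time-changed Brownian motion started at the symmetric Gaussian $X^i_0$, and combining the symmetry of the initial law with the no-collapse result of \cref{lemma:tau_genera_n_relu}. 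A secondary technical point is that $\phi'$ is defined only almost everywhere, so the indicator computations and the It\^o--Tanaka step must be justified by mollifying $\phi$ through a sequence $(\phi_m)$ as in the proof of \cref{lemma:tau_genera_n_relu} and passing to the limit; I also need to confirm the true-martingale property of $\hat B$ and the legitimacy of the conditional-expectation manipulations on $E$.
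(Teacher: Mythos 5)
Your Part 1 takes a genuinely different, and leaner, route than the paper. The paper never applies It\^o's formula to the ReLU process directly: it mollifies the activation ($\phi_m(z)=\int_0^z h(mu)\,du$), applies It\^o to $g_m(X^m_t)$ for the smoothed process $X^m$, and then spends most of \cref{sec:proof_main_thm} passing to the limit $m\to\infty$ (via \cref{lemma:convergence_Xm_X}, \cref{lemma:diff_phi_m_phi}, \cref{lemma:convergence_L1}, \cref{lemma:uppder_bound_delta}, and the stopping times $\tau_\epsilon$). Your observation that $z\mapsto\phi(z)^2$ is $\mathcal{C}^1$ with Lipschitz derivative and absolutely continuous second derivative, so that the It\^o--Tanaka/occupation-density formula applies coordinatewise with no singular term, yields the semimartingale decomposition of $S_t=\|\phi(X_t)\|^2$ in one step; localizing $\log$ away from $0$ via \cref{lemma:tau_genera_n_relu} and invoking L\'evy's characterization for $\hat B$ then gives exactly the stated representation (the convention for $\phi'(0)$ is immaterial by the occupation formula). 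This is correct and a real simplification, with the caveat that it is not a free lunch: the no-collapse input \cref{lemma:tau_genera_n_relu} you lean on is itself proved in the paper by the same mollification machinery, so the $\phi_m$ apparatus does not disappear from the global argument. Your Part 3 is the paper's variance proof in different clothing: the bound $\mathrm{Var}(A+C)\le\bigl(\sqrt{\mathrm{Var}\,A}+\sqrt{\mathrm{Var}\,C}\bigr)^2$ is what the paper obtains by optimizing $(1+\lambda^{-1})a^2+(1+\lambda)b^2$ over $\lambda$, and your diagonal/off-diagonal split of $\mathrm{Var}(\mu_u)$ reproduces the paper's $\frac{n^2}{4}\left((p_2^u-(p_1^u)^2)+n^{-1}(p_1^u-p_2^u)\right)$.

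The genuine gap is the one you flag yourself: $\mathbb{P}(X^i_u>0)=\tfrac12$ for $u>0$. Your proposal does not prove it, and your suggested repair --- conditioning on the common clock $\theta_r=\frac1n\|\phi(X_r)\|^2$ and using symmetry of a time-changed Brownian motion --- runs into the very circularity you criticize: the clock is a functional of $X^i$ (equivalently of the pair $(X^i_0,B^i)$), so flipping the sign of that pair changes the clock, and the conditional law of $(X^i_0,B^i)$ given the clock path is not symmetric. Your distrust of naive reflection is well founded: a driftless scalar diffusion whose volatility is larger on the positive side (as here, since only the positive part of $X^i$ feeds the volatility) generically has $\mathbb{P}(\cdot>0)<1/2$; for oscillating Brownian motion with speeds $\sigma_-<\sigma_+$ started at the origin this probability is $\sigma_-/(\sigma_-+\sigma_+)$, not $1/2$. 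So exact equality cannot come from symmetry alone; it would have to exploit the joint evolution of all coordinates, and no such argument appears in your sketch. You should know, however, that the paper closes this step with precisely the one-line claim you reject: it asserts that $X^1_s=X^1_0+\frac1{\sqrt n}\int_0^s\|\phi(X_u)\|\,dB^1_u$ is symmetric about zero ``by properties of $X^1_0$ and the Brownian motion,'' ignoring the feedback of $X^1$ on the integrand. In short: your proof of the mean (and hence of the constants in the variance bound) is incomplete, but the missing step is exactly the point where the paper's own argument is thinnest, and in isolating it rather than reproducing the reflection claim you have identified a genuine issue with the theorem's proof, not merely with your own.
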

\begin{figure}[h]
     \centering
     \begin{subfigure}[b]{0.42\textwidth}
         \centering
         \includegraphics[width=\textwidth]{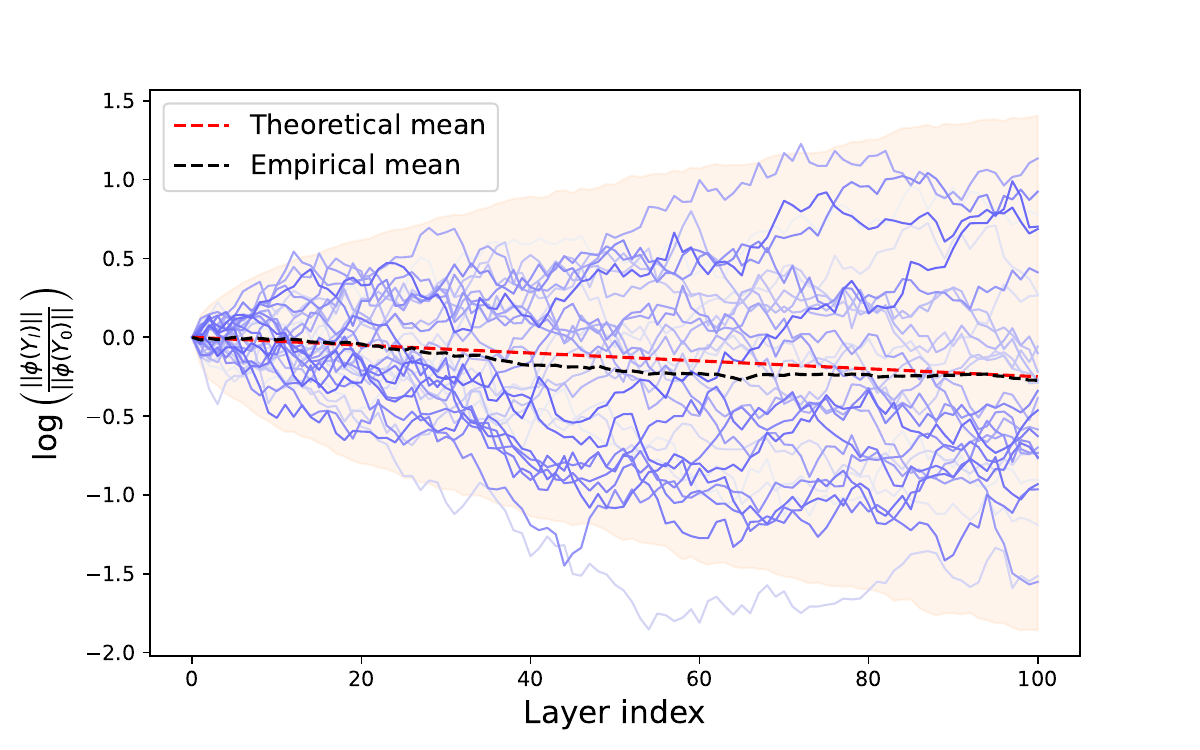}
         \caption{$n=2$}
     \end{subfigure}
     \begin{subfigure}[b]{0.42\textwidth}
         \centering
         \includegraphics[width=\textwidth]{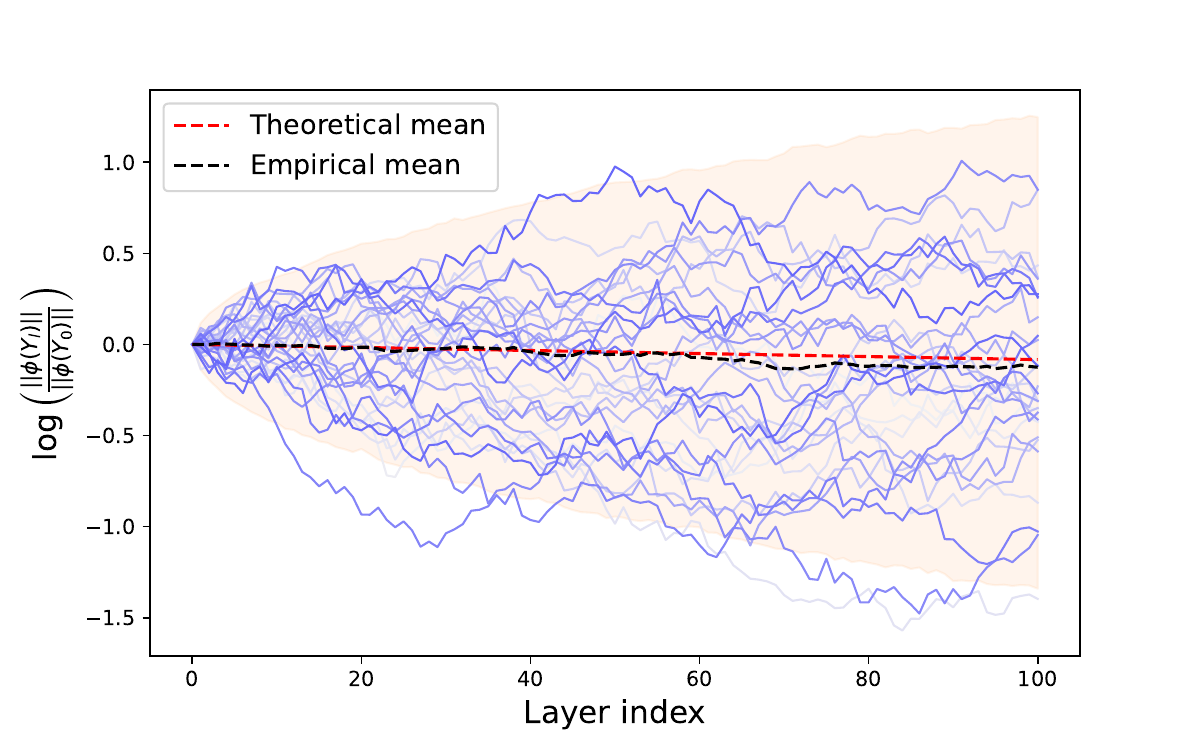}
         \caption{$n=3$}
     \end{subfigure}
     \begin{subfigure}[b]{0.42\textwidth}
         \centering
         \includegraphics[width=\textwidth]{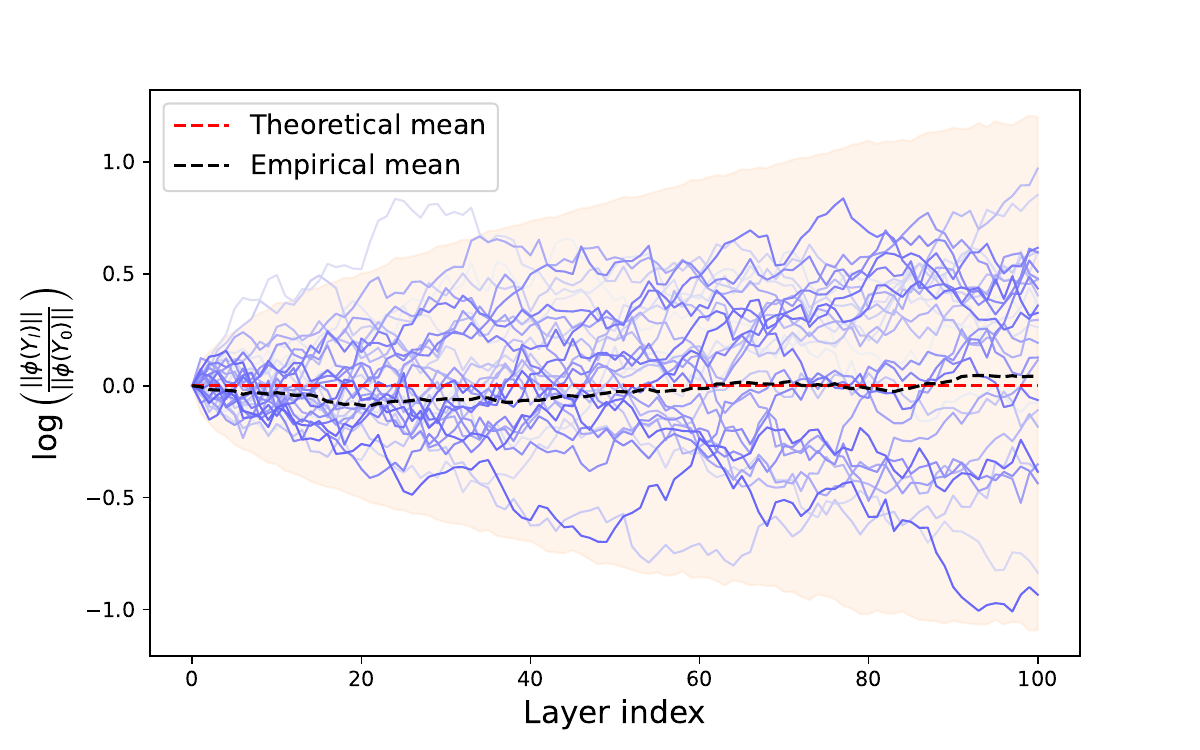}
         \caption{$n=4$}
     \end{subfigure}
     \begin{subfigure}[b]{0.42\textwidth}
         \centering
         \includegraphics[width=\textwidth]{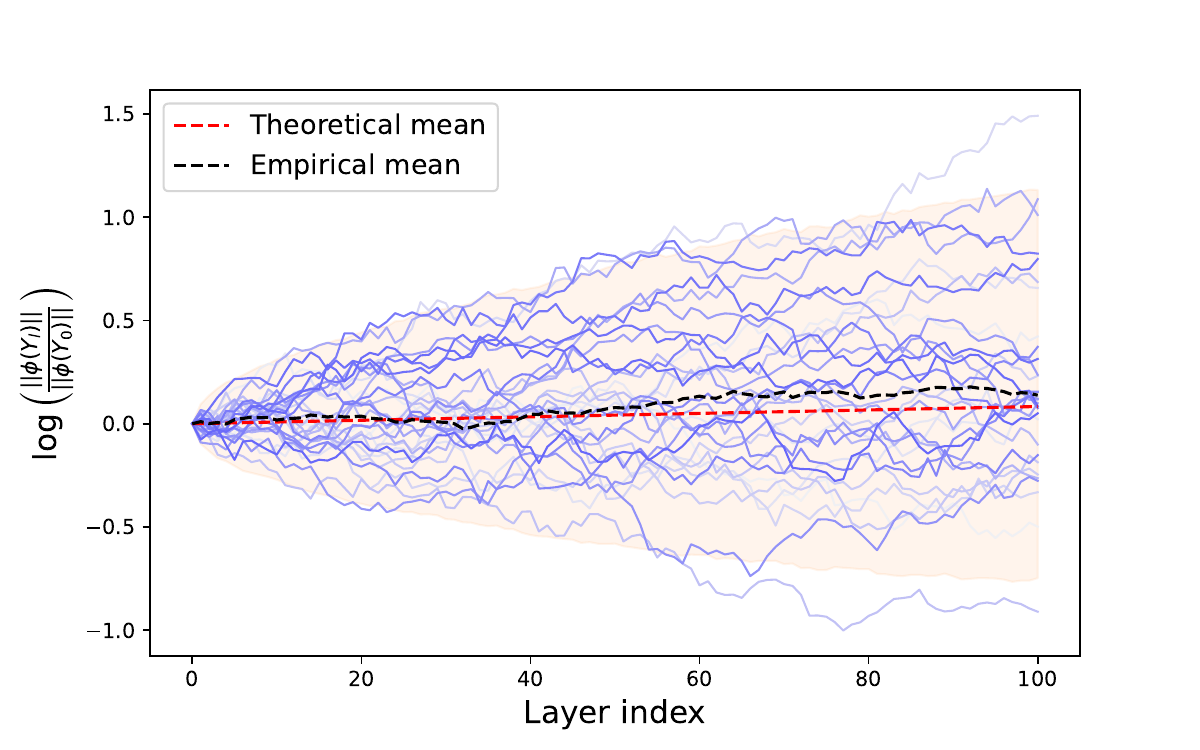}
         \caption{$n=6$}
     \end{subfigure}
     \begin{subfigure}[b]{0.42\textwidth}
         \centering
         \includegraphics[width=\textwidth]{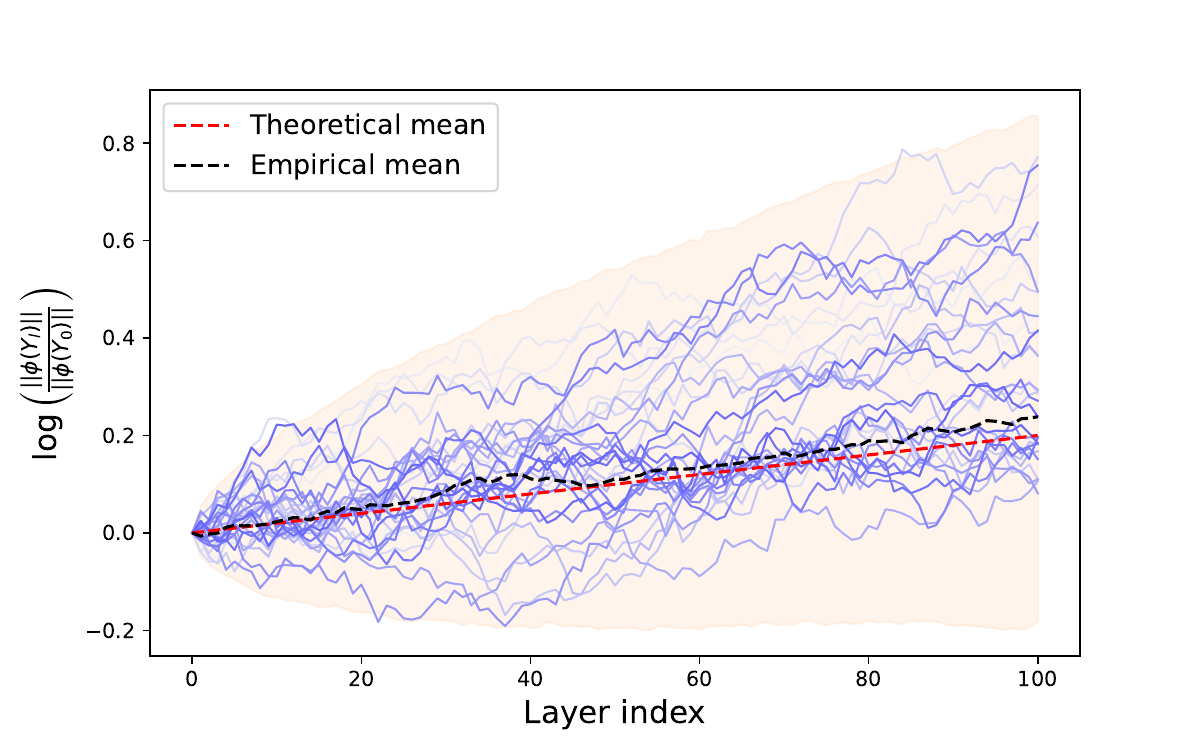}
         \caption{$n=20$}
     \end{subfigure}
     \begin{subfigure}[b]{0.42\textwidth}
         \centering
         \includegraphics[width=\textwidth]{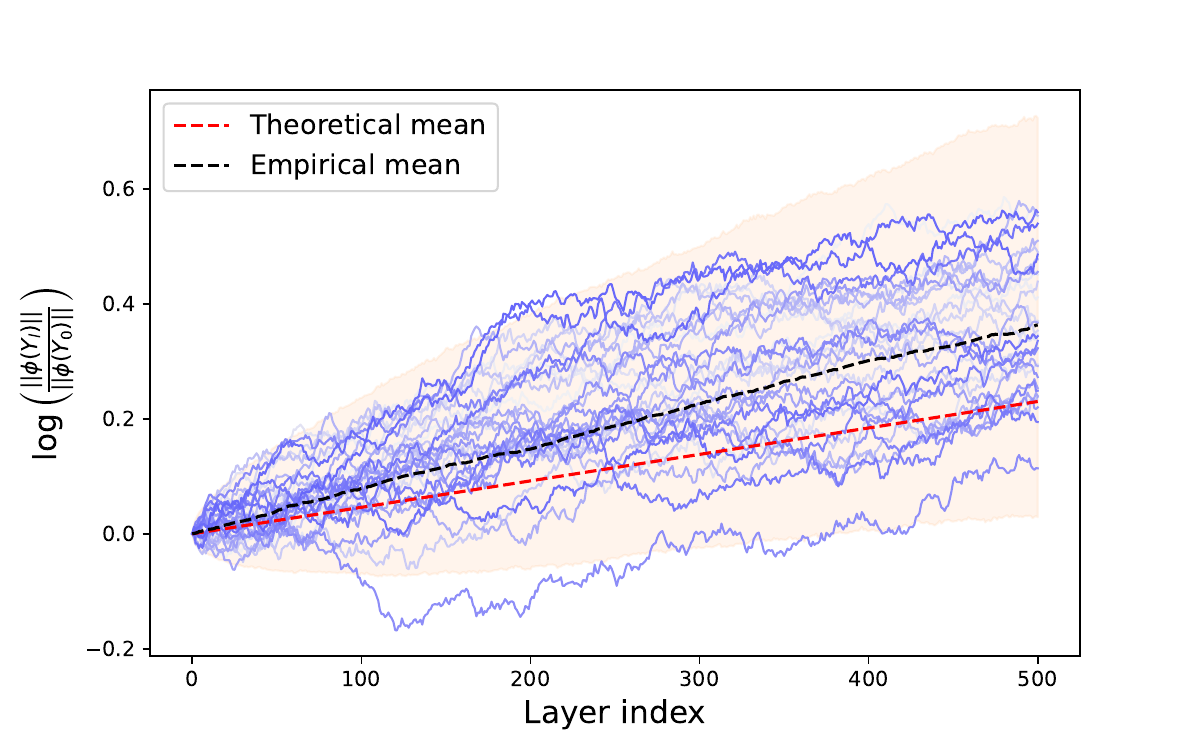}
         \caption{$n=100$}\label{fig:neural_paths_log_n100}
     \end{subfigure}
        \caption{30 simulations of the sequence $\left(\log(\|\phi(Y_{l})\|/ \|\phi(Y_{0})\|)\right)_{1 \leq l \leq L}$ for depth $L = 100$ and different widths $n \in \{2,3,4,6,20, 100\}$. Theoretical means from \cref{thm:norm_post_act} are shown in red dashed lines and compared to their empirical counterparts. We observe that when the ratio $L/n$ increases (especially for $n=100$), the empirical mean also increases and becomes significantly different from the theoretical prediction.}
        \label{fig:neural_paths_log}
\end{figure}

Different tools from stochastic calculus and probability theory are used in the proof of \cref{thm:norm_post_act}. Technical details are provided in \cref{sec:proof_main_thm}. The first result in the theorem suggests that the norm of the post-activations has a quasi-log-normal distribution (conditionally on $X_0$). The first term in the exponential is Gaussian ($n^{-1/2} \hat{B}_t$) and the second term depends on $n^{-1} \mu_s$, which involves an average over $(\phi'(X_s^i))_{1 \leq i \leq n}$. In the large width limit, this average concentrates around its mean as we will see in \cref{thm:infinite_width}. In \cref{fig:quasi_gaussian_behaviour}, we show the histogram of $\sqrt{n} \log(\|\phi(Y_L)\| / \|\phi(X_0)\|)$ for depth $L=100$ and varying widths $n$. Surprisingly, the log-normal approximation fits the empirical distribution very well even for small widths $n \in \{2, 3, 4, 6\}$ for which the term $n^{-1}\mu_s$ is not necessarily close to its mean\footnote{We currently do not have a rigorous explanation for this effect. A possible explanation for this empirical result is that the integral over $\mu_s$ has some `averaging' effect.}. More interestingly, the result of \cref{thm:norm_post_act} sheds light on an intriguing regime change that occurs between widths $n = 3$ and $n=4$. Indeed, for $n \leq 3$, the logarithmic growth factor of the norm of the post-activations $\|\phi(X_t)\|$ tends to decrease with depth on average, while it increases for $n \geq 4$. When $n=4$, the average growth is positive although very small. This regime change phenomenon suggests that for $n \leq 3$, the random variable $\|\phi(X_t)\|/\|\phi(X_s)\|$ has significant probability mass in the region $(0,1)$. This probability mass tends to $0$ as $n$ increases since $\|\phi(X_t)\|/\|\phi(X_s)\|$ converges to a deterministic constant (we will see this in the next theorem), and the variance upperbound in \cref{thm:norm_post_act} converges to $0$ when $n$ goes to infinity, which can be explained by the fact that $\E \phi'(X^1_u) \phi'(X^2_u) \overset{n\to\infty}{\longrightarrow} 1/4$ (the coordinates become independent in the large width limit, see next theorem). Experiments showing this concentration are provided in \cref{sec:non_scaled_log_ratio}.
In \cref{fig:neural_paths_log}, we simulate 30 neural paths (i.e. $(Y_l)_{1 \leq l \leq L}$) for depth $L = 100$ and compute the logarithmic factor $\log(\|\phi(Y_l)\| / \|\phi(Y_0)\|)$. An excellent match with the theoretical results is observed for widths $n \in \{2, 3, 4, 6, 20\}$. A mismatch between theory and empirical results appears when $n=50$, which is expected, since the theoretical results of \cref{thm:norm_post_act} yield good approximations only when $n \ll L$. 

Notice that the case of $n=1$ matches the result of \cref{prop:relu_gbm_1d}. Indeed, the latter implies that conditionally on $\phi(X_0)>0$, we have  $\log(\phi(X_t)/\phi(X_0)) = \log(X_t/X_0) \sim -t/2 + B_t$ where $B$ is a one-dimensional Brownian motion, and where we have used the fact that $X_t>0$ for all $t$. This result can be readily obtained from \cref{thm:norm_post_act} by setting $n=1$.

An interesting question is that of the infinite-width limit of the process $X_t$, which corresponds to the sequential limit infinite-depth-then-infinite-width of the ResNet $Y_{\lfloor tL \rfloor}$ (\cref{eq:resnet}). We discuss this in the next section.

\subsection{Infinite-width limit of infinite-depth networks}
In the next result, we show that when the width goes to infinity, the ratio $\|\phi(X_t)\| / \|\phi(X_0)\|$ concentrates around a layer dependent ($t$-dependent) constant. In this limit, the coordinates of $X_t$ converge in $L_2$ to a Mckean-Vlasov process, which allows us to recover the Gaussian behaviour of the pre-activations of the ResNet. We later compare this with the converse sequential limit infinite-width-then-infinite-depth where the pre-activations are also normally distributed, and show a key difference in the variance of the Gaussian distribution.
\begin{thm}[Infinite-depth-then-infinite-width limit]\label{thm:infinite_width} For $0 \leq s \leq t \leq 1$, we have
$$
\log\left(\frac{\|\phi(X_t)\|}{\|\phi(X_s)\|}\right)\,  \ind_{\{ \|\phi(X_0)\|>0\}} \underset{n \to \infty}{\longrightarrow} \frac{t-s}{4}, \quad \textrm{and, }\quad \frac{\|\phi(X_t)\|}{\|\phi(X_s)\|}\,  \ind_{\{ \|\phi(X_0)\|>0\}}  \underset{n \to \infty}{\longrightarrow} \exp\left(\frac{t-s}{4}\right).
$$
where the convergence holds in $L_1$.\\

Moreover, 
we have that
$$
\sup_{i \in [n]} \E \left(\sup_{t \in [0,1]} |X^i_t - \tilde{X}^i_t|^2\right) = \bigO(n^{-2/5}),
$$
where $X^{i}_t$ is the solution of the following (Mckean-Vlasov) SDE
$$
d\tilde{X}^i_t = \left(\E \phi(\tilde{X}^i_t)^2\right)^{1/2} dB^i_t, \quad \tilde{X}^i_0 = X^i_0. 
$$

As a result, 
the pre-activations $Y^i_{\lfloor t L \rfloor}$ (\cref{eq:resnet}) converge in distribution to a Gaussian distribution in the limit infinite-depth-then-infinite-width
$$
\forall i \in [n],\,\,\,\, Y^i_{\lfloor t L \rfloor}  \xrightarrow{L \to \infty \textrm{ then }n \to \infty} \normal(0, d^{-1} \|x\|^2 \exp(t/2)).
$$
\end{thm}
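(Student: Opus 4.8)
The plan is to prove the three assertions in logical order: first the quantitative propagation-of-chaos estimate, then use it to identify the limiting correlation and deduce the $L_1$ convergence of the norm ratio, and finally read off the Gaussian law of the pre-activations. A preliminary observation underlies everything: the diffusion coefficient $\rho_s := \E\phi(\tilde{X}^i_s)^2$ of the Mckean-Vlasov SDE is deterministic and bounded below on $[0,1]$. Applying \ito's lemma to $(\tilde{X}^i_s)^2$ and taking expectations kills the martingale term, so with $v_s := \E(\tilde{X}^i_s)^2$ one gets $v_s' = \rho_s$; since $\tilde{X}^i_s$ is a symmetric (centered Gaussian) variable and $\phi$ is ReLU, $\rho_s = \tfrac12 v_s$, whence $v_s' = \tfrac12 v_s$ and $v_s = d^{-1}\|x\|^2 e^{s/2}$. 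In particular $\rho_s \geq \tfrac12 d^{-1}\|x\|^2 =: c_0 > 0$ uniformly on $[0,1]$, using $x \neq 0$.

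For the propagation-of-chaos bound I would couple $X^i$ and $\tilde{X}^i$ through the same Brownian motions $B^i$ and the same initial data $X^i_0$, so that $X^i_t - \tilde{X}^i_t = \int_0^t [(\tfrac1n\sum_j\phi(X^j_s)^2)^{1/2} - \rho_s^{1/2}]\,dB^i_s$. Applying the Burkholder-Davis-Gundy inequality followed by Doob's inequality converts $\E\sup_{t\le T}|X^i_t-\tilde X^i_t|^2$ into a time integral of $\E|(\tfrac1n\sum_j\phi(X^j_s)^2)^{1/2}-\rho_s^{1/2}|^2$. I would split the integrand using the lower bound $\rho_s\ge c_0$ and the identity $|\sqrt a-\sqrt b|=|a-b|/(\sqrt a+\sqrt b)\le c_0^{-1/2}|a-b|$ into a \emph{stability} term $\tfrac1n\sum_j|\phi(X^j_s)^2-\phi(\tilde X^j_s)^2|$ and a \emph{fluctuation} term $|\tfrac1n\sum_j\phi(\tilde X^j_s)^2-\rho_s|$. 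The stability term is handled by the $1$-Lipschitzness of ReLU together with standard $L^2$ moment bounds on $X^j,\tilde X^j$ (themselves obtained from BDG, Grönwall, and the at-most-linear growth of the coefficients), producing a factor $\sup_j\E|X^j_s-\tilde X^j_s|^2$; the fluctuation term is a law-of-large-numbers average of i.i.d. summands with bounded moments and so is $\bigO(n^{-1})$ in $L^2$. A Grönwall argument then closes the recursion and yields the stated $\sup_i\E\sup_{t}|X^i_t-\tilde X^i_t|^2=\bigO(n^{-1})$.

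With the chaos estimate in hand, the first assertion follows from \cref{thm:norm_post_act}. The $L^2$ convergence $X^j\to\tilde X^j$ to independent limits forces $\E\phi'(X^1_u)\phi'(X^2_u)=\mathbb{P}(X^1_u>0,X^2_u>0)\to \tfrac14$, so $\Gamma_{s,t}\to 0$; combined with $n^{-1/2}\to0$ the variance upper bound in \cref{thm:norm_post_act} vanishes, while its mean $(\tfrac14(1-2^{-n})^{-1}-n^{-1})(t-s)\to\tfrac{t-s}{4}$. Hence $\log(\|\phi(X_t)\|/\|\phi(X_s)\|)\to\tfrac{t-s}{4}$ in $L^2$, and therefore in $L^1$. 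To pass to the ratio itself I would invoke uniform integrability: from the exponential representation in \cref{thm:norm_post_act} the ratio equals $\exp(n^{-1/2}(\hat B_t-\hat B_s)+n^{-1}\int_s^t\mu_u\,du)$ with the drift $n^{-1}\int_s^t\mu_u\,du$ deterministically bounded in $[-(t-s)/n,\tfrac12(t-s)]$ and the Brownian part having variance $(t-s)/n$, so $\sup_n\E[(\text{ratio})^2]<\infty$; convergence in probability plus uniform integrability gives the claimed $L^1$ convergence to $e^{(t-s)/4}$.

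Finally, the Gaussian conclusion combines the two limits. \cref{prop:main_conv} gives $Y^i_{\lfloor tL\rfloor}\to X^i_t$ in distribution as $L\to\infty$, and the $L^2$ bound just proved gives $X^i_t\to\tilde X^i_t$ in distribution as $n\to\infty$, so the sequential limit is the law of $\tilde X^i_t$. Since $\rho_s$ is deterministic, $\tilde X^i_t=X^i_0+\int_0^t\rho_s^{1/2}\,dB^i_s$ is centered Gaussian; its variance is $\mathrm{Var}(X^i_0)+\int_0^t\rho_s\,ds=d^{-1}\|x\|^2+\int_0^t\tfrac12 d^{-1}\|x\|^2e^{s/2}\,ds=d^{-1}\|x\|^2e^{t/2}$, using $X^i_0=(W_{in}x)^i\sim\normal(0,d^{-1}\|x\|^2)$ and the variance ODE from the first step. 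This is exactly $\normal(0,d^{-1}\|x\|^2e^{t/2})$. I expect the main obstacle to be the propagation-of-chaos step, specifically controlling the non-Lipschitz square root of the empirical second moment near zero; the uniform lower bound $\rho_s\ge c_0>0$ is precisely what makes the square root effectively Lipschitz along the coupling and lets the Grönwall argument deliver the sharp $\bigO(n^{-1})$ rate.
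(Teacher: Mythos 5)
Your overall architecture is sound, and several of your choices genuinely differ from the paper's proof: the paper establishes the first limit via a law of large numbers for non-\iid arrays (\cref{thm:sung_lln}) applied to $\frac{1}{n}\sum_i \phi'(X^i_u)$ together with a conditional-symmetry argument (so part 1 there does not depend on the chaos estimate at all), proves the $L_1$ convergence of the ratio by truncation on events $E_N$ plus Markov's inequality, and obtains the chaos estimate by invoking \cref{thm:convergence_mckean}; your uniform-integrability argument for the ratio, and your derivation of the $\log$-limit from the mean and variance bounds of \cref{thm:norm_post_act} via $\E \phi'(X^1_u)\phi'(X^2_u) \to \frac{1}{4}$, are both valid alternatives, and your Gaussian variance computation coincides with \cref{lemma:distribution_mckean_X}.

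However, the propagation-of-chaos step --- on which, in your structure, both the first assertion (through $\Gamma_{s,t}\to 0$) and the final Gaussian conclusion rest --- fails as written. After you linearize the square root using $\rho_s \ge c_0$, your stability term is $\frac{1}{n}\sum_j |\phi(X^j_s)^2 - \phi(\tilde X^j_s)^2|$, whose summands only satisfy $|\phi(a)^2-\phi(b)^2| \le |a-b|\,(|a|+|b|)$: they have quadratic growth, not Lipschitz. Hence $\E\bigl|\frac{1}{n}\sum_j(\phi(X^j_s)^2-\phi(\tilde X^j_s)^2)\bigr|^2$ is controlled by terms of the form $\E\bigl[\,|X^j_s-\tilde X^j_s|^2\,(|X^j_s|+|\tilde X^j_s|)^2\bigr]$, and since the factor $(|X^j_s|+|\tilde X^j_s|)^2$ is unbounded and correlated with the difference, no ``standard $L^2$ moment bound'' produces the claimed factor $\sup_j \E|X^j_s-\tilde X^j_s|^2$: Cauchy--Schwarz forces $L^4$ control of the coupling difference, whose own Gr\"onwall recursion demands $L^8$, and the hierarchy never closes. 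Your diagnosis of the obstacle is also misplaced: the lower bound $\rho_s \ge c_0$ is indeed needed, but only for the \iid fluctuation term; it does nothing for the stability term, which is where the actual difficulty sits. The repair is to split \emph{before} linearizing: by the reverse triangle inequality in $\reals^n$,
$$
\left|\left(\frac{1}{n}\sum_j \phi(X^j_s)^2\right)^{1/2} - \left(\frac{1}{n}\sum_j \phi(\tilde X^j_s)^2\right)^{1/2}\right| \le \frac{1}{\sqrt{n}}\,\|\phi(X_s)-\phi(\tilde X_s)\| \le \left(\frac{1}{n}\sum_j |X^j_s - \tilde X^j_s|^2\right)^{1/2},
$$
which bounds the stability part directly by $\sup_j \E |X^j_s - \tilde X^j_s|^2$ after squaring and taking expectations, with no growth issue; only the remaining term $\bigl|(\frac{1}{n}\sum_j\phi(\tilde X^j_s)^2)^{1/2}-\rho_s^{1/2}\bigr|$ needs $\rho_s \ge c_0$, and it is $\bigO(n^{-1})$ in mean square by the \iid law of large numbers. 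With this decomposition your Gr\"onwall argument closes at the $L^2$ level and delivers the stated rate; without it, the step as you describe it does not go through, and the gap propagates to the other two assertions as you have arranged them.
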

The proof of \cref{thm:infinite_width} requires the use of a special variant of the Law of large numbers for non \iid random variables, and a convergence result of particle systems from the theory of Mckean-Vlasov processes. Details are provided in \cref{sec:proof_infinite_width}. In neural network terms, \cref{thm:infinite_width} shows that the logarithmic growth factor of the norm of the post-activations, given by $\log\left( \|\phi(Y_{\lfloor t L \rfloor})\| / \|\phi(Y_{\lfloor s L \rfloor})\|\right)$, converges to $(t-s)/4$ in the sequential limit $L \to \infty$, then $n \to \infty$. More importantly, the pre-activations $Y_{\lfloor t L \rfloor}^i$ converge in distribution to a zero-mean Gaussian distribution in this limit, with a layer-dependent variance. In the converse sequential limit, i.e. $n \to \infty$, then $L \to \infty$, the limiting distribution of the pre-activations $Y_{\lfloor t L \rfloor}^i$ is also Gaussian with the same variance. We show this in the following result, which uses Lemma 5 in \citep{hayou21stable}.
\begin{thm}[Infinite-width-then-infinite-depth limit]\label{thm:infnite_width_then_infinite_depth}
Let $t\in [0,1]$. Then, in the limit $\lim_{L \to \infty} \lim_{n \to \infty}$ (infinite width, then infinite depth), we have that
$$
 \frac{\|\phi(Y_{\lfloor t L \rfloor})\|}{\|\phi(Y_0)\|}\,  \ind_{\{ \|\phi(Y_0)\|>0\}} \longrightarrow \exp\left(\frac{t}{2}\right),
$$
where the convergence holds in probability.\\

Moreover, the pre-activations $Y^i_{\lfloor t L \rfloor}$ (\cref{eq:resnet}) converge in distribution to a Gaussian distribution in the limit infinite-width-then-infinite-depth
$$
\forall i \in [n],\,\,\,\, Y^i_{\lfloor t L \rfloor}  \xrightarrow{n \to \infty \textrm{ then }L \to \infty} \normal(0, d^{-1} \|x\|^2 \exp(t)).
$$
\end{thm}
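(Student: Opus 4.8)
The plan is to respect the order of limits: first carry out the classical infinite-width (NNGP) limit at fixed depth $L$, obtaining a deterministic variance recursion, and only afterwards analyze the large-depth behaviour of that recursion. Concretely, I would fix $L$ and send $n\to\infty$ in \cref{eq:resnet}. By the NNGP correspondence — the content of Lemma~5 in \citep{hayou21stable} — the coordinates $Y_l^i$ converge jointly to centered Gaussians, asymptotically independent across $i$, with a common variance $q_l$ governed by a deterministic recursion. To identify it, I write $Y_l^i = Y_{l-1}^i + L^{-1/2}\sum_j W_l^{ij}\phi(Y_{l-1}^j)$; since $W_l$ is independent of $Y_{l-1}$, mean zero, and $\E[W_l^{ij}W_l^{ik}] = n^{-1}\delta_{jk}$, the cross term vanishes and the exact identity $\E[(Y_l^i)^2] = \E[(Y_{l-1}^i)^2] + \frac1L \frac1n \E\|\phi(Y_{l-1})\|^2$ holds at every width. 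As $n\to\infty$ the empirical average $\frac1n\|\phi(Y_{l-1})\|^2$ concentrates — a law of large numbers over the asymptotically iid coordinates — on $\psi(q_{l-1}) := \E_{Z\sim\normal(0,q_{l-1})}[\phi(Z)^2]$, so the limiting variance obeys the scalar recursion $q_l = q_{l-1} + \frac1L\psi(q_{l-1})$ with $q_0 = d^{-1}\|x\|^2$.

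\textbf{Large-depth step.} For ReLU the map $\psi$ is linear ($z\mapsto\phi(z)^2$ is positively homogeneous of degree two and Gaussian scaling makes $\psi(q)$ proportional to $q$), so the recursion is geometric in $l$ and its large-depth continuum limit is exponential; it is the Euler discretization, with step $1/L$, of a linear ODE on $[0,1]$. Lemma~5 in \citep{hayou21stable} identifies this limit as $q_{\lfloor tL\rfloor}\to d^{-1}\|x\|^2\exp(t)$ as $L\to\infty$. Since the width step already exhibited each $Y_{\lfloor tL\rfloor}^i$ as a centered Gaussian of variance $q_{\lfloor tL\rfloor}$, composing the two limits yields $Y_{\lfloor tL\rfloor}^i\to\normal(0,d^{-1}\|x\|^2\exp(t))$, which is the second claim.

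\textbf{Norm ratio.} For the first claim I would write $\|\phi(Y_{\lfloor tL\rfloor})\|^2/\|\phi(Y_0)\|^2 = (\frac1n\|\phi(Y_{\lfloor tL\rfloor})\|^2)/(\frac1n\|\phi(Y_0)\|^2)$. Numerator and denominator each concentrate, as $n\to\infty$, on $\psi(q_{\lfloor tL\rfloor})$ and $\psi(q_0)$; since $\psi(q_0)>0$, the ratio converges in probability to $\psi(q_{\lfloor tL\rfloor})/\psi(q_0) = q_{\lfloor tL\rfloor}/q_0$ (by proportionality of $\psi$), and this deterministic value tends to $\exp(t)$ as $L\to\infty$. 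Taking square roots gives the stated limit $\exp(t/2)$. The factor $\ind_{\{\|\phi(Y_0)\|>0\}}$ is handled by noting $\mathbb{P}(\|\phi(Y_0)\|>0) = 1-2^{-n}\to 1$: the indicator equals $1$ with probability tending to $1$ (and keeps the denominator nonzero), so it does not affect the in-probability limit.

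\textbf{Main obstacle.} The crux is the width step: one must establish the NNGP Gaussian limit \emph{and} the concentration of the empirical post-activation norms simultaneously across all $L$ layers of \cref{eq:resnet}, with enough uniformity to propagate the coordinate-wise fluctuations through the recursion before the depth limit is taken. This is exactly what Lemma~5 in \citep{hayou21stable} supplies, so after invoking it the residual work is the elementary analysis of a scalar linear recursion and its continuum limit. The only additional care is to upgrade the ratio of two concentrating empirical means to convergence in probability — legitimate because the limiting denominator $\psi(q_0)$ is strictly positive — and to dispatch the vanishing-probability event $\{\|\phi(Y_0)\| = 0\}$.
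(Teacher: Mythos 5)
Your proposal is correct and follows essentially the same route as the paper's own proof: first take the NNGP infinite-width limit (Gaussian coordinates together with concentration of $n^{-1}\|\phi(Y_{\lfloor tL\rfloor})\|^2$ on $\E\,\phi(Y^1_{\lfloor tL\rfloor})^2$), then take the depth limit of the resulting deterministic variance recursion via Lemma~5 of \citep{hayou21stable} (the ODE $q_t'=q_t$ with $q_t=q_0e^{t}$), concluding the norm-ratio claim by a ratio of concentrating quantities and the Gaussian claim by Slutsky's lemma. The only cosmetic difference is that you re-derive the one-step variance identity $\E[(Y^i_l)^2]=\E[(Y^i_{l-1})^2]+\tfrac{1}{nL}\E\|\phi(Y_{l-1})\|^2$ explicitly and phrase the depth step as an Euler scheme, where the paper simply invokes the NNGP references for the width step and Lemma~5 for the depth step; the decomposition, the key lemma, and the order of limits are identical.
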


The proof of \cref{thm:infnite_width_then_infinite_depth} is provided in \cref{sec:infinite_width_then_infinite_depth}. We use existing results from \cite{hayou21stable} on the infinite-depth asymptotics of the neural network Gaussian process (NNGP). It turns out that the order to the sequential limit (taking the width to infinity first, then taking the depth to infinity, or the converse) does not affect the limiting distribution, which is a Gaussian with variance  $\propto \exp(t/2)$). Intuitively, by taking the width to infinity first, we make the coordinates independent from each other, and the processes $(Y^i_l)_{1 \leq L}$ become \iid Markov chains. Taking the infinite-depth limit after the infinite-width limit consists of taking the infinite-depth limit of one-dimensional Markov chains. On the other hand, when we take depth to infinity first, the coordinates $(X^i_t)_{1 \leq i \leq n}$ remain dependent (through the volatility term $n^{-1/2} \|\phi(X_t)\|$), which results in the Quasi-log-normal behaviour of the norm of the post-activations (\cref{thm:norm_post_act}). Taking the width to infinity then yields an asymptotic norm of the post-activations equal to $\|\phi(X_0)\| \exp(t/2)$ (\cref{thm:infinite_width}) which is the same norm in the converse limit (\cref{thm:infnite_width_then_infinite_depth}). It remains to take the width to infinity to decouple the coordinates and obtain the Gaussian distribution (through the Mckean-Vlasov dynamics). Knowing that the variance of the pre-activations is mainly determined by the norm of the post-activations (\cref{eq:main_general_n}), we can see why the variance is similar in both sequential limits. 

\section{Discussion on the case of multiple inputs}\label{sec:covariance}
The result of \cref{prop:main_conv} can be easily generalized to the multiple input case, and the resulting dynamics is still an SDE. The generalization to the multiple inputs case is given by \cref{prop:main_conv_multiple} in the Appendix.\\
\begin{wrapfigure}{r}{0.45\textwidth}
  \begin{center}
    \includegraphics[width=0.4\textwidth]{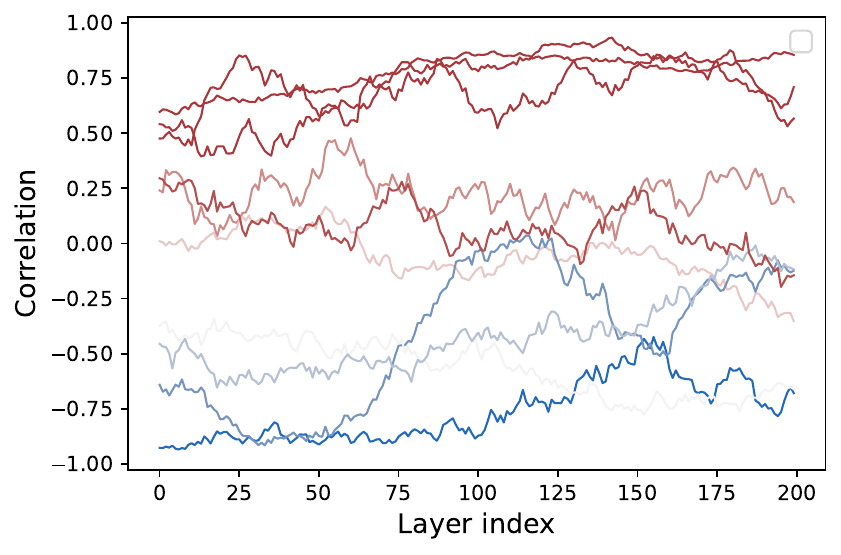}
  \end{center}
  \caption{10 Simulations of the correlation path $\left(\frac{\langle Y_{\lfloor t L\rfloor}(a), Y_{\lfloor t L\rfloor}(b) \rangle}{\|Y_{\lfloor t L\rfloor}(a)\| \|Y_{\lfloor t L\rfloor}(b)\|}\right)_{1 \leq l \leq L}$ for depth $L=200$, width $n=20$, and different $(a,b)$ (different initial correlations $c_0$). The color code depends only on the initial correlation value $c_0$ (red for the largest correlations values)}
  \label{fig:correlations}
  \vspace{-1em}
\end{wrapfigure} 

An important question in the literature on infinite-width neural networks is the behaviour of the correlation of the pre-activations (or the post-activations) for different inputs $a$ and $b$, which is given by $\frac{\langle Y_{\lfloor t L\rfloor}(a), Y_{\lfloor t L\rfloor}(b) \rangle}{\|Y_{\lfloor t L\rfloor}(a)\| \|Y_{\lfloor t L\rfloor}(b)\|}$. This correlation can be as a geometric measure of the information as it propagates through the network. In the infinite-width-then-depth limit, this correlation (generally) converges to a degenerate limit (a constant value) which results in either a constant or a sharp landscape of the network output and causes gradient exploding/vanishing issues \citep{samuel2017, yang2017meanfield, hayou2019impact}. Techniques such block scaling \citep{hayou21stable}, or kernel shaping \citep{zhang2022deep, martens2021rapid} solve this problem and ensure that the correlation is well-behaved in the large depth limit. 
In our case, when the width $n$ is finite and the depth $L$ is taken to infinity, we can define the correlation for two inputs $a \neq b$ and time $t \in [0,1]$ by 
$$c_t(a, b) \overset{def}{=} \frac{\langle X_t(a), X_t(b) \rangle}{\|X_t(a)\| \|X_t(b)\|}.$$
Using \ito's lemma, $c_t$ has dynamics of the form 
\begin{equation}\label{eq:correlation}
d  c_t(a,b) = \Psi(X_t(a), X_t(b)) dB_t,
\end{equation}
for some non-trivial mapping $\Psi$. Unfortunately, this kind of dynamics (which is not an SDE) is generally intractable, and we are currently investigating these dynamics for future work. However, since we scale the ResNet blocks with the factor $1/\sqrt{L}$ (\cref{eq:resnet}), which is the same scaling that solves the degeneracy issue in the infinite-width-then-depth limit \citep{hayou21stable}, it should be expected that the correlation kernel $c_t$ does not converge to a degenerate limit. 

In \cref{fig:correlations}, we simulate the correlation path in a ResNet of depth $L=200$ and width $n=20$. The paths exhibits some level of stochasticity but no degeneracy can be observed. Understanding the correlation dynamics (\cref{eq:correlation})  in the infinite-depth limit of finite-width networks is an interesting open question. The infinite-width limit\footnote{The infinite-width limit of infinite-depth correlations} of these dynamics is also an interesting open question. We leave this for future work. 
    
\section{Practical implications}
Our theoretical analysis has many interesting implications from a practical standpoint. Here we summarize some key insights form our results.

\paragraph{Initialization and stability in the large depth limit.} An important factor pertaining to the trainability of neural networks is the behaviour of the neurons (pre/post-activations). Ensuring that the neurons are well-behaved at initialization is crucial for training since the first step of any gradient-based training algorithm depends on the values of the neurons at initialization. 
This has led to interesting developments in initialization schemes for MLPs such as the Edge of Chaos \citep{poole, samuel2017} which ensures that the variance of the pre-activation does not (exponentially) vanish or explode in the large depth limit. In the case of ResNet, we know from the existing theory on the infinite-width limit of neural networks that scaling the residual blocks with $1/\sqrt{L}$ stabilizes the pre/post-activations in the large depth limit \citep{hayou21stable}. Hence, we do not need a special initialization scheme with this scaling. However, one could argue that this (approximately) ensures stability \emph{only} when the width is much larger than the depth. What about the other cases when $n \approx L$ or $n \ll L$? the last case can be studied by fixing the width and taking the depth to infinity. In our paper, we not \emph{only} show that the neurons remain stable in fixed-width large-depth networks, but we fully characterize their behaviour when the depth is infinite and show that it follows an SDE in this limit. To summarize, we show that initializing ResNet \cref{eq:resnet} with standard Gaussian random variables and scaling the blocks with $1/\sqrt{L}$ ensures stability inside the network in large-depth (fixed-width) networks (notice that this is actually equivalent to scaling the variance of the initialization weights with $1/L$, which can be seen as an initialization scheme). Intuitively, by stabilizing the pre-activations, we also stabilize the gradients. To confirm this intuition, we show in \cref{fig:gradients} the evolution of gradient norms as they back-propagate through the network. This experiment was conducted by fixing the last layer's gradient to a constant value and back-propagating the gradient from there. The result shows that the $1/\sqrt{L}$ scaling, along with standard Gaussian initialization, ensure well-behaved gradients which is a desirable property for gradient-based training. Another interesting property of the Edge of Chaos initialization scheme for MLPs is that it ensures that correlation kernel (correlation between the pre-activations for different inputs) does not exponentially converge to a degenerate value (constant value)\footnote{The correlation still converges to 1 with an EOC initialization. The benefit of the EOC lies in the fact that the convergence rate is much slower (polynomial Vs exponential) \citep{samuel2017, hayou2019impact}}. We discussed some aspects of the correlation kernel in \cref{sec:covariance} and showed empirically that with the $1/\sqrt{L}$ scaling, the correlation is well-behaved and does not converge to degenerate values (\cref{fig:correlations}).
\begin{wrapfigure}{r}{0.35\textwidth}
  \begin{center}
  \vspace{-2em}
    \includegraphics[width=0.35\textwidth]{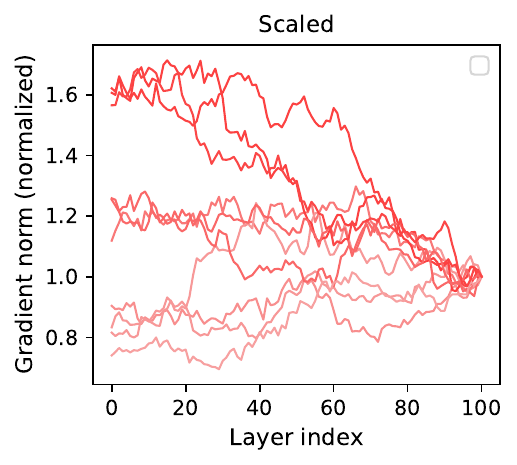}\\
    \includegraphics[width=0.35\textwidth]{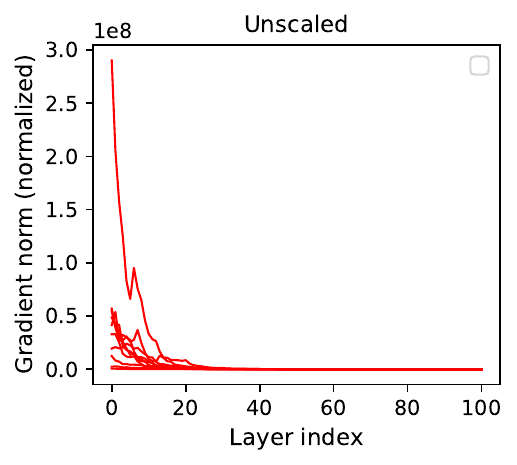}
  \end{center}
   \vspace{-2em}
  \caption{10 Simulations of the gradient norm for scaled ResNet (\cref{eq:resnet}) and non-scaled ResNet ($Y_l = Y_{l-1} + W_l \phi(Y_{l-1})$) for depth $L=100$ and width $n=10$. We normalize the gradients norms by the gradient norm of the last layer. The color code depends only on the ratio of the graident norm at the first layer to that of the last layer (dark red for the largest values). Without scaling, the gradient norm explodes (highly likely). The $1/\sqrt{L}$ stabilizes the gradients as they back-propagate through the network.}
  \label{fig:gradients}
  \vspace{-3em}
\end{wrapfigure}
\paragraph{Network collapse.} Another issue that could occur in finite-width networks is that of network collapse, i.e. when the pre-activations in a hidden layer are all negative, which causes the post-activations to be all zero. In ResNet (\cref{eq:resnet}), this implies that increasing depth beyond some level has no effect on the network output. This is problematic since the weights in those `inactive' layers have zero gradient and thus will not be updated when such event occurs. A simple way to understand network collapse is to see what happens at initialization. When the width $n$ is sufficiently large, one can expect that such event is unlikely to occur. What about small-width neural networks? we offer a simple answer to this question: for finite-width neural networks, increasing the depth $L$ ensures that such event is unlikely to happen. This is true even for extremely small widths, e.g. $n=2, 3$, which is counter-intuitive. Empirical results in \cref{fig:killed_probs} support this theoretical prediction.
\paragraph{No universal kernel regime.} An interesting application of fixed-depth infinite-width neural network is the so-called Neural Network Gaussian Process (NNGP). This is the Gaussian process limit of neural networks, that can be used to perform posterior inference and obtain uncertainty estimates \citep{lee_gaussian_process}. The converse case, i.e. fixed-width infinite-depth, has been however poorly understood, and the question of whether the infinite-depth limit of finite-width networks has some universal behaviour has been an open question since. We addressed this question in this work and showed that the limit (in the case of the ResNet architecture \cref{eq:resnet}) does not admit a universal distribution (e.g. Gaussian process in the infinite-width limit). More precisely, this limit is highly sensitive to the choice of the activation function. 
\paragraph{What about infinite-depth-then-width?} the infinite-depth limit of infinite-width neural networks has been studied in the literature \citep{hayou2019impact, hayou_ntk}. It is known that in this limit, the network behaves as a Gaussian process with a well-defined kernel. What about the converse limit, i.e. infinite-width limit of infinite-depth networks? this has been so far an open question, and our work addresses one part of it. We show that the marginal distributions are zero-mean Gaussians with the same variance as in the infinite-width-then-depth limit. Characterizing the full covariance kernel is still however an open question (see \cref{sec:covariance} for a discussion on this topic).

\section{Conclusion, discussion, and limitations}
Understanding the limiting laws of randomly initialized neural networks is important on many levels. Primarily, understanding these limiting laws allows us to derive new designs that are immune to exploding/vanishing pre-activations/gradients phenomena. Next, they also enable a deeper understanding of overparameterized neural networks, and (often) yield many interesting (and simple) justifications to the apparent advantage of overparameterization. So far, the focus has been mainly on the infinite-width limit (and infinite-width-then-infinite-depth limit) with few developments on the joint limit. Our work adds to this stream of papers by studying the infinite-depth limit of finite-width neural networks. We showed that unlike the infinite-width limit, where we always obtain (under some mild conditions on the activation function) a Gaussian distribution, the infinite-depth limit is highly sensitive to the choice of the activation function; using the \ito's lemma, we showed how we can obtain certain known distributions by carefully tuning the activation function. In the general width limit, we showed an important characteristic of infinite-depth neural networks with general activation functions (including ReLU, conditionally on $\|\phi(X_0)\|>0$): the probability of process collapse is zero, meaning that with probability one, the process $X_t$ does not get stuck at any absorbent point. This is not true for finite-depth ResNets as we can see in \cref{fig:killed_probs}, which highlights the fact that as we increase depth, the collapse probability tends to decrease, and eventually converges to zero in the infinite-depth limit, which is in agreement with our results.

This work, although novel in many aspects, is still far from depicting a complete picture of the infinite-depth limit of finite-width networks. There are still numerous interesting open questions in this research direction. Indeed, one of these is the dynamics of the gradient, and more specifically the behaviour of the NTK in the infinite-depth limit of finite-width neural networks. For instance, we already know that in the joint infinite-width-depth limit of MLPs, the NTK is random \citep{hanin2019finitewidth}; but what happens when the width is fixed and the depth goes to infinity? In the MLP case, a degenerate NTK should be expected. Henceforth, questions remain as to whether a suitable scaling leads to interesting (non-degenerate) infinite-depth limit of the NTK as is the case of the infinite-depth limit of infinite-width NTK \citep{hayou21stable}.


\printbibliography

\newpage
\appendix

\addcontentsline{toc}{section}{Appendix} 

\part{Appendix} 

\parttoc

\newpage
\section{Review of Stochastic Calculus}\label{appendix:stochastic_calculus}

In this section, we introduce the required mathematical framework and tools to handle stochastic differential equations (SDEs). We suppose that we have a probability space $(\Omega, \mathcal{F}, \mathbb{P})$, where $\Omega$ is the event space, $\mathbb{P}$ is the probability measure, and $\mathcal{F}$ is the sigma-algebra associated with $\Omega$. For $n \geq 1$, we denote by $B$ the standard  $n$-dimensional Brownian motion, and $\mathcal{F}_t$ its natural filtration. Equipped with $(\mathcal{F}_t)_{t \geq 0}$, we say that the probability space $(\Omega, \mathcal{F}, (\mathcal{F}_t)_{t \geq 0},  \mathbb{P})$ is a filtered probability space. $\mathcal{F}_t$ is the collection of events that are measurable up to  time $t$, i.e. can be verified if we have knowledge of the Brownian motion $B$ (and potentially some other independent source such as the initial condition of a process $X$ defined by a $B$-driven stochastic differential equation) up to time $t$. We are now ready to define a special type of stochastic processes known as \ito processes.

\subsection{Existence and uniqueness}
\begin{definition}[It$\hat{o}$ diffusion process]\label{def:ito_process}
A stochastic process $(X_t)_{t \in [0,T]}$ valued in $\reals^n$ is called an It$\hat{o}$ diffusion process if it can be expressed as 
$$
X_t = X_0 + \int_{0}^t \mu_s ds + \int_{0}^t \sigma_s dB_s,
$$
where $B$ is a $n$-dimensional Brownian motion and $\sigma_t \in \reals^{n \times n}, \mu \in \reals^n$ are predictable processes satisfying $\int_{0}^T (\|\mu_s\|_2 + \|\sigma_s \sigma_s^\top\|_2) ds < \infty$ almost surely.
\end{definition}

The following result gives conditions under which a strong solution of a given SDE exists, and is unique.

\begin{thm}[Thm 8.3 in \cite{touzi2018}]\label{thm:existence_and_uniqueness}
Let $n \geq 1$, and consider the following SDE
$$
dX_t = \mu(t, X_t) dt + \sigma(t, X_t) dB_t, \quad X_0 \in L_2,
$$
where $B$ is a $m$-dimensional Brownian process for some $m\geq 1$, and $\mu : \reals^+ \times \reals^n \to \reals^n$ and $\sigma:\reals^+ \times \reals^n \to \reals^{n \times m}$ are measurable functions satisfying
\begin{enumerate}
    \item there exists a constant $K>0$ such that for all $t \geq 0$, $x,x' \in \reals^n$
    $$
\|\mu(t, x) - \mu(t,x')\| + \|\sigma(t, x) - \sigma(t,x')\| \leq k \|x - x'\|.
$$
\item the functions $\|\mu(.,0)\|$ and $\|\sigma(.,0)\|$ are $L_2(\reals^+)$ with respect to the Lebesgue measure on $\reals^+$.
\end{enumerate}
Then, for all $T \geq 0$, there exists a unique strong solution of the SDE above.
\end{thm}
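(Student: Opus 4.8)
The plan is to recast the SDE as a fixed-point equation and apply the Banach fixed-point theorem on a suitable Banach space of processes (the classical Picard iteration, or method of successive approximations). Fix $T \geq 0$ and work on the space $\mathbb{S}^2_T$ of $\reals^n$-valued continuous predictable processes $X$ on $[0,T]$ with $\E[\sup_{t \leq T}\|X_t\|^2] < \infty$, which is a Banach space under the norm $\|X\| = (\E[\sup_{t\le T}\|X_t\|^2])^{1/2}$. Define the map
$$
\Phi(X)_t = X_0 + \int_0^t \mu(s,X_s)\,ds + \int_0^t \sigma(s,X_s)\,dB_s,
$$
so that a strong solution on $[0,T]$ is exactly a fixed point of $\Phi$. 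The first thing I would check is that $\Phi$ is well-defined and maps $\mathbb{S}^2_T$ into itself: the Lipschitz assumption together with the $L_2$ integrability of $\mu(\cdot,0)$ and $\sigma(\cdot,0)$ yields a linear growth bound $\|\mu(s,x)\| + \|\sigma(s,x)\| \leq C(1 + \|x\|)$, which (via the \ito isometry for the stochastic integral and Cauchy--Schwarz for the drift integral) guarantees both integrals are well-defined and square-integrable whenever $X_0 \in L_2$ and $X \in \mathbb{S}^2_T$. One also verifies that $s \mapsto \mu(s,X_s), \sigma(s,X_s)$ are predictable, so that the stochastic integral is legitimate.

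Next I would establish the contraction estimate. For $X, Y \in \mathbb{S}^2_T$, subtracting and using $(a+b)^2 \le 2a^2 + 2b^2$, Cauchy--Schwarz on the drift term, and Doob's $L_2$ maximal inequality combined with the \ito isometry on the martingale term, the Lipschitz constant $K$ gives a bound of the form
$$
\E\Big[\sup_{u \le t}\|\Phi(X)_u - \Phi(Y)_u\|^2\Big] \leq C_K \int_0^t \E\Big[\sup_{u \le s}\|X_u - Y_u\|^2\Big]\,ds,
$$
for a constant $C_K$ depending only on $K$ and $T$. A single application only yields a contraction for $T$ small, so the key device is to iterate: applying the inequality $n$ times produces the factorial bound $\E[\sup_{u\le T}\|\Phi^n(X)_u - \Phi^n(Y)_u\|^2] \leq \frac{(C_K T)^n}{n!}\,\|X - Y\|^2$, so that $\Phi^n$ is a strict contraction once $n$ is large enough, irrespective of the size of $T$. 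By the Banach fixed-point theorem (in the version guaranteeing that a map with a contracting iterate has a unique fixed point), $\Phi$ has a unique fixed point in $\mathbb{S}^2_T$, which is the unique strong solution on $[0,T]$.

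Finally, since $T$ was arbitrary, I would patch the local solutions together: uniqueness on each interval forces the solutions on $[0,T]$ and $[0,T']$ with $T < T'$ to agree on the overlap, so they consistently define a unique strong solution on all of $\reals^+$. The main obstacle is the martingale term in the contraction estimate: controlling $\E[\sup_{u\le t}\|\int_0^u(\sigma(s,X_s)-\sigma(s,Y_s))\,dB_s\|^2]$ requires Doob's maximal inequality (to pass the supremum inside the expectation) together with the \ito isometry, and the resulting naive contraction constant grows with $T$; overcoming this through the iterated-map factorial estimate (equivalently, a Gr\"onwall-type argument) is the technical heart of the proof. The remaining steps --- the measurability and predictability checks and the routine $L_2$ growth bounds --- are standard.
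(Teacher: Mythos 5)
This theorem is not proved in the paper at all --- it is imported verbatim as a citation (Thm 8.3 of Touzi's lecture notes) --- and your Picard-iteration/Banach fixed-point argument is exactly the standard proof of that cited result, so your proposal is correct and essentially coincides with the source's approach. One small wording slip: the Lipschitz and $L_2$ hypotheses do not give a time-uniform linear growth bound $\|\mu(s,x)\|+\|\sigma(s,x)\|\leq C(1+\|x\|)$, but rather $\|\mu(s,x)\|+\|\sigma(s,x)\|\leq c(s)+2K\|x\|$ with $c\in L_2(\reals^+)$; this changes nothing in your argument, since the well-definedness and contraction estimates only use the square-integrability of $c$ over $[0,T]$ (via Cauchy--Schwarz and the \ito isometry, as you indicate).
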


\subsection{\ito's lemma}
The following result, known as \ito's lemma, is a classic result in stochastic calculus. We state a version of this result from \cite{touzi2018}. Other versions and extensions exist in the literature (e.g. \citet{Ingersoll1987, Oksendal2003, kloeden}).
\begin{lemma}[\ito's lemma, Thm 6.7 in \cite{touzi2018}]\label{lemma:ito}
Let $X_t$ be an It$\hat{o}$ diffusion process (\cref{def:ito_process}) of the form
$$
dX_t = \mu_t dt + \sigma_t dB_t, t\in [0,T], X_0 \sim \nu
$$
where $\nu$ is some given distribution. Let $g: \reals^+ \times \reals^n \to \reals$ be $\mathcal{C}^{1,2}([0,T], \reals^n)$ (i.e. $\mathcal{C}^1$ in the first variable $t$ and $\mathcal{C}^2$ in the second variable $x$). Then, with probability $1$, we have that 
$$
f(t, X_t) = f(0,X_0) + \int_{0}^t \nabla_x f(s,X_s) \cdot dX_s + \int_{0}^t \left( \partial_t f(s,X_s) + \frac{1}{2} \textup{Tr}\left[ \sigma_s^\top \nabla_x^2 f(s,X_s) \sigma_s\right] \right) ds,
$$
where $\nabla_x f$ and $\nabla_x^2 f$ refer to the gradient and the Hessian, respectively. This can also be expressed as an SDE
$$
d f(t, X_t) =  \nabla_x f(t,X_t) \cdot dX_t + \left( \partial_t f(t,X_t) + \frac{1}{2} \textup{Tr}\left[ \sigma_t^\top \nabla_x^2 f(t,X_t) \sigma_t\right] \right) dt.
$$
\end{lemma}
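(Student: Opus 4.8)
The plan is to prove It\^o's lemma by a second-order Taylor expansion along a refining sequence of partitions, with the nonstandard trace term emerging from the nonzero quadratic variation of Brownian motion. First I would reduce to the case in which $f$, its derivatives $\partial_t f$, $\nabla_x f$, $\nabla_x^2 f$, and the coefficients $\mu_t$, $\sigma_t$ are all uniformly bounded. This is achieved by localization: introduce the stopping times $\tau_N = \inf\{t \geq 0 : \|X_t\| \geq N\}$, prove the identity on $[0, t \wedge \tau_N]$, where the path is confined to a compact set and hence the continuous functions $f$, $\nabla_x f$, $\nabla_x^2 f$ are bounded and uniformly continuous there, and then let $N \to \infty$ using $\tau_N \to \infty$ almost surely (valid since $X$ has continuous paths). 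After this reduction I may assume all relevant quantities are uniformly bounded.

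For the bounded case, fix $t$ and a partition $0 = t_0 < t_1 < \cdots < t_m = t$ with mesh tending to $0$, write the telescoping identity
$$
f(t, X_t) - f(0, X_0) = \sum_{k=0}^{m-1}\left[f(t_{k+1}, X_{t_{k+1}}) - f(t_k, X_{t_k})\right],
$$
and apply Taylor's theorem to each increment to first order in time and second order in space:
$$
f(t_{k+1}, X_{t_{k+1}}) - f(t_k, X_{t_k}) = \partial_t f(t_k, X_{t_k})\,\Delta t_k + \nabla_x f(t_k, X_{t_k})\cdot \Delta X_k + \tfrac{1}{2}\,\Delta X_k^\top \nabla_x^2 f(t_k, X_{t_k})\,\Delta X_k + R_k,
$$
where $\Delta t_k = t_{k+1} - t_k$, $\Delta X_k = X_{t_{k+1}} - X_{t_k}$, and $R_k$ collects the higher-order remainder.

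The three leading sums are then matched to the three terms of the claimed formula. The time-derivative sum converges pathwise (by continuity) to the Riemann integral $\int_0^t \partial_t f(s, X_s)\,ds$. The first-order space sum is a Riemann sum for the It\^o integral and converges in $L^2$ to $\int_0^t \nabla_x f(s, X_s)\cdot dX_s$, using adaptedness of the integrand and the It\^o isometry. The crucial step is the quadratic sum: substituting $\Delta X_k = \mu_{t_k}\Delta t_k + \sigma_{t_k}\Delta B_k$, only the $\sigma_{t_k}\Delta B_k$ contribution survives, and exploiting $\E[(\Delta B_k^i)(\Delta B_k^j)\mid \mathcal{F}_{t_k}] = \delta_{ij}\Delta t_k$ one shows
$$
\sum_{k}\tfrac{1}{2}\,\Delta X_k^\top \nabla_x^2 f(t_k, X_{t_k})\,\Delta X_k \xrightarrow{L^2} \tfrac{1}{2}\int_0^t \Tr\!\left[\sigma_s^\top \nabla_x^2 f(s, X_s)\,\sigma_s\right]ds.
$$
This is the signature of the quadratic variation $d\langle B^i, B^j\rangle_s = \delta_{ij}\,ds$ and is the source of the extra term in It\^o's formula.

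The hard part will be making this quadratic-variation limit rigorous together with the control of $\sum_k R_k$. For the quadratic term I would split $\Delta B_k (\Delta B_k)^\top$ into its conditional mean $\Delta t_k\, I$ plus a mean-zero fluctuation: the fluctuation sum has variance of order $\sum_k (\Delta t_k)^2 \to 0$ by martingale orthogonality, while the mean part yields the trace integral via a Riemann-sum argument, with uniform continuity of $\nabla_x^2 f$ and $\sigma$ used to replace the frozen values at $t_k$ by their path values. The mixed term $\Delta t_k\,\sigma_{t_k}\Delta B_k$ and the pure-drift term $(\Delta t_k)^2$ are of orders $(\Delta t)^{3/2}$ and $(\Delta t)^2$ and vanish in $L^2$. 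Finally $R_k$ is controlled through the modulus of continuity $\omega$ of the second derivatives by a bound of the form $\omega(\|\Delta X_k\| + \Delta t_k)\,(\|\Delta X_k\|^2 + (\Delta t_k)^2)$; summing and using $\E\sum_k \|\Delta X_k\|^2 = O(1)$ together with $\omega \to 0$ gives $\sum_k R_k \to 0$ in $L^1$. Assembling the four limits along the refining partitions (passing to an almost-surely convergent subsequence) yields the integral identity with probability one in the bounded case, and the localization of the first step removes the boundedness assumption.
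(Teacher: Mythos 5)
The paper offers no proof of this lemma: it is imported verbatim from the cited reference (Thm~6.7 in Touzi's notes), so there is no internal argument to compare against. Your sketch is the classical textbook proof of It\^o's formula --- localization, telescoping second-order Taylor expansion along refining partitions, and quadratic-variation bookkeeping for the trace term --- which is in outline the same route the cited source takes, so the overall architecture is sound.

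Two steps need repair before it is airtight. First, your localization $\tau_N = \inf\{t \geq 0 : \|X_t\| \geq N\}$ confines the path and hence bounds $f$, $\nabla_x f$, $\nabla^2_x f$ on a compact set, but it does \emph{not} bound the coefficients: $\mu_t$ and $\sigma_t$ are general predictable processes, not functions of $X_t$, so you must also stop at $\inf\{t : \int_0^t (\|\mu_s\| + \|\sigma_s\sigma_s^\top\|)\, ds \geq N\}$ (finite almost surely by the definition of an It\^o diffusion) for the integrability claims to go through. Second, the substitution $\Delta X_k = \mu_{t_k}\Delta t_k + \sigma_{t_k}\Delta B_k$ is not an identity: the true increment is $\int_{t_k}^{t_{k+1}}\mu_s\, ds + \int_{t_k}^{t_{k+1}}\sigma_s\, dB_s$, and for merely predictable, square-integrable coefficients the frozen-left-endpoint approximation need not converge along an arbitrary sequence of partitions. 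The standard fixes are either to first prove the formula for simple (piecewise-constant) coefficients --- where your computation $\E[\Delta B_k^i \Delta B_k^j \mid \mathcal{F}_{t_k}] = \delta_{ij}\Delta t_k$ is exact --- and then extend by $L^2$ approximation via the It\^o isometry, or to avoid freezing altogether by writing $\Delta X_k = D_k + M_k$ with $M_k = \int_{t_k}^{t_{k+1}}\sigma_s\, dB_s$ and using $\E[M_k M_k^\top \mid \mathcal{F}_{t_k}] = \E[\int_{t_k}^{t_{k+1}}\sigma_s\sigma_s^\top ds \mid \mathcal{F}_{t_k}]$, i.e., working with the quadratic variation $[X]_t = \int_0^t \sigma_s\sigma_s^\top ds$ directly. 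With these two amendments your argument is the standard rigorous proof; note that for the processes this paper actually applies the lemma to, where $\sigma_s = n^{-1/2}\|\phi(X_s)\|$ is continuous and adapted, your sketch as written already suffices.
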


\subsection{Convergence of Euler's scheme to the SDE solution}
The following result gives a convergence rate of the Euler discretization scheme to the solution of the SDE.
\begin{thm}[ Corollary of Thm 10.2.2 in \cite{kloeden}]\label{thm:convergence_euler}
Let $d \geq 1$ and consider the $\reals^d$-valued ito process $X$ (\cref{def:ito_process}) given by 
$$
X_t = X_0 + \int_{0}^t \mu(s, X_s) ds + \int_{0}^t \sigma(s, X_s) dB_s,
$$
where $B$ is a $m$-dimensional Brownian motion for some $m \geq 1$, $X_0$ satisfies $\E \|X_0\|^2 < \infty$, and $\mu: \reals^+ \times \reals^d \to \reals^d$ are $\sigma: \reals^+ \times \reals^d \to \reals^{d \times m}$ are measurable functions satisfying the following conditions:
\begin{enumerate}
    \item There exists a constant $K>0$ such that for all $t \in \reals, x,x' \in \reals^d$,
    $$
    \|\mu(t,x) - \mu(t,x')\| + \|\sigma(t,x) - \sigma(t,x')\| \leq K \|x - x'\|.
    $$
    \item There exists a constant $K'>0$ such that for all $t \in \reals, x \in \reals^d$
    $$
    \|\mu(t,x)\| + \|\sigma(t,x)\| \leq K'( 1 + \|x\|).
    $$
    \item There exists a constant $K''>0$ such that for all $t, s \in \reals, x \in \reals^d$,
    $$
    \|\mu(t,x) - \mu(s,x)\| + \|\sigma(t,x) - \sigma(s,x)\| \leq K''(1 + \|x\|) |t - s|^{1/2}.
    $$
\end{enumerate}
Let $\delta \in (0,1)$ such that $\delta^{-1} \in \mathbb{N}$ (integer), and consider the times $t_k = k \delta$ for $k \in \{1, \dots, \delta^{-1}\}$. Consider the Euler scheme given by 
$$
Y^i_{k+1} = Y^i_{k} + \mu^i(t_k, Y^k_{n}) \delta + \sum_{j=1}^m \sigma^{i,j}(t_k, Y^k_{n}) \Delta B^j_k, \quad Y^i_0 = X^i_0,
$$
where $Y^i, \mu^i, \sigma^{i,j}$ denote the coordinates of these vectors for $i \in [d], j\in[m]$, and $\Delta B^j_k \sim \normal(0, \delta)$. Then, we have that 
$$
\E \sup_{t \in [0,1]} \|X_{t} - Y_{\lfloor t \delta^{-1}\rfloor}\|^2 = \mathcal{O}(\delta).
$$
\end{thm}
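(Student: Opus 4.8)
The plan is to obtain this as a corollary of the strong convergence theorem for the Euler--Maruyama scheme (Thm~10.2.2 in \cite{kloeden}), so the first task is simply to check that hypotheses (1)--(3) are precisely the standing assumptions of that theorem: a global Lipschitz condition in the space variable, a linear growth bound, and a temporal $1/2$-Hölder continuity of the coefficients. Granting these, strong order-$1/2$ convergence gives $\left(\E\sup_{t\in[0,1]}\|X_t-\bar Y_t\|^2\right)^{1/2}=\bigO(\delta^{1/2})$ for the continuous-time interpolation $\bar Y$, and squaring yields the claim. For completeness I would also sketch the self-contained argument, which is the standard Gronwall estimate.

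First I would introduce the continuous interpolation of the scheme,
\[
\bar Y_t = X_0 + \int_0^t \mu(\tau(s),\bar Y_{\tau(s)})\,ds + \int_0^t \sigma(\tau(s),\bar Y_{\tau(s)})\,dB_s,
\]
where $\tau(s)=\delta\lfloor s/\delta\rfloor$ is the last grid point before $s$; this $\bar Y$ agrees with the scheme $Y_k$ at each $t_k$, so it suffices to bound $\bar Y$. Setting $e_t=X_t-\bar Y_t$, I would write $e_t$ as the sum of a drift integral and a stochastic integral of the coefficient differences, take $\sup_{u\le t}$, square, and take expectations. The drift term is handled by Cauchy--Schwarz, and the stochastic-integral term by Doob's $L^2$ maximal inequality followed by the Itô isometry; both reduce matters to $\int_0^t \E\|\mu(s,X_s)-\mu(\tau(s),\bar Y_{\tau(s)})\|^2\,ds$ and the analogue for $\sigma$.

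Next I would split each coefficient difference into a temporal part $\mu(s,X_s)-\mu(\tau(s),X_s)$, controlled by hypothesis (3) together with $|s-\tau(s)|\le\delta$, and a spatial part $\mu(\tau(s),X_s)-\mu(\tau(s),\bar Y_{\tau(s)})$, controlled by the Lipschitz hypothesis (1) via $\|X_s-\bar Y_{\tau(s)}\|\le\|X_s-X_{\tau(s)}\|+\|e_{\tau(s)}\|$. A short one-step regularity lemma---using linear growth (2), Itô isometry, and $\E\sup_t\|X_t\|^2<\infty$---shows $\E\sup_{s\in[t_k,t_{k+1}]}\|X_s-X_{t_k}\|^2=\bigO(\delta)$, so both the temporal mismatch and the path-regularity mismatch contribute $\bigO(\delta)$. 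Writing $\varphi(t)=\E\sup_{u\le t}\|e_u\|^2$, the estimate collapses to $\varphi(t)\le C\delta+C'\int_0^t\varphi(s)\,ds$, and Gronwall's inequality gives $\varphi(1)=\bigO(\delta)$.

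The main obstacle is the supremum inside the expectation: a naive fixed-time estimate only controls $\E\|e_t\|^2$, whereas the stated conclusion requires $\E\sup_{t}\|e_t\|^2$. Pulling the supremum through the stochastic integral is exactly where Doob's maximal inequality (or the Burkholder--Davis--Gundy inequality) is essential, and care is needed to keep the constants uniform in $\delta$ and to ensure the continuous interpolation $\bar Y$ genuinely interpolates the discrete iterates at the grid points, so that the bound transfers to $Y_{\lfloor t\delta^{-1}\rfloor}$.
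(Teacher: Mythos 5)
Your proposal is correct and takes the same route as the paper: the paper states this result purely as a corollary of Theorem 10.2.2 in \cite{kloeden}, with the hypotheses (1)--(3) copied precisely so that the citation applies, and offers no independent proof. Your additional self-contained sketch (continuous interpolation, temporal/spatial splitting of the coefficient differences, one-step regularity, Doob/It\^{o} isometry, and Gronwall) is exactly the standard argument behind that theorem, and it correctly accounts for the $\E\sup_{t}$ form of the bound rather than a fixed-time estimate.
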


We can extend the result of \cref{thm:convergence_euler} to the case of locally Lipschitz drift and volatility functions $\mu$ and $\sigma$. For this purpose, let us first define local convergence.

\begin{definition}\label{def:local_convergence}
Let $(X^L)_{L \geq 1}$ be a sequence of processes and $X$ be a stochastic process. For $r>0$, define the following stopping times
$$
\tau^L = \{ t \geq 0: |X^L_t| \geq r\}, \tau = \{t \geq 0: |X_t| \geq r\}.
$$ 
We say that $X^L$ converges locally to $X$ 
if for any $r > 0$,  $X^L_{t \land \tau^L}$ converge to $X_{t \land \tau}$. \\
This definition is general for any type of convergence, we will specify clearly the type of convergence when we use this notion of local convergence.
\end{definition}

\begin{lemma}[Locally-Lipschitz coefficients]\label{lemma:locally_lipschitz_convergence}
Consider the same setting of \cref{thm:convergence_euler} with the following conditions instead 
\begin{enumerate}
    \item For any $r>0$, there exists a constant $K>0$ such that for all $t \in \reals, x,x' \in \reals^d$ with $\|x\|,\|x'\| \leq r$,
    $$
    \|\mu(t,x) - \mu(t,x')\| + \|\sigma(t,x) - \sigma(t,x')\| \leq K \|x - x'\|.
    $$
    \item For any $r>0$, there exists a constant $K'>0$ such that for all $t \in \reals, x \in \reals^d$ satisfying $\|x\| \leq r$
    $$
    \|\mu(t,x)\| + \|\sigma(t,x)\| \leq K'( 1 + \|x\|).
    $$
    \item For any $r>0$, there exists a constant $K''>0$ such that for all $t, s \in \reals, x \in \reals^d$ satisfying $\|x\| \leq r$,
    $$
    \|\mu(t,x) - \mu(s,x)\| + \|\sigma(t,x) - \sigma(s,x)\| \leq K''(1 + \|x\|) |t - s|^{1/2}.
    $$
\end{enumerate}
Then, for any $r>0$, we have that 
$$
\E \sup_{t \in [0,1]} \|X_{t \land \tau} - Y_{\lfloor (t \land \tau_\delta) \delta^{-1}\rfloor}\|^2 = \mathcal{O}(\delta),
$$
where $\tau_\delta = \inf \{t \geq 0: \|Y_{\lfloor t \delta^{-1}\rfloor}\| > r\}$, and $\tau = \inf \{t \geq 0: \|X_{t}\| > r\}$.
\end{lemma}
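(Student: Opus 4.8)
The plan is to run a standard \emph{localization} argument: replace the locally-Lipschitz coefficients by globally-Lipschitz ones that coincide with $\mu,\sigma$ on the closed ball $\bar B_r = \{x \in \reals^d : \|x\| \le r\}$, apply \cref{thm:convergence_euler} to the modified system, and then show that up to the relevant stopping times the original and modified processes coincide pathwise.

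First I would fix a cutoff. Choose a Lipschitz function $\chi : \reals^d \to [0,1]$ with $\chi \equiv 1$ on $\bar B_r$ and $\chi \equiv 0$ outside $\bar B_{r+1}$, and set $\mu_r(t,x) = \chi(x)\,\mu(t,x)$, $\sigma_r(t,x) = \chi(x)\,\sigma(t,x)$. On the support of $\chi$ the local hypotheses (1)--(3) hold with the constants attached to radius $r+1$, and since $\chi$ is bounded, Lipschitz and compactly supported while $\mu,\sigma$ are bounded there by the local linear-growth bound (2), the products $\mu_r,\sigma_r$ satisfy the \emph{global} Lipschitz, linear-growth and temporal-H\"older conditions required by \cref{thm:existence_and_uniqueness} and \cref{thm:convergence_euler}. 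Verifying these three global bounds explicitly is routine but is exactly what licenses invoking the global theorem.

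Next, let $\tilde X$ solve the SDE with coefficients $(\mu_r,\sigma_r)$ and initial value $X_0$, and let $\tilde Y$ be its Euler scheme, so that \cref{thm:convergence_euler} yields $\E \sup_{t\in[0,1]} \|\tilde X_t - \tilde Y_{\lfloor t\delta^{-1}\rfloor}\|^2 = \bigO(\delta)$. Because $\mu_r=\mu$ and $\sigma_r=\sigma$ on $\bar B_r$, the process $X$ solves the modified SDE on $[0,\tau]$, so by strong uniqueness $X_{t\land\tau}=\tilde X_{t\land\tau}$ and $X,\tilde X$ share the same exit time $\tau$; similarly the two Euler recursions agree step by step as long as the iterates lie in $\bar B_r$, giving $Y_{\lfloor (t\land\tau_\delta)\delta^{-1}\rfloor} = \tilde Y_{\lfloor (t\land\tau_\delta)\delta^{-1}\rfloor}$ and the same discrete exit time $\tau_\delta$. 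This reduces the claim to the purely globally-Lipschitz statement $\E\sup_t\|\tilde X_{t\land\tau}-\tilde Y_{\lfloor(t\land\tau_\delta)\delta^{-1}\rfloor}\|^2 = \bigO(\delta)$.

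Finally I would split, via the triangle inequality, into an \emph{approximation} term $\sup_t\|\tilde X_{t\land\tau_\delta}-\tilde Y_{\lfloor(t\land\tau_\delta)\delta^{-1}\rfloor}\|$, which is dominated by the global sup above and is therefore $\bigO(\delta^{1/2})$ in $L^2$, and a \emph{stopping-time mismatch} term $\sup_t\|\tilde X_{t\land\tau}-\tilde X_{t\land\tau_\delta}\|$. The latter equals the oscillation of the continuous path $\tilde X$ over the random interval between $\tau$ and $\tau_\delta$, and since $\mu_r,\sigma_r$ are bounded I would bound its $L^2$ norm through the path regularity of $\tilde X$ together with a bound on $\E|\tau-\tau_\delta|$ coming from the closeness of the two paths near the boundary $\{\|x\|=r\}$. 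I expect this last step to be the main obstacle: reconciling the two distinct stopping times $\tau$ and $\tau_\delta$ is the delicate point in stopped-diffusion approximation, and controlling the exit-time discrepancy (rather than merely the unstopped paths) is what must be handled carefully to keep the error at the stated order. For the downstream use in \cref{prop:main_conv} it in any case suffices that this term vanishes in $L^2$, which already yields the local convergence in distribution of the stopped process in the sense of \cref{def:local_convergence}.
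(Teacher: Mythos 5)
Your first two steps are sound, and they are in the same spirit as the paper's (omitted) proof: the paper simply says to rerun the error analysis of \cite{kloeden} on the stopped process so that the paths stay in the region where the coefficients are Lipschitz, whereas you localize by truncating the coefficients with a cutoff and invoking \cref{thm:convergence_euler} as a black box, then identifying $X$ with $\tilde X$ up to $\tau$ (strong uniqueness) and $Y$ with $\tilde Y$ up to $\tau_\delta$ (induction over Euler steps). Either form of localization is legitimate, and up to that point your argument is correct.

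The genuine gap is your third step, which you correctly flag but do not close, and which in fact cannot be closed in the form you propose. The mismatch term $\sup_t\|\tilde X_{t\wedge\tau}-\tilde X_{t\wedge\tau_\delta}\|$ is not controlled by path regularity plus a bound on $\E|\tau-\tau_\delta|$, because $\tau$ and $\tau_\delta$ are exit times of two \emph{different} paths from the same ball: on the event that $X$ barely crosses the boundary while the Euler path undershoots it (or vice versa), the two stopping times differ by $\bigO(1)$ and the stopped paths differ by $\bigO(r)$. Such grazing events occur when $\sup_t\|X_t\|$ falls within a band of width comparable to the Euler error around $r$, and their probability is generically of order $\delta^{1/2}$, not $\delta$ (the well-known boundary effect in approximation of stopped diffusions). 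Your decomposition therefore yields at best $\bigO(\delta^{1/2})$ for the squared error, not the stated $\bigO(\delta)$. To be fair, the paper's one-sentence proof sketch silently skips exactly the same difficulty: stopping $X$ at $\tau$ does not make the Euler scheme stop at $\tau$, so the comparison of the two differently-stopped processes is left unaddressed there as well. Your closing remark is the right salvage: for the downstream use in \cref{prop:main_conv} one only needs the stopped Euler scheme to converge to the stopped solution (in distribution, without a rate), and that weaker conclusion does follow from your construction once one shows $\tau_\delta\to\tau$ in probability (using continuity of paths and that $X$ a.s.\ crosses the level $r$ immediately after reaching it), but the lemma as stated, with the $\bigO(\delta)$ rate, is not established by the proposal.
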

We omit the proof here as it consists of the same techniques used in \cite{kloeden}, with the only difference consisting of considering the stopped process $X^\tau$. By stopping the process, we force the process to stay in a region where the coefficients are Lipschitz.

\subsection{Convergence of Particles to the solution of Mckean-Vlasov process}
The next result gives sufficient conditions for the system of particles to converge to its mean-field limit, known as the Mckean-Vlasov process.
\begin{thm}[ Mckean-Vlasov process, Corollary of Thm 3 in \cite{jourdain2007}]\label{thm:convergence_mckean}
Let $d \geq 1$ and consider the $\reals^d$-valued ito process $X$ (\cref{def:ito_process}) given by 
$$
d X_t = \sigma(X_t, \nu^n_t) dB_t, \quad X_0 \textrm{ has \iid components},
$$
where $B$ is a $d$-dimensional Brownian motion, $\nu^n_t \overset{def}{=} \frac{1}{d} \sum_{i=1}^d \delta_{\{X^i_t\}}$ the empirical distribution of the coordinates of $X_t$, and $\sigma$ is real-valued and Lipschitz-continuous when the space $\reals^n \times \mathcal{P}_2(\reals^n)$ is endowed with the product topology of the euclidean distance on $\reals^s$ and the Wasserstein metric on $\mathcal{P}_2(\reals^n)$. Then, we have that for all $T \in \reals^+$,
$$
\sup_{i \in [n]} \E \left( \sup_{t \leq T} |X^i_t - \tilde{X}^i_t|^2 \right) = \bigO(n^{-2/5}),
$$
where $\tilde{X}^i$ is the solution of the following Mckean-Vlasov equation
$$
d\tilde{X}^i_t = \sigma(\tilde{X}^i_t, \nu^i_t) dB^i_t, \quad \tilde{X}^i_0 = X^i_0,
$$
where $\nu^i_t$ is the distribution of $\tilde{X}^i$.
\end{thm}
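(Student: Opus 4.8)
The plan is to prove this by Sznitman's synchronous-coupling argument, the standard route to propagation-of-chaos estimates and essentially the mechanism underlying Thm 3 of \cite{jourdain2007}; an alternative, shorter route is simply to verify that $\sigma(x,\nu)=\int\zeta(x,y)\,d\nu(y)$ with Lipschitz $\zeta$ meets the hypotheses of that theorem and then invoke it. I will write $n$ for the number of particles, identified with the dimension $d$ of the statement (consistent with the conclusion $\sup_{i\in[n]}$), read the system coordinate-wise so that each scalar coordinate $X^i$ is driven by its own Brownian motion $B^i$ and coupled to the others only through the empirical measure $\nu^n_s=\frac1n\sum_j\delta_{X^j_s}$, and treat $\zeta(x,y)$ with scalar arguments.

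First I would establish existence and uniqueness of the nonlinear McKean--Vlasov solution $\tilde X$. On path space $\mathcal{C}([0,T],\reals)$ I would run a Picard iteration $\mu^{(0)}\mapsto\mu^{(1)}\mapsto\cdots$, where $\mu^{(k+1)}$ is the law of the solution of the \emph{linear} SDE $dZ_t=\sigma(Z_t,\mu^{(k)}_t)\,dB_t$. Lipschitzness of $\zeta$ makes $\sigma$ Lipschitz in its state argument and Lipschitz in the measure argument for the $W_1$ distance, so each linear SDE has a unique strong solution by \cref{thm:existence_and_uniqueness}, and on a short interval $[0,T_0]$ the iteration is a contraction (using $W_1(\mathrm{Law}(Z^{(k)}_s),\mathrm{Law}(Z^{(k-1)}_s))\le(\E|Z^{(k)}_s-Z^{(k-1)}_s|^2)^{1/2}$ and Grönwall); patching intervals gives a unique solution on $[0,T]$. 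The same computation yields an a priori moment bound $\sup_{t\le T}\E|\tilde X_t|^2<\infty$, which I will need below.

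Next I couple. For each $i$ let $\tilde X^i$ solve the McKean--Vlasov SDE driven by the \emph{same} $B^i$ and started at $\tilde X^i_0=X^i_0$; since the $X^i_0$ are iid, the $\tilde X^i$ are iid copies of $\tilde X$ with common law $\nu_s$. Subtracting the two SDEs (the initial conditions cancel) and applying the Burkholder--Davis--Gundy inequality gives
$$
\E\sup_{s\le t}|X^i_s-\tilde X^i_s|^2\le C\int_0^t\E\big|\sigma(X^i_s,\nu^n_s)-\sigma(\tilde X^i_s,\nu_s)\big|^2\,ds.
$$
I then split the integrand by inserting $\sigma(\tilde X^i_s,\nu^n_s)$ and $\sigma(\tilde X^i_s,\tilde\nu^n_s)$, where $\tilde\nu^n_s=\frac1n\sum_j\delta_{\tilde X^j_s}$. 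The first piece is $\le L\,|X^i_s-\tilde X^i_s|$ by Lipschitzness in the state; the second is $\le\frac{L}{n}\sum_j|X^j_s-\tilde X^j_s|$ by Lipschitzness of $\zeta$ in its second argument, which after Cauchy--Schwarz and exchangeability feeds back as $L^2 u(s)$ with $u(t)=\sup_i\E\sup_{s\le t}|X^i_s-\tilde X^i_s|^2$; the third piece, $\frac1n\sum_j\zeta(\tilde X^i_s,\tilde X^j_s)-\int\zeta(\tilde X^i_s,y)\,\nu_s(dy)$, is the fluctuation term.

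The main obstacle is this fluctuation term, the sole source of the $n^{-1}$ rate. Conditioning on $\tilde X^i_s$, the summands $\zeta(\tilde X^i_s,\tilde X^j_s)$ with $j\ne i$ are iid with conditional mean $\int\zeta(\tilde X^i_s,y)\,\nu_s(dy)$; Lipschitzness of $\zeta$ and the second-moment bound from the first step make their conditional variance finite and uniformly bounded, so the conditional variance of the average is $\bigO(n^{-1})$, and the lone $j=i$ term contributes another $\bigO(n^{-1})$. Taking expectations, this piece is $\bigO(n^{-1})$ uniformly in $s\le T$. Combining the three estimates yields $u(t)\le C\int_0^t u(s)\,ds+C'T/n$, and Grönwall's inequality gives $u(T)\le(C'T/n)e^{CT}=\bigO(n^{-1})$, which is the claim. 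The delicate points to control are keeping all moment and BDG constants uniform in $n$ so they do not degrade, and the bookkeeping that routes the second-piece bound back into $u(s)$ rather than letting the $\frac1n\sum_j$ term escape.
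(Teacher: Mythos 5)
Your proposal is correct, but it differs from the paper in a basic structural way: the paper offers no proof of \cref{thm:convergence_mckean} at all — the result is imported wholesale as a corollary of Theorem 3 in \cite{jourdain2007}, the only (implicit) verification being that $\sigma(x,\nu)=\int\zeta(x,y)\,d\nu(y)$ with $\zeta$ Lipschitz satisfies that theorem's Lipschitz hypotheses in the state and measure arguments. That is exactly the ``shorter route'' you mention in your opening and then decline to take. What you write out instead is the synchronous-coupling (Sznitman) argument that underlies such results: well-posedness of the nonlinear equation by Picard iteration in Wasserstein distance on path space, coupling each particle to a McKean--Vlasov copy driven by the \emph{same} Brownian motion and started from the same (iid) initial condition, the three-way splitting of $\sigma(X^i_s,\nu^n_s)-\sigma(\tilde X^i_s,\nu_s)$, a conditional-variance bound of order $\bigO(n^{-1})$ for the empirical fluctuation (including the diagonal $j=i$ term), and Gr\"onwall. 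The steps check out: since the coordinate differences are driftless martingales, BDG applies directly with a constant independent of $n$; the feedback of the middle term into $u(s)$ needs only $\frac{1}{n}\sum_j \E|X^j_s-\tilde X^j_s|^2\le u(s)$, so it does not in fact require the exchangeability you invoke; and the uniform second-moment bound on $\tilde X$ from your first step is precisely what keeps the fluctuation variance bounded uniformly on $[0,T]$. One point worth making explicit: the linear-in-measure form $\sigma(x,\nu)=\int\zeta(x,y)\,d\nu(y)$ is not incidental but is what powers your conditional-variance step — for a generic $W_1$-Lipschitz dependence on the measure one would instead inherit the (generally slower) rate of convergence of empirical measures in Wasserstein distance. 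In sum, the citation buys the paper brevity and the broader generality of \cite{jourdain2007}, while your self-contained derivation buys explicit constants and transparency about where the $n^{-1}$ rate originates; both are legitimate proofs of the stated estimate.
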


\begin{proof}
This is a direct result of Thm 3 in \cite{jourdain2007}. The bounded moment condition holds for $k=1$ (dimension of the particles), and the conclusion is straightforward.
\end{proof}

\subsection{Other results from probability and stochastic calculus}
The next trivial lemma has been opportunely used in \cite{li21loggaussian} to derive the limiting distribution of the network output (multi-layer perceptron) in the joint infinite width-depth limit. This simple result will also prove useful in our case of the finite-width-infinite-depth limit.
\begin{lemma}\label{lemma:gaussian_vec}
Let $W \in \reals^{n\times n}$ be a matrix of standard Gaussian random variables $W_{ij} \sim \normal(0,1)$. Let $v \in \reals^n$ be a random vector independent from $W$ and satisfies $\|v\|_2 = 1$ . Then, $W v \sim \normal(0, I)$.
\end{lemma}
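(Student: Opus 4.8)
The plan is to exploit the independence of $v$ and $W$ by conditioning on $v$, which reduces the statement to the elementary fact that $W$ applied to a \emph{fixed} unit vector is standard Gaussian. Concretely, I would compute the characteristic function of $Wv$ and verify that it coincides with that of $\normal(0,I)$.

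First I would fix an arbitrary $\xi \in \reals^n$ and use the tower property together with the independence of $v$ and $W$ to write
$$
\E\!\left[e^{i \xi^\top W v}\right] = \E\!\left[\,\E\!\left[e^{i \xi^\top W v} \,\big|\, v\right]\,\right].
$$
Because $W$ is independent of $v$, the inner conditional expectation is simply the $W$-expectation of $e^{i \xi^\top W u}$ with $u$ set equal to the realized value of $v$. I would then expand $\xi^\top W u = \sum_{i,j} \xi_i u_j W_{ij}$, which is a linear combination of the \iid standard Gaussians $\{W_{ij}\}$ and hence a centered Gaussian of variance $\sum_{i,j} \xi_i^2 u_j^2 = \|\xi\|^2 \|u\|^2$. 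Consequently, for any unit vector $u$,
$$
\E\!\left[e^{i \xi^\top W u}\right] = \exp\!\left(-\tfrac{1}{2}\|\xi\|^2 \|u\|^2\right) = \exp\!\left(-\tfrac{1}{2}\|\xi\|^2\right).
$$

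The crucial observation is that this value does not depend on $u$ as long as $\|u\| = 1$. Since $v$ is a unit vector almost surely, substituting $u = v$ and taking the outer expectation leaves the constant $\exp(-\|\xi\|^2/2)$ unchanged, so $\E[e^{i \xi^\top W v}] = \exp(-\|\xi\|^2/2)$ for every $\xi \in \reals^n$. As this is precisely the characteristic function of $\normal(0,I)$, the uniqueness theorem for characteristic functions yields $Wv \sim \normal(0,I)$.

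There is essentially no hard step here; the only point requiring care is the bookkeeping of the conditioning, namely that it is the independence of $v$ and $W$ that licenses evaluating the $W$-expectation with the direction frozen and only afterwards integrating over the law of $v$. An equivalent route would bypass characteristic functions entirely: conditionally on $\{v = u\}$ the coordinates $(Wu)_i = \sum_j W_{ij} u_j$ are jointly Gaussian, each centered with variance $\|u\|^2 = 1$, and mutually independent since distinct coordinates involve disjoint rows of $W$; hence the conditional law is $\normal(0,I)$ irrespective of $u$, and so is the unconditional law.
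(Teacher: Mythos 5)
Your proof is correct and follows essentially the same route as the paper: conditioning on $v$ via the tower property and computing the characteristic function of $Wv$, which equals $e^{-\|\xi\|^2/2}$ uniformly over unit vectors. The only difference is that you spell out the variance computation $\sum_{i,j}\xi_i^2 u_j^2 = \|\xi\|^2\|u\|^2$ that the paper leaves implicit.
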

\begin{proof}
The proof follows a simple characteristic function argument. Indeed, by conditioning on $v$, we observe that $Wv \sim \normal(0, I)$. Let $u \in \reals^n$, we have that \begin{align*}
    \E_{W,v}[e^{i \langle u, Wv\rangle}]  &=  \E_v[ \E_W[e^{i \langle u, Wv\rangle}| v]] \\
    &= \E_v[ e^{-\frac{\|u\|^2}{2}}] \\
    &= e^{-\frac{\|u\|^2}{2}}.\\
\end{align*}
This concludes the proof as the latter is the characteristic function of a random Gaussian vector with Identity covariance matrix.
\end{proof}

The next theorem shows when a stochastic process (ito)

\begin{thm}[Variation of Thm 8.4.3 in \cite{Oksendal2003}]\label{thm:same_law}
Let $(X_t)_{t\in [0,T]}$ and $(Y_t)_{t\in [0,T]}$ be two stochastic processes given by 
$$
\begin{cases}
dX_t = b(X_t) dt + \sigma(X_t) dB_t, \quad X_0 = x \in \reals.\\
dY_t = b_t dt + v_t d\hat{B}_t, \quad Y_0 = X_0,
\end{cases}
$$
where $\sigma: \reals \to \reals^{1 \times k}$, $(b_t)_{t \geq 0}$ and $(v_t)_{t \geq 0}$ are real valued adapted stochastic processes, and $v$ is adapted to the filtration of the Brownian motion $(\hat{B}_t)_{t \geq 0}$, $(B_t)_{t \geq 0}$ is an $k$-dimensional Brownian motion and $(\hat{B}_t)_{t \geq 0}$ is a $1$-dimensional Brownian motion. Assume that $\E[b_t | \mathcal{N}_t] = b(Y_t)$ where $\mathcal{N}_t = \sigma((Y_s)_{s \leq t})$ is the $\sigma$-Algebra generated by $\{ Y_s: s\leq t\}$, and $v_t^2 = \sigma(Y_t) \sigma(Y_t)^\top$ almost surely (in terms of $dt \times dP$ measure where $dt$ is the natural Borel measure on $[0,T]$ and $dP$ is the probability measure associated with the probability space). Then, $X_t$ and $Y_t$ have the same distribution for all $t \in [0,T]$.
\end{thm}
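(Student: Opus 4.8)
The plan is to identify the one-dimensional marginal laws of both processes through the action of the infinitesimal generator
$$
A f(x) = b(x) f'(x) + \tfrac{1}{2}\,\sigma(x)\sigma(x)^\top f''(x),
$$
and to show that the two flows of marginals $t\mapsto \mathrm{Law}(X_t)$ and $t \mapsto \mathrm{Law}(Y_t)$ solve one and the same weak Fokker--Planck evolution with the same initial condition $\mathrm{Law}(X_0)$. Since both processes start from $X_0$, matching the generators against every test function and then invoking uniqueness of this evolution will yield $X_t \overset{d}{=} Y_t$ for all $t$.

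First I would fix a test function $f \in \mathcal{C}^2_c(\reals)$ (compactly supported, so that $f, f', f''$ are bounded) and apply \ito's lemma (\cref{lemma:ito}) to $f(Y_t)$:
$$
f(Y_t) = f(Y_0) + \int_0^t f'(Y_s)\, b_s\, ds + \int_0^t f'(Y_s)\, v_s\, d\hat{B}_s + \frac{1}{2}\int_0^t f''(Y_s)\, v_s^2\, ds.
$$
Taking expectations kills the $d\hat{B}_s$ term (a true martingale once the integrand is controlled; otherwise one localizes with the stopping times $\tau_m = \inf\{t: |Y_t| \ge m\}$ and passes to the limit $m\to\infty$). Since $f'(Y_s)$ and $f''(Y_s)$ are $\mathcal{N}_s$-measurable, the tower property together with the two hypotheses $\E[b_s\mid \mathcal{N}_s] = b(Y_s)$ and $v_s^2 = \sigma(Y_s)\sigma(Y_s)^\top$ lets me replace $b_s$ by $b(Y_s)$ and $v_s^2$ by $\sigma(Y_s)\sigma(Y_s)^\top$ inside the expectation, e.g.
$$
\E[f'(Y_s)\, b_s] = \E\big[f'(Y_s)\,\E[b_s\mid\mathcal{N}_s]\big] = \E[f'(Y_s)\, b(Y_s)].
$$
This gives $\E f(Y_t) = \E f(Y_0) + \E\int_0^t A f(Y_s)\, ds$, i.e. $\frac{d}{dt}\E f(Y_t) = \E\, Af(Y_t)$. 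Running the identical computation for $X_t$ — whose drift and squared volatility are already $b(X_s)$ and $\sigma(X_s)\sigma(X_s)^\top$ — yields the same identity $\frac{d}{dt}\E f(X_t) = \E\, Af(X_t)$.

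Finally I would conclude by uniqueness. Both $\mu_t := \mathrm{Law}(X_t)$ and $\nu_t := \mathrm{Law}(Y_t)$ satisfy $\frac{d}{dt}\int f\, d\mu_t = \int Af\, d\mu_t$ for every $f \in \mathcal{C}^2_c(\reals)$, with $\mu_0 = \nu_0 = \mathrm{Law}(X_0)$. This is precisely the statement that $Y$ solves the martingale problem associated with $A$, equivalently that it is a weak solution of $dX_t = b(X_t)dt + \sigma(X_t)dB_t$; hence, under the conditions guaranteeing uniqueness in law for this SDE (Lipschitz regularity of $b$ and $\sigma$, via \cref{thm:existence_and_uniqueness} and the Yamada--Watanabe implication from pathwise to weak uniqueness), the two marginal flows must coincide, so $X_t$ and $Y_t$ share the same distribution for every $t \in [0,T]$. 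The main obstacle is exactly this uniqueness step: the computation only shows that the two marginal flows solve the same evolution equation, and deducing that they are equal genuinely requires well-posedness of the martingale problem (weak uniqueness), not merely the matching of generators. The integrability and localization needed to discard the stochastic integral is a secondary technical point, handled by stopping $Y$ before it exits balls of radius $m$ and using boundedness of $f', f''$.
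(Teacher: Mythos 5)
Your overall strategy---match generators, then invoke uniqueness---is in the spirit of the paper's proof, but there is a genuine gap at the decisive step, and it comes from taking \emph{unconditional} expectations. What your computation establishes is only the weak Fokker--Planck identity $\frac{d}{dt}\E f(Y_t) = \E\, Af(Y_t)$ for the flow of one-dimensional marginals. This is strictly weaker than the assertion you then make, namely that ``this is precisely the statement that $Y$ solves the martingale problem associated with $A$.'' Solving the martingale problem means that $M_t = f(Y_t) - \int_0^t Af(Y_s)\,ds$ is a martingale with respect to the filtration $\mathcal{N}_t$ generated by $Y$---a conditional statement---not merely that $\E M_t$ is constant in $t$. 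The distinction is not cosmetic: there exist processes (``fake Brownian motions'') whose marginals solve the heat-equation Fokker--Planck equation yet which do not solve the martingale problem for $\frac{1}{2}\frac{d^2}{dx^2}$. Consequently, the uniqueness you invoke (weak uniqueness in law for the SDE, via Yamada--Watanabe) applies to martingale-problem solutions; it does not, by itself, imply that two flows of marginals solving the same Fokker--Planck equation coincide. Bridging that particular gap would require the superposition principle (Ambrosio--Figalli--Trevisan), a substantially deeper tool than anything the theorem's hypotheses or the paper's references supply.

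The repair is simple and lands you exactly on the paper's proof: do not integrate out all the randomness. For $r < t$, take conditional expectations given $\mathcal{N}_r$ in your It\^o expansion; since $f'(Y_s)$ and $f''(Y_s)$ are $\mathcal{N}_s$-measurable, the tower-property manipulation you already sketch gives $\E[f'(Y_s)\,b_s \mid \mathcal{N}_r] = \E[f'(Y_s)\,b(Y_s)\mid \mathcal{N}_r]$ and likewise $\E[f''(Y_s)\,v_s^2 \mid \mathcal{N}_r] = \E[f''(Y_s)\,\sigma(Y_s)\sigma(Y_s)^\top \mid \mathcal{N}_r]$, whence $\E[M_t \mid \mathcal{N}_r] = M_r$. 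This shows $Y$ genuinely solves the martingale problem for $A$ (this is precisely the paper's argument, with its processes $\mathcal{G}(f)$ and $\mathcal{H}(f)$), and uniqueness of the martingale-problem solution (Øksendal, Thm 8.3.6) then yields equality of the laws. Your localization remark about the stochastic-integral term remains the right technical device in this conditional setting as well.
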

\begin{proof}
The proof of this theorem is the same as that of Thm 8.4.3 in 
\cite{Oksendal2003} with small differences. Indeed, our result is slightly different from that of \cite{Oksendal2003} in the sense that here we consider Brownian motions with different dimensions, while in their theorem, the author considers the case where the Brownian motions involved in $(X_t)$ and $(Y_t)$ are of the same dimension. However, both results make use of the so-called Martingale problem, which characterizes the weak uniqueness and hence the distribution of Ito processes\footnote{We omit the details on the Martingale problem here. We invite the curious reader to check Chapter 8 in \cite{Oksendal2003} for further details.}. The generator of $X_t$ is given for $f \in \mathcal{C}^2(\reals)$ by 
$$
\mathcal{G}(f)(x) = b(x) \frac{\partial f}{\partial x}  + \frac{1}{2} \sigma(x) \sigma(x)^\top \frac{\partial^2 f}{\partial x^2}.
$$
Now define the process $\mathcal{H}(f)$ for $f \in \mathcal{C}^2(\reals)$ by
$$
\mathcal{H}(f)(t) = b_t \frac{\partial f}{\partial x}(Y_t) + \frac{1}{2} v^2_t \frac{\partial^2 f}{\partial x^2}(Y_t).
$$

Let $\mathcal{N}_t = \sigma((Y_s)_{s \leq t})$ be the $\sigma$-Algebra generated by $\{ Y_s: s\leq t\}$. Using \ito lemma, we have that for $s < t$, 

\begin{align*}
\E[f(Y_s) | \mathcal{N}_t] &= f(Y_t) + \E[ \int_{t}^s \mathcal{H}(f)(r) dr | \mathcal{N}_t]\\
&= f(Y_t) + \E\left[ \int_{t}^s \E[\mathcal{H}(f)(r) | \mathcal{N}_r] dr | \mathcal{N}_t\right]\\
&= f(Y_t) + \E\left[ \int_{t}^s \mathcal{G}(f)(Y_r) dr | \mathcal{N}_t\right],\\
\end{align*}
where we have used the fact that $\E[b_r| \mathcal{N}_r] = b(Y_r)$. Now define the process $M$ by 
$$
M_t = f(Y_t) - \int_{0}^t \mathcal{G}(f)(Y_r) dr.
$$
For $s > t$, we have that 
\begin{align*}
\E[M_s | \mathcal{N}_t] &= f(Y_t) + \E\left[ \int_{t}^s \mathcal{G}(f)(Y_r) dr | \mathcal{N}_t \right] - \E\left[ \int_{0}^s \mathcal{G}(f)(Y_r) dr | \mathcal{N}_t \right] \quad \textup{(by \ito lemma),}\\
&= f(Y_t) - \E\left[ \int_{0}^t \mathcal{G}(f)(Y_r) dr | \mathcal{N}_t \right] = M_t.\\
\end{align*}
Hence, $M_t$ is a martingale (w.r.t to $\mathcal{N}_t$). We conclude that $Y_t$ has the same law as $X_t$ by the uniqueness of the solution of the martingale problem (see 8.3.6 in \cite{Oksendal2003}).
\end{proof}

The next result is a simple corollary of the existence and uniqueness of the strong solution of an SDE under the Lipschitz conditions on the drift and the volatility. It basically shows that a zero-drift process collapses (becomes constant) once the volatility is zero.

\begin{lemma}\label{lemma:zero_after_hit}
Let $g : \reals^n \to \reals$ be a Lipschitz function. Let $Z$ be the solution of the stochastic differential equation
$$
dZ_t = g(Z_t) dB_t, \quad Z_0 \in \reals^n.
$$
If $g(Z_0) = 0$, then $Z_t = Z_0$ almost surely.
\end{lemma}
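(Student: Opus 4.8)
The plan is to exhibit the constant process as an explicit solution of the SDE and then invoke strong uniqueness. First I would put the equation in the form of \cref{thm:existence_and_uniqueness} by setting $\mu \equiv 0$ and $\sigma(t,x) = g(x) I_n$, so that the $n$-dimensional process driven by the $n$-dimensional Brownian motion $B$ reads $dZ_t = \mu(t,Z_t)\,dt + \sigma(t,Z_t)\,dB_t$. Since $g$ is Lipschitz with some constant $K$, we have $\|\sigma(t,x) - \sigma(t,x')\| \le \sqrt{n}\,K\,\|x - x'\|$ and $\mu$ is trivially Lipschitz, so the Lipschitz hypothesis holds; the local integrability condition on $\sigma(\cdot,0)$ holds on each finite horizon $[0,T]$, which is all that uniqueness on $[0,T]$ requires. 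Thus $Z$ is \emph{the} unique strong solution started at $Z_0$.

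Next I would introduce the deterministic candidate $\tilde Z_t \equiv Z_0$ and check that it solves the same SDE with the same initial condition. Because $\tilde Z$ is constant, $g(\tilde Z_t) = g(Z_0) = 0$ for every $t$ by the hypothesis $g(Z_0)=0$; hence the stochastic integral $\int_0^t g(\tilde Z_s)\,dB_s$ vanishes identically, and the identity $\tilde Z_t = Z_0 + \int_0^t g(\tilde Z_s)\,dB_s = Z_0$ is self-consistent. So $\tilde Z$ is a strong solution with initial value $Z_0$. Since $Z$ and $\tilde Z$ are both strong solutions of the same equation started at $Z_0$, uniqueness forces $Z_t = \tilde Z_t = Z_0$ almost surely, simultaneously for all $t$ by continuity of the paths.

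The only subtlety — which looks circular but is not — lies in the verification step: one must fix $\tilde Z_t \equiv Z_0$ \emph{a priori} as a deterministic process and then check that it solves the equation, rather than trying to deduce constancy from the equation itself. Because $g$ is evaluated only at the constant value $Z_0$, where it vanishes, no fixed-point argument is needed. I expect this to be the main (and only minor) conceptual point; the rest is a direct citation of \cref{thm:existence_and_uniqueness}.

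As an alternative that bypasses the uniqueness theorem entirely, I would write $Z_t - Z_0 = \int_0^t g(Z_s)\,dB_s$ and apply It\^o's isometry together with $|g(Z_s)| = |g(Z_s) - g(Z_0)| \le K\,\|Z_s - Z_0\|$ to obtain $\E\|Z_t - Z_0\|^2 \le n K^2 \int_0^t \E\|Z_s - Z_0\|^2\,ds$; Gr\"onwall's inequality then yields $\E\|Z_t - Z_0\|^2 = 0$, so $Z_t = Z_0$ almost surely for each $t$, and path-continuity upgrades this to all $t$ simultaneously. The only care needed in this route is confirming that $t \mapsto \E\|Z_t - Z_0\|^2$ is finite and locally bounded, which follows from the standard $L_2$ estimates for It\^o processes with $\E\|Z_0\|^2 < \infty$.
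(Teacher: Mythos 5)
Your main argument is exactly the paper's proof: the paper simply cites the uniqueness of the strong solution (\cref{thm:existence_and_uniqueness}), and your verification that the constant process $\tilde Z_t \equiv Z_0$ solves the same SDE supplies precisely the details the paper leaves implicit. The It\^o-isometry/Gr\"onwall alternative you sketch is also correct and self-contained, but the uniqueness route is the one the paper takes, so no further comparison is needed.
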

\begin{proof}
This follows for the uniqueness of the strong solution of an SDE(\cref{thm:existence_and_uniqueness}).
\end{proof}

\subsection{Proof of \cref{prop:main_conv}}\label{sec:proof_main_conv}
We are now ready to prove the following result.\\

\textbf{Proposition \ref{prop:main_conv}. }\emph{
Assume that the activation function $\phi$ is Lipschitz on $\reals^n$. Then, in the limit $L \to \infty$, the process $X^L_t = Y_{\lfloor t L\rfloor}$, $t\in [0,1]$, converges in distribution to the solution of the following SDE 
\begin{equation}\label{eq:main_sde}
    dX_t = \frac{1}{\sqrt{n}}\|\phi(X_t)\| dB_t, \quad X_0 = W_{in} x,
\end{equation}
where $(B_t)_{t\geq 0}$ is a Brownian motion (Wiener process). Moreover, we have that for any $t \in [0,1]$ Lipschitz function $\Psi:\reals^n \to \reals$, 
$$
\E \Psi(Y_{\lfloor t L\rfloor}) = \E \Psi(X_t) + \bigO(L^{-1/2}),
$$
where the constant in $\bigO$ does not depend on $t$.\\
Moreover, if the activation function $\phi$ is only locally Lipschitz, then $X^L_t$ converges locally to $X_t$. More precisely, for any fixed $r > 0$, we consider the stopping times 
$$
\tau^L = \inf \{t \geq 0: \|X^L_t\| \geq r\}, \quad \tau = \inf \{t \geq 0: \|X_t\| \geq r\},
$$
then the stopped process $X^L_{t \land \tau^L}$ converges in distribution to the stopped solution $X_{t \land \tau}$ of the above SDE.}\\

\begin{proof}
The proof is based on \cref{thm:convergence_euler} in the appendix. It remains to express \cref{eq:resnet}  in the required form and make sure all the conditions are satisfied for the result to hold. Using \cref{lemma:gaussian_vec}, we can write \cref{eq:resnet} as
$$
Y_l = Y_{l-1} + \frac{1}{\sqrt{L}} \sigma(Y_{l-1}) \zeta^L_{l-1},
$$
where $\sigma(y) \overset{def}{=} \frac{1}{\sqrt{n}} \|\phi(y)\|$ for all $y \in \reals^n$ and $\zeta^L_l$ are \iid random Gaussian vectors with distribution $\normal(0, I)$. This is equal in distribution to the Euler scheme of SDE \cref{eq:main_sde}. Since $\sigma$ trivially inherits the Lipschitz or local Lipschitz properties of $\phi$, we conclude for the convergence using \cref{thm:convergence_euler} and \cref{lemma:locally_lipschitz_convergence}.\\

Now let $\Psi$ be $K$-Lipschitz for some constant $K > 0$. We have that 
$$
|\E \Psi(Y_{\lfloor t L\rfloor}) - \E \Psi(X_t)| \leq K \E \sup_{t \in [0,1]}\|\bar{Y}_{\lfloor t L\rfloor} - X_t\| = \bigO(L^{-1/2}),
$$
where $\bar{Y}$ is the Euler scheme as in \cref{thm:convergence_euler}, and where we have used the fact that $Y_{\lfloor t L\rfloor}$ and $\bar{Y}_{\lfloor t L\rfloor}$ have the same distribution.
\end{proof}

The result of \cref{prop:main_conv} can be generalized to the case with multiple inputs with minimal changes in the proof. We summarize this result in the next proposition.

\begin{prop}\label{prop:main_conv_multiple}
Let $x_1, x_2, \dots, x_k \in \reals^d$ be non-zero inputs, and denote by $Y_l(x_i)$ the pre-activation vector in layer $l$ for the input $x_i$. Consider the vector $\bm{Y}_l^k = (Y_l(x_1)^\top, Y_l(x_2)^\top, \dots, Y_l(x_k)^\top)^\top \in \reals^{k\cdot n}$ consisting of the concatenation of the pre-activations vectors for all inputs $x_i$. Assume that the activation function $\phi$ is Lipschitz on $\reals^n$. Then, in the limit $L \to \infty$, the process $\bm{X}^{L,k}_t = \bm{Y}^k_{\lfloor t L\rfloor}$, $t\in [0,1]$, converges in distribution to the solution of the following SDE 
\begin{equation}\label{eq:main_sde}
    d\bm{X}^k_t = \frac{1}{\sqrt{n}}\Sigma(\bm{X}^k_t)^{1/2} d\bm{B}_t, \quad \bm{X}^k_0 = ((W_{in} x_1)^\top, \dots, (W_{in} x_k)^\top)^\top,
\end{equation}
where $(\bm{B}_t)_{t\geq 0}$ is an $kn$-dimensional Brownian motion (Wiener process), independent from $W_{in}$, and $\Sigma(\bm{X}^k_t) $ is the covariance matrix given by 
\[
  \Sigma(\bm{X}^k_t) = \left[\begin{array}{ c | c | c | c}
    \alpha_{1,1} I_n & \alpha_{1,2} I_n & \dots & \alpha_{1,k} I_n\\
    \hline
    \alpha_{2,1} I_n & \alpha_{2,2} I_n & \dots & \alpha_{2,k} I_n\\
    \hline
    \vdots & \vdots & \vdots & \vdots\\
    \alpha_{k,1} I_n & \dots & \dots & \alpha_{k,k} I_n\\
  \end{array}\right],
\]
where $\alpha_{i,j} = \langle \phi(\bm{X}_{t}^{k, i}), \phi(\bm{X}_{t}^{k, j}) \rangle$, with $((X_{t}^{k, 1})^\top, \dots, (X_{t}^{k, k})^\top )^\top \overset{def}{=} \bm{X}_t^k$.
Moreover, if the activation function $\phi$ is only locally Lipschitz, then $\bm{X}^{L,k}_t$ converges locally to $\bm{X}_t^k$. More precisely, for any fixed $r > 0$, we consider the stopping times $
\tau^L = \inf \{t \geq 0: \|\bm{X}^{L,k}_t\| \geq r\}$, and $\quad \tau = \inf \{t \geq 0: \|\bm{X}^{k}_t\| \geq r\},
$
then the stopped process $\bm{X}^{L,k}_{t \land \tau^L}$ converges in distribution to the stopped solution $\bm{X}^k_{t \land \tau}$ of the above SDE.
\end{prop}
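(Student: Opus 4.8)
The plan is to reduce \cref{prop:main_conv_multiple} to exactly the Euler-scheme machinery used for \cref{prop:main_conv}, the only genuinely new ingredient being the correlation structure induced by the \emph{shared} weight matrix $W_l$ across the $k$ inputs. First I would concatenate the recursions: writing $\psi_i \overset{def}{=} \phi(Y_{l-1}(x_i)) \in \reals^n$, the increment of $\bm Y_l^k$ is the stacked vector $\frac{1}{\sqrt L}(W_l\psi_1, \dots, W_l\psi_k)$. Conditionally on $\bm Y_{l-1}^k$ this is a centered Gaussian in $\reals^{kn}$, and a direct second-moment computation using $\E[W_l^{pa}W_l^{qb}] = \frac{1}{n} \delta_{pq}\delta_{ab}$ shows that its $(i,j)$ block equals $\frac{1}{n}\langle\psi_i,\psi_j\rangle I_n$; that is, the conditional covariance is exactly $\frac{1}{n}\Sigma(\bm Y_{l-1}^k)$ with $\Sigma$ as in the statement.

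Rather than taking the matrix square root $\Sigma^{1/2}$ directly (which need not be Lipschitz when $\Sigma$ degenerates), I would exploit an explicit noise representation. Using the vectorization identity $W_l\psi_i = (\psi_i^\top\otimes I_n)\,\mathrm{vec}(W_l)$ together with $\mathrm{vec}(W_l)\sim\normal(0,\tfrac{1}{n} I_{n^2})$, the full increment can be written as $\frac{1}{\sqrt{Ln}}\,\mathcal A(\bm Y_{l-1}^k)\,\zeta_l$, where the $\zeta_l\sim\normal(0,I_{n^2})$ are i.i.d.\ across layers (since the $W_l$ are independent) and $\mathcal A(\bm y)\in\reals^{kn\times n^2}$ is the block matrix whose $i$-th block row is $\phi(y_i)^\top\otimes I_n$. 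This is a multivariate analogue of \cref{lemma:gaussian_vec}, and it exhibits $\bm Y^k_{\lfloor tL\rfloor}$ as precisely (in distribution) the Euler--Maruyama scheme of the SDE $d\bm X^k_t = \frac{1}{\sqrt n}\mathcal A(\bm X^k_t)\,d\bm W_t$ driven by an $n^2$-dimensional Brownian motion $\bm W$. Crucially, $\mathcal A$ is \emph{linear} in $(\phi(y_1),\dots,\phi(y_k))$, so it inherits the (local) Lipschitz property of $\phi$ with no square-root subtleties, and the autonomous coefficients make the growth and time-regularity hypotheses of \cref{thm:convergence_euler} (resp.\ \cref{lemma:locally_lipschitz_convergence} in the locally-Lipschitz case) immediate.

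I would then invoke \cref{thm:convergence_euler} to obtain $\E\sup_{t\in[0,1]}\|\bm X^{L,k}_t - \bm X^k_t\|^2 = \bigO(L^{-1})$ in the globally Lipschitz case, hence convergence in distribution, and \cref{lemma:locally_lipschitz_convergence} together with the stopping-time argument of \cref{prop:main_conv} in the locally Lipschitz case. The final step is to reconcile the $\mathcal A$-driven SDE with the $\Sigma^{1/2}$-driven SDE stated in the proposition: both have the same diffusion coefficient $a(\bm y) = \frac{1}{n}\mathcal A(\bm y)\mathcal A(\bm y)^\top = \frac{1}{n}\Sigma(\bm y)$, where the identity $\mathcal A\mathcal A^\top=\Sigma$ follows from $(\psi_i^\top\otimes I_n)(\psi_j^\top\otimes I_n)^\top = \langle\psi_i,\psi_j\rangle I_n$. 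Since the law of a diffusion is determined by its generator (equivalently by $a$), weak uniqueness---via the martingale-problem argument of \cref{thm:same_law}---gives that the two SDEs have the same distribution, completing the identification. The main obstacle I anticipate is exactly this avoidance of the matrix square root: verifying Lipschitz regularity of $\Sigma^{1/2}$ near degeneracies (colliding inputs) would be delicate, whereas the rectangular representation $\mathcal A$ sidesteps it cleanly, at the cost of the extra weak-uniqueness step needed to match the stated form.
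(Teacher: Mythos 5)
Your proposal is correct, and it follows the paper's high-level strategy: concatenate the $k$ recursions, observe that the shared weight matrix $W_l$ makes the stacked increment conditionally Gaussian with block covariance $\frac{1}{n}\Sigma(\bm{Y}^k_{l-1})$, identify the result (in distribution) with an Euler--Maruyama scheme, and conclude via \cref{thm:convergence_euler} (resp.\ \cref{lemma:locally_lipschitz_convergence} with the stopping-time argument in the locally Lipschitz case). Where you genuinely depart from the paper is in how the scheme is written as a bona fide Euler scheme. The paper's proof stops at ``the covariance of $\bm{\zeta}^L_{l-1}$ is $\Sigma$'' and concludes; implicitly this means taking the diffusion coefficient to be $\frac{1}{\sqrt{n}}\Sigma^{1/2}$, and the Lipschitz hypothesis of \cref{thm:convergence_euler} is never verified for this coefficient. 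That verification is not innocuous: writing $\Sigma(\bm{y}) = G(\bm{y})\otimes I_n$ with $G$ the Gram matrix of $(\phi(y_1),\dots,\phi(y_k))$, the map $\bm{y}\mapsto \Sigma(\bm{y})^{1/2} = G(\bm{y})^{1/2}\otimes I_n$ involves a matrix absolute value, which for $k\geq 2$ is in general only H\"older near rank degeneracies (e.g.\ colliding or proportional post-activation vectors), so the paper's two-line argument has a real gap there. Your rectangular factorization $\mathcal{A}(\bm{y})$, with block rows $\phi(y_i)^\top\otimes I_n$ acting on $\mathrm{vec}(W_l)\sim\normal(0,\frac{1}{n}I_{n^2})$, is the multivariate analogue of \cref{lemma:gaussian_vec} and is \emph{linear} in the post-activations, so it inherits the (local) Lipschitz property of $\phi$ with no regularity loss; the price is the extra identification step, which you correctly supply: since $\frac{1}{n}\mathcal{A}\mathcal{A}^\top = \frac{1}{n}\Sigma$, both SDEs generate the same martingale problem, strong (hence weak) uniqueness for the $\mathcal{A}$-driven equation gives well-posedness of that martingale problem, and the law of the stated $\Sigma^{1/2}$-driven solution coincides with it, in the spirit of \cref{thm:same_law}. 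In short, your route proves the same statement by the same reduction but repairs the unaddressed Lipschitz-square-root issue, at the cost of one weak-uniqueness argument that the paper never needs to invoke because it never confronts the problem.
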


\begin{proof}
The proof is similar to that of \cref{prop:main_conv}. The only difference lies the definition of the Gaussian vector $\zeta^L_l$. In this case, we have for all $x_i$
$$
Y_l(x_i) = Y_{l-1}(x_i) + \frac{1}{\sqrt{L}} \frac{1}{\sqrt{n}} \zeta^L_{l-1}(Y_{l-1}(x_i)),
$$
where $\zeta^L_{l-1}(Y_{l-1}(x_i)) \overset{def}{=} \sqrt{n} W_l \phi(Y_{l-1}(x_i))$. Concatenating these identities yield
$$
\bm{Y}^k_l = \bm{Y}^k_{l-1} + \frac{1}{\sqrt{L}} \frac{1}{\sqrt{n}} \bm{\zeta}^L_{l-1},
$$
where $\bm{\zeta}^L_{l-1}$ is the concatenation of the vector $\zeta^L_{l-1}(Y_{l-1}(x_i))$ for $i = 1, \dots, k$. It is straightforward that the covariance matrix of the Gaussian vector $\bm{\zeta}^L_{l-1}$ is given by the matrix $\Sigma$ above (with $X$ replaced by $Y$). We conclude using \cref{thm:convergence_euler}.

\end{proof}

\section{Some technical results for the proofs}
\subsection{Approximation of $X$}
In the next lemma, we provide an approximate stochastic process $X^m$ to $X$, that differs from $X$ by the volatility term. The upper-bound on the $L_2$ norm of the difference between $X^m$ and $X$ will prove useful in the proofs of other results. The proof of this lemma requires the use of Gronwall's lemma, a tool that is often used in stochastic calculus.
\begin{lemma}\label{lemma:convergence_Xm_X}
Let $x \in \reals^d$ such that $x \neq 0$, $m\geq 1$ be an integer, and consider the two stochastic processes $X^m$ and $X$ given by 
$$
\begin{cases}
dX^m_t = \frac{1}{\sqrt{n}} \|\phi_m(X^m_t)\| dB_t \, ,\quad  t\in [0,\infty), \quad X^m_0 = W_{in} x, \\
dX_t = \frac{1}{\sqrt{n}} \|\phi(X_t)\| dB_t \, ,\quad  t\in [0,\infty), \quad X_0 = W_{in} x,
\end{cases}
$$

where $\phi_m(z) = \int_{0}^z h(mu) du$ where $h$ is the Sigmoid function given by $h(u) = (1 + e^{-u})^{-1}$, $\phi$ is the ReLU activation function, and $(B_t)_{t \geq 0}$ is an $n$-dimensional Brownian motion. We have the following

$$
\forall t\geq 0, \, \, \E \left\|X^m_t - X_t \right\|^2 \leq \frac{2n t}{m^2} e^{2t}.
$$

\end{lemma}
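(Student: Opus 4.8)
The plan is to exploit the fact that both $X^m$ and $X$ are zero-drift diffusions driven by the \emph{same} $n$-dimensional Brownian motion $B$ and started from the same point, so their difference is a pure stochastic integral to which \ito's isometry applies directly. First I would record that both SDEs admit unique strong solutions: since $\phi_m'(z) = h(mz) \in (0,1)$, the map $\phi_m$ is $1$-Lipschitz, and ReLU is $1$-Lipschitz as well, so the scalar volatilities $y \mapsto \tfrac{1}{\sqrt n}\|\phi_m(y)\|$ and $y\mapsto \tfrac1{\sqrt n}\|\phi(y)\|$ are Lipschitz and \cref{thm:existence_and_uniqueness} applies. Subtracting the two integral equations and using that neither has a drift term gives
$$
X^m_t - X_t = \int_0^t \tfrac{1}{\sqrt n}\bigl(\|\phi_m(X^m_s)\| - \|\phi(X_s)\|\bigr)\, dB_s .
$$

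Next I would apply \ito's isometry coordinate-wise. Because the integrand is a scalar multiplying an $n$-dimensional Brownian motion, summing over the $n$ coordinates contributes a factor $n$ that cancels the $1/n$ coming from the $1/\sqrt n$ scaling, yielding
$$
u(t) \;\overset{def}{=}\; \E\|X^m_t - X_t\|^2 \;=\; \E\int_0^t \bigl(\|\phi_m(X^m_s)\| - \|\phi(X_s)\|\bigr)^2\, ds .
$$
By the reverse triangle inequality and then the triangle inequality, $\bigl|\,\|\phi_m(X^m_s)\| - \|\phi(X_s)\|\,\bigr| \le \|\phi_m(X^m_s) - \phi(X_s)\| \le \|\phi_m(X^m_s) - \phi_m(X_s)\| + \|\phi_m(X_s) - \phi(X_s)\|$, and the $1$-Lipschitz property of $\phi_m$ (applied coordinate-wise) bounds the first summand by $\|X^m_s - X_s\|$.

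The crux, and the step I expect to require the most care, is the uniform approximation bound $\sup_{z\in\reals}|\phi_m(z)-\phi(z)| \le 1/m$. Writing $\phi(z) = \int_0^z \ind\{u>0\}\,du$ and $\phi_m(z) = \int_0^z h(mu)\,du$, the difference is $\int_0^z \bigl(h(mu) - \ind\{u>0\}\bigr)\,du$; a short computation shows $|h(mu) - \ind\{u>0\}| \le e^{-m|u|}$ for every $u\neq 0$ (for $u>0$, $|h(mu)-1| = (1+e^{mu})^{-1}\le e^{-mu}$, and for $u<0$, $h(mu) = (1+e^{m|u|})^{-1}\le e^{-m|u|}$), so $|\phi_m(z)-\phi(z)| \le \int_0^{|z|} e^{-m v}\,dv \le 1/m$. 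Summing over the $n$ coordinates gives $\|\phi_m(X_s)-\phi(X_s)\| \le \sqrt n / m$, and hence, using $(a+b)^2 \le 2a^2+2b^2$,
$$
\bigl(\|\phi_m(X^m_s)\| - \|\phi(X_s)\|\bigr)^2 \;\le\; 2\|X^m_s - X_s\|^2 + \frac{2n}{m^2}.
$$

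Finally I would close the estimate with Gronwall's lemma. Inserting the last bound into the isometry identity gives $u(t) \le \tfrac{2nt}{m^2} + 2\int_0^t u(s)\,ds$; since $t\mapsto \tfrac{2nt}{m^2}$ is non-decreasing, the integral form of Gronwall's inequality yields $u(t) \le \tfrac{2nt}{m^2}\, e^{2t}$, which is exactly the claimed bound. The only genuinely non-routine ingredient is the sigmoid estimate $|h(mu)-\ind\{u>0\}|\le e^{-m|u|}$ and the resulting $1/m$ uniform control of $\phi_m-\phi$; everything else (\ito's isometry, the Lipschitz bounds, Gronwall) is standard. For full rigor one should localize by a stopping time to justify passing the isometry through the expectation before the $L_2$ bounds are known, but the at-most-linear growth of the volatilities makes this localization immediate and it can be removed by monotone convergence.
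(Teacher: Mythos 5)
Your proposal is correct and follows essentially the same route as the paper's proof: subtract the two driftless SDEs, apply \ito's isometry (the $1/n$ scaling cancelling against the $n$ coordinates), use the reverse triangle inequality together with the uniform bound $\sup_z|\phi_m(z)-\phi(z)|\le 1/m$ (which the paper isolates as \cref{lemma:diff_phi_m_phi}, proved by the same sigmoid-tail estimate you give) and a $1$-Lipschitz bound, and conclude with Gronwall. The only cosmetic difference is the split in the triangle inequality — you use the Lipschitz property of $\phi_m$ and the uniform bound at $X_s$, whereas the paper uses the uniform bound at $X^m_s$ and the Lipschitz property of ReLU — which yields the identical estimate $2\|X^m_s-X_s\|^2 + 2n/m^2$ and the same final bound.
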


\begin{proof}
Let $t \geq 0$. We have that 
\begin{align*}
\E \left\|X^m_t - X_t \right\|^2 = \frac{1}{n} \E \left\| \int_{0}^t (\|\phi_m(X^m_s)\| - \|\phi(X_s)\|) dB_s. \right\|^2 
\end{align*}
Using \ito isometry and the fact that $(\|\phi_m(X^m_s)\| - \|\phi(X_s)\|)^2 \leq \|\phi_m(X^m_s) - \phi(X_s)\|^2$, we obtain 
\begin{align*}
\E \left\|X^m_t - X_t \right\|^2 &\leq \int_{0}^t \E \left\|  \phi_m(X^m_s) - \phi(X_s) \right\|^2 ds\\
&\leq 2 \int_{0}^t \E \left\|  \phi_m(X^m_s) - \phi(X^m_s) \right\|^2 ds + 2 \int_{0}^t \E \left\|  \phi(X^m_s) - \phi(X_s) \right\|^2 ds \\
&\leq \frac{2 n t}{m^2} +  2 \int_{0}^t \E \left\|  X^m_s -X_s \right\|^2 ds,\\
\end{align*}
where we have used \cref{lemma:diff_phi_m_phi} and the fact that ReLU is $1$-Lipschitz. We concldue using Gronwall's lemma.

\end{proof}

\subsection{Approximation of $\phi$}
The next lemma provides a simple upper-bound on the distance between the ReLU activation $\phi$ and an approximate function $\phi_m$ that converges to $\phi$ in the limit of large $m$.
\begin{lemma}\label{lemma:diff_phi_m_phi}
Consider the function $\phi_m(z) = \int_{0}^z h(mu) du$ where $z \in \reals$ where $m \geq 1$. We have that 
$$
\sup_{z \in \reals}|\phi_m(z) - \phi(z)| \leq \frac{1}{m}.
$$
\end{lemma}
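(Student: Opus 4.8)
The plan is to reduce the whole estimate to a pointwise comparison between the sigmoid and the Heaviside step function, and then integrate. First I would represent ReLU itself as an integral: writing $H(u) = \ind_{\{u>0\}}$ for the Heaviside function, one has $\phi(z) = \int_0^z H(u)\,du$, which holds for both signs of $z$ under the usual convention $\int_0^z = -\int_z^0$ when $z<0$. Since $\phi_m(z) = \int_0^z h(mu)\,du$ by definition, subtracting gives $\phi_m(z) - \phi(z) = \int_0^z \big(h(mu) - H(u)\big)\,du$, so the entire quantity to bound is governed by the integrand $h(mu)-H(u)$.

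Next I would establish the pointwise bound $|h(v) - \ind_{\{v>0\}}| \le e^{-|v|}$ for all $v \neq 0$, which after the substitution $v = mu$ becomes $|h(mu) - H(u)| \le e^{-m|u|}$. This rests on two elementary facts about the sigmoid: the symmetry $1 - h(v) = h(-v)$, and the inequality $h(v) \le e^{v}$ for $v \le 0$ (since then $e^{-v}\ge 1$, so $h(v) = (1+e^{-v})^{-1} \le e^{v}$). Concretely, for $v>0$ we get $|h(v)-1| = h(-v) \le e^{-v}$, while for $v<0$ we get $|h(v)| = h(v) \le e^{v} = e^{-|v|}$.

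Finally I would integrate this pointwise bound. Taking absolute values inside the integral and applying the estimate,
\[
|\phi_m(z) - \phi(z)| \le \left| \int_0^z e^{-m|u|}\,du \right| \le \int_0^{\infty} e^{-ms}\,ds = \frac{1}{m},
\]
where the middle step enlarges the domain of integration to $[0,\infty)$ when $z>0$ and to $(-\infty,0]$ when $z<0$, both giving $1/m$. Since this bound is uniform in $z$, taking the supremum over $z \in \reals$ yields the claim.

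There is no serious obstacle here, as the estimate is elementary. The only point requiring a little care is the bookkeeping of the sign and direction of the integral when $z<0$, together with verifying that the exponential tail bound on the sigmoid is the one that makes the integral evaluate to exactly $1/m$ rather than some larger constant.
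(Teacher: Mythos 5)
Your proof is correct and follows essentially the same route as the paper: both arguments reduce to the exponential tail bound on the sigmoid ($1-h(v)\le e^{-v}$ for $v>0$, $h(v)\le e^{v}$ for $v<0$) and then integrate $e^{-m|u|}$ to get exactly $1/m$. Your unification of the two sign cases via the Heaviside representation $\phi(z)=\int_0^z \ind_{\{u>0\}}\,du$ is a tidy packaging of what the paper does by treating $z>0$ and $z\le 0$ separately, but the underlying estimate is identical.
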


\begin{proof}
Let $m\geq 1$ and $z \in \reals$. Assume that $z > 0$. We have that 
\begin{align*}
|\phi_m(z) - \phi(z)| &= \int_{0}^z \frac{e^{-m u}}{1 + e^{-m u}} du\\
 &\leq \int_{0}^z e^{-m u} du\\
 &= \frac{1}{m} (1 -  e^{-m z}) \leq \frac{1}{m}.\\
\end{align*}

For the case where $z \leq 0$, the proof is the same. We have that 
\begin{align*}
|\phi_m(z) - \phi(z)| &= \int_{0}^z \frac{e^{m u}}{1 + e^{m u}} du\\
 &\leq \int_{0}^z e^{m u} du\\
 &= \frac{1}{m} (1 -  e^{m z}) \leq \frac{1}{m},\\
\end{align*}
which concludes the proof.
\end{proof}

\subsection{Other lemmas}
The next lemma shows that the logarithmic growth factor $\log\left( \frac{\|\phi_m(X^m_t) \|}{\|\phi_m(X^m_0) \|}\right)$ converges to $\log\left( \frac{\|\phi(X_t) \|}{\|\phi(X_0) \|}\right)$ when $m$ goes to infinity, where the convergence holds in $L_1$. The key ingredient in the use of uniform integrability coupled with convergence in probability, which is sufficient to conclude on the $L_1$ convergence. This result will help us conclude in the proof of \cref{thm:norm_post_act}.
\begin{lemma}\label{lemma:convergence_L1}
Let $x \in \reals^d$ such that $x \neq 0$, $m\geq 1$ be an integer, and consider the two stochastic processes $X^m$ and $X$ given by 
$$
\begin{cases}
dX^m_t = \frac{1}{\sqrt{n}} \|\phi_m(X^m_t)\| dB_t \, ,\quad  t\in [0,\infty), \quad X^m_0 = W_{in} x, \\
dX_t = \frac{1}{\sqrt{n}} \|\phi(X_t)\| dB_t \, ,\quad  t\in [0,\infty), \quad X_0 = W_{in} x,
\end{cases}
$$

where $\phi_m(z) = \int_{0}^z h(mu) du$ where $h$ is the Sigmoid function given by $h(u) = (1 + e^{-u})^{-1}$, $\phi$ is the ReLU activation function, and $(B_t)_{t \geq 0}$ is an $n$-dimensional Brownian motion. Then, conditionally on the fact that $\|\phi(X_0)\| > 0$, we have that

$$
\forall t\geq 0, \, \, \log\left( \frac{\|\phi_m(X^m_t) \|}{\|\phi_m(X^m_0) \|}\right) \overset{L^1}{\longrightarrow} \log\left( \frac{\|\phi(X_t) \|}{\|\phi(X_0) \|}\right).
$$

\end{lemma}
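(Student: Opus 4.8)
The plan is to combine convergence in probability with uniform integrability, which by Vitali's convergence theorem yields the claimed $L^1$ convergence. Throughout I condition on the event $\{\|\phi(X_0)\|>0\}$. Note first that $X^m_0 = X_0 = W_{in}x$, so the two denominators differ only through the activation: by \cref{lemma:diff_phi_m_phi},
$$
\bigl|\,\|\phi_m(X_0)\| - \|\phi(X_0)\|\,\bigr| \leq \|\phi_m(X_0) - \phi(X_0)\| \leq \frac{\sqrt{n}}{m} \xrightarrow[m\to\infty]{} 0 \quad \text{pathwise.}
$$
Since $\phi_m$ is injective with $\phi_m(0)=0$, has bounded $\phi_m'$, and (as shown below) bounded $\phi_m''\phi_m$, \cref{lemma:tau_general_zeta} applies to $X^m$ and guarantees $\|\phi_m(X^m_t)\|>0$ almost surely for all $t$, so each logarithm is well defined.

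For convergence in probability, I would first show $\|\phi_m(X^m_t)\| \to \|\phi(X_t)\|$ in probability. The reverse triangle inequality and the $1$-Lipschitzness of ReLU give
$$
\bigl|\,\|\phi_m(X^m_t)\| - \|\phi(X_t)\|\,\bigr| \leq \|\phi_m(X^m_t) - \phi(X^m_t)\| + \|\phi(X^m_t) - \phi(X_t)\| \leq \frac{\sqrt{n}}{m} + \|X^m_t - X_t\|,
$$
whose right-hand side tends to $0$ in $L^2$ by \cref{lemma:diff_phi_m_phi} and \cref{lemma:convergence_Xm_X}. Together with the pathwise convergence of the denominator and the fact that $\|\phi(X_t)\|>0$ almost surely (conditionally on $\|\phi(X_0)\|>0$, by \cref{lemma:tau_genera_n_relu}), the continuous mapping theorem, with $\log$ continuous on $(0,\infty)$, yields
$$
\log\!\left(\frac{\|\phi_m(X^m_t)\|}{\|\phi_m(X^m_0)\|}\right) \overset{\mathbb{P}}{\longrightarrow} \log\!\left(\frac{\|\phi(X_t)\|}{\|\phi(X_0)\|}\right).
$$

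To establish uniform integrability, I would apply \ito's lemma (\cref{lemma:ito}) to $f_m(x)=\log\|\phi_m(x)\|$ along $X^m$, producing a decomposition into a martingale term and a drift term. Using $dX^m_t = n^{-1/2}\|\phi_m(X^m_t)\|\,dB_t$, the martingale has predictable quadratic variation $\frac1n \int_0^t \sum_j \phi_m(X^{m,j}_s)^2\phi_m'(X^{m,j}_s)^2/\|\phi_m(X^m_s)\|^2\,ds \leq t/n$, since $\phi_m' = h(m\,\cdot\,)\leq 1$; hence by \ito isometry its second moment is at most $t/n$, uniformly in $m$. For the drift, a direct computation shows $\frac{1}{2n}\|\phi_m\|^2\Tr[\nabla^2 f_m]$ is a combination of $\sum_j \phi_m'(x^j)^2$, $\sum_j \phi_m(x^j)\phi_m''(x^j)$, and $\sum_j \phi_m(x^j)^2\phi_m'(x^j)^2/\|\phi_m\|^2$, divided by $2n$. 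The first and third terms are bounded by $\tfrac12$ and $\tfrac1n$ via $\phi_m'\leq 1$; for the second, the substitution $w=mz$ gives $\phi_m''(z)\phi_m(z) = h(w)(1-h(w))\int_0^w h(s)\,ds$, a bounded function of $w$ independent of $m$. Hence the drift is bounded by an absolute constant $C$ uniformly in $m$ and in the state, so
$$
\E\!\left|\log\!\left(\frac{\|\phi_m(X^m_t)\|}{\|\phi_m(X^m_0)\|}\right)\right|^2 \leq \frac{2t}{n} + 2C^2 t^2 < \infty
$$
uniformly in $m$, giving the required uniform integrability.

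Combining uniform integrability with convergence in probability yields the $L^1$ convergence. The main obstacle is the uniform control of the drift: naively $\phi_m''$ blows up like $m$, and the saving observation is that the product $\phi_m''\phi_m$ collapses to a function of $mz$ alone and is therefore bounded independently of $m$, so the $m$-dependence cancels. A secondary subtlety is ensuring the logarithms are well defined and that the denominators do not degenerate, which is handled by conditioning on $\{\|\phi(X_0)\|>0\}$ together with \cref{lemma:tau_general_zeta,lemma:tau_genera_n_relu}.
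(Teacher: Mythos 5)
Your proof is correct and follows the same two-step strategy as the paper: first, convergence in probability of $\log\left(\|\phi_m(X^m_t)\|/\|\phi_m(X^m_0)\|\right)$ via the $L^2$ estimates of \cref{lemma:convergence_Xm_X} and \cref{lemma:diff_phi_m_phi} together with the continuous mapping theorem (using \cref{lemma:tau_genera_n_relu} to stay in the continuity set of $\log$); second, uniform integrability obtained by bounding second moments of the It\^{o} decomposition of $\log\|\phi_m(\cdot)\|$ along $X^m$, with the martingale part controlled by $t/n$ exactly as in the paper. Where you genuinely depart from the paper is in how the drift moment is controlled. The paper bounds $\E\,\mu(X^m_s)^2$ by conditioning on the regions of $|X^{m,i}_s|$ (splitting at thresholds of order $\log(m)/m$) and invoking the density estimate of \cref{lemma:uppder_bound_delta}, referring to --- and omitting --- the techniques in the proof of \cref{thm:norm_post_act}. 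You instead observe the scaling identity $\phi_m''(z)\phi_m(z) = h(w)(1-h(w))\int_0^w h(s)\,ds$ with $w = mz$, a bounded function of $w$ alone, so that $\sup_m \sup_z |\phi_m''(z)\phi_m(z)| < \infty$ and the drift of $\log\|\phi_m(X^m_t)\|$ is bounded pointwise by an absolute constant, uniformly in $m$ and in the state. This is a cleaner and fully self-contained route to uniform integrability: it replaces the paper's omitted probabilistic conditioning argument by a one-line deterministic bound, and it also makes explicit the uniformity in $m$ of the hypothesis ``$\phi''\phi$ bounded'' of \cref{lemma:tau_general_zeta}, which the paper uses for each fixed $m$ but never verifies uniformly. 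The paper's conditioning machinery remains necessary elsewhere (in \cref{thm:norm_post_act}, where one needs convergence of the drift terms, not mere boundedness), but for this lemma your pointwise bound suffices and is simpler.
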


\begin{proof}

Let $t > 0$. From \cref{lemma:convergence_Xm_X}, we know that $X^m$ converges in $L^2$ to $X$. Using \cref{lemma:diff_phi_m_phi} and the fact that ReLU is $1$-Lipschitz, we obtain 

$$
\E \| \phi_m(X^m_t) - \phi(X_t)\|^2 \leq \frac{2 n}{m^2} + 2 \E \|X^m_t - X_t\|^2,
$$
which implies that $\phi_m(X^m_t)$ converges in $L^2$ to $\phi(X_t)$. In particular, the convergence holds in probability.  Using this fact with the Continuous mapping theorem, we obtain that 
\begin{equation}\label{eq:conv_prob_phim}
    \forall t\geq 0, \, \, \log\left(\|\phi_m(X^m_t) \|\right) \overset{\mathbb{P}}{\longrightarrow} \log\left(\|\phi(X_t) \|\right).
\end{equation}

Let us show the following,

$$
\forall t\geq 0, \, \, \log\left( \frac{\|\phi_m(X^m_t) \|}{\|\phi_m(X^m_0) \|}\right) \overset{\mathbb{P}}{\longrightarrow} \log\left( \frac{\|\phi(X_t) \|}{\|\phi(X_0) \|}\right).
$$

Let $\epsilon > 0$ and $t >0$. We have

\begin{align*}
\mathbb{P}\left(\left|\log\left( \frac{\|\phi_m(X^m_t) \|}{\|\phi_m(X^m_0) \|}\right) -  \log\left( \frac{\|\phi(X_t) \|}{\|\phi(X_0) \|}\right)\right| \geq \epsilon\right) &\leq  \mathbb{P}\left(\left|\log\|\phi_m(X^m_t) \| -  \log\left(\|\phi(X_t) \|\right)\right| \geq \epsilon/2 \right)\\
&+  \mathbb{P}\left(\left|\log\|\phi_m(X^m_0) \| -  \log\left(\|\phi(X_0) \|\right)\right| \geq \epsilon/2 \right),
\end{align*}
where the first term converges to zero by \cref{eq:conv_prob_phim}, and the second term converges to zero by \cref{lemma:diff_phi_m_phi}. Hence, the convergence in probability holds.

To conclude, it suffices to show that the sequence of random variables $\left(Y^m_t = \log\left( \frac{\|\phi_m(X^m_t) \|}{\|\phi_m(X^m_0) \|}\right)\right)_{ m \geq 1}$ is uniformly integrable.\\
Let $K > 0$. From the proof of \cref{lemma:tau_general_zeta}, with $\zeta = \phi_m$, we have that 

$$
Y^m_t = \frac{1}{\sqrt{n}}\int_{0}^t \mu(X^m_s) ds + \frac{1}{2n} \int_0^t \sum_{i=1}^n \sigma_i(X^m_s) dB^i_s,
$$
where $\sigma_i(X^m_s) = \frac{|\phi_m'(X^{m,i}_s) \phi_m(X^{m,i}_s)|}{\|\phi_m(X^m_s)\|}$, and $\mu(X^m_s) = \frac{1}{2} \sum_{i=1}^n \left(\phi_m''(X^{m,i}_s) \phi_m(X^{m,i}_s) + \phi_m'(X^{m,i}_s)^2 \right) - \frac{\|\phi_m'(X^m_s) \circ \phi_m(X^m_s)\|^2}{\|\phi_m(X^m_s)\|^2}$. Therefore, 
\begin{equation}
\begin{aligned}
\E|Y^m_t|^2 &= \frac{1}{n} \E \left( \int_{0}^t \mu(X^m_s) ds \right)^2 + \frac{1}{4n^2} \E \int_0^t \sum_{i=1}^m \sigma_i(X^m_s)^2 ds\\
&\leq \frac{t}{n}  \int_{0}^t \E \mu(X^m_s)^2 ds  + \frac{1}{4n^2} \E \int_0^t \sum_{i=1}^m \sigma_i(X^m_s)^2 ds,
\end{aligned}
\end{equation}
where we have used the \ito isometry and Cauchy-Schwartz inequality. Using the conditions on $\phi_m$, it is straightforward that term $\frac{1}{4n^2} \E \int_0^t \sum_{i=1}^m \sigma_i(X^m_s)^2 ds$ is uniformly bounded. It remains to bound the first term. Similarly to the proof of \cref{thm:norm_post_act}, we condition on the regions of $|X^{m,i}_s|$ and obtain that the terms $\E \mu(X^m_s)^2$ are uniformly bounded over $m$ (we omit the proof here as it is just a repetition of the techniques used in the proof of \cref{thm:norm_post_act}). Therefore, we have that $\sup_{m \geq 1} \E|Y^m_t|^2 < \infty$, which implies uniform integrability. This concludes the proof.
\end{proof}

\begin{lemma}\label{lemma:uppder_bound_delta}
Let $x \in \reals^d$ such that $x \neq 0$, $m\geq 1$ be an integer, and consider the stochastic processes $X$ given by 
$$
dX_t = \frac{1}{\sqrt{n}} \|\phi(X_t)\| dB_t \, ,\quad  t\in [0,\infty), \quad X_0 = W_{in} x,
$$

where $\phi$ is the ReLU activation function, and $(B_t)_{t \geq 0}$ is an $n$-dimensional Brownian motion independent from $X_0$. Then, conditionally on the fact that $\|\phi(X_0)\| > 0$, we have that for all $s\in [0,1], i \in [n]$

$$
\mathbb{P}(|X^i_s| \leq \delta) = \mathcal{O}_{\delta \to 0}(\delta),
$$
where the bound holds uniformly over $s \in [0,1]$.
\end{lemma}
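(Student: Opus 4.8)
The goal is to show that the one-dimensional marginals $X^i_s$ place at most $\bigO(\delta)$ mass on a $\delta$-window around the origin, uniformly in $s\in[0,1]$. The plan is to reduce this to a uniform bound on the density of $X^i_s$ near $0$, to establish that bound first for smooth approximations of ReLU, and then to pass to the limit. I would condition on $\{\|\phi(X_0)\|>0\}$ throughout (an event of probability $1-2^{-n}>0$, on which $X^i_0=(W_{in}x)^i$ still has a bounded Gaussian density), and note that by \cref{lemma:tau_genera_n_relu} the volatility $V_t:=\tfrac{1}{\sqrt n}\|\phi(X_t)\|$ is strictly positive for all finite $t$ almost surely. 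Each coordinate is then a continuous martingale $dX^i_t=V_t\,dB^i_t$ with common quadratic variation $A_t=\int_0^t V_u^2\,du$, so by Dambis--Dubins--Schwarz $X^i_s=X^i_0+\beta_{A_s}$ for a Brownian motion $\beta$. The tempting split $\mathbb P(|X^i_s|\le\delta)\le \mathbb P(|X^i_s|\le\delta,\,A_s\ge a)+\mathbb P(A_s<a)$, bounding the first term by $\bigO(\delta/\sqrt a)$, does \emph{not} by itself give the clean rate: because the clock $A_s$ is adapted and correlated with $\beta$, a lower bound on accumulated quadratic variation alone does not prevent the marginal from concentrating (a pinned martingale may even have atoms). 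I would therefore bound the density of $X^i_s$ directly.

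I would replace $\phi$ by the smooth approximations $\phi_m(z)=\int_0^z h(mu)\,du$ of \cref{lemma:diff_phi_m_phi} and work with $X^m$. Since $\phi_m'=h(m\cdot)\in(0,1)$ is strictly positive and smooth, and $\phi_m'$ and $\phi_m''\phi_m$ are uniformly bounded (the conditions of \cref{lemma:tau_general_zeta}), $X^{m,i}_s$ is Malliavin differentiable with Malliavin covariance $\gamma^m_s=\sum_{j}\int_0^s (D^j_r X^{m,i}_s)^2\,dr$. Because the diffusion matrix is $\sigma^{ij}(x)=V^m(x)\delta_{ij}$, one has $D^j_r X^{m,i}_s\to V^m_s\,\delta_{ij}$ as $r\uparrow s$, so the flow being close to the identity on a short window gives $\gamma^m_s\ge c\int_{s-h}^s (V^m_r)^2\,dr$ on a high-probability event. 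A standard integration-by-parts (density) bound then yields $\mathbb P(|X^{m,i}_s|\le\delta)\le C\,\delta$, with $C$ controlled by a negative moment $\E[(\gamma^m_s)^{-p}]$ of the Malliavin matrix.

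The main obstacle is to bound this negative moment, i.e. $\E\bigl[(\int_{s-h}^s (V^m_r)^2\,dr)^{-p}\bigr]$, uniformly in $m$ and $s$. For this I would derive, exactly as in \cref{thm:norm_post_act} but for $\phi_m$ (applying \ito's lemma to $\log\|\phi_m(X^m_\cdot)\|$ and using the uniform boundedness of $\phi_m'$ and $\phi_m''\phi_m$), a quasi-GBM lower bound $(V^m_r)^2\ge \tfrac1n\|\phi_m(X_0)\|^2\exp(\tfrac{2}{\sqrt n}\hat B^m_r-Cr)$ with $C$ independent of $m$. The required negative moment then reduces to a negative moment of $\int_{s-h}^s \exp(\tfrac{2}{\sqrt n}\hat B^m_r)\,dr$, which is finite and uniformly bounded since the event that the integral of a geometric Brownian motion over a fixed window is small has probability decaying faster than any power of the threshold (a log-normal small-deviation estimate). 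This delivers $\mathbb P(|X^{m,i}_s|\le\delta)\le C\delta$ with $C$ independent of $m$ and $s$.

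Finally I would pass to the limit: \cref{lemma:convergence_Xm_X} gives $X^{m,i}_s\to X^i_s$ in $L^2$, hence in distribution, so by the portmanteau inequality applied to the open set $\{|x|<2\delta\}$,
$$
\mathbb P(|X^i_s|\le\delta)\le \mathbb P(|X^i_s|<2\delta)\le \liminf_{m}\mathbb P(|X^{m,i}_s|<2\delta)\le 2C\delta=\bigO(\delta),
$$
uniformly in $s$. The small-$s$ regime is automatically consistent, since $X^i_s\to X^i_0$ in $L^2$ as $s\to0$ and $X^i_0$ has a bounded density; the genuinely delicate case is $s$ bounded away from $0$, which is exactly where the uniform negative-moment estimate of the previous paragraph is required. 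I expect that estimate to be the crux of the whole argument, as it is what converts the non-degeneracy of the volatility (guaranteed qualitatively by the no-collapse \cref{lemma:tau_genera_n_relu}) into the quantitative, $s$-uniform control of the marginal near the origin.
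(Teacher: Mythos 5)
Your route is genuinely different from the paper's: the paper disposes of this lemma in a few lines, asserting that $X^i_s = X^i_0 + \frac{1}{\sqrt n}\int_0^s\|\phi(X_u)\|\,dB^i_u$ is the sum of two continuous (dependent) random variables with bounded densities and hence has a bounded density $h_s$, uniformly in $s$. Your Malliavin program is an attempt to prove rigorously this kind of statement, and several components are sound: the smooth approximation $\phi_m$, the correct observation that the DDS split alone cannot work, and the portmanteau passage from $X^{m,i}_s$ to $X^i_s$ at the end. But the step you yourself call the crux --- the negative moment $\E[(\gamma^m_s)^{-p}]$ bounded uniformly in $m$ and $s$ --- fails as stated. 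Your lower bound reads $\gamma^m_s \gtrsim \frac{1}{n}\|\phi_m(X_0)\|^2\int_{s-h}^s \exp\bigl(\frac{2}{\sqrt n}\hat B^m_r - Cr\bigr)dr$, and in the reduction you discard the prefactor $\|\phi_m(X_0)\|^2$. Conditionally on $\{\|\phi(X_0)\|>0\}$, however, $\|\phi(X_0)\|$ has positive density at $0^+$ (the event $\{0<\|\phi(X_0)\|\le\epsilon\}$ has probability $\Theta(\epsilon)$, dominated by configurations with exactly one small positive coordinate), so $\E\bigl[\|\phi(X_0)\|^{-q}\mid \|\phi(X_0)\|>0\bigr]<\infty$ only for $q<1$. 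Since, conditionally on $X_0$, $\gamma^m_s$ is genuinely of size $\|\phi_m(X_0)\|^2$ (its conditional expectation is bounded above by $C(s)\|\phi_m(X_0)\|^2$ by Gronwall, not just below), no uniform-in-$m$ bound on $\E[(\gamma^m_s)^{-p}]$ can hold for any $p\ge 1/2$ --- far below what integration-by-parts density estimates require. If you instead work conditionally on $X_0$, the conditional density bound is of order $\|\phi(X_0)\|^{-1}s^{-1/2}$, and integrating $\min\bigl(1,\,C\delta\,\|\phi(X_0)\|^{-1}s^{-1/2}\bigr)$ against the law of $X_0$ produces $\bigO\bigl(\delta\log(1/\delta)/\sqrt{s}\bigr)$: a spurious logarithm and a blow-up as $s\to0$, not the claimed uniform $\bigO(\delta)$.

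The failure is structural rather than technical: any argument extracting non-degeneracy only from the Wiener noise (i.e.\ conditionally on $X_0$) cannot be uniform over $s\in[0,1]$, because conditionally on $X_0$ the law of $X^i_s$ collapses to a point mass at $X^i_0$ as $s\to0$; near $s=0$ the $\bigO(\delta)$ bound can only come from the Gaussian density of $X^i_0$, and the entire difficulty of the lemma is that $X^i_0$ and the stochastic integral are dependent. Your patch for small $s$ ($L^2$-closeness to $X^i_0$ plus Chebyshev) covers only $s\lesssim\delta^3$, leaving an uncovered intermediate range of $s$. The natural repair within your framework is to enlarge the Malliavin calculus so that the Gaussian input $W_{in}$ is part of the noise: the Jacobian flow $J_s=\nabla_{X_0}X^m_s$ solves a linear SDE with bounded coefficients, $J_s^{-1}$ has moments of all orders, and the enlarged covariance satisfies $\gamma^m_s\ge\lambda_{\min}\bigl(J_s\Sigma_0J_s^\top\bigr)$ with $\Sigma_0=d^{-1}\|x\|^2 I_n$, whose negative moments of every order are finite uniformly in $s\in[0,1]$ and $m$. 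That yields the clean $\bigO(\delta)$ and is, in effect, the rigorous version of the convolution-type mechanism that the paper's own (terse) proof invokes when it adds the bounded-density initial condition to the martingale increment.
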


\begin{proof}
We have that 
$$
X^i_s = X^i_0 + \frac{1}{\sqrt{n}}\int_{0}^s \|\phi(X_u)\| dB^i_u.
$$
Since $\phi(X_u) > 0$ for all $u \geq 0$ almost surely, and by the independence of $B$  and $X_0$, we can easily see that $X^i_s$ has no Dirac mass and is the sum of two continuous random variables (not independent) $X^i_0$ and $ \frac{1}{\sqrt{n}}\int_{0}^s \|\phi(X_u)\| dB^i_u$ that have bounded density functions, and thus $X^i_s$ has a bounded density function $h_s$. Hence, writing $\mathbb{P}(|X^i_s |\leq \delta) = \int_{-\delta}^\delta h_s(t) dt = \mathcal{O}(\delta)$ concludes the proof. The bound can be taken uniformly over $s \in [0,1]$ by taking $\sup_{s \in [0,1] |h_s(0)|}$.
\end{proof}

\section{The Ornstein-Uhlenbeck (OU) process}\label{sec:ou_process}
The OU process is the (unique) strong solution to the following diffusion
\begin{equation}\label{eq:OU}
    dX_t = a (b - X_t) dt + \sigma dB_t,
\end{equation}
where $a,b,\sigma \in \reals$ are constants, and $B$ is a one dimensional Brownian motion. In financial mathematics, the OU process is used as a model of short-term interest rate under the name of the Vasicek model. The OU process has a closed-form expression and its marginal distribution is Gaussian. The next lemma gives a full characterization of the marginal distributions of an OU process.

\begin{lemma}\label{lemma:ou}
\cref{eq:OU} admits the following solution
$$
X_t = X_0 e^{- a t} + b (1  - e^{-a t}) + \sigma \int_{0}^t e^{-a (t- s)} dB_s.
$$
As a result, we have the following
\begin{itemize}
    \item $X_t$ is Gaussian.
    \item $\E[X_t] =  X_0 e^{- a t} + b (1  - e^{-a t}).$
    \item $\textup{Cov}(X_t, X_s) = \frac{\sigma^2}{2 a} \left( e^{- a |t - s|} - e^{- a (t + s)} \right) $.
\end{itemize}
\end{lemma}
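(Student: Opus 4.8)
The plan is to solve the SDE explicitly by the classical integrating-factor trick, using \ito's lemma to remove the drift, and then to read off the three distributional properties from the resulting Wiener integral.

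First I would verify the claimed solution. Define $Y_t = e^{at} X_t$, i.e. apply \cref{lemma:ito} to $f(t,x) = e^{at} x$, for which $\partial_t f = a e^{at} x$, $\partial_x f = e^{at}$, and $\partial_x^2 f = 0$. Substituting the dynamics $dX_t = a(b - X_t)\,dt + \sigma\,dB_t$ gives $dY_t = a e^{at} X_t\,dt + e^{at}\big(a(b - X_t)\,dt + \sigma\,dB_t\big)$; the two $\pm a e^{at} X_t\,dt$ terms cancel, leaving $dY_t = ab\,e^{at}\,dt + \sigma e^{at}\,dB_t$. Integrating on $[0,t]$ and multiplying back by $e^{-at}$ yields exactly the claimed expression. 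Uniqueness of this strong solution follows from \cref{thm:existence_and_uniqueness}, since the drift $x \mapsto a(b-x)$ and the constant volatility $\sigma$ are globally Lipschitz and their values at $x=0$ are constant in time.

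Next I would deduce the first two consequences. The term $X_0 e^{-at} + b(1 - e^{-at})$ is deterministic (conditionally on $X_0$), while $\int_0^t e^{-a(t-s)}\,dB_s$ is a Wiener integral of the deterministic, bounded (hence $L_2$) integrand $s \mapsto e^{-a(t-s)}$, and is therefore a centered Gaussian random variable; thus $X_t$ is Gaussian. Taking expectations and using that the \ito integral is a martingale with zero mean gives $\E[X_t] = X_0 e^{-at} + b(1 - e^{-at})$.

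Finally, for the covariance I would apply the (polarized) \ito isometry to the two Wiener integrals. For $s \le t$,
$$
\textup{Cov}(X_t, X_s) = \sigma^2\, \E\left[\int_0^t e^{-a(t-u)}\,dB_u \int_0^s e^{-a(s-v)}\,dB_v\right] = \sigma^2 \int_0^s e^{-a(t-u)} e^{-a(s-u)}\,du,
$$
since the integrands are deterministic and the covariance localizes to the common interval $[0, t\wedge s] = [0,s]$. Evaluating the elementary integral $\int_0^s e^{2au}\,du = (e^{2as}-1)/(2a)$ and factoring out $e^{-a(t+s)}$ gives $\frac{\sigma^2}{2a}\big(e^{-a(t-s)} - e^{-a(t+s)}\big)$; since $t - s = |t - s|$ in this case and the expression is symmetric in $s$ and $t$, this is the claimed formula. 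None of these steps presents a genuine obstacle; the only points requiring care are justifying the Gaussianity of the Wiener integral and invoking the isometry across two different time horizons, both of which are standard facts from the stochastic-calculus toolkit assembled in \cref{appendix:stochastic_calculus}.
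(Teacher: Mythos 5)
Your proposal is correct and follows essentially the same route as the paper's proof: the integrating-factor transformation $e^{at}X_t$ via \ito's lemma, zero-mean property of the \ito integral for the expectation, and the \ito isometry restricted to the common interval $[0, s\wedge t]$ for the covariance. The only additions (explicit uniqueness via the Lipschitz coefficients and the explicit justification that a Wiener integral of a deterministic $L_2$ integrand is Gaussian) are harmless refinements of steps the paper leaves implicit.
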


\begin{proof}
Consider the process $Z_t = e^{at} X_t$, using \ito lemma, we have that 
\begin{align*}
    dZ_t &= a Z_t dt + e^{at} dX_t \\
    &= ab e^{at} dt +  \sigma e^{at} dW_t.
\end{align*}
Integrating between $0$ and $t$ yields
$$
Z_t = Z_0  + b (e^{at} - 1) + \sigma \int_{0}^t e^{a s} dW_s.
$$
We conclude by multiplying both sides with $e^{-at}$.\\

The result for $\E[X_t]$ is straightforward since $\E \left[ \int_{0}^t e^{-a (t- s)} dW_s \right] = 0$ by the properties of \ito integral and the Brownian motion. For the covariance, without loss of generality assume that $t > s \geq 0$. We have that 
\begin{align*}
\textup{Cov}(X_t, X_s) &= \sigma^2 \E \left[ \int_{0}^t e^{-a (t- u)} dW_u \int_{0}^s e^{-a (s- u)} dW_u\right]\\
&= \sigma^2 \E \left[ \int_{0}^s e^{-a (t- u)} dW_u \int_{0}^s e^{-a (s- u)} dW_u\right]\\
&= \sigma^2 \int_{0}^s e^{-2a (\frac{s + t}{2} - u)} du\\
&= \frac{\sigma^2}{2a} \left(e^{-a (t- s)} - e^{-a (t + s)}\right),
\end{align*}
which completes the proof.\\
\end{proof}

We would like to find sufficient conditions on the activation function $\phi$ and a function $g$ such that the process $g(X_t)$ (\cref{eq:main_sde}) follows an the OU dynamics. For this purpose, we proceed by reverse-engineering the problem; Using \ito's lemma (\cref{eq:ito_lemma_1d}), this is satisfied when there exist constants $a,b,\sigma$ such that 
$$
\begin{cases}
\mu(y) = \frac{1}{2n} \phi(y)^2 g''(y) = a(b -g(y)) \\
\sigma(y) = \frac{1}{\sqrt{n}}\phi(y) g'(y) = \sigma.\\
\end{cases}
$$
This implies that $\frac{g''(y)}{g'^2(y)} = 2 a \sigma^{-2} (b - g(y))$. Letting $G = \int g$ be the primitive function of $g$, we obtain that $G$ satisfies a differential equation of the form
$$
\frac{1}{G''(y)} = \alpha y + \beta G(y) + \zeta,
$$
where $\alpha, \beta, \zeta \in \reals$ are constants.\\

Let us consider the case where $\alpha = \zeta = 0$ and $\beta \neq 0$, i.e. $\frac{1}{G''(y)} = \beta G(y)$. Equivalently, we solve the differential equation $G''(y) = \frac{\beta}{G}$ where $\beta \in \reals$. Multiplying both sides by $G'$ and integrating we obtain $\frac{1}{2} G'(y)^2 = \beta \log(|G|) + \gamma$. A sufficient condition for this to hold is to have $G>0$ and $G$ satisfies
$$
\frac{G'(y)}{\sqrt{\log(G) + \gamma}} = \zeta
$$
for some constants $\zeta, \gamma$. Integrating the left-hand side yields 
$$
\int^y \frac{G'(u)}{\sqrt{\log(G(u)) + \gamma}} du = \int^{G(y)} \frac{1}{\sqrt{\log(u) + \gamma}} du = \alpha \,  \textrm{Erfi}(\sqrt{\log(G(y)) + \gamma}) + \beta.
$$
where $\textup{Erfi}$ is the imaginary error function\footnote{Although the name might be misleading, the imaginary error function is real when the input is real.} given by 
$$
\textup{Erfi}(z) = \frac{2}{\sqrt{\pi}} \int_{0}^z e^{t^2} dt.
$$
To alleviate the notation, we denote $h := \textrm{Erfi}$ in the rest of this section. From the above, $G$ should have the form 
$$
G(y) = \exp\left( \zeta  + \left( h^{-1}(\alpha y + \beta)\right)^2\right),
$$
where $\alpha, \beta, \zeta$ are all constants, and $h^{-1}$ is the inverse function of the imaginary error function. We conclude that the activation function $\phi$ should have the form 
$$
\phi(y) = \frac{2 \sigma}{\alpha^2 \pi} \exp( -\zeta + h^{-1}(\alpha y + \beta)^2).
$$

In this case, the coefficients $a$ and $b$ are given by 
$$
b = 0, a = \frac{\sigma^2}{\alpha^2 \pi} \exp(-2 \zeta).
$$
Letting $g = G'$, the process $g(X_t)$ has the following dynamics
$$
d g(X_t) = - a g(X_t) dt + \sigma dB_t, 
$$
Hence $g(X_t)$ is an OU process, and we can conclude that the network output in the infinite-depth limit $X_1$ satisfies
$$
g(X_1) \sim \normal\left( g(X_0) e^{-a}, \frac{\sigma^2}{2a}\left( 1 - e^{-2a}\right) \right).
$$
We can then infer the distribution of $X_1$ by a simple change of variable. Note that this distribution is non-trivial, and unlike the infinite-width limit of the same ResNet (\cite{hayou21stable}) where the distribution is Gaussian, here the distribution of the pre-activations is directly impacted by the choice of the activation function $\phi$. \\

However, with this particular choice of the activation function $\phi$, the existence of the process $X$ can only be proven in the local sense, because $\phi$ is only locally Lipschitz. Let us first show this in the next lemma. We will see how we can mitigate this issue later.

\begin{lemma}
Let $\phi : \reals \to \reals$ defined by $$\phi(y) = \exp(h^{-1}(\alpha y + \beta)^2),$$
where $\alpha, \beta \in \reals$ are two constants.

We have that $\phi$ is locally Lipschitz, meaning that for any compact set $K \subset \reals$, there exists $C_K$ such that 
$$
\forall x,x' \in K, |\phi(x') - \phi(x) | \leq C_K |x' - x|.
$$
\end{lemma}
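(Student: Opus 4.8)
The plan is to show that $\phi$ is continuously differentiable on all of $\reals$; since any $C^1$ function is automatically locally Lipschitz (its derivative is bounded on every compact set, and the mean value theorem then supplies the Lipschitz constant), this immediately yields the claim.

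First I would establish the regularity of $h^{-1}$. The map $h(z)=\frac{2}{\sqrt{\pi}}\int_0^z e^{t^2}\,dt$ has derivative $h'(z)=\frac{2}{\sqrt{\pi}}e^{z^2}>0$, so $h$ is strictly increasing and smooth; since $\int_0^{\pm\infty}e^{t^2}\,dt=\pm\infty$, the function $h$ is in fact a bijection from $\reals$ onto $\reals$. The inverse function theorem then guarantees that $h^{-1}:\reals\to\reals$ is $C^1$ (indeed $C^\infty$), with
$$(h^{-1})'(u)=\frac{1}{h'(h^{-1}(u))}=\frac{\sqrt{\pi}}{2}\,e^{-(h^{-1}(u))^2}.$$

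Next I would differentiate $\phi$ via the chain rule. Writing $w=h^{-1}(\alpha y+\beta)$ so that $\phi(y)=e^{w^2}$, I obtain
$$\phi'(y)=e^{w^2}\cdot 2w\cdot (h^{-1})'(\alpha y+\beta)\cdot \alpha = e^{w^2}\cdot 2w\cdot \frac{\sqrt{\pi}}{2}e^{-w^2}\cdot \alpha = \alpha\sqrt{\pi}\,h^{-1}(\alpha y+\beta).$$
The crucial simplification is that the factors $e^{w^2}$ and $e^{-w^2}$ cancel, leaving $\phi'(y)=\alpha\sqrt{\pi}\,h^{-1}(\alpha y+\beta)$, which is continuous because $h^{-1}$ is. Hence $\phi\in C^1(\reals)$. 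Finally, for a compact set $K\subset\reals$, let $I=[\min K,\max K]$ denote its convex hull. Since $\phi'$ is continuous, $C_K:=\sup_{y\in I}|\phi'(y)|=\alpha\sqrt{\pi}\sup_{y\in I}|h^{-1}(\alpha y+\beta)|$ is finite; for $x,x'\in K$ the segment joining them lies in $I$, and the mean value theorem gives $|\phi(x')-\phi(x)|\le C_K|x'-x|$, the desired local Lipschitz estimate.

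I do not anticipate a genuine obstacle here. The only points requiring care are justifying the differentiability of $h^{-1}$ (through the inverse function theorem, using $h'>0$ and surjectivity of $h$) and carrying out the chain-rule computation cleanly; the pleasant cancellation of the Gaussian factors is what makes the resulting derivative manifestly continuous, so the $C^1$ (and thus locally Lipschitz) conclusion follows without any analytic subtlety.
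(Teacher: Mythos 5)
Your proof is correct and follows essentially the same route as the paper: compute $\phi'$ via the chain rule, observe the cancellation of the Gaussian factors yielding $\phi'(y)=\alpha\sqrt{\pi}\,h^{-1}(\alpha y+\beta)$, and conclude local Lipschitzness from continuity of $\phi'$ on compact sets. Your write-up is in fact slightly more careful than the paper's, since you explicitly justify the smoothness and surjectivity of $h^{-1}$ via the inverse function theorem and spell out the mean value theorem step on the convex hull of $K$.
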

\begin{proof}
It suffices to show that the derivative of $\phi$ is locally bounded to conclude. We have that 
\begin{align*}
\phi'(y) &= 2 \alpha (h^{-1})'(\alpha y + \beta) h^{-1}(\alpha y + \beta) \exp(h^{-1}(\alpha y + \beta)^2)\\
&= \alpha \sqrt{\pi} \, h^{-1}(\alpha y + \beta).\\
\end{align*}
Since $h^{-1}$ is continuous on $\reals$, then  $\phi'$ is bounded on any compact set of $\reals$, which concludes the proof.
\end{proof}

Now we can rigorously prove the following result.
\begin{prop}
Let $x \in \reals$ such that $x \neq 0$. Consider the following activation function $\phi$
$$
\phi(y) = \frac{2 \sigma}{\alpha^2 \pi} \exp( -\zeta + h^{-1}(\alpha y + \beta)^2),
$$
where $\alpha, \beta \in \reals$ and $\sigma, \zeta >0$ are constants. Let $g$ be the function defined by 
$$
g(y) = \alpha \sqrt{\pi} \exp(\zeta) h^{-1}(\alpha y + \beta).
$$
Consider the stochastic process $X_t$ defined by 
$$
dX_t = |\phi(X_t)| dB_t, \quad X_0 = W_{in} x.
$$

Then, we have that for all $t \in [0,1]$,
$$
g(X_t) \sim \mathcal{N}\left( g(X_0) e^{-a t}, \frac{\sigma^2}{2 a} ( 1 - e^{-2a t})\right),
$$
where $a = \frac{\sigma^2}{\alpha^2 \pi} \exp(-2 \zeta)$.
\end{prop}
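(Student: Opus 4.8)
The plan is to apply \ito's lemma to $g(X_t)$, recognize the resulting SDE as an Ornstein--Uhlenbeck equation, and then read off the Gaussian marginals from \cref{lemma:ou}. First note that $\phi(y) = \frac{2\sigma}{\alpha^2\pi}\exp(-\zeta + h^{-1}(\alpha y + \beta)^2)$ is strictly positive on all of $\reals$, so $|\phi(X_t)| = \phi(X_t)$ and the driving equation is simply $dX_t = \phi(X_t)\,dB_t$. Moreover, since $h'(z) = \frac{2}{\sqrt{\pi}}e^{z^2} > 0$ everywhere, the inverse $h^{-1}$ is $\mathcal{C}^\infty$, hence $g$ is $\mathcal{C}^2$ and \ito's lemma in the form \cref{eq:ito_lemma_1d} applies, yielding
$$
dg(X_t) = \phi(X_t)\,g'(X_t)\,dB_t + \tfrac{1}{2}\,\phi(X_t)^2\,g''(X_t)\,dt.
$$

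The core of the argument is a direct differentiation of $g(y) = \alpha\sqrt{\pi}\,e^\zeta\,h^{-1}(\alpha y + \beta)$ via the inverse function theorem, which gives $(h^{-1})'(u) = \frac{\sqrt{\pi}}{2}e^{-h^{-1}(u)^2}$, hence $g'(y) = \frac{\alpha^2\pi}{2}e^\zeta e^{-h^{-1}(\alpha y+\beta)^2}$ and $g''(y) = -\frac{\alpha^3\pi^{3/2}}{2}e^\zeta\, h^{-1}(\alpha y+\beta)\,e^{-2h^{-1}(\alpha y+\beta)^2}$. The activation has been reverse-engineered precisely so that the exponential factors cancel: the product $\phi(y)g'(y)$ collapses to the constant $\sigma$, so the volatility term is constant, while $\tfrac{1}{2}\phi(y)^2 g''(y)$ simplifies to $-\frac{\sigma^2}{\alpha^2\pi}e^{-2\zeta}\,g(y) = -a\,g(y)$ after substituting back $h^{-1}(\alpha y+\beta) = g(y)/(\alpha\sqrt{\pi}e^\zeta)$. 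We therefore obtain $dg(X_t) = -a\,g(X_t)\,dt + \sigma\,dB_t$, the OU dynamics with mean-reversion rate $a = \frac{\sigma^2}{\alpha^2\pi}e^{-2\zeta}$, long-run mean $b = 0$, and volatility $\sigma$. Applying \cref{lemma:ou} with these parameters gives $\E[g(X_t) \mid X_0] = g(X_0)e^{-at}$ and $\textup{Var}(g(X_t) \mid X_0) = \frac{\sigma^2}{2a}(1 - e^{-2at})$, and since the OU solution is Gaussian this is exactly the claimed law for $g(X_t)$.

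The main obstacle is that $\phi$ is only locally Lipschitz, as shown in the preceding lemma, so the SDE has only a local strong solution and \ito's lemma must be applied to the stopped process rather than to $X$ on all of $[0,1]$. I would therefore fix $r > 0$, set $\tau = \inf\{t \geq 0 : |X_t| \geq r\}$, and run the computation above on $[0, t\wedge\tau]$, obtaining that $g(X_{t\wedge\tau})$ solves the stopped OU equation; the constant-volatility and linear-drift identities hold pointwise wherever the process is defined, so no extra regularity is lost under localization. Letting $r \to \infty$ and using the continuity of the Brownian path together with the local convergence statement of \cref{prop:main_conv}, the stopping time $\tau$ diverges almost surely, which extends the OU dynamics, and hence the Gaussian marginal distribution, to all $t \in [0,1]$. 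Apart from this localization bookkeeping, every step is a routine substitution, and the only genuine care is confirming that the constants combine to $a = \frac{\sigma^2}{\alpha^2\pi}e^{-2\zeta}$ and that the variance matches $\frac{\sigma^2}{2a}(1 - e^{-2at})$.
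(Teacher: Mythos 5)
Your proposal is correct and follows essentially the same route as the paper: localize via stopping times since $\phi$ is only locally Lipschitz, apply It\^o's lemma to $g(X_t)$ to obtain the Ornstein--Uhlenbeck dynamics $dg(X_t) = -a\,g(X_t)\,dt + \sigma\,dB_t$, let the stopping radius go to infinity using path continuity, and read off the Gaussian marginals from \cref{lemma:ou}. The only cosmetic difference is that you carry out the differentiation of $g$ and the cancellation of the exponential factors explicitly, whereas the paper cites its earlier reverse-engineering computation in the same section; the substance is identical.
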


\begin{proof}
For $N>0$, consider the stopping time $\tau_N$ defined by 
$$
\tau_N = \inf \{t \geq 0: |X_t| \geq N\}.
$$
Using the continuity of paths of $X$, it is straightforward that $\lim_{N \to \infty} \tau_N = \infty$ almost surely. Let $N>0$ be large enough. The SDE satisfied by the process $X$ has a unique strong solution for $t \in [0, \tau_N)$ since the activation function $\phi$ is Lipschitz on the interval $(-N, N)$.
By applying \ito lemma for $t \in (0,\tau_N)$, we have that
$$
d g(X_t) = -a g(X_t) dt + \sigma dB_t,
$$
(from previous results). Using the fact that $\lim_{N \to \infty} \tau_N = \infty$ almost surely, and taking $N$ large enough, we obtain that for all $t \in (0,1]$, we have that 
$$
d g(X_t) = -a g(X_t) dt + \sigma dB_t,
$$
we conclude using \cref{lemma:ou}.

\end{proof}

\section{The Geometric Brownian Motion (GBM)}\label{sec:gbm}
The GBM dynamics refers to stochastic differential equations of the form 
\begin{equation}\label{eq:gbm}
    dX_t = a X_t dt + \sigma X_t dB_t,
\end{equation}
where $a, \sigma$ are constants and $B$ is a one dimensional Brownian motion. This SDE played a crucial role in financial mathematics and is often used as a model of stock prices. It admits a closed-form solution given in the next lemma.

\begin{lemma}\label{lemma:gbm}
\cref{eq:gbm} admits the following solution

$$
X_t = X_0 \exp\left( \left(a - \frac{1}{2} \sigma^2 \right) t + \sigma B_t\right).
$$
The distribution of $X_t$ is known as a log-Gaussian distribution. Moreover, the solution is unique.
\end{lemma}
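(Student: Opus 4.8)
The plan is to verify the stated expression by a direct application of \ito's lemma (\cref{lemma:ito}) and then to obtain uniqueness as an immediate consequence of the existence-and-uniqueness theorem (\cref{thm:existence_and_uniqueness}), exploiting the fact that the coefficients of \cref{eq:gbm} are linear and hence globally Lipschitz. The log-normality claim will follow by inspection of the explicit formula.

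For the verification, I would introduce the \ito process $Y_t = (a - \tfrac{1}{2}\sigma^2)t + \sigma B_t$, so that $dY_t = (a - \tfrac{1}{2}\sigma^2)\,dt + \sigma\,dB_t$, and set $g(y) = X_0 e^y$, so that the candidate reads exactly $X_t = g(Y_t)$. Since $g$ has no explicit time dependence ($\partial_t g = 0$), applying \ito's lemma with drift $\mu_t = a - \tfrac{1}{2}\sigma^2$ and volatility $\sigma$ gives $dg(Y_t) = g'(Y_t)\,dY_t + \tfrac{1}{2}\sigma^2 g''(Y_t)\,dt$. Using the identities $g' = g'' = g$ (so that $g'(Y_t) = g''(Y_t) = X_t$), the drift contribution $-\tfrac{1}{2}\sigma^2 X_t\,dt$ coming from $dY_t$ and the \ito correction $+\tfrac{1}{2}\sigma^2 X_t\,dt$ cancel, leaving $dX_t = a X_t\,dt + \sigma X_t\,dB_t$, which is precisely \cref{eq:gbm}; the initial condition $X_0 = g(Y_0)$ is matched since $Y_0 = 0$. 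Hence the stated expression is a strong solution.

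For uniqueness, I would apply \cref{thm:existence_and_uniqueness} to the coefficients $\mu(t,x) = ax$ and $\sigma(t,x) = \sigma x$. Being linear in $x$, they satisfy $|\mu(t,x) - \mu(t,x')| + |\sigma(t,x) - \sigma(t,x')| \le (|a| + |\sigma|)\,|x - x'|$, so the global Lipschitz condition holds with constant $k = |a| + |\sigma|$; moreover $\mu(\cdot,0) \equiv 0$ and $\sigma(\cdot,0) \equiv 0$ are trivially in $L_2$. Assuming $X_0 \in L_2$, the theorem delivers a unique strong solution, which must therefore coincide with the candidate. Finally, $\log X_t = \log X_0 + (a - \tfrac{1}{2}\sigma^2)t + \sigma B_t$ is an affine function of the Gaussian increment $B_t \sim \normal(0,t)$, so, conditionally on $X_0$, $\log X_t$ is Gaussian and $X_t$ is log-normal.

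There is no substantive obstacle here; the only point requiring minor care is that the cancellation of the two $\tfrac{1}{2}\sigma^2 X_t$ drift terms is exactly what reproduces the correct drift $a X_t$ --- that is, the \ito correction must precisely absorb the $-\tfrac{1}{2}\sigma^2$ shift deliberately built into the exponent. The linearity of the coefficients then renders the uniqueness step essentially automatic.
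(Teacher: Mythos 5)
Your proposal is correct, and it takes a genuinely different (if closely related) route from the paper. The paper argues in the \emph{derivation} direction: it first gets existence and uniqueness from \cref{thm:existence_and_uniqueness} (exactly as you do), then applies \ito's lemma to $Z_t = \log(X_t)$, where $X_t$ is the unknown solution, obtains $dZ_t = (a - \tfrac{1}{2}\sigma^2)\,dt + \sigma\,dB_t$, and integrates. That direction forces the paper to justify that $X_t$ stays strictly positive so that $\log$ may be applied --- a point it relegates to a footnote with a proof deferred to an argument similar to \cref{lemma:gbm_non_zero_1d}. You instead argue in the \emph{verification} direction: you apply \ito's lemma to the globally smooth map $y \mapsto X_0 e^y$ composed with the explicit process $Y_t = (a - \tfrac{1}{2}\sigma^2)t + \sigma B_t$, check that the candidate satisfies \cref{eq:gbm}, and then invoke uniqueness to identify it as the solution. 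This completely sidesteps the positivity issue (your \ito application is valid on all of $\reals$ and for any sign of $X_0$), at the cost of pulling the candidate formula out of thin air rather than deriving it; the paper's transform method explains where the exponent comes from. One shared looseness worth noting: the log-normality statement only makes literal sense conditionally on $X_0 > 0$, which both you and the paper leave implicit.
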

\begin{proof}
The existence and uniqueness of the solution follows from \cref{thm:existence_and_uniqueness}. Indeed, it suffices to have the drift and the volatility both Lipschitz to obtain the result. This is satisfied in the case of GBM. Now consider the process $Z_t = \log(X_t)$. Using \ito lemma\footnote{Notice that here, $X_t$ should be positive in order to consider $\log(X_t)$. This is easy to show and the proof is similar to that of \cref{lemma:gbm_non_zero_1d}.}, it is easy to verify that 
$$
dZ_t = \left(a - \frac{1}{2} \sigma^2 \right) dt + \sigma dB_t,
$$
we conclude by integrating both sides.
\end{proof}

Now let us find sufficient conditions under which the infinite-depth network represented by the process $X$ has a GBM behaviour. In order for this to hold, it suffices to have 
$$
\begin{cases}
\mu(y) = \frac{1}{2n} \phi(y)^2 g''(y) = a g(y) \\
\sigma(y) = \frac{1}{\sqrt{n}}\phi(y) g'(y) = \sigma g(y).\\
\end{cases}
$$

This implies $\frac{g''}{g'^2}  \propto \frac{1}{g} $, or equivalently $\frac{g''}{g'}  \propto \frac{g'}{g} $, which in turn yields $\log(|g'|) = \alpha \log(|g|) + \beta$, and therefore $|g'| \propto |g|^\zeta$. Assuming that $g',g >0$, we can easily verify that functions of the form $g(y) = \alpha (y + \beta)^\gamma$ where $\alpha, \beta, \gamma> 0$ satisfy the requirements. Hence, the activation function should satisfy $\phi(y) = \sigma \gamma^{-1} (y + \beta) $, i.e. the activation should be linear. In this case, we have $a = \frac{1}{2} \sigma^2 \gamma^{-1}(\gamma -1)$ and the process $g(X_t)$ has the following GBM dynamics 
$$
d g(X_t) = a g(X_t) dt + \sigma g(X_t) dB_t.
$$
From \cref{lemma:gbm}, we conclude that 
$$
g(X_1) \sim g(X_0) \exp\left( \left(a - \frac{1}{2} \sigma^2 \right) t + \sigma B_1\right).
$$
Observe that in the special case of $\gamma =1, \beta =0, \alpha =1$, we have $g(y)=y$ and $a = 0$. In this case, we obtain $Y_1 \sim Y_0 \exp\left(- \frac{1}{2} \sigma^2 t + \sigma B_1\right)$.\\

We summarize the previous results in following proposition.

\begin{prop}
Let $x \in \reals$ such that $x \neq 0$. Consider the following activation function $\phi$
$$
\phi(y) = \alpha y + \beta,
$$
where $\alpha> 0, \beta \in \reals$ are constants. Let $\sigma>0$ and define the function $g$ by 
$$
g(y) = (\alpha y + \beta)^{\gamma}.
$$
where $\gamma =\sigma  \alpha^{-1}$. Consider the stochastic process $X_t$ defined by 
$$
dX_t = |\phi(X_t)| dB_t, \quad X_0 = W_{in} x.
$$

Then, the process $g(X_t)$ satisfies the following GBM dynamics
$$
d g(X_t) = a g(X_t) dt + \sigma g(X_t) dB_t,
$$
where $a = \frac{1}{2} \sigma^2 \gamma^{-1} (\gamma -1)$. As a result, we have that for all $t \in [0,1]$,
$$
g(X_t) \sim g(X_0) \exp\left( \left(a - \frac{1}{2}\sigma^2\right)t +  \sigma B_t\right).
$$

\end{prop}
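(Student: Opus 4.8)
The plan is to apply Itô's lemma to $g(X_t)$ with the explicit choice $g(y) = (\alpha y + \beta)^{\gamma}$, match the resulting drift and volatility against the geometric Brownian motion form, and then read off the distribution from the closed-form GBM solution (\cref{lemma:gbm}). As a preliminary reduction, I would dispose of the absolute value in the volatility: by \cref{thm:same_law} the law of the process driven by $dX_t = |\phi(X_t)|\,dB_t$ coincides with that of the process driven by $\phi(X_t) = \alpha X_t + \beta$, so for the distributional conclusion it suffices to study $dX_t = (\alpha X_t + \beta)\,dB_t$. Here the coefficient $x \mapsto \alpha x + \beta$ is globally Lipschitz, so \cref{thm:existence_and_uniqueness} guarantees a unique strong solution.

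The step I expect to be the real obstacle is that $g(y) = (\alpha y + \beta)^{\gamma}$ is only $\mathcal{C}^2$ on the half-line $\{\alpha y + \beta > 0\}$ for non-integer $\gamma$, so a global application of \cref{lemma:ito} is not immediately licensed and the quantity $g(X_0)$ is not even defined unless $\alpha X_0 + \beta > 0$. I would resolve this by first analyzing $U_t := \phi(X_t) = \alpha X_t + \beta$. Since $dU_t = \alpha\,dX_t = \alpha U_t\,dB_t$, the process $U$ is a driftless GBM, and by \cref{lemma:gbm} its solution $U_t = U_0\exp(-\tfrac12\alpha^2 t + \alpha B_t)$ is strictly positive for all $t$ whenever $U_0 = \alpha X_0 + \beta > 0$. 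Thus, working conditionally on $\alpha X_0 + \beta > 0$ (consistent with the setting $Y_0 = 1$ used in the figures), the path of $X$ stays in the region where $g$ is smooth, $g(X_t) = U_t^{\gamma}$ is well defined, and Itô's lemma applies without any further localization.

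With positivity in hand I would compute $g'(y) = \gamma\alpha(\alpha y+\beta)^{\gamma-1}$ and $g''(y) = \gamma(\gamma-1)\alpha^2(\alpha y+\beta)^{\gamma-2}$ and substitute into the one-dimensional expansion \eqref{eq:ito_lemma_1d}. The volatility term becomes $\phi(X_t)\,g'(X_t) = \gamma\alpha\,(\alpha X_t+\beta)^{\gamma} = \sigma\,g(X_t)$, using $\gamma\alpha = \sigma$, and the drift term becomes $\tfrac12\phi(X_t)^2 g''(X_t) = \tfrac12\gamma(\gamma-1)\alpha^2\,(\alpha X_t+\beta)^{\gamma} = a\,g(X_t)$, where $a = \tfrac12\sigma^2\gamma^{-1}(\gamma-1) = \tfrac12\gamma(\gamma-1)\alpha^2$ after substituting $\sigma = \gamma\alpha$. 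Equivalently, one can bypass differentiating a fractional power by applying Itô to the map $u \mapsto u^{\gamma}$ along the strictly positive GBM $U$, which produces the same two coefficients. This yields exactly $dg(X_t) = a\,g(X_t)\,dt + \sigma\,g(X_t)\,dB_t$. Recognizing this as a GBM in $g(X_t)$, a final invocation of \cref{lemma:gbm} gives $g(X_t) = g(X_0)\exp\!\big((a-\tfrac12\sigma^2)t + \sigma B_t\big)$ for all $t \in [0,1]$, which is the claimed conclusion.
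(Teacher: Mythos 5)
Your proof is correct and follows essentially the same route as the paper's: apply It\^o's lemma to $g(X_t)$, match the drift and volatility to the GBM form, and conclude with the closed-form GBM solution (\cref{lemma:gbm}), after discarding the absolute value via \cref{thm:same_law}. The differences are direction and rigor. The paper's treatment in \cref{sec:gbm} is reverse-engineered: it solves the ODE $g''/g'^2 \propto 1/g$ to \emph{discover} that $g$ must be a power of $\alpha y+\beta$ and $\phi$ linear, whereas you \emph{verify} the given $g$ directly --- logically this is the same computation, since the verification is what the proposition actually requires. Where you genuinely improve on the paper is the positivity issue: $g(y)=(\alpha y+\beta)^\gamma$ is only $\mathcal{C}^2$ on $\{\alpha y + \beta>0\}$ for non-integer $\gamma$, and the paper only gestures at this (``assuming $g',g>0$'' in \cref{sec:gbm}, plus a footnote inside the proof of \cref{lemma:gbm}). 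Your observation that $U_t=\alpha X_t+\beta$ satisfies $dU_t=\alpha U_t\,dB_t$, hence is a driftless GBM that stays strictly positive whenever $U_0>0$, cleanly licenses the It\^o step and makes explicit the conditioning on $\{\alpha X_0+\beta>0\}$ under which $g(X_0)$ is even defined --- a hypothesis the paper uses implicitly. Both arguments then conclude identically via uniqueness of the GBM solution.
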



\section{ReLU in the case $n=d=1$}\label{sec:relu_1d}
Consider the process $X$ given by the SDE
$$
dX_t = \phi(Y_t) dB_t, \quad t \in [0,1], X_0>0.
$$
where $\phi(z) = \max(z,0)$ for $z \in \reals$ is the ReLU activation function. Note that we assume $X_0>0$ in this case. We will deal with the general case later in this section.\\

It is straightforward that if $X_s \leq 0$ for some $s \in [0,1]$, then for all $t \geq s, X_t = X_s$. This is because $dX_t = 0 \times dB_t$ whenever $X_t \leq 0$. A rigorous justification is provided in \cref{lemma:constant_after_hit}. Hence, the event $\{X_s \leq 0\}$ constitutes a stopping event where the process becomes constant. We also say that $0$ is an absorbent point of the process $X$. A classic tool in stochastic calculus to deal with such situations is the notion of  \emph{stopping time} which is a random variable that depend on the trajectory of $X$ (or equivalently on the natural filtration $\mathcal{F}_t$ associated with the Brownian motion $B$). Consider the following stopping time
\begin{equation}\label{eq:tau_relu_1d}
    \tau = \inf \{ t \in [0,1], \textup{ s.t. } X_s \leq 0 \}.
\end{equation}
Observe that we have for all $t \in [0,\tau]$
$$
dX_t = X_t dB_t,
$$
which implies that $Y_t$ is a Geometric Brownian motion in the interval $[0,\tau]$. Hence, if $\tau > 1$ (a.s.), the network output has also a log-normal distribution in the infinite-depth limit. In the next lemma, we show that $\tau = \infty$ with probability $1$ which confirms the above.

\begin{lemma}\label{lemma:gbm_non_zero_1d}
Let $\tau$ be the stopping time defined by \cref{eq:tau_relu_1d}. We have that
$$
\mathbb{P}(\tau = \infty) = 1.
$$
\end{lemma}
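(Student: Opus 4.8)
The plan is to identify the process $X$ with an explicitly constructed Geometric Brownian motion that is manifestly strictly positive, and then to transfer this positivity to $X$ for all times by invoking strong uniqueness of the SDE. First I would record that the ReLU function $\phi(z) = \max(z,0)$ is globally $1$-Lipschitz, so by \cref{thm:existence_and_uniqueness} the SDE $dX_t = \phi(X_t)\,dB_t$ with $X_0 > 0$ admits a unique strong solution on $[0,\infty)$.

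Next I would introduce the candidate process $Z_t = X_0 \exp\left(B_t - \frac{1}{2}t\right)$. Applying \ito's lemma (\cref{lemma:ito}) to the map $(t,z) \mapsto X_0 e^{z - t/2}$ shows that $Z$ solves $dZ_t = Z_t\,dB_t$ with $Z_0 = X_0$. The crucial observation is that $Z_t > 0$ for every $t \geq 0$ almost surely, since the exponential is strictly positive and $X_0 > 0$; consequently $\phi(Z_t) = Z_t$ for all $t$, and therefore $Z$ is in fact a \emph{global} solution of the ReLU SDE $dZ_t = \phi(Z_t)\,dB_t$ with initial condition $Z_0 = X_0$.

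By the strong uniqueness from \cref{thm:existence_and_uniqueness}, the two solutions must coincide, i.e. $X_t = Z_t$ for all $t \geq 0$ almost surely. Since $Z_t > 0$ for every finite $t$, the process $X$ never reaches $0$, so the hitting time $\tau = \inf\{t : X_t \leq 0\}$ equals $+\infty$ almost surely, which is exactly the claim. As a byproduct this re-derives the log-normal law $X_t = X_0 \exp\left(B_t - \frac{1}{2}t\right)$ asserted in \cref{prop:relu_gbm_1d}.

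The main subtlety, and the step I would be most careful about, is the logical ordering: one cannot simply assert ``$\phi(X_t) = X_t$, hence $X$ is a GBM, hence positive,'' because that argument presupposes the very positivity one wants to establish. Coupling $X$ to the independently constructed process $Z$ avoids this circularity, since the positivity of $Z$ follows \emph{a priori} from its closed form and is only then transferred to $X$ via uniqueness. An alternative, more hands-on route would be to use the localizing stopping times $\tau_\varepsilon = \inf\{t : X_t \leq \varepsilon\}$, observe that on $[0,\tau_\varepsilon]$ the dynamics are exactly those of a GBM, and let $\varepsilon \downarrow 0$; but this requires quantitative control of the GBM near the absorbing boundary and is strictly heavier than the uniqueness argument above.
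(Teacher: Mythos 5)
Your proof is correct, but it takes a genuinely different route from the paper's. The paper argues by contradiction at the hitting time: on $[0,\tau)$ the ReLU acts as the identity, so $X_t = X_0\exp\left(B_t - \tfrac{1}{2}t\right)$ there; if $\tau < \infty$, path continuity forces $X_\tau = 0$, which would require $B_\tau - \tfrac{1}{2}\tau = -\infty$, an event of probability zero. You instead construct the comparison process $Z_t = X_0\exp\left(B_t - \tfrac{1}{2}t\right)$ globally, observe that its strict positivity makes it a genuine solution of the ReLU SDE, and invoke pathwise uniqueness (available because ReLU is Lipschitz, \cref{thm:existence_and_uniqueness}) to identify $X \equiv Z$. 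What your route buys: it sidesteps the step the paper leaves informal --- justifying the GBM closed form on the random interval $[0,\tau)$ and passing to the limit $t \to \tau^-$ --- and it yields the log-normal law of \cref{prop:relu_gbm_1d} as an immediate byproduct; your closing remark about the circularity of ``$\phi(X_t)=X_t$ hence positive'' is exactly the right concern, and the coupling resolves it cleanly. What the paper's route buys: it needs no explicit global solution, and the same template (collapse in finite time would force a log-transformed quantity to diverge to $-\infty$, a null event) is reused essentially verbatim for general width in \cref{lemma:tau_general_zeta}, where no closed-form comparison process exists; your uniqueness argument, by contrast, is specific to the width-one setting where the explicit positive solution is available.
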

\begin{proof}
By continuity of the Brownian path and the ReLU function $\phi$, the paths of the process $X$ are also continuous\footnote{This is a classic result in stochastic calculus. More rigorously, $X$ can be chosen to have continuous paths with probability 1.}. we have that $\tau > 0$ almost surely.
From the observation above, taking the limit $t \to \tau^{-}$ and using the continuity, we obtain 
$$
X_\tau = X_0 \exp\left( -\frac{1}{2}\tau  + B_\tau \right).
$$
For some $\omega \in \{\tau < \infty\}$, we have that $X_\tau(\omega) = 0$ (by continuity). Hence $-\frac{1}{2} \tau(w) + X_\tau(\omega) = -\infty$. This happens with probability zero, which means that the event $\{\tau < \infty\}$ has probability zero. This concludes the proof.
\end{proof}

Hence, with the ReLU activation function, given $X_0 > 0$, the network output is distributed as 
$$
X_1 \sim X_0 \exp\left( -\frac{1}{2}  + B_1 \right).
$$

Now let us go back to the original setup for $X_0$. Recall that $X_0 = W_{in} x$ for some $x \neq 0$ and $W_{in} \sim \normal(0,1)$. By conditioning on $X_0$ and observing that $0$ is an absorbent point of the process $X$, we obtain that

$$
X_1 \sim \ind_{\{X_0 >0\}} X_0 \exp\left( -\frac{1}{2}  + B_1 \right) + \ind_{\{X_0 \leq 0\}} X_0.
$$

We summarize these results in the next proposition.

\begin{prop}
Let $x \in \reals$ such that $x \neq 0$, and let $\phi$ be the ReLU activation function given by $\phi(z) = \max(z,0)$ for all $z\in \reals$.
 Consider the stochastic process $X_t$ defined by 
$$
dX_t = \phi(X_t) dB_t, \quad X_0 = W_{in} x.
$$

Then, the process $X$ is a mixture of a Geometric Brownian motion and a constant process. More precisely, we have for all $t \in [0,1]$

$$
X_t \sim \ind_{\{X_0 >0\}} \,  X_0 \exp\left( -\frac{1}{2} t  + B_t \right) + \ind_{\{X_0 \leq 0\}} X_0.
$$
Hence, conditionally on $X_0>0$, the process $X$ is a Geometric Bronwian motion.

\end{prop}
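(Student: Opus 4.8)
The plan is to condition on the sign of the initial value $X_0 = W_{in}x$ and treat the two branches separately, exploiting that ReLU is the identity on $(0,\infty)$ and annihilates $(-\infty,0]$, so that each branch reduces to a dynamics I already understand.

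First I would dispose of the easy case $X_0 \leq 0$. Then $\phi(X_0) = \max(X_0,0) = 0$, so the volatility vanishes at time $0$; since $\phi$ is $1$-Lipschitz, \cref{lemma:constant_after_hit} (equivalently \cref{lemma:zero_after_hit}) applies and freezes the process, giving $X_t = X_0$ for all $t \geq 0$ almost surely. This produces the constant contribution $\ind_{\{X_0 \leq 0\}} X_0$.

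Next, for $X_0 > 0$, I would introduce the stopping time $\tau = \inf\{t \geq 0 : X_t \leq 0\}$ and note that on $[0,\tau)$ the process stays strictly positive, so $\phi(X_t) = X_t$ and the SDE collapses to the linear geometric Brownian motion equation $dX_t = X_t\, dB_t$. By \cref{lemma:gbm} with drift $a = 0$ and volatility $\sigma = 1$, this integrates to the closed form $X_t = X_0 \exp(-\frac{1}{2}t + B_t)$ on $[0,\tau)$.

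The hard part will be the non-absorption statement $\mathbb{P}(\tau = \infty) = 1$, which is exactly \cref{lemma:gbm_non_zero_1d}; I would establish it by a contradiction argument based on path continuity. On the event $\{\tau < \infty\}$, path continuity forces $X_\tau = 0$, yet letting $t \to \tau^-$ in the closed form and using continuity of $B$ gives $X_\tau = X_0 \exp(-\frac{1}{2}\tau + B_\tau)$, which is strictly positive because $B_\tau$ is finite almost surely and the exponential never vanishes; this contradiction shows $\{\tau < \infty\}$ is null, so the closed form holds for all $t \in [0,1]$. Finally I would assemble the branches: conditioning on $X_0$ (which is independent of $B$) and using $\ind_{\{X_0 > 0\}} + \ind_{\{X_0 \leq 0\}} = 1$ on the disjoint events yields the stated mixture, and since on $\{X_0 > 0\}$ the dynamics is literally $dX_t = X_t\, dB_t$, the process is a geometric Brownian motion there.
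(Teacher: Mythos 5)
Your proposal is correct and follows essentially the same route as the paper's proof: the same split on the sign of $X_0$, the same use of \cref{lemma:constant_after_hit} to freeze the process when $X_0 \leq 0$, the same stopping time $\tau$ reducing the positive branch to $dX_t = X_t\,dB_t$, and the same continuity-based contradiction (the exponential cannot vanish at a finite time since $B_\tau$ is finite a.s.) establishing $\mathbb{P}(\tau = \infty) = 1$, which is exactly \cref{lemma:gbm_non_zero_1d}.
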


\section{Proof of \cref{lemma:tau_general_zeta} and \cref{lemma:tau_genera_n_relu}}\label{sec:proofs_lemmas_tau}
\textbf{Lemma \ref{lemma:tau_general_zeta}.} \emph{
Let $x \in \reals^d$ such that $x \neq 0$, and consider the stochastic process $X$ given by the following SDE
$$
dX_t =\frac{1}{\sqrt{n}} \|\phi(X_t)\| dB_t \, ,\quad  t\in [0,\infty), \quad X_0 = W_{in} x, 
$$
where $\phi(z) : \reals \to \reals$ is Lipschitz, injective, $\mathcal{C}^2(\reals)$ and satisfies $\phi(0)=0$, and $\phi'$ and $\phi'' \phi$ are bounded on $\reals$, and $(B_t)_{t \geq 0}$ is an $n$-dimensional Brownian motion independent from $W_{in} \sim \normal(0, d^{-1} I)$. Let $\tau$ be the stopping time given by 
$$
\tau = \min \{ t\geq 0 : \phi(X_t)\ = 0\}.
$$
Then, we have that 
$$
\mathbb{P}\left(\tau = \infty \right) = 1.
$$}\\

\begin{proof}
It is straightforward that with probability $1$ we have $\|\phi(X_0)\| > 0$, which implies that with probability $1$, $\tau > 0$. Let $t < \tau$. Using \ito's lemma with the function $g(z) = \frac{1}{2} \log( \| \zeta(x)\|^2)$, we obtain

$$
d g(X_t) = \nabla g(X_t)^\top dX_t + \frac{1}{2n} \|\zeta(X_t)\|^2 \Tr(\nabla^2 g(X_t)) dt.
$$
Therefore, 
$$
g(X_t) - g(X_0) = \frac{1}{\sqrt{n}}\int_{0}^t \mu(X_s) ds + \frac{1}{2n} \int_0^t \sum_{i=1}^n \sigma_i(X_s) dB^i_s,
$$
where $\sigma_i(X_s) = \frac{|\phi'(X^i_s) \phi(X^i_s)|}{\|\phi(X_s)\|}$, and $\mu(X_s) = \frac{1}{2} \sum_{i=1}^n \left(\phi''(X^i_s) \phi(X^i_s) + \phi'(X^i_s)^2 \right) - \frac{\|\phi'(X_s) \circ \phi(X_s)\|^2}{\|\phi(X_s)\|^2}$, and $\circ$ refers to the Hadamard product of vectors, i.e. coordinate-wise product.

For some $\omega \in \{ \tau < \infty\}$, using the path continuity of the process $X$ and the continuity of $g$, we have that $\lim_{t \to \tau(\omega)^-} g(X_{\tau(\omega)}(\omega)) = -\infty$. Therefore, we should also have 
$$
\frac{1}{\sqrt{n}}\int_{0}^{\tau(\omega)} \mu(X_s(\omega)) ds + \frac{1}{2n} \int_0^{\tau(\omega)} \sum_{i=1}^n \sigma_i(X_s(\omega)) dB^i_s(\omega) = - \infty.
$$
Hence, we have that
\begin{align*}
\mathbb{P}\left(\tau < \infty \right) &\leq \mathbb{P}\left( \frac{1}{\sqrt{n}}\int_{0}^t \mu(X_s) ds + \frac{1}{2n} \int_0^t \sum_{i=1}^n \sigma_i(X_s) dB^i_s = - \infty\right)\\
&= \lim_{A \to \infty} \mathbb{P}\left( \frac{1}{\sqrt{n}}\int_{0}^t \mu(X_s) ds + \frac{1}{2n} \int_0^t \sum_{i=1}^n \sigma_i(X_s) dB^i_s \leq - A\right)\\
&= \lim_{A \to \infty} \mathbb{P}\left( \frac{1}{\sqrt{n}}\int_{0}^t \mu(X_s) ds + \frac{1}{2n} \int_0^t \sigma(X_s) d\hat{B}_s \leq -A\right),\\
\end{align*}
where $\hat{B}$ is a one-dimensional Brownian motion, and where we use \cref{thm:same_law}, and $\sigma(X_s) = (\sum_{i=1}^n \sigma_i(X_s)^2)^{1/2} =  \frac{\| \phi'(X_s) \circ \phi(X_s)\|}{\|\phi(X_s)\|}$. Using the conditions on $\phi$, there exists a constant $K > 0$ such $|\phi'| \leq K, |\phi'' \phi| \leq K$. With this we obtain for all $Z \in \reals^n$
$$
|\sigma(Z)| = \frac{\| \phi'(Z) \circ \phi(Z)\|}{\|\phi(Z)\|} \leq K,
$$
and 
$$
|\mu(Z)| = \left|\frac{1}{2} \sum_{i=1}^n \left(\phi''(Z^i) \phi(Z^i) + \phi'(Z^i)^2 \right) - \frac{\|\phi'(Z) \circ \phi(Z)\|^2}{\|\phi(Z)\|^2} \right| \leq \frac{1}{2} n K +  \left(\frac{1}{2} n + 1\right) K^2.$$

Hence, the random variable $\frac{1}{\sqrt{n}}\int_{0}^t \mu(X_s) ds + \frac{1}{2n} \int_0^t \sigma(X_s) d\hat{B}_s$ is finite with probability $1$. We conclude that 
$$
\mathbb{P}\left(\tau  = \infty \right) = 1.
$$

\end{proof}

\noindent\textbf{Lemma \ref{lemma:tau_genera_n_relu}. }\emph{ 
Consider the stochastic process \eqref{eq:main_sde} given by the SDE
$$
dX_t = \frac{1}{\sqrt{n}} \|\phi(X_t)\| dB_t \, ,\quad  t\in [0,\infty), \quad X_0 = W_{in} x, 
$$
where $\phi$ is the ReLU activation function, and $(B_t)_{t \geq 0}$ is an $n$-dimensional Brownian motion. Let $\tau$ be the stopping time given by 
$$
\tau = \min \{ t\geq 0 : \|\phi(X_t)\| = 0\} = \min \{ t\geq 0 : \,  \forall i \in [n], \, X^i_t \leq 0\}.
$$
Then, we have that 
$$
\mathbb{P}\left(\tau = \infty \, \huge| \, \|\phi(X_0)\| > 0\right) = 1.
$$
As a result, we have that 
$$
\mathbb{P}(\tau = \infty) = 1 - 2^{-n}.
$$}\\

\begin{proof}
Let $t_0 > 0$. Using \cref{lemma:zero_after_hit}, we know that if for some $t_1$, $\|\phi(X_{t_1})\| = 0$, then for all $t \geq t_1$, we have that $X_t = X_{t_1}$ and $\|\phi(X_t)\| = 0$. Hence, we have that 
$$
\mathbb{P}\left(\tau \leq t_0  \, \huge| \, \|\phi(X_0)\| > 0\right) = \mathbb{P}\left(\|\phi(X_{t_0})\| = 0  \, \huge| \, \|\phi(X_0)\| > 0\right).
$$

Let $m \geq 1$ and consider the function $\phi_m(z) = \int_{0}^z h(m \,u) du$ and $h(t) = (1 + e^{-t})^{-1}$ is the Sigmoid function\footnote{Note that $\phi_m$ has a closed-form formula given by $\phi_m(z) = m^{-1} (\log(1 + e^{mz}) - \log(2))$, which can be seen as a shifted and scaled version of the Softplus function. However, we do not need the closed-form formula in our analysis.}. It is straightforward that $\phi_m$ satisfies the conditions of \cref{lemma:tau_general_zeta}. Let $X^m$ be the solution of the following SDE (the solution exists and is unique since $\phi_m$ is trivially Lipschitz)
$$
dX^m_t = \frac{1}{\sqrt{n}} \|\phi_m(X^m_t)\| dB_t \, ,\quad  t\in [0,\infty), \quad X_0 = W_{in} x.
$$
We know from \cref{lemma:convergence_Xm_X} that $X^m_t$
converges in $L^2$ to $X_t$ (uniformly over $t \in [0,T]$ for any $T>0$). In particular, this implies convergence in distribution. Moreover, observe that for all $t$

$$
\E \| \phi_m(X^m_t) - \phi(X_t)\|^2 \leq \frac{2 n}{m^2} + 2 \E \|X^m_t - X_t\|^2,
$$
where we used triangular inequality and the upperbound from \cref{lemma:diff_phi_m_phi}. Thus, we have that $\phi_m(X^m_t)$ converges in $L^2$ (and in distribution) to $\phi(X_t)$. 

Let $\delta_k = [1/(k+1), 1/{k})$ for $k \geq 1$, and define $\delta_0 = [1,\infty)$. For $m\geq 1$, using \cref{lemma:diff_phi_m_phi}, we have that
\begin{equation*}
\mathbb{P}\left(\|\phi(X_{t_0})\| = 0  \, \huge\cap \, \|\phi(X_0)\| > 0\right) \leq \sum_{k=0}^\infty \mathbb{P}\left(\|\phi_m(X_{t_0})\| \leq 1/m  \, \huge\cap \, \|\phi(X_0)\|\in \delta_k\right). \\
\end{equation*}
Given $k \geq 0$, we have that for $m > n^{1/2} (k+1)$,
\begin{equation}\label{eq:upper_bound_prob_phi}
\begin{aligned}
        \mathbb{P}\left(\|\phi_m(X_{t_0})\| \leq 1/m  \, \huge\cap \, \|\phi(X_0)\|\in \delta_k\right) &\leq \mathbb{P}\left(\|\phi_m(X^m_{t_0})\| \leq 1/m + \log(m)/m  \, \huge\cap \, \|\phi(X_0)\|\in \delta_k\right)\\
    &+ \mathbb{P}\left(\|\phi_m(X^m_{t_0}) - \phi(X_{t_0})\| > \log(m)/m  \, \huge\cap \, \|\phi(X_0)\|\in \delta_k\right)
\end{aligned}
\end{equation}

Let us deal with the first term. Using \cref{lemma:diff_phi_m_phi}, we have that 

\begin{equation*}
\begin{aligned}
    &\mathbb{P}\left(\|\phi_m(X^m_{t_0})\| \leq 1/m + \log(m)/m  \, \huge\cap \, \|\phi(X_0)\|\in \delta_k \right) \\
   & \leq \mathbb{P}\left(\|\phi_m(X^m_{t_0})\| \leq 1/m + \log(m)/m  \, \huge\cap \, \|\phi_m(X^m_0)\| \geq 1/(k+1) - n^{1/2} m^{-1}  \huge\cap \, \|\phi(X_0)\|\in \delta_k\right)\\
   & \leq \mathbb{P}\left(\log\left(\frac{\|\phi_m(X^m_{t_0})\|}{\|\phi_m(X^m_{0})\|}\right) \leq -\log(m / (1 + \log(m)) + \log((k+1)^{-1} - m^{-1} n^{1/2}) \,  \huge\cap \, \|\phi(X_0)\|\in \delta_k\right)
\end{aligned}
\end{equation*}

From  \cref{lemma:convergence_L1}, we know that the random variable $\log\left(\frac{\|\phi_m(X^m_{t_0})\|}{\|\phi_m(X^m_{0})\|}\right)$ converges in $L^1$ and thus it is bounded in $L^1$ norm (over $m$). Therefore, a simple application of Markov's inequality yields that the probability above goes to $0$ when $m$ goes to $\infty$.

The second term in \cref{eq:upper_bound_prob_phi} also converges to $0$ using the $L^2$ convergence of $\phi_m(X^{m}_{t_0})$ to $\phi(X_{t_0})$ coupled with a simple application of Markov's inequality. We therefore obtain that for all $k \geq 0$, $\lim_{m \to \infty} \mathbb{P}\left(\|\phi_m(X_{t_0})\| \leq 1/m  \, \huge\cap \, \|\phi(X_0)\|\in \delta_k\right) =0$. Using the Dominated convergence theorem, we obtain that for all $t_0$, 
$$
\mathbb{P}\left(\|\phi(X_{t_0})\| = 0  \, \huge\cap \, \|\phi(X_0)\| > 0\right) = 0,
$$
which implies that $\mathbb{E}\left[ \ind_{\{\tau > t\}} \huge| \|\phi(X_0)\| > 0\right] = 1$ for all $t > 0$. Another application of the Dominated convergence theorem yields the result.
The second part is straightforward by observing that $\mathbb{P}\left( \|\phi(X_0)\| = 0\right) = 2^{-n}$.
\end{proof}

\section{Proof of \cref{thm:norm_post_act}}\label{sec:proof_main_thm}

\textbf{Theorem \ref{thm:norm_post_act}. }
\emph{We have that for all $t \in [0,1]$,
$$
\|\phi(X_t)\| = \|\phi(X_0)\| \exp\left(\frac{1}{\sqrt{n}} \hat{B}_t + \frac{1}{n} \int_{0}^t \mu_s ds\right), \quad \textrm{almost surely,}
$$
where $\mu_s =  \frac{1}{2}\|\phi'(X_s)\|^2 - 1$, and $(\hat{B})_{t \geq 0}$ is a one-dimensional Brownian motion. As a result, we have that for all $ 0\leq s \leq t \leq 1$
$$
\E\left[ \log \left( \frac{\|\phi(X_t)\|}{\|\phi(X_s)\|} \right) \huge| \, \|\phi(X_0)\| > 0 \right] = \left(\frac{1- 2^{-n}}{4} - \frac{1}{n}\right) (t-s),
$$
Moreover, for $n\geq 2$, we have 
$$
\textup{Var}\left[ \log \left( \frac{\|\phi(X_t)\|}{\|\phi(X_s)\|} \right) \huge| \, \|\phi(X_0)\| > 0 \right] \leq  \left(n^{-1/2} + \Gamma_{s,t}^{1/2}\right)^2 (t-s),
$$
where $\Gamma_{s,t} = \frac{1}{4}\int_{s}^t \,\left(\left(\E \phi'(X^1_u) \phi'(X^2_u) - \frac{(1 - 2^{-n})^2}{4}\right) + n^{-1}\left(\frac{1 - 2^{-n}}{2} - \E \phi'(X^1_u) \phi'(X^2_u)\right)\right) du$. 
}
\begin{proof}
Let  $t \in [0,1]$. Let us firs consider the case where $\|\phi(X_0)\| = 0$. For all $t$ we have $\|\phi(X_t)\| = 0$ and the result is trivial.\\

We consider the  case where $\|\phi(X_0)\| > 0$ (happens with probability $1- 2^{-n}$), and all the expectations in this proof are conditionally on this event.
Consider the function $g : \reals^n \to \reals$ given by 
$$
g(x) = \log(\|\phi(x)\|) = \frac{1}{2} \log(\|\phi(x)\|^2).
$$
Ideally, we would like to use \ito's lemma and \cref{lemma:tau_genera_n_relu}, which ensures that $\|\phi(X_t)\|$ remains positive on $[0,1]$, and obtain for all $t \in [0,1]$
$$
dg(X_t) \overset{dist}{=} \frac{1}{\sqrt{n}} d \hat{B}_t + \frac{1}{n} \mu_t dt,
$$
where $\mu_t$ is some well defined quantity. This would let us conclude. However, \ito's lemma requires that the function be $\mathcal{C}^2(\reals^n)$, which is violated by our choice of $g$. To mitigate this issue, we consider a sequence of function $(g_m)_{m \geq m}$ that approximates the function $g$ when $m$ goes to infinity. For $m \geq 1$, let $g_m$ be defined by 
$$
g_m(x) = \frac{1}{2}\log( \|\phi_m(x)\|^2),
$$
where $\phi_m(t) = \int_{0}^t h(m \,u) du$ and $h(t) = (1 + e^{-t})^{-1}$ is the Sigmoid function. We have that 
$$
\begin{cases}
\frac{\partial g_m}{\partial x_i}(x) = \frac{h(m x_i) \phi_m(x_i)}{\|\phi_m(x)\|^2}\\
\frac{\partial^2 g_m}{\partial x_i^2}(x) = \frac{m h(m x_i)(1 - h(m x_i)) \phi_m(x_i) + h(m x_i)^2}{\| \phi_m(x)\|^2} - 2 \frac{h(m x_i)^2 \phi_m(x_i)^2}{\|\phi_m(x)\|^4}
\end{cases}
$$

Let $X^m$ be the solution of the following SDE
$$
dX^m_t = \frac{1}{\sqrt{n}} \|\phi_m(X^m_t)\| dB_t \, ,\quad  t\in [0,\infty), \quad X_0 = W_{in} x,
$$

Using \ito's lemma, we have that 
$$
dg_m(X^m_t) = \frac{1}{n} \mu_s^m ds + \frac{1}{\sqrt{n}} \sum_{i=1}^n \sigma_s^{m,i} dB_s^i,
$$
where  $$\mu^m_s = \frac{1}{2} \sum_{i=1}^n \left(m h(m X^{m,i}_s)(1 - h(m X^{m,i}_s)) \phi_m(X^{m,i}_s) + h(m X^{m,i}_s)^2 \right) - \frac{\|h(m X^m_s) \circ \phi_m(X^m_s)\|^2}{\|\phi_m(X^m_s)\|^2},$$
and $\sigma^{m,i}_s = \frac{|h(m X^{m,i}_s) \circ \phi_m(X^{m,i}_s)|}{\|\phi_m(X^m_s)\|}$. By \cref{lemma:convergence_L1}, we know that $g_m(X^m_s) - g_m(X^m_0)$ converges in $L_1$ to $g(X_s) - g(X_0)$. Let us now compute the limit of $g_m(X^m_s) - g_m(X^m_0)$  from the equation above to conclude. More precisely, let us show that for all $s \in (0,1)$
$$
\lim_{m \to \infty}\E \left| g_m(X^m_s) - g_m(X^m_0) -  (Z_s - Z_0)\right| =0,
$$
where $Z$ is the process given by 
$$
Z_t = g(X_0) + \frac{1}{\sqrt{n}}\left(\sum_{i=1}^n \int_0^t \sigma^i_s dB^i_t\right) + \frac{1}{n} \int_{0}^t \mu_s ds,
$$
where $\mu_s = \frac{1}{2} \|\phi'(X^i_s)\|^2 -1$, and $\sigma_s^i = \frac{\phi(X_s^i)}{\|\phi(X_s)\|}$.\\

Let $t \in (0,1]$. Using triangular inequality, \ito isometry, and Cauchy-Schwartz inequality, we have that 
\begin{equation}\label{eq:overall_decomp}
\begin{aligned}
    \E \left| g_m(X^m_t) - g_m(X^m_0) -  (Z_t - Z_0)\right| &\leq \frac{1}{n} \int_{0}^t \E\left|\mu^m_s -  \mu_s\right| ds + \frac{1}{\sqrt{n}}\E \left|\int_{0}^t  \sum_{i=1}^n (\sigma_s^i - \sigma^{m,i}_s ) dB^i_s\right|\\
&\leq \frac{1}{n} \int_{0}^t \E\left|\mu^m_s -  \mu_s\right| ds + \frac{1}{\sqrt{n}} \left(\E \int_{0}^t \sum_{i=1}^n (\sigma_s^i - \sigma^{m,i}_s)^2 ds \right)^{1/2}\\
\end{aligned}    
\end{equation}

We first deal with the the term $\int_{0}^t \E\left|\mu^m_s -  \mu_s\right| ds$. Let us show that this term converges to $0$. Let us show the following,
\begin{center}
    \textit{$\forall s >0, \lim_{m\to \infty} \E \, |\mu^m_s - \mu_s| = 0$.}\\
\end{center}

Let $s \in [0,1]$. We have that 
$$
\mu^m_s = \frac{1}{2} \sum_{i=1}^n J^m_i  - G^m,
$$
where $J^m_i = m h(m X^{m,i}_s)(1 - h(m X^{m,i}_s)) \phi_m(X^{m,i}_s) + h(m X^{m,i}_s)^2$, and $G^m = \frac{\|h(m X^m_s) \circ \phi_m(X^m_s)\|^2}{\|\phi_m(X^m_s)\|^2}$. Let us start with the term $G^m$. Observe that $G^m \leq 1$ almost surely. We have that 

\begin{align*}
\E |1 - G^m| &= \E \left[\frac{\|(1 - h(m X^m_s)^2)^{1/2} \circ \phi_m(X^m_s)\|^2}{\|\phi_m(X^m_s)\|^2}\right]\\
&= \E \left[\frac{\|(1 - h(m X^m_s)^2)^{1/2} \circ \phi_m(X^m_s)\|^2}{\|\phi_m(X^m_s)\|^2} \ind_{ \{\min_{i}|X^{m,i}_s| \geq \log(m)/m \}} \right]\\
&+ \E \left[\frac{\|(1 - h(m X^m_s)^2)^{1/2} \circ \phi_m(X^m_s)\|^2}{\|\phi_m(X^m_s)\|^2} \ind_{ \{\min_{i} |X^{m,i}_s| < \log(m)/m \}} \right]\\
\end{align*}

When $\min_i |x^i| \geq \log(m)/m$, we have that  for all $i \in [n]$
$$(1 - h(m x^i)^2) \leq 2 (1 - h(m x^i)) \leq 2\exp(-m \times \log(m) / m) = 2 m^{-1}.$$
Therefore, 
$$
\E \left[\frac{\|(1 - h(m X^m_s)^2)^{1/2} \circ \phi_m(X^m_s)\|^2}{\|\phi_m(X^m_s)\|^2} \ind_{ \{\min_{i}|X^{m,i}_s| \geq \log(m)/m \}} \right] \leq 2 m^{-1}.
$$

For the remaining term, using the fact that $1 - h^2 \leq 1$, we have that 
\begin{align*}
\E \left[\frac{\|(1 - h(m X^m_s)^2)^{1/2} \circ \phi_m(X^m_s)\|^2}{\|\phi_m(X^m_s)\|^2} \ind_{ \{\min_{i} |X^{m,i}_s| < \log(m)/m \}} \right] &\leq \mathbb{P}\left(\min_{i} |X^{m,i}_s| < \log(m)/m\right)\\
&\leq \sum_{i=1}^n \mathbb{P}\left(|X^{m,i}_s| < \log(m)/m\right).\\
\end{align*}

Now using \cref{lemma:convergence_Xm_X}, we have that 
$$
\mathbb{P}(\|X^m_s - X_s\| \geq \log(m)/m) \leq \frac{C_n}{\log(m)^2},
$$
for some constant $C_n$ that depends on $n$. Therefore, for all $i$, we have 

\begin{align*}
\mathbb{P}(|X^{m,i}_s| < \log(m)/m) &\leq \mathbb{P}(|X^{i}_s| <2  \log(m)/m) + \mathbb{P}(|X^{m,i}_s - X_{s}^i| \geq \log(m)/m)\\
&\leq \mathbb{P}(|X^{i}_s| < 2 \log(m)/m) + \frac{C_n}{\log(m)^2}.
\end{align*}
Recall that $X^i_s = X^i_0 + \frac{1}{\sqrt{n}} \int_{0}^s \| \phi(X_u)\| dB_u$. By \cref{lemma:uppder_bound_delta}, we know that
\begin{align*}
\mathbb{P}(|X^{i}_s| < 2 \log(m)/m) = \mathcal{O}(\log(m)/m),
\end{align*}
We conclude that $\lim_{m \to \infty} \E |1 - G^m|= 1.$\\

Now let us show that for all $i$, $\lim_{m \to \infty} \E|J^m_i - \phi'(X^i_s)| = 0$. Let $i \in [n]$ and $A^{m,i}_s = m h(m X^{m,i}_s)(1 - h(m X^{m,i}_s)) \phi_m(X^{m,i}_s)$. 
We have that 

\begin{align*}
\E \,|A^{m,i}_s|  &= \E \, |A^{m,i}_s|  \ind_{\{|X^{m,i}_{s}| \leq 2\log(m)/m\}} + \E \, |A^{m,i}_s|  \ind_{\{|X^{m,i}_{s}| \leq 2\log(m)/m\}}\\
&\leq m \times 2\log(m) /m \times  \mathbb{P}(|X^{m,i}_{s}| \leq 2\log(m)/m) + m^{-1} \E |\phi_m(X^{m,i}_s)|\\
&\leq 2\log(m) \times \left( \mathbb{P}(|X^{i}_{s}| \leq 3\log(m)/m) + \mathbb{P}(|X^{m,i}_{s} - X^{i}_s| \geq \log(m)/m)\right) + m^{-1} \E |\phi_m(X^{m,i}_s)|\\
&\leq 2\log(m) \times  \mathbb{P}(|X^{i}_{s}| \leq 3\log(m)/m) + \frac{2 C_n}{ \log(m)} + m^{-1} \E |\phi_m(X^{m,i}_s)|,\\
\end{align*}
where we have used \cref{lemma:convergence_Xm_X} and Markov's inequality. Using \cref{lemma:diff_phi_m_phi} and the fact that the absolute value function is Lipschitz, we know that $\lim_{m \to \infty} \E \, |\phi_m(X^{m,i}_s)| =  \E \, |\phi(X^{i}_s)| < \infty$. Therefore, the third term vanishes in the limit $m \to \infty$. The second term $2 C_n / \log(m)$ also vanishes. The first term also vanished using \cref{lemma:uppder_bound_delta}. Therefore, $\lim_{m \to \infty} \E |A^{m,i}_s| = 0$.\\

Let us now deal with the last term in $J^m_i$.  We have that 
\begin{align*}
    \E \, \left|h(m X^{m,i}_s)^2 - \phi'(X^{i}_s) \right|&= \E \, \left|h(m X^{m,i}_s)^2 - \phi'(X^{i}_s) \right| \ind_{\{ |X^{m,i}_s| \geq \log(m)/m\}} \\
    &+ \E \, \left|h(m X^{m,i}_s)^2 - \phi'(X^{i}_s) \right| \ind_{\{ |X^{m,i}_s| < \log(m)/m\}}\\
    &\leq 3 m^{-1} + 2 \mathbb{P}(|X^{m,i}_s| < \log(m)/m)\\
    &\leq 3 m^{-1} + 2 \mathbb{P}(|X^{i}_s| < 2\log(m)/m) + 2 \frac{C_n}{\log(m)^2}
\end{align*}

Using \cref{lemma:uppder_bound_delta}, we obtain that $\lim_{m \to \infty} \E |h^2(mX^{m,i}_s) - \phi'(X^i_s)| = 0$. Hence, we obtain that $\lim_{m \to \infty} \E|J^m_i - \phi'(X_s)| = 0$. We conclude that $\lim_{m \to \infty} \E |\mu^m_s - \mu_s| = 0$. Moreover, from the analysis above, it is easy to see that $\sup_{m\geq1, \,s\in (0,1]} \E |\mu^m_s - \mu_s| < \infty$.\\

We now deal with the second term $\left(\E \int_{0}^t \sum_{i=1}^n (\sigma_s^i - \sigma^{m,i}_s)^2 ds \right)^{1/2}$ from \cref{eq:overall_decomp}. For this part only, we define the stopping time $\tau_\epsilon$ for $\epsilon \in (0, \| \phi(X_0)\| \wedge \|\phi(X_0)\|^{-1})$ (Recall that the analysis is conducted conditionally on the fact that $\| \phi(X_0)\| > 0$) by 
$$
\tau_\epsilon = \inf\{t\geq 0, \textrm{ s.t. } \|\phi(X_t)\| \in [0,\epsilon] \cup [\epsilon^{-1}, \infty) \}.
$$
Notice that $\tau_\epsilon > 0$ almost surely since $\|\phi(X_0)\| \in (\epsilon, \epsilon^{-1})$.\\

Let $s \in (0,1]$. We have that 

\begin{align*}
\E \int_{0}^{t \wedge \tau_\epsilon}\sum_{i=1}^n (\sigma_s^i - \sigma^{m,i}_s)^2 &\leq 2 \E \int_{0}^{t \wedge \tau_\epsilon} \sum_{i=1}^n \left(\frac{\phi(X^i_s) }{\|\phi(X_s)\|} - \frac{\phi_m(X^{m,i})}{\|\phi_m(X^m_s)\|}\right)^2 ds \\
&+ 2 \E \int_{0}^{t \wedge \tau_\epsilon} \frac{\|(1-h(mX^m_s)^2)^{1/2} \circ \phi_m(X^m_s))\|}{\|\phi(X^m_s)\|^2} ds.\\
\end{align*}
The second term can be upperbounded in the following fashion 
\begin{align*}
\E \int_{0}^{t \wedge \tau_\epsilon} \frac{\|(1-h(mX^m_s)^2)^{1/2} \circ \phi_m(X^m_s))\|}{\|\phi(X^m_s)\|^2} ds &\leq \int_{0}^{t} \E \frac{\|(1-h(mX^m_s)^2)^{1/2} \circ \phi_m(X^m_s))\|}{\|\phi(X^m_s)\|^2} ds\\
&= \int_{0}^{t} \E|1 - G_m|,
\end{align*}
where $G_m$ is defined above. We know that $\int_{0}^{t} \E|1 - G_m|$ converges to $0$ in the limit $m \to \infty$ by the Dominated convergence theorem (the integrand is bounded). Let us show that the first term also vanishes. We have that 

\begin{align*}
\E \int_{0}^{t \wedge \tau_\epsilon} \sum_{i=1}^n \left(\frac{\phi(X^i_s) }{\|\phi(X_s)\|} - \frac{\phi_m(X^{m,i})}{\|\phi_m(X^m_s)\|}\right)^2 ds 
&\leq 2 \E \int_{0}^{t \wedge \tau_\epsilon} \sum_{i=1}^n \left(\frac{\phi(X^i_s) - \phi_m(X^{m,i})}{\|\phi(X_s)\|}\right)^2 ds \\
&+ 2 \E \int_{0}^{t \wedge \tau_\epsilon} \sum_{i=1}^n \phi_m(X^{m,i}_s)^2\left(\frac{1}{\|\phi(X_s)\|} -  \frac{1}{\|\phi(X^m_s)\|}\right)^2 ds\\
&\leq 2 \epsilon^{-2} \E \int_{0}^{t \wedge \tau_\epsilon}  \|\phi(X^i_s) - \phi_m(X^{m,i}_s) \|^2 ds \\
&+ 2 \E \int_{0}^{t \wedge \tau_\epsilon} \sum_{i=1}^n \phi_m(X^{m,i}_s)^2\left(\frac{1}{\|\phi(X_s)\|} -  \frac{1}{\|\phi(X^m_s)\|}\right)^2 ds.\\
\end{align*}
The first term $2 \epsilon^{-2} \E \int_{0}^{t \wedge \tau_\epsilon}  \|\phi(X^i_s) - \phi_m(X^{m,i}_s) \|^2 ds$ converges to 0 in the limit $m \to \infty$ by \cref{lemma:diff_phi_m_phi} and \cref{lemma:convergence_Xm_X}. Let us deal with the second term. Define the event $E = \{ \sup_{s \in (0, 1]} \|X^m_s - X_s\| \leq \log(m)/m\}$ for $m$ large enough such that $\log(m)/m < \epsilon$. Observe that on the event $E$, we have that for all $s\in(0,1]$, $\|\phi(X^m_s) - \phi(X_s)\| \leq (\sqrt{n} + \log(m)) m^{-1}$. Hence, 
\begin{align*}
    \E \, \ind_{E} \,\int_{0}^{t \wedge \tau_\epsilon} \sum_{i=1}^n \phi_m(X^{m,i}_s)^2\left(\frac{1}{\|\phi(X_s)\|} -  \frac{1}{\|\phi(X^m_s)\|}\right)^2 ds \leq \epsilon^{-2} (\sqrt{n} + \log(m))^2 m^{-2} \to_{m \to \infty} 0.
\end{align*}

Moreover, letting $E^c$ be the complementary event of $E$, we have 
\begin{align*}
    \E \, \ind_{E^c} \,\int_{0}^{t \wedge \tau_\epsilon} \sum_{i=1}^n \phi_m(X^{m,i}_s)^2&\left(\frac{1}{\|\phi(X_s)\|} -  \frac{1}{\|\phi(X^m_s)\|}\right)^2 ds \\
    &\leq \E \, \ind_{E^c} \,\int_{0}^{t \wedge \tau_\epsilon} \sum_{i=1}^n \phi_m(X^{m,i}_s)^2\left(\frac{2}{\|\phi(X_s)\|^2} +  \frac{2}{\|\phi(X^m_s)\|^2}\right) ds\\
    &\leq 2 \epsilon^{-2} \E \, \ind_{E^c} \,\int_{0}^{t \wedge \tau_\epsilon} \|\phi_m(X^m_s)\|^2 ds + 2 \mathbb{P}(E^c).\\
\end{align*}

Using the fact that $\|\phi_m(X^m_s)\| \leq \frac{\sqrt{n}}{m} + \|\phi(X_s)\| + \|X^m_s - X_s\|$ (by \cref{lemma:diff_phi_m_phi} and the fact that ReLU is Lipschitz), we obtain 

\begin{align*}
\E \, \ind_{E^c} \,\int_{0}^{t \wedge \tau_\epsilon} \|\phi_m(X^m_s)\|^2 ds &\leq  3n m^{-2} + 3 \sup_{s \leq 1} \E\|X^m_s - X_s\|^2 + 3 \epsilon^{-2} \mathbb{P}(E^c).
\end{align*}
The term $\sup_{s \leq 1} \E\|X^m_s - X_s\|^2$ converges to $0$ by \cref{lemma:convergence_Xm_X}. Using Doob's martingale inequality on the submartingale $\|X^m_s - X_s\|$ (with respect to the natural filtration generated by the Brownian motion $B$)\footnote{The submartingale behaviour is a result of the convexity of the norm function and the fact that $X^m_s X_s$ is a martingale since $d(X^m_s X_s) = \frac{1}{\sqrt{n}}(\phi_m(X^m_s) - \phi(X_s)) dB_s$. A simple application of Jensen's inequality yields the result.} we obtain that 
$$
\mathbb{P}(E^c) \leq \frac{\E \|X^m_1 - X_1\|^2}{m^{-2} \log(m)^2} = \frac{C_n}{\log(m)^2},
$$
where we have used \cref{lemma:convergence_Xm_X}. We conclude that $ \lim_{m \to \infty} \E \int_{0}^{t \wedge \tau_\epsilon}\sum_{i=1}^n (\sigma_s^i - \sigma^{m,i}_s)^2  = 0$.\\

By observing that $\E \int_0^{t \wedge \tau_\epsilon} |\mu^m_s - \mu_s| ds \leq \E \int_0^{t} \E|\mu^m_s - \mu_s| ds$, a simple application of the Dominated convergence theorem yields $\lim_{m \to \infty} \E \int_0^{t \wedge \tau_\epsilon} |\mu^m_s - \mu_s| ds = 0$. Hence, we proved that

$$\lim_{m \to \infty} \E \left| g_m(X^m_{t\wedge \tau_\epsilon}) - g_m(X^m_0) -  (Z_{t\wedge \tau_\epsilon} - Z_0)\right| = 0.$$

From \cref{lemma:convergence_L1}, we know that $g_m(X^m_{t\wedge \tau_\epsilon}) - g_m(X^m_0)$ converges in $L_1$ to $g(X_{t\wedge \tau_\epsilon}) - g(X_0)$\footnote{the result of \cref{lemma:convergence_L1} holds when $t$ is replaced by $t\wedge \tau_\epsilon$ and the proof is exactly the same. We omit the proof to avoid redundancies}, therefore $\E |g(X_{t\wedge \tau_\epsilon}) - g(X_0) - (Z_{t\wedge \tau_\epsilon} - Z_0)| = 0$ which implies that almost surely,
$$
\log\left(\frac{\|\phi(X_{t\wedge \tau_\epsilon})\|}{\|\phi(X_0)\| }\right) = Z_{t\wedge \tau_\epsilon} - Z_0.
$$
Recall that this holds for any $\epsilon$ small enough. Observe that $\tau_\epsilon$ is almost surely non-decreasing as we decrease $\epsilon$. Hence $\tau_\epsilon$ has a limit almost surely. Using \cref{lemma:tau_genera_n_relu} and the continuity of the paths of $X_s$ we have that  $\lim_{\epsilon \to 0^+} \tau_\epsilon = \infty$. Taking the limit $\epsilon \to 0^+$, we conclude that almost surely we have 
$$
\log\left(\frac{\|\phi(X_{t})\|}{\|\phi(X_0)\| }\right) = Z_{t} - Z_0.
$$
Now observe that the coordinates of $X_t$ are identically distributed (not independent since we condition on $\|\phi(X_0)\| > 0$). Thus, for all $i\in [n]$, $\E  \phi'(X^i_s)= \E \phi'(X^1_s)$ where $X^1_s$ is the first coordinate of the vector $X_s$. Another key observation is that the event $\{ X^1_s > 0\}$ is included in the event $\{ \|\phi(X_0)\| >0 \}$ (\cref{lemma:constant_after_hit}). Hence, $\mathbb{P}(X^1_s > 0 \cap \|\phi(X_0)\| > 0) = \mathbb{P}(X^1_s > 0)$, where the last term $\mathbb{P}(X^1_s > 0)$ is \emph{free} from any conditioning on $\|\phi(X_0)\| > 0$. By observing that the random variable $X^1_s = X^1_0 + \frac{1}{\sqrt{n}} \int_0^s \|\phi(X_u)\| dB_u^1$ has a symmetric distribution around zero (by properties of the $X^1_0$ and the Brownian motion $B$), we have that  $\mathbb{P}(X^1_s > 0 | \|\phi(X_0)\| > 0) = \mathbb{P}(X^1_s > 0) \mathbb{P}(\|\phi(X_0)\| > 0)^{-1} = \frac{1}{2}(1-2^{-n})^{-1}$.

We conclude that 
$$
\E \, \mu_s = \frac{n}{4}(1 - 2^{-n})^{-1} - 1,
$$
which yields the desired result for the conditional mean by substraction. \\

Now let us deal with the variance. To alleviate the notation, we omit the conditioning on the event $\{\|\phi(X_0)\|>0\}$. All the expectations below are taken conditionally on this event. Let $0 \leq s\leq t \leq 1$. Let $\lambda>0$. We have that 
\begin{align*}
\textup{Var} \left[\log\left(\frac{\|\phi(X_{t})\|}{\|\phi(X_s)\| }\right)^2 \right]& = \textup{Var} (Z_t - Z_s)^2\\
&\leq \frac{(1 + \lambda^{-1})(t -s)}{n} + \frac{1 + \lambda}{n^2} \E \left(\int_{s}^t (\mu_u - \E\,\mu_u) du\right)^2\\
&\leq (t-s) \left(\frac{1 + \lambda^{-1}}{n} + \frac{1 + \lambda}{n^2} \int_{s}^t \textrm{Var} \mu_u \, du\right),
\end{align*}
where we have used the inequality $(a+b)^2 \leq (1 + \lambda^{-1}) a^2 + (1 + \lambda) b^2$ and the Cauchy-Schwartz inequality. It remains to simplify $\textrm{Var} \mu_u^2$. Let $p_1^u = \E\phi'(X^1_u) = 2^{-1}(1- 2^{-n})$ and $p_2^u = \E \phi'(X^1_u)\phi'(X^2_u)$. We have that
\begin{align*}
\E \mu_u^2 &= \frac{1}{4}\E \|\phi'(X_u)\|^4 - \E \|\phi'(X_u)\|^2 + 1\\
&= \frac{n(n-1)}{4}p^u_2- \frac{3n}{4} p^u_1 + 1,\\
\end{align*}
where we have used the exchangeability property of the family $\{\phi'(X^i_u), i=1, \dots n\}$. Thus, for the variance $\textrm{Var} \mu_u^2$, we obtain
$$
\textrm{Var} \mu_u^2 = \frac{n^2}{4}((p^u_2 - (p^u_1)^2) + n^{-1}(p^u_1 - p^u_2)).
$$
Therefore,
\begin{align*}
\textup{Var} \left[\log\left(\frac{\|\phi(X_{t})\|}{\|\phi(X_s)\| }\right)^2 \right] \leq (t-s) \left(\frac{1 + \lambda^{-1}}{n} + (1 + \lambda) \Gamma_{s,t}\right),
\end{align*}
where $\Gamma_{s,t} \overset{def}{=} \int_{s}^t  \frac{1}{4}((p^u_2 - (p^u_1)^2) + n^{-1}(p^u_1 - p^u_2))\, du$. Optimizing over $\lambda$ yields 
$$
\textup{Var} \left[\log\left(\frac{\|\phi(X_{t})\|}{\|\phi(X_s)\| }\right)^2 \right] \leq (t-s) \left(n^{-1/2} + \Gamma_{s,t}^{1/2}\right)^2.
$$

The term $\Gamma_{s,t}$ can be shown to have $\bigO(n^{-1/2})$ asymptotic behaviour using tools from Mckean-Vlasov theory. Thus, the variance term has (atmost) $\bigO(n^{-1})$ behaviour.
\end{proof}

\section{Proof of \cref{thm:infinite_width}}\label{sec:proof_infinite_width}

In this section, we provide the proof of \cref{thm:infinite_width}. We use the following Law of Large numbers that does not require independence.

\begin{thm}[Corollary 3.1 in \cite{sung2008lln}]\label{thm:sung_lln}
Let $(Y_n^i)_{1 \leq i \leq n, n\geq 1}$ be a triangular array of random variables. Assume that the following holds
\begin{itemize}
    \item $\sup_{n\geq 1} \frac{1}{n} \sum_{i=1}^n \E|Y_n^i| < \infty$.
    \item $\lim_{a \to \infty} \sup_{n\geq 1} \frac{1}{n} \sum_{i=1}^n \E |Y_{n}^i| \ind_{\{ |Y^i_n| > a\}} = 0.$
\end{itemize}
Then, we have that 
$$
\frac{1}{n} \left( \sum_{i=1}^n Y^i_n - \zeta^i_n\right) \longrightarrow 0,
$$
where the convergence is in $L_1$ and $\zeta^i_n = \E [Y^i_n | \mathcal{F}_{n, i-1}]$, with $\mathcal{F}_{n, j} = \sigma\{ Y^k_n, 1 \leq k \leq j\}$, i.e. the sigma algebra generated by the variables $\{Y^k_n, 1 \leq k \leq j\}$, and $\mathcal{F}_{n,0} = \{\emptyset, \Omega\}$ by definition.
\end{thm}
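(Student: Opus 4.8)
The plan is to recognize $\frac{1}{n}\sum_{i=1}^n(Y_n^i-\zeta_n^i)$ as a normalized sum of \emph{martingale differences} adapted to the filtration $(\mathcal{F}_{n,i})_i$, and to prove its $L_1$-convergence to zero by a truncation argument: the bounded part is controlled in $L_2$ through orthogonality, while the unbounded part is controlled directly in $L_1$ by the uniform integrability hypothesis. The first assumption $\sup_n \frac1n\sum_i \E|Y_n^i|<\infty$ guarantees each $Y_n^i\in L_1$, so that $\zeta_n^i=\E[Y_n^i\mid\mathcal{F}_{n,i-1}]$ is well-defined and $\mathcal{F}_{n,i-1}$-measurable. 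Setting $D_n^i=Y_n^i-\zeta_n^i$, and noting $Y_n^i$ is $\mathcal{F}_{n,i}$-measurable, gives $\E[D_n^i\mid\mathcal{F}_{n,i-1}]=0$, so $(D_n^i)_{1\le i\le n}$ is a martingale-difference array for fixed $n$.

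Next I would fix a truncation level $a>0$ and split $Y_n^i=U_n^i+V_n^i$ with $U_n^i=Y_n^i\ind_{\{|Y_n^i|\le a\}}$ and $V_n^i=Y_n^i\ind_{\{|Y_n^i|>a\}}$. Linearity of conditional expectation yields the exact decomposition $D_n^i=W_n^i+\tilde V_n^i$, where $W_n^i=U_n^i-\E[U_n^i\mid\mathcal{F}_{n,i-1}]$ and $\tilde V_n^i=V_n^i-\E[V_n^i\mid\mathcal{F}_{n,i-1}]$ are again martingale differences. For the bounded part $|W_n^i|\le 2a$, and the orthogonality of martingale differences (the cross terms $\E[W_n^iW_n^j]$ vanish by conditioning on $\mathcal{F}_{n,j-1}$ for $i<j$) gives
$$\E\Bigl(\tfrac1n\sum_{i=1}^n W_n^i\Bigr)^2=\tfrac1{n^2}\sum_{i=1}^n\E (W_n^i)^2\le \tfrac{4a^2}{n},$$
so by Jensen $\E\bigl|\tfrac1n\sum_i W_n^i\bigr|\le 2a\,n^{-1/2}\to 0$ as $n\to\infty$ for fixed $a$. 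For the unbounded part, conditional Jensen $\E|\E[V_n^i\mid\mathcal{F}_{n,i-1}]|\le\E|V_n^i|$ together with the triangle inequality gives
$$\E\Bigl|\tfrac1n\sum_{i=1}^n\tilde V_n^i\Bigr|\le \tfrac2n\sum_{i=1}^n\E|V_n^i|=\tfrac2n\sum_{i=1}^n\E|Y_n^i|\ind_{\{|Y_n^i|>a\}}.$$

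Finally I would combine the two estimates and let the limits run in the correct order. For fixed $a$, taking $\limsup_{n\to\infty}$ annihilates the bounded contribution and leaves $\limsup_n \E\bigl|\tfrac1n\sum_i D_n^i\bigr|\le \sup_n \tfrac2n\sum_i\E|Y_n^i|\ind_{\{|Y_n^i|>a\}}$; since the left-hand side is independent of $a$, letting $a\to\infty$ and invoking the uniform integrability assumption forces it to vanish, which is exactly the claimed $L_1$-convergence. The only genuinely delicate point is managing this \emph{double limit}: one must send $n\to\infty$ first at a fixed truncation level, where the $L_2$ bound degrades like $a^2/n$, and only afterwards send $a\to\infty$ to exploit uniform integrability. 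The two hypotheses are tailored precisely so that neither the variance blow-up of the truncated (bounded) part nor the residual tail mass of the discarded part survives this order of limits.
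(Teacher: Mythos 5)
Your proof is correct. Note that the paper itself offers no proof of this statement: it is imported verbatim as Corollary~3.1 of the cited work \cite{sung2008lln}, so there is no internal argument to compare against. Your reconstruction is the standard route to such Ces\`aro-uniform-integrability laws of large numbers (it is essentially the Chandra/Sung truncation scheme): the martingale-difference structure of $D_n^i = Y_n^i - \zeta_n^i$ is exactly right, the split at level $a$ into $W_n^i$ and $\tilde V_n^i$ preserves that structure, the orthogonality computation $\E\bigl(\frac1n\sum_i W_n^i\bigr)^2 = \frac1{n^2}\sum_i \E (W_n^i)^2 \le \frac{4a^2}{n}$ is valid since each $W_n^i$ is $\mathcal{F}_{n,i}$-measurable and hence $\mathcal{F}_{n,j-1}$-measurable for $i<j$, and the conditional Jensen bound $\E\bigl|\E[V_n^i\mid\mathcal{F}_{n,i-1}]\bigr|\le\E|V_n^i|$ correctly reduces the tail contribution to the quantity controlled by the second hypothesis. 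Your emphasis on the order of limits ($n\to\infty$ at fixed $a$, then $a\to\infty$) is precisely the crux, and you handle it correctly via $\limsup_n$. Two cosmetic remarks: the first hypothesis is implied by the second (take any $a$ with $\sup_n \frac1n\sum_i \E|Y_n^i|\ind_{\{|Y_n^i|>a\}}\le 1$ and bound $\E|Y_n^i|\le a + \E|Y_n^i|\ind_{\{|Y_n^i|>a\}}$), so it serves only to make integrability of each $Y_n^i$, and hence $\zeta_n^i$, immediately well-defined, as you observe; and the constant $4a^2$ can be sharpened to $a^2$ since conditional expectation is an $L_2$-contraction, $\E(W_n^i)^2\le\E(U_n^i)^2\le a^2$, though this changes nothing.
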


Let us now prove our result.\\

\noindent\textbf{Theorem \ref{thm:infinite_width}. }
\emph{For $0 \leq s \leq t \leq 1$, we have
$$
\log\left(\frac{\|\phi(X_t)\|}{\|\phi(X_s)\|}\right)\,  \ind_{\{ \|\phi(X_0)\|>0\}} \underset{n \to \infty}{\longrightarrow} \frac{t - s}{4}, \quad \textrm{and, }\quad \frac{\|\phi(X_t)\|}{\|\phi(X_s)\|}\,  \ind_{\{ \|\phi(X_0)\|>0\}}  \underset{n \to \infty}{\longrightarrow} \exp\left(\frac{t-s}{4}\right).
$$
where the convergence holds in $L_1$.}\\
\emph{
Moreover, we have that
$$
\sup_{i \in [n]} \E \left(\sup_{t \in [0,1]} |X^i_t - \tilde{X}^i_t|^2\right) = \bigO(n^{-2/5}),
$$
where $X^{i}_t$ is the solution of the following (Mackean-Vlasov) SDE
$$
d\tilde{X}^i_t = \left(\E \phi(\tilde{X}^i_t)^2\right)^{1/2} dB^i_t, \quad \tilde{X}^i_0 = X^i_0. 
$$}\\
\emph{
As a result, the pre-activations $Y^i_{\lfloor t L \rfloor}$ (\cref{eq:resnet}) converge in distribution to a Gaussian distribution in the limit infinite-depth-then-infinite-width
$$
\forall i \in [n],\,\,\,\, Y^i_{\lfloor t L \rfloor}  \xrightarrow{L \to \infty \textrm{ then }n \to \infty} \normal(0, d^{-1} \|x\|^2 \exp(t/2)).
$$
}

\begin{proof}
Let $0 \leq s \leq t \leq 1$. From \cref{thm:norm_post_act}, we have that almost surely
$$
\log\left(\frac{\|\phi(X_t)\|}{\|\phi(X_s)\|}\right) = \exp\left(\frac{1}{\sqrt{n}} (\hat{B}_t - \hat{B}_s) + \frac{1}{n} \int_{s}^t \mu_u du\right).
$$
We know that $\frac{1}{\sqrt{n}}  (\hat{B}_t - \hat{B}_s)$ converges to zero almost surely (by continuity of Brownian paths) and in $L_1$. Let us now deal with the second term $n^{-1} \int_{s}^t \mu_u du$. We have that $\frac{1}{n} \mu_u = \frac{1}{2} \frac{1}{n}\sum_{i=1}^n \phi'(X^i_u) - \frac{1}{n}.$ Fix $u \in [s,t]$ and let $Z^i_n = \phi'(X_u^i)$ (recall that $X^i_u$ has an implicit dependence on $n$). Since $Z_n^i$ is uniformly bounded across $i$ and $n$, it is straightforward that the conditions of \cref{thm:sung_lln} are satisfied. Therefore, we have the following convergence in $L_1$
$$
\frac{1}{n} \left( \sum_{i=1}^n Z^i_n - \zeta^i_n\right) \longrightarrow 0,
$$
where $\zeta^i_n = \E [Z^i_n | \mathcal{F}_{n, i-1}]$. Recall from the proof of \cref{thm:norm_post_act} that the event $\{X^j_u > 0\}$ is included in the event $\{ \|\phi(X_0)\| > 0\}$. Another key observation that will allow us to conclude is that the distribution of $X^i_u$ given $\mathcal{F}_{n, i-1}$ is symmetric around $0$ since the dependence is reflected only in the variance of the Brownian motion. Hence, $\zeta^i_n = \frac{1}{2} (1 - 2^{-n})^{-1}$ almost surely. Since  $n^{-1} \sum_{i=1}^n \zeta^i_n = \frac{1}{2} (1 - 2^{-n})^{-1}\longrightarrow \frac{1}{2},$ (in $L_1$), then $n^{-1} \sum_{i=1}^n Z^i_n$ converges to $1/2$ in $L_1$. Using the Dominated convergence theorem, we obtain the first result.\\

Let us now deal with the second result on the absolute growth factor. Let $N>0$ and define the event 
$$E_N = \left\{ \frac{\|\phi(X_t)\|}{\|\phi(X_s)\|} \leq \exp(N)\right\},$$ and let $E^c_N$ be its complementary event. For $N$ large enough, we have that 
\begin{align*}
\E \left|\frac{\|\phi(X_t)\|}{\|\phi(X_s)\|} - \exp(t/4) \right| &= \E \left|\frac{\|\phi(X_t)\|}{\|\phi(X_s)\|} - \exp((t-s)/4) \right| \, \ind_{E_N} + \E \left|\frac{\|\phi(X_t)\|}{\|\phi(X_s)\|} - \exp((t-s)/4) \right| \, \ind_{E_N^c}\\
&\leq \exp(N) \times \E \left|\log\left(\frac{\|\phi(X_t)\|}{\|\phi(X_s)\|}\right) - (t-s)/4 \right| \\
&+ \E \left|\frac{\|\phi(X_t)\|}{\|\phi(X_s)\|} - \exp((t-s)/4) \right| \, \ind_{E_N^c}\\
&\leq \exp(N) \times \E \left|\log\left(\frac{\|\phi(X_t)\|}{\|\phi(X_s)\|}\right) - (t-s)/4 \right| + K \, \mathbb{P}(E_N^c),\\
\end{align*}
where $K$ is a ($t$ dependent) constant and where we have used \cref{thm:norm_post_act} obtain that $\E \left|\frac{\|\phi(X_t)\|}{\|\phi(X_s)\|} \right|$ is finite. Taking $n$ to infinity in the inequality above, we obtain that for $N$ large enough
\begin{align*}
\limsup_{n \to \infty} \E \left|\frac{\|\phi(X_t)\|}{\|\phi(X_s)\|} - \exp(t/4) \right| &\leq K\, \mathbb{P}(E_N^c)\\
&\leq  N^{-1} K\,\E \left| \log\left(\frac{\|\phi(X_t)\|}{\|\phi(X_s)\|}\right) \right|,
\end{align*}
where we have used Markov's inequality. Since this is true for all $N$ large enough, we conclude that $\lim_{n \to \infty} \E \left|\frac{\|\phi(X_t)\|}{\|\phi(X_s)\|} - \exp((t-s)/4) \right| = 0$.\\

The convergence to Mckean-Vlasov dynamics is straightforward from \cref{thm:convergence_mckean}, and the Gaussian distribution is given by \cref{lemma:distribution_mckean_X}.

\end{proof}

\subsection{Some technical lemmas}

\begin{lemma}\label{lemma:distribution_mckean_X}
Let $x \in \reals^d$ such that $x \neq 0$, $m\geq 1$ be an integer, and consider the real-valued (Mckean-Vlasov) stochastic process $\tilde{X}$ given by 
$$
d\tilde{X}_t = \left(\E \phi(\tilde{X}_t)^2\right)^{1/2} dB_t \, ,\quad  t\in [0,\infty), \quad \tilde{X}_0 = \tilde{W}_{in}^\top x,
$$

where $\phi$ is the ReLU activation function, $(B_t)_{t \geq 0}$ is a one-dimensional Brownian motion, and $\tilde{W}_{in} \sim \normal(0, d^{-1} I)$. We have the following

$$
\forall t\geq 0, X_t \sim \normal(0, d^{-1} \|x\|^2 \exp(t/2)).
$$
\end{lemma}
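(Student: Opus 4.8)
The plan is to exploit the fact that the volatility coefficient in this Mckean--Vlasov SDE is \emph{deterministic}: since $a(t) := (\E\phi(\tilde X_t)^2)^{1/2}$ is a nonrandom function of $t$ alone, the equation reduces to a linear SDE driven by $B$ with a deterministic integrand. First I would identify the initial law. Because $\tilde W_{in}\sim\normal(0,d^{-1}I)$ has i.i.d.\ coordinates and $x$ is fixed, $\tilde X_0=\tilde W_{in}^\top x$ is a linear combination of independent centered Gaussians, so $\tilde X_0\sim\normal(0,d^{-1}\|x\|^2)$, which already matches the claimed variance at $t=0$.

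Next, using that $B$ is independent of $\tilde W_{in}$ and that $a$ is deterministic, I would write $\tilde X_t=\tilde X_0+\int_0^t a(s)\,dB_s$. The Wiener integral of a deterministic integrand is Gaussian and independent of $\tilde X_0$, so $\tilde X_t$ is Gaussian for every $t$, with zero mean (the Itô integral is a centered martingale and $\E\tilde X_0=0$). It remains only to determine the variance $w(t):=\E\tilde X_t^2$. By Itô's isometry,
$$
w(t)=\E\tilde X_0^2+\int_0^t \E\phi(\tilde X_s)^2\,ds = d^{-1}\|x\|^2+\int_0^t \E\phi(\tilde X_s)^2\,ds.
$$

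The crucial simplification comes from the symmetry of the centered Gaussian law of $\tilde X_s$: since ReLU satisfies $\phi(z)^2=z^2\ind_{\{z>0\}}$, and $\tilde X_s\sim\normal(0,w(s))$ is symmetric about $0$, we get $\E\phi(\tilde X_s)^2=\E[\tilde X_s^2\ind_{\{\tilde X_s>0\}}]=\tfrac12\E\tilde X_s^2=\tfrac12 w(s)$. Substituting into the integral equation and differentiating yields the linear ODE $w'(t)=\tfrac12 w(t)$ with $w(0)=d^{-1}\|x\|^2$, whose unique solution is $w(t)=d^{-1}\|x\|^2 e^{t/2}$. This gives $\tilde X_t\sim\normal(0,d^{-1}\|x\|^2 e^{t/2})$, as claimed.

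The step that needs the most care is the logical status of the Gaussian ansatz: a priori the coefficient $a(t)$ is defined through the very law we are trying to compute, so one must confirm there is no circularity. The resolution is that Gaussianity and the zero mean of $\tilde X_t$ hold for \emph{any} deterministic integrand $a$, so they are genuinely established before the variance is known; the fixed-point relation then only pins down the scalar function $w$ through a well-posed linear ODE. To be fully rigorous I would invoke existence and uniqueness of the Mckean--Vlasov solution (as guaranteed in the setting of \cref{thm:convergence_mckean}), so that the Gaussian process just constructed is \emph{the} solution rather than merely a candidate.
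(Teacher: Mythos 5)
Your proposal is correct and follows essentially the same route as the paper: both establish Gaussianity and zero mean from the deterministic volatility, use the symmetry of the centered law of $\tilde{X}_s$ (equivalently, the paper's homogeneity remark) to get $\E\,\phi(\tilde{X}_s)^2 = \tfrac12 \E\,\tilde{X}_s^2$, and reduce the variance to the ODE $w'(t)=\tfrac12 w(t)$ with $w(0)=d^{-1}\|x\|^2$. The only cosmetic difference is that you obtain the ODE by differentiating the It\^o-isometry identity, whereas the paper applies It\^o's lemma to $\tilde{X}_t^2$ and takes expectations; your explicit handling of the apparent circularity via well-posedness of the Mckean--Vlasov equation is a welcome clarification but not a different argument.
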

\begin{proof}
Let $t > 0$. From the SDE, it is clear that $\tilde{X}_t$ is Gaussian with zero mean and variance $\int_{0}^t \|\phi(\tilde{X}_s)\|_{L_2} ds$ (by \ito isometry). Therefore, since ReLU is homogeneous, it is straightforward that for all $s>0$, $\|\phi(\tilde{X}_s)\|^2_{L_2} = \frac{1}{2}  \|\tilde{X}_s\|^2_{L_2}$. Using \ito's lemma, we obtain 
$$
d\tilde{X}^2_t = 2 \tilde{X}_t d\tilde{X}_t  + \frac{1}{2}\|\tilde{X}_t\|_{L_2}^2 dt.
$$
Taking the expectation\footnote{This should be understood as integrating the SDE, then taking the expectation, then differentiating once again.} yields the following ordinary differential equation

$$
d \|\tilde{X}_t\|^2_{L_2} = \frac{1}{2} \|\tilde{X}_t\|_{L_2}^2 dt,
$$
which has a closed-form solution given by 
$$
\|\tilde{X}_t\|^2_{L_2} = \|\tilde{X}_0\|^2_{L_2} \exp(t/2).
$$
We conclude by observing that $\|\tilde{X}_0\|^2_{L_2} = \E \tilde{X}_0^2 = d^{-1} \|x\|^2.$
\end{proof}

\begin{lemma}\label{lemma:conv_mckean_Xm_X}
Let $x \in \reals^d$ such that $x \neq 0$, $m\geq 1$ be an integer, and consider the two real-valued (Mckean-Vlasov) stochastic processes $\tilde{X}^m$ and $\tilde{X}$ given by 
$$
\begin{cases}
d\tilde{X}^m_t = \left(\E \phi_m(\tilde{X}^m_t)^2\right)^{1/2} dB_t \, ,\quad  t\in [0,\infty), \quad \tilde{X}^m_0 = \tilde{W}_{in}^\top x, \\
d\tilde{X}_t = \left(\E \phi(\tilde{X}_t)^2\right)^{1/2} dB_t \, ,\quad  t\in [0,\infty), \quad X_0 = \tilde{W}_{in}^\top x,
\end{cases}
$$

where $\phi_m(z) = \int_{0}^z h(mu) du$ where $h$ is the Sigmoid function given by $h(u) = (1 + e^{-u})^{-1}$, $\phi$ is the ReLU activation function, $(B_t)_{t \geq 0}$ is a one-dimensional Brownian motion, and $\tilde{W}_{in} \sim \normal(0, d^{-1} I)$. We have the following

$$
\forall t\geq 0, \, \, \E |\tilde{X}^m_t - \tilde{X}_t |^2 \leq \frac{2 t}{m^2} e^{2t}.
$$
\end{lemma}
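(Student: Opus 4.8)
The plan is to mirror the proof of \cref{lemma:convergence_Xm_X} almost verbatim, exploiting the fact that the two Mckean-Vlasov SDEs are driven by the \emph{same} Brownian motion $B$ and started from the \emph{same} initial condition $\tilde{W}_{in}^\top x$. Subtracting the two integral forms, the initial conditions cancel and I am left with
$$
\tilde{X}^m_t - \tilde{X}_t = \int_{0}^t \left[ \left(\E \phi_m(\tilde{X}^m_s)^2\right)^{1/2} - \left(\E \phi(\tilde{X}_s)^2\right)^{1/2}\right] dB_s.
$$
The key structural feature distinguishing this from \cref{lemma:convergence_Xm_X} is that here the volatilities are \emph{deterministic} functions of $s$ (they are expectations over the law of the process), so after applying \ito's isometry the integrand is a deterministic quantity.

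First I would apply \ito's isometry to obtain
$$
\E |\tilde{X}^m_t - \tilde{X}_t|^2 = \int_{0}^t \left( \left\|\phi_m(\tilde{X}^m_s)\right\|_{L^2} - \left\|\phi(\tilde{X}_s)\right\|_{L^2}\right)^2 ds.
$$
Since the $L^2$ norm is a genuine norm, the reverse triangle inequality gives
$$
\left| \left\|\phi_m(\tilde{X}^m_s)\right\|_{L^2} - \left\|\phi(\tilde{X}_s)\right\|_{L^2}\right| \leq \left\|\phi_m(\tilde{X}^m_s) - \phi(\tilde{X}_s)\right\|_{L^2}.
$$
I would then split $\phi_m(\tilde{X}^m_s) - \phi(\tilde{X}_s) = (\phi_m(\tilde{X}^m_s) - \phi(\tilde{X}^m_s)) + (\phi(\tilde{X}^m_s) - \phi(\tilde{X}_s))$ and use the elementary bound $(a+b)^2 \leq 2a^2 + 2b^2$. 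The first piece is controlled by \cref{lemma:diff_phi_m_phi}, which yields $\|\phi_m(\tilde{X}^m_s) - \phi(\tilde{X}^m_s)\|_{L^2} \leq 1/m$; the second piece is controlled by the $1$-Lipschitzness of ReLU, giving $\|\phi(\tilde{X}^m_s) - \phi(\tilde{X}_s)\|_{L^2}^2 \leq \E|\tilde{X}^m_s - \tilde{X}_s|^2$.

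Combining these bounds produces the integral inequality
$$
\E |\tilde{X}^m_t - \tilde{X}_t|^2 \leq \frac{2t}{m^2} + 2\int_{0}^t \E|\tilde{X}^m_s - \tilde{X}_s|^2 \, ds,
$$
and I would close the argument with Gronwall's lemma applied to $u(t) = \E|\tilde{X}^m_t - \tilde{X}_t|^2$, with the non-decreasing forcing term $a(t) = 2t/m^2$ and constant rate $b \equiv 2$, which delivers exactly $u(t) \leq (2t/m^2)e^{2t}$. There is essentially no hard step here: the only point needing slight care, relative to \cref{lemma:convergence_Xm_X}, is recognizing that the reverse triangle inequality must be applied to the $L^2$ norms of the laws rather than pathwise, precisely because the mean-field volatilities are deterministic; and the disappearance of the factor $n$ relative to \cref{lemma:convergence_Xm_X} simply reflects that the process here is scalar rather than $n$-dimensional.
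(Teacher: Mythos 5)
Your proposal is correct and follows essentially the same route as the paper's proof: Itô isometry, the reverse triangle inequality applied to the $L_2$ norms of the laws, the splitting controlled by \cref{lemma:diff_phi_m_phi} and the $1$-Lipschitzness of ReLU, and Gronwall's lemma to close. The paper presents exactly this chain of inequalities (describing it as the proof of \cref{lemma:convergence_Xm_X} with the Euclidean norm replaced by the $L_2$ norm in probability space), so there is nothing to add.
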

\begin{proof}
The proof of \cref{lemma:conv_mckean_Xm_X} is similar to that of \cref{lemma:convergence_Xm_X} with the only difference of replacing the euclidean norm with the $L_2$ norm in probability space. Let $t \geq 0$, we have that 

\begin{align*}
\E|\tilde{X}^m_t - \tilde{X}_t |^2 &= \E \left( \int_{0}^t (\|\phi_m(\tilde{X}^m_s)\|_{L_2} - \|\phi(\tilde{X}_s)\|_{L_2}) dB_s\right)^2\\
&=  \int_{0}^t (\|\phi_m(\tilde{X}^m_s)\|_{L_2} - \|\phi(\tilde{X}_s)\|_{L_2})^2 ds\\
&\leq  \int_{0}^t \|\phi_m(\tilde{X}^m_s) -\phi(\tilde{X}_s)\|^2_{L_2} ds\\
&\leq  \frac{2t}{m^2} + 2 \int_{0}^t \|\tilde{X}^m_s -\tilde{X}_s\|^2_{L_2} ds,
\end{align*}
where we have used the triangular inequality and \cref{lemma:diff_phi_m_phi}. We conclude using Gronwall's lemma.
\end{proof}


\section{Proof of \cref{thm:infnite_width_then_infinite_depth}}\label{sec:infinite_width_then_infinite_depth}
\textbf{Theorem \ref{thm:infnite_width_then_infinite_depth}. } \emph{
Let $t\in [0,1]$. Then, in the limit $\lim_{L \to \infty} \lim_{n \to \infty}$ (infinite width, then infinite depth), we have that
$$
 \frac{\|\phi(Y_{\lfloor t L \rfloor})\|}{\|\phi(Y_0)\|}\,  \ind_{\{ \|\phi(Y_0)\|>0\}} \longrightarrow \exp\left(\frac{t}{2}\right),
$$
where the convergence holds in probability.}\\

\emph{
Moreover, the pre-activations $Y^i_{\lfloor t L \rfloor}$ (\cref{eq:resnet}) converge in distribution to a Gaussian distribution in the limit infinite-width-then-infinite-depth
$$
\forall i \in [n],\,\,\,\, Y^i_{\lfloor t L \rfloor}  \xrightarrow{n \to \infty \textrm{ then }L \to \infty} \normal(0, d^{-1} \|x\|^2 \exp(t)).
$$\\
}

\begin{proof}
Let $t \in [0,1]$. It is straightforward that $\lim_{n \to \infty}\ind_{\{ \|\phi(Y_0)\|>0\}} = 1$ almost surely. Moreover, we have that for all $t \in [0,1]$, $n^{-1} \|\phi(Y_{\lfloor t L \rfloor})\|^2$ converges in distribution to $\E \phi(Y^1_{\lfloor t L \rfloor})^2$ when $n$ goes to infinity \citep{yang_tensor3_2020, hayou21stable, matthews}. Since the limiting value is constant, then the convergence holds also in probability. Now let $\epsilon > 0$. We have that 
\begin{align*}
\mathbb{P}\left( \left| \frac{\|\phi(Y_{\lfloor t L \rfloor})\|}{\|\phi(Y_0)\|}\,  \ind_{\{ \|\phi(Y_0)\|>0\}} - \exp\left(\frac{t}{2}\right)\right| > \epsilon\right) &\leq \mathbb{P}\left( \frac{\|\phi(Y_{\lfloor t L \rfloor})\|}{\|\phi(Y_0)\|}\,  \ind_{\{ \|\phi(Y_0)\|>0\}} - \exp\left(\frac{t}{2}\right) > \epsilon\right) \\
&+ \mathbb{P}\left( \frac{\|\phi(Y_{\lfloor t L \rfloor})\|}{\|\phi(Y_0)\|}\,  \ind_{\{ \|\phi(Y_0)\|>0\}} - \exp\left(\frac{t}{2}\right) <- \epsilon\right).
\end{align*}
Let us show that the first term in the right-hand side converges to $0$ in the sequential limit `infinite width then infinite depth'. The proof is similar for the second term. We have that

\begin{align*}
    \mathbb{P}\left( \frac{\|\phi(Y_{\lfloor t L \rfloor})\|}{\|\phi(Y_0)\|}\,  \ind_{\{ \|\phi(Y_0)\|>0\}} - e^{\frac{t}{2}} > \epsilon\right) &\leq \mathbb{P}\left( \frac{\|\phi(Y_{\lfloor t L \rfloor})\|}{\|\phi(Y_0)\|}\,  \ind_{\{ \|\phi(Y_0)\|>0\}} - \frac{\sqrt{\E \phi(Y^1_{\lfloor t L \rfloor})^2}}{\sqrt{\E \phi(Y^1_{0})^2}} > \epsilon/2\right) \\
    &+ \ind\left( \frac{\sqrt{\E \phi(Y^1_{\lfloor t L \rfloor})^2}}{\sqrt{\E \phi(Y^1_{0})^2}} - \exp\left(\frac{t}{2}\right) > \epsilon/2\right),
\end{align*}
where $\ind(z > \epsilon/2) \overset{def}{=} \ind_{\{z > \epsilon/2\}}$ (to alleviate the notation).
Using the convergence in probability of $n^{-1} \|\phi(Y_{\lfloor t L \rfloor})\|^2$ to $\E \phi(Y^1_{\lfloor t L \rfloor})^2$, we obtain for all $L$
$$
\lim_{n \to \infty} \mathbb{P}\left( \frac{\|\phi(Y_{\lfloor t L \rfloor})\|}{\|\phi(Y_0)\|}\,  \ind_{\{ \|\phi(Y_0)\|>0\}} - \exp\left(\frac{t}{2}\right) > \epsilon\right) \leq  \ind\left( \frac{\sqrt{\E \phi(Y^1_{\lfloor t L \rfloor})^2}}{\sqrt{\E \phi(Y^1_{0})^2}} - \exp\left(\frac{t}{2}\right) > \epsilon/2\right).
$$
Using Lemma 5 in \cite{hayou21stable}, and the homegenous property of ReLU, we have that 
$$\lim_{L \to \infty} \E \phi(Y^1_{\lfloor t L \rfloor})^2 =\frac{1}{2} q_t,$$
where $q_t : [0,1] \to \reals^+$ is the solution of the ordinary differential equation $q_t' = q_t$, which has a unique solution given by $q_t = q_0 \exp(t)$. Dividing by $q_0$ and taking the square root, and taking $L$ to infinity, we obtain the desired result.\\

Regarding the convergence in distribution of the pre-activations, in the limit $n\to \infty$, the pre-activations become Gaussian with zero mean and variance $\E (Y^1_{\lfloor tL \rfloor})^2$. This variance converges to $q_t$ given above in the limit $L \to \infty$. The conclusion is straightforward using Slutsky's lemma.
\end{proof}

\section{Piece-wise linear activation functions}\label{sec:linear_act}
We have seen in \cref{sec:general_width} that the distribution of $X_t$ is generally intractable for $n \geq 2$. This is purely due to \emph{finite width $n\geq 2$} and not to the non-linearity of the activation function. To understand this, let us see what happens when the activation function is the identity function. In this case the process $X_t$ is solution of the following SDE
\begin{equation}\label{eq:linear_sde}
dX_t = \frac{1}{\sqrt{n}} \|X_t\| dB_t.
\end{equation}
When $n=1$, the SDE \cref{eq:linear_sde} has a closed-form solution given by the (conditional) GBM distribution (\cref{prop:gbm}). For general $n\geq 2$, the entries of $X_t$ are dependent and the resulting dynamics (generally) do not admit closed-form solutions. However, we can obtain closed-form solutions for the norm $\|X_t\|$. Indeed, a simple application of \ito's lemma yields the following results.
\begin{thm}[Norms with the identity activation]\label{thm:linear_post_norm}
With the linear activation, we have that for all $t \in [0,1]$,
$$
\|X_t\| = \|X_0\| \exp\left(\frac{1}{\sqrt{n}} \hat{B}_t + \left(\frac{1}{2} - \frac{1}{n}\right) t\right), \quad \textrm{almost surely,}
$$
where $(\hat{B})_{t \geq 0}$ is a one-dimensional Brownian motion. As a result, we have that for all $ 0\leq s \leq t \leq 1$
$$
\E\left[ \log \left( \frac{\|X_t\|}{\|X_s\|} \right)\right] = \left(\frac{1}{2} - \frac{1}{n}\right) (t-s).
$$
\end{thm}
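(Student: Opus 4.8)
The plan is to apply \ito's lemma to the function $g(x) = \log\|x\| = \frac{1}{2}\log(\|x\|^2)$, exactly as in the proof of \cref{thm:norm_post_act}, but with the crucial simplification that the linear activation makes the drift \emph{constant}. Since $\phi$ is the identity, we have $\|\phi(X_t)\| = \|X_t\|$ and the SDE reads $dX_t = \frac{1}{\sqrt n}\|X_t\|\,dB_t$. First I would note that the identity activation satisfies all the hypotheses of \cref{lemma:tau_general_zeta} (it is Lipschitz, injective, $\mathcal{C}^2(\reals)$, vanishes at $0$, with $\phi' \equiv 1$ and $\phi''\phi \equiv 0$ both bounded), so that $\mathbb{P}(\tau = \infty) = 1$ with $\tau = \min\{t \geq 0 : X_t = 0\}$. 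This guarantees $\|X_t\| > 0$ for all $t$ almost surely, so that $g(X_t)$ is well defined along the trajectory.

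Next I would compute the differential of $g(X_t)$ using the multi-dimensional \ito formula stated just before \cref{lemma:tau_general_zeta}, namely $dg(X_t) = \nabla g(X_t)^\top dX_t + \frac{1}{2n}\|X_t\|^2\,\Tr[\nabla^2 g(X_t)]\,dt$. A direct computation gives $\nabla g(x) = x/\|x\|^2$ and $\Tr[\nabla^2 g(x)] = (n-2)/\|x\|^2$, so the drift term collapses to the constant $\frac{1}{2n}\|X_t\|^2 \cdot \frac{n-2}{\|X_t\|^2} = \frac{1}{2} - \frac{1}{n}$; this cancellation is the key point and is special to the homogeneity of the identity activation. The martingale part is $\nabla g(X_t)^\top dX_t = \frac{1}{\sqrt n}\sum_{i=1}^n \frac{X_t^i}{\|X_t\|}\,dB_t^i$. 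I would then define $\hat B_t = \sum_{i=1}^n \int_0^t \frac{X_s^i}{\|X_s\|}\,dB_s^i$ and observe that its quadratic variation is $\langle \hat B\rangle_t = \int_0^t \sum_i \frac{(X_s^i)^2}{\|X_s\|^2}\,ds = t$; by Lévy's characterization, $\hat B$ is a one-dimensional Brownian motion.

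Putting these together, $dg(X_t) = \frac{1}{\sqrt n}\,d\hat B_t + (\frac12 - \frac1n)\,dt$, and integrating from $0$ to $t$ and exponentiating yields the claimed closed form $\|X_t\| = \|X_0\|\exp(\frac{1}{\sqrt n}\hat B_t + (\frac12 - \frac1n)t)$. The expectation identity then follows immediately by taking logarithms of the ratio $\|X_t\|/\|X_s\|$ and using $\E[\hat B_t - \hat B_s] = 0$. The main technical obstacle is the same as in \cref{thm:norm_post_act}: $g$ fails to be $\mathcal{C}^2$ at the origin, so \ito's lemma cannot be applied globally. I expect to handle this by localizing with the stopping times $\tau_k = \inf\{t : \|X_t\| \leq 1/k \textrm{ or } \|X_t\| \geq k\}$, applying \ito's lemma on each stochastic interval $[0, t\wedge\tau_k]$ where $g$ is smooth, and then letting $k \to \infty$. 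The non-explosion of $\|X_t\|$ (a consequence of the linear growth of the volatility, via \cref{thm:existence_and_uniqueness}) together with the non-hitting of $0$ guaranteed by \cref{lemma:tau_general_zeta} ensures $\tau_k \to \infty$ almost surely, so the identity passes to the limit.
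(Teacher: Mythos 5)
Your proof is correct and is exactly the route the paper intends: the paper omits the argument, stating only that it is ``straightforward using \ito's lemma,'' and your computation ($\nabla g(x) = x/\|x\|^2$, $\Tr[\nabla^2 g(x)] = (n-2)/\|x\|^2$, drift collapsing to $\tfrac12 - \tfrac1n$, Lévy's characterization for $\hat B$) fills in precisely those details. Your handling of the two technical points the paper leaves implicit --- non-hitting of the origin via \cref{lemma:tau_general_zeta} (whose hypotheses the identity clearly satisfies) and localization to circumvent the failure of $g$ to be $\mathcal{C}^2$ at $0$ --- is also sound.
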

The proof of \cref{thm:linear_post_norm} is straightforward using \ito's lemma. We omit the proof here.

\paragraph{Role of the non-linearity. } By comparing the result of \cref{thm:norm_post_act} and \cref{thm:linear_post_norm}, we observe some differences between the case of ReLU and that of the identity activation function. With ReLU, the drift term in $\log(\|\phi(X_t)\|/\|\phi(X_s)\|)$ is given by $\frac{1}{n}\int_{0}^t \mu_s ds$ which is a stochastic term with mean given by  $\left(\frac{1 - 2^{-n}}{4} - \frac{1}{n}\right) t$. With the identity activation, this drift term is \emph{deterministic} and is equal to $\left(\frac{1}{2} - \frac{1}{n}\right) t$. This allows to conclude the following:
\begin{itemize}
    \item \emph{Non-linearity induces stochastic drift:} the non-linearity of ReLU induces stochasticity in the drift term of $\log(\|X_t\| / \|X_0\|)$, which results in the Quasi-GBM dynamics given by \cref{thm:norm_post_act}.
    \item \emph{Non-linearity induces change of regime: } with ReLU, the mean drift of $\log(\|\phi(X_t)\| / \|\phi(X_0)\|)$ is given by $\left(\frac{1 - 2^{-n}}{4} - \frac{1}{n}\right) t$ which is negative for $n =1, 2, 3$. This induces the change of regime we discussed after \cref{thm:norm_post_act} (having a negative mean drift implies that there is a significant mass of the distribution of $\|X_t\|/\|X_0\|$ in the regime $(0,1)$). With the identity activation function, the drift term is always non-negative for $n\geq 2$, and negative for $n=1$. Thus, the change of regime cover some values $n \geq 2$ only when there is a non-linearity. We give more details about this observation in the next result.
\end{itemize}
To capture the effect of non-linearity in the regime change phenomenon discussed above, we study the dynamics of the post-norm activation for a special class of piece-wise linear activations that include both ReLU and the identity function. The result of \cref{thm:norm_post_act} can be easily extended to the case of general piece-wise linear activation functions using the same proof techniques. We obtain the following result which generalizes that of \cref{thm:norm_post_act} and \cref{thm:linear_post_norm}.

\begin{thm}[Post-activations norm for piece-wise linear activations]\label{thm:post_act_norm_general}
Let $\alpha, \beta \in \reals$, and let $\phi_{\alpha, \beta}$ be the activation function given by $\phi_{\alpha, \beta}(z) = \alpha \texttt{ReLU}(z) + \beta \texttt{ReLU}(z)$. We have that for all $t \in [0,1]$,
$$
\|\phi_{\alpha, \beta}(X_t)\| = \|\phi_{\alpha, \beta}(X_0)\| \exp\left(\frac{1}{\sqrt{n}} \hat{B}_t + \frac{1}{n} \int_{0}^t \mu^{\alpha, \beta}_s ds\right), \quad \textrm{almost surely,}
$$
where $\mu^{\alpha, \beta}_s =  \frac{1}{2}\sum_{i=1}^n (\alpha^2 \ind_{x_i \geq 0} + \beta^2 \ind_{x_i < 0}) - 1$, and $(\hat{B})_{t \geq 0}$ is a one-dimensional Brownian motion. As a result, we have that for all $ 0\leq s \leq t \leq 1$

\begin{itemize}
    \item if $\alpha = 0, \beta \neq 0$
    $$
\E\left[ \log \left( \frac{\|\phi_{0, \beta}(X_t)\|}{\|\phi_{0, \beta}(X_s)\|} \right) \huge| \, \|\phi_{0, \beta}(X_0)\| > 0 \right] = \left(\frac{\beta^2(1- 2^{-n})}{4} - \frac{1}{n}\right) (t-s).
$$
\item if $\alpha \neq 0, \beta = 0$
    $$
\E\left[ \log \left( \frac{\|\phi_{\alpha, 0}(X_t)\|}{\|\phi_{\alpha, 0}(X_s)\|} \right) \huge| \, \|\phi_{\alpha, 0}(X_0)\| > 0 \right] = \left(\frac{\alpha^2(1- 2^{-n})}{4} - \frac{1}{n}\right) (t-s).
$$
\item if $\alpha \neq 0, \beta \neq 0$
    $$
\E\left[ \log \left( \frac{\|\phi_{\alpha, \beta}(X_t)\|}{\|\phi_{\alpha, \beta}(X_s)\|} \right) \right] = \left(\frac{\alpha^2 + \beta^2}{4} - \frac{1}{n}\right) (t-s).
$$
\end{itemize}

\end{thm}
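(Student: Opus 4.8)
The plan is to replay the proof of \cref{thm:norm_post_act} almost verbatim, the only genuinely new content being the evaluation of the conditional mean of the drift in the three sign regimes of $(\alpha,\beta)$. Write $\phi_{\alpha,\beta}(z) = \alpha\,\texttt{ReLU}(z) + \beta\,\texttt{ReLU}(-z)$, so that $\phi_{\alpha,\beta}$ is globally Lipschitz with constant $\max(|\alpha|,|\beta|)$ and satisfies $\phi'_{\alpha,\beta}(z)^2 = \alpha^2\ind_{z>0} + \beta^2\ind_{z<0}$ for $z\neq 0$; this is exactly the quantity appearing in $\mu^{\alpha,\beta}_s = \tfrac12\|\phi'_{\alpha,\beta}(X_s)\|^2 - 1$. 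Setting $g(x) = \log\|\phi_{\alpha,\beta}(x)\|$, I would apply \ito's lemma (\cref{lemma:ito}) to $g$, which is $\mathcal{C}^2$ away from the coordinate hyperplanes but not globally; as in the ReLU case this is circumvented by mollifying each piece, i.e. taking $\phi_{\alpha,\beta,m}(z) = \alpha\phi_m(z) + \beta\phi_m(-z)$ with $\phi_m(z) = \int_0^z h(mu)\,du$ and $h$ the sigmoid.

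Next I would transfer the three approximation estimates used for ReLU to $\phi_{\alpha,\beta,m}$: the $L^2$ bound $\E\|X^m_t - X_t\|^2 = \bigO(m^{-2})$ via the Gronwall argument of \cref{lemma:convergence_Xm_X}, the uniform sup-bound $\sup_z|\phi_{\alpha,\beta,m}(z) - \phi_{\alpha,\beta}(z)| = \bigO(m^{-1})$ as in \cref{lemma:diff_phi_m_phi} applied to each piece, and the $L^1$ convergence of the log-ratio (\cref{lemma:convergence_L1}). Applying \ito's lemma to $g_m(X^m_t) = \tfrac12\log\|\phi_{\alpha,\beta,m}(X^m_t)\|^2$ produces a drift $\mu^m_s$ and a volatility vector, and passing $m\to\infty$ exactly as in the proof of \cref{thm:norm_post_act} yields the almost-sure representation
\begin{equation*}
\|\phi_{\alpha,\beta}(X_t)\| = \|\phi_{\alpha,\beta}(X_0)\|\exp\left(\tfrac{1}{\sqrt n}\hat B_t + \tfrac1n\int_0^t \mu^{\alpha,\beta}_s\,ds\right).
\end{equation*}
The only care needed here is that the cross term in $(\phi'_{\alpha,\beta,m})^2$, which is $O(1)$ on a band of width $O(1/m)$ around each $\{x_i=0\}$, washes out in $L^1$; this is controlled precisely by the near-zero density bound $\mathbb{P}(|X^i_s|\le\delta) = \bigO(\delta)$ of \cref{lemma:uppder_bound_delta}, exactly as for ReLU.

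It then remains to compute $\E\mu^{\alpha,\beta}_s$. By exchangeability of the coordinates,
\begin{equation*}
\E[\mu^{\alpha,\beta}_s] = \tfrac n2\left(\alpha^2\,\mathbb{P}(X^1_s>0) + \beta^2\,\mathbb{P}(X^1_s<0)\right) - 1,
\end{equation*}
with the appropriate conditioning inserted in the degenerate cases. When $\alpha,\beta\neq 0$, $\phi_{\alpha,\beta}$ vanishes only at the origin, so $\|\phi_{\alpha,\beta}(X_0)\|>0$ almost surely and no conditioning is needed; since $X^1_s = X^1_0 + \tfrac1{\sqrt n}\int_0^s\|\phi_{\alpha,\beta}(X_u)\|\,dB^1_u$ is symmetric about $0$, both probabilities equal $\tfrac12$, giving mean drift $\tfrac{\alpha^2+\beta^2}{4}-\tfrac1n$. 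When $\beta=0$ the activation is a scaled ReLU, collapse has probability $2^{-n}$, and using the inclusion $\{X^1_s>0\}\subseteq\{\|\phi_{\alpha,0}(X_0)\|>0\}$ (a consequence of \cref{lemma:constant_after_hit}) together with the symmetry of $X^1_s$ gives $\mathbb{P}(X^1_s>0\mid \|\phi_{\alpha,0}(X_0)\|>0) = \tfrac12(1-2^{-n})^{-1}$, which is exactly the computation of \cref{thm:norm_post_act}; the case $\alpha=0$ is identical after the reflection $z\mapsto -z$.

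The main obstacle is not the mean computation, which is a short conditioning argument, but verifying that the limiting representation is valid, and in particular that the process does not collapse, in each regime. For $\alpha,\beta\neq 0$ the natural route is a variant of \cref{lemma:tau_general_zeta}, whose regularity hypotheses are met by the mollifications $\phi_{\alpha,\beta,m}$ uniformly in $m$, while for the degenerate pieces $\beta=0$ or $\alpha=0$ one falls back on \cref{lemma:tau_genera_n_relu}. Making the $L^1$ passage to the limit rigorous through the stopping-time truncation $\tau_\epsilon$, exactly as in the proof of \cref{thm:norm_post_act}, is the one place where some genuine (though routine) work is required; everything else is a transcription of the ReLU argument with $\ind_{z>0}$ replaced by $\alpha^2\ind_{z>0}+\beta^2\ind_{z<0}$.
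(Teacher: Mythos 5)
Your proposal is correct and coincides with the paper's intended proof: the paper offers no separate argument for \cref{thm:post_act_norm_general}, stating only that it follows \enquote{using the same proof techniques} as \cref{thm:norm_post_act}, and your replay of the mollification ($\phi_{\alpha,\beta,m}$), the approximation lemmas (\cref{lemma:convergence_Xm_X,lemma:diff_phi_m_phi,lemma:convergence_L1,lemma:uppder_bound_delta}), the stopping-time truncation, the non-collapse lemmas, and the conditioning-plus-symmetry evaluation of $\E\,\mu^{\alpha,\beta}_s$ in the three sign regimes is exactly that extension (including the correct reading of the typo $\beta\,\texttt{ReLU}(z)$ as $\beta\,\texttt{ReLU}(-z)$). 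One remark: in the degenerate cases your computation gives the factor $\frac{\alpha^2(1-2^{-n})^{-1}}{4}$ (resp. $\beta^2$), which agrees with the drift mean $\E\,\mu_s = \frac{n}{4}(1-2^{-n})^{-1}-1$ actually derived in the proof of \cref{thm:norm_post_act} and with that theorem's main-text statement, whereas \cref{thm:post_act_norm_general} as printed has $\frac{\alpha^2(1-2^{-n})}{4}$ --- an internal inconsistency (typo) of the paper rather than a gap in your argument.
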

\cref{thm:post_act_norm_general} generalizes that of ReLU (\cref{thm:norm_post_act}, $\alpha = 1, \beta=0$) and the identity activation (\cref{thm:linear_post_norm}, $\alpha= -\beta = 1$). The discontinuity of the mean of $\log \left( \frac{\|\phi_{\alpha, \beta}(X_t)\|}{\|\phi_{\alpha, \beta}(X_s)\|} \right)$ at the poles $\alpha =0$ (and $\beta \neq 0$) and $\beta =0$ (and $\alpha \neq 0$) is due to the fact that the event $ \{ \|\phi_{\alpha, \beta}(X_0)\| > 0\}$ has non-zero probability in these cases and zero probability when $\alpha \neq 0$ and $\beta \neq 0$. 
\paragraph{Perturbation analysis around the identity function. } Consider the case when $\alpha = 1$ and $\beta = 1 - \varepsilon$ for some $\varepsilon \ll 1$. The mean logarithmic growth factor is given by 
$$
G_{s,t}^n = \left(\frac{1 + (1 - \varepsilon)^2}{4} - \frac{1}{n}\right) (t-s) \approx \left(\frac{1 - \varepsilon}{2} - \frac{1}{n}\right) (t-s).
$$
Observe that for $\varepsilon = 0$, we recover the result of \cref{thm:linear_post_norm} (identity activation). Hence, a small perturbation of the identity function has the effect of decreasing the factor $G_{s,t}^n$ which results in having negative values for $G_{s,t}^n$ for certain values of $n$. Indeed, by fixing $\alpha = 1$, notice that the minimum values of $G_{s,t}^n$ is obtained when $\beta \approx 0$, for which $\phi_{1,0} = $ ReLU. Notice that we can also control the change of regime by tuning the parameter $\alpha$. This allows us to control the sign of $G_{s,t}^n$ for any $n$ by tuning the parameter $\alpha$. We leave the analysis of the practical implications of tuning $\alpha$ for future work.

\newpage
\section{Additional Experiments}\label{sec:additional_experiments}

\subsection{Geometric Brownian motion}
Additional histograms of $Y_L$ and $\log(Y_l)$ (\cref{prop:gbm}) are shown in \cref{fig:additional_gbm_figs_1} and \cref{fig:additional_gbm_figs_2}.

\begin{figure}[H]
    \centering
    \begin{subfigure}[b]{0.49\textwidth}
         \centering
         \includegraphics[width=\textwidth]{figures/gbm_t1_histogram_logx1_depth5.pdf}
         \caption{Distribution of $\log(Y_L)$ with $L=5$}
         \label{fig:gbm_depth5}
     \end{subfigure}
     \hfill
     \begin{subfigure}[b]{0.49\textwidth}
         \centering
         \includegraphics[width=\textwidth]{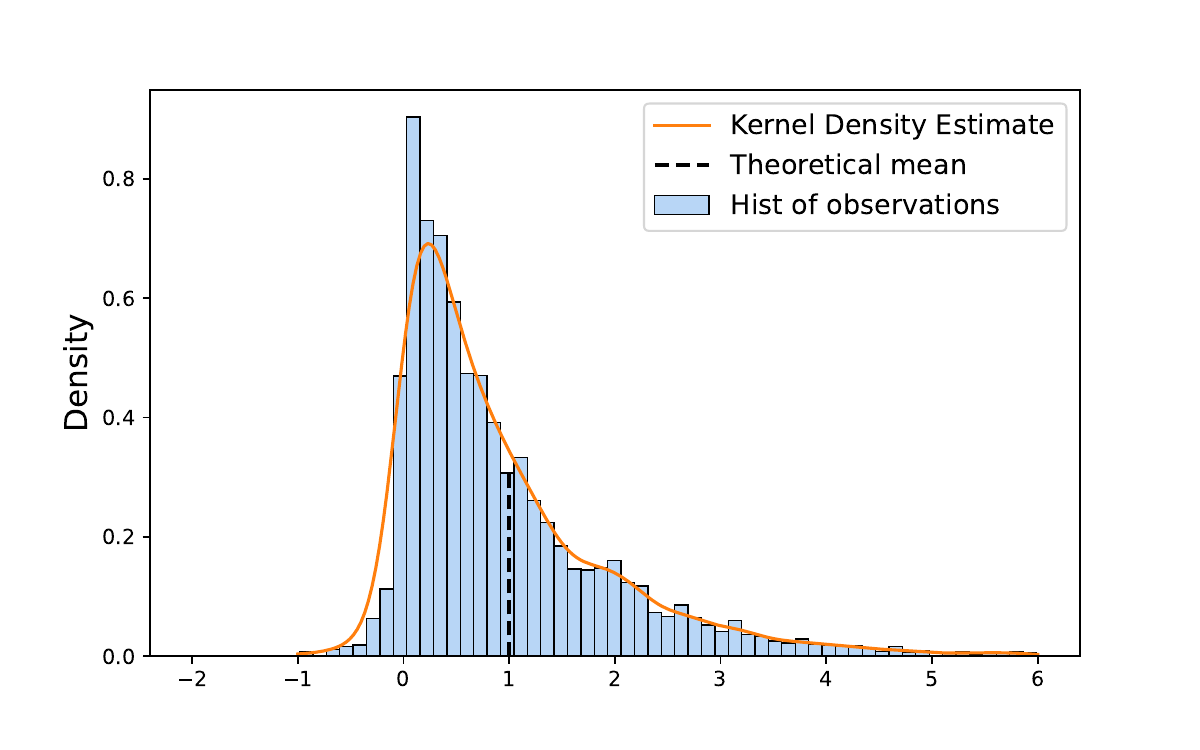}
         \caption{Distribution of $Y_L$ with $L=5$}
         \label{fig:gbm_depth10}
     \end{subfigure}
     \begin{subfigure}[b]{0.49\textwidth}
         \centering
         \includegraphics[width=\textwidth]{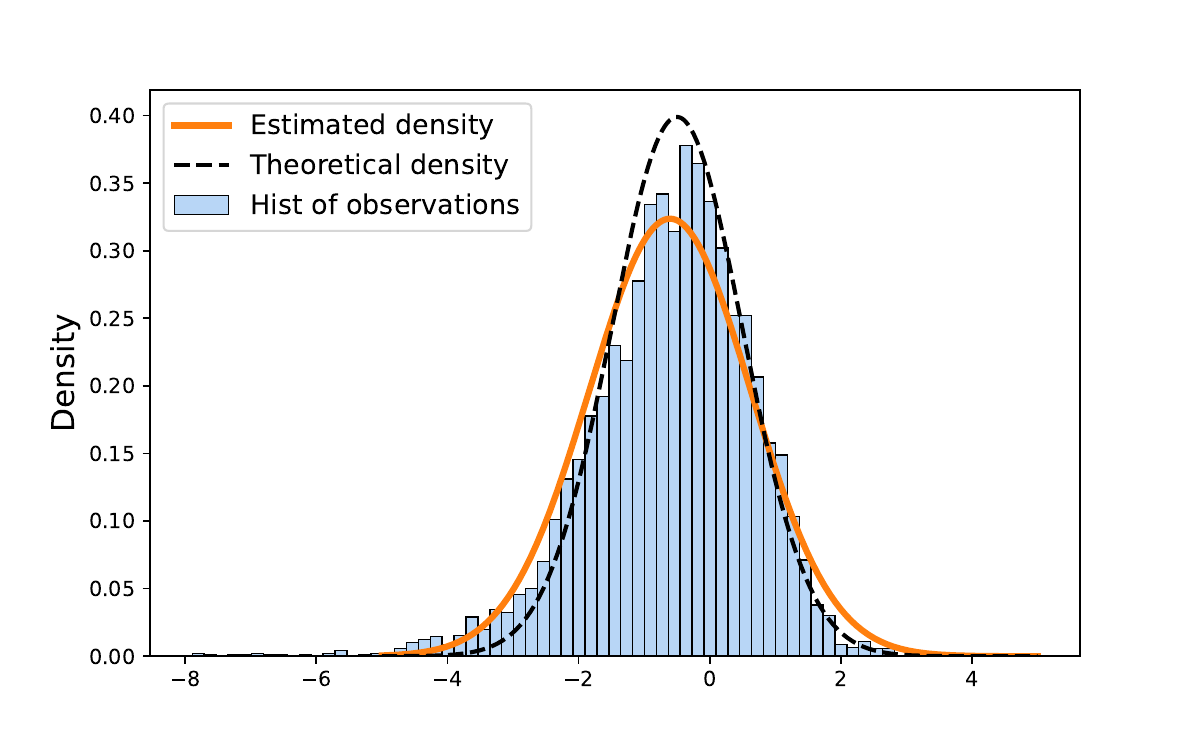}
         \caption{Distribution of $\log(Y_L)$ with $L=10$}
         \label{fig:gbm_depth10}
     \end{subfigure}
     \begin{subfigure}[b]{0.49\textwidth}
         \centering
         \includegraphics[width=\textwidth]{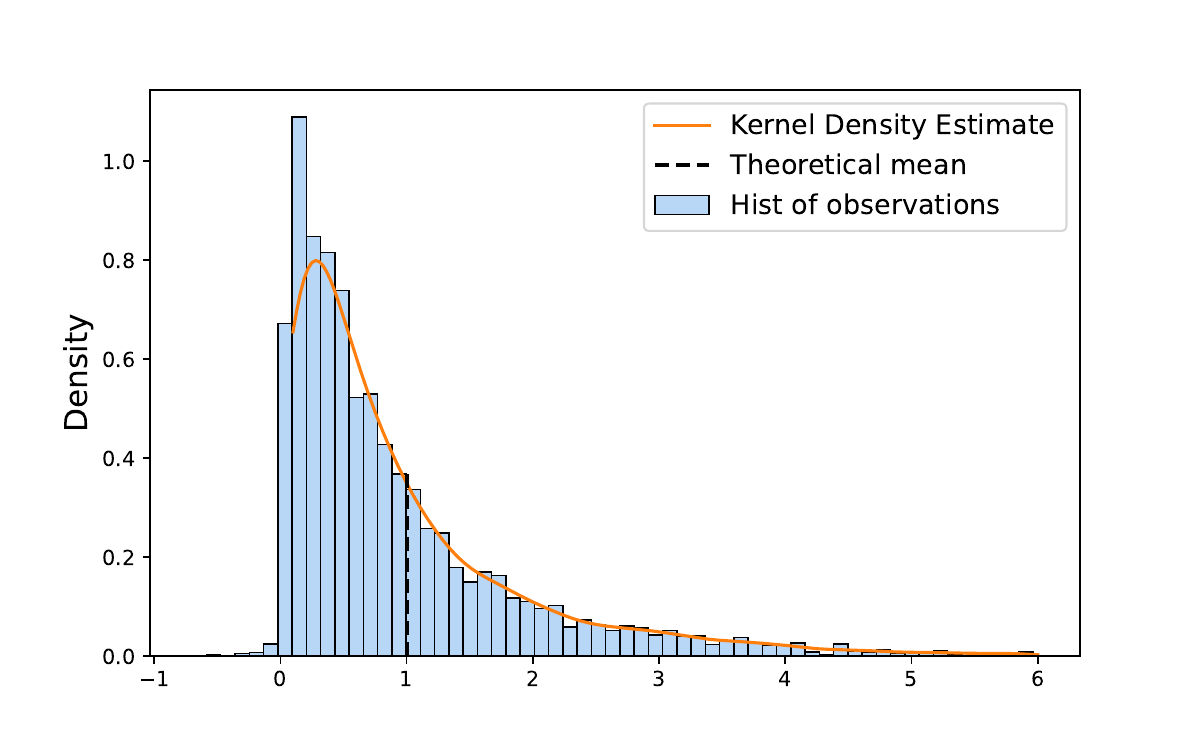}
         \caption{Distribution of $Y_L$ with $L=10$}
         \label{fig:gbm_depth10}
     \end{subfigure}
    \begin{subfigure}[b]{0.49\textwidth}
         \centering
         \includegraphics[width=\textwidth]{figures/gbm_t1_histogram_logx1_depth50.pdf}
         \caption{Distribution of $\log(Y_L)$ with $L=50$}
         \label{fig:gbm_depth10}
     \end{subfigure}
     \begin{subfigure}[b]{0.49\textwidth}
         \centering
         \includegraphics[width=\textwidth]{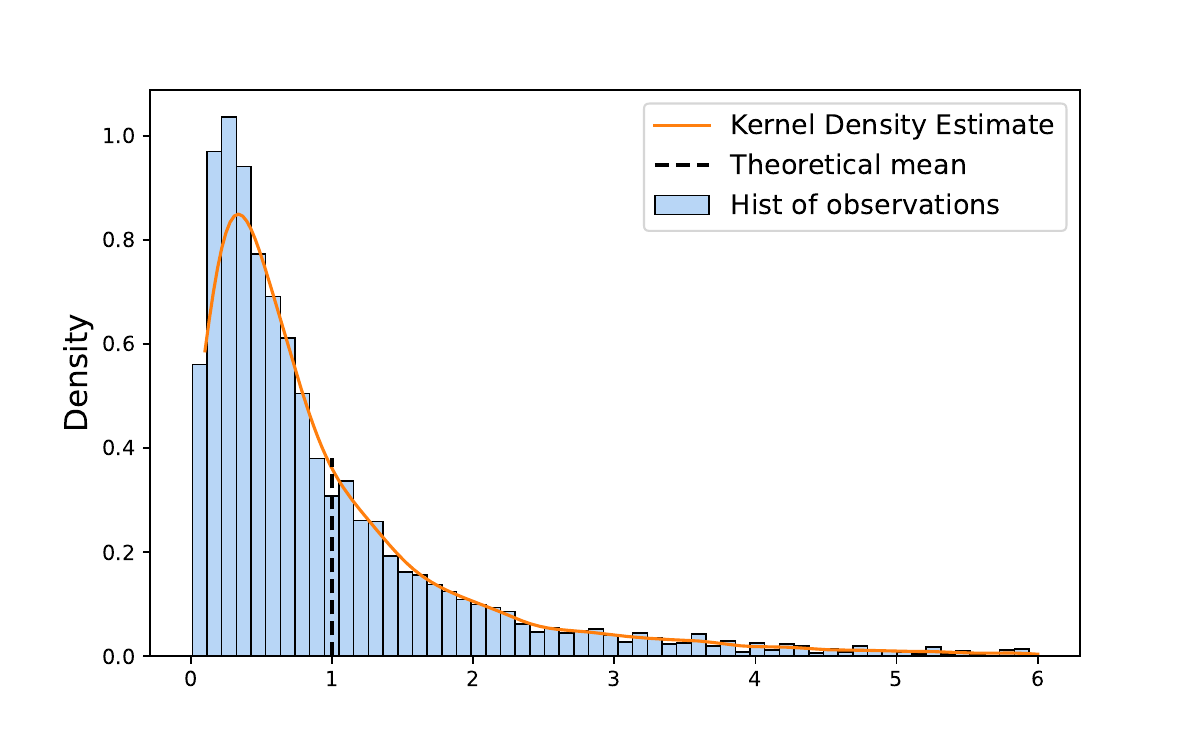}
         \caption{Distribution of $Y_L$ with $L=50$}
         \label{fig:gbm_depth50}
     \end{subfigure}  
    \caption{Empirical verification of \cref{prop:gbm}. \textbf{(a), (c), (e)} Histograms of $\log(Y_L)$ and based on $N=5000$ simulations for depths $L \in \{5, 10, 50\}$ with $Y_0 = 1$. Estimated density (Gaussian kernel estimate) and theoretical density (Gaussian) are  illustrated on the same graphs. 
    \textbf{(b), (d), (f)} Histograms of $Y_L$ based on $N=5000$ simulations for depths $L \in \{5, 10, 50\}$ with $Y_0 = 1$. }
    \label{fig:additional_gbm_figs_1}
\end{figure}

\begin{figure}[H]
    \centering
    \begin{subfigure}[b]{0.49\textwidth}
         \centering
         \includegraphics[width=\textwidth]{figures/gbm_t1_histogram_logx1_depth100.pdf}
         \caption{Distribution of $\log(Y_L)$ with $L=100$}
         \label{fig:gbm_depth5}
     \end{subfigure}
     \hfill
     \begin{subfigure}[b]{0.49\textwidth}
         \centering
         \includegraphics[width=\textwidth]{figures/gbm_t1_histogram_x1_depth100.pdf}
         \caption{Distribution of $Y_L$ with $L=100$}
         \label{fig:gbm_depth10}
     \end{subfigure}
     \begin{subfigure}[b]{0.49\textwidth}
         \centering
         \includegraphics[width=\textwidth]{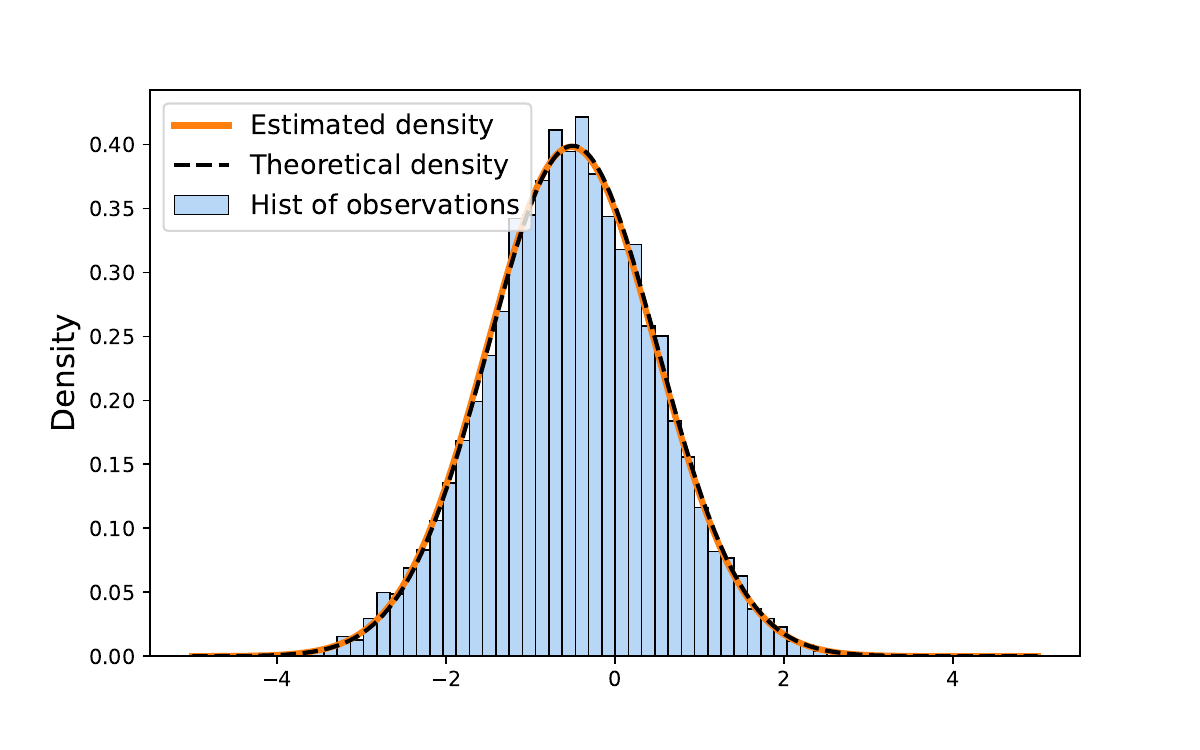}
         \caption{Distribution of $\log(Y_L)$ with $L=200$}
         \label{fig:gbm_depth10}
     \end{subfigure}
     \begin{subfigure}[b]{0.49\textwidth}
         \centering
         \includegraphics[width=\textwidth]{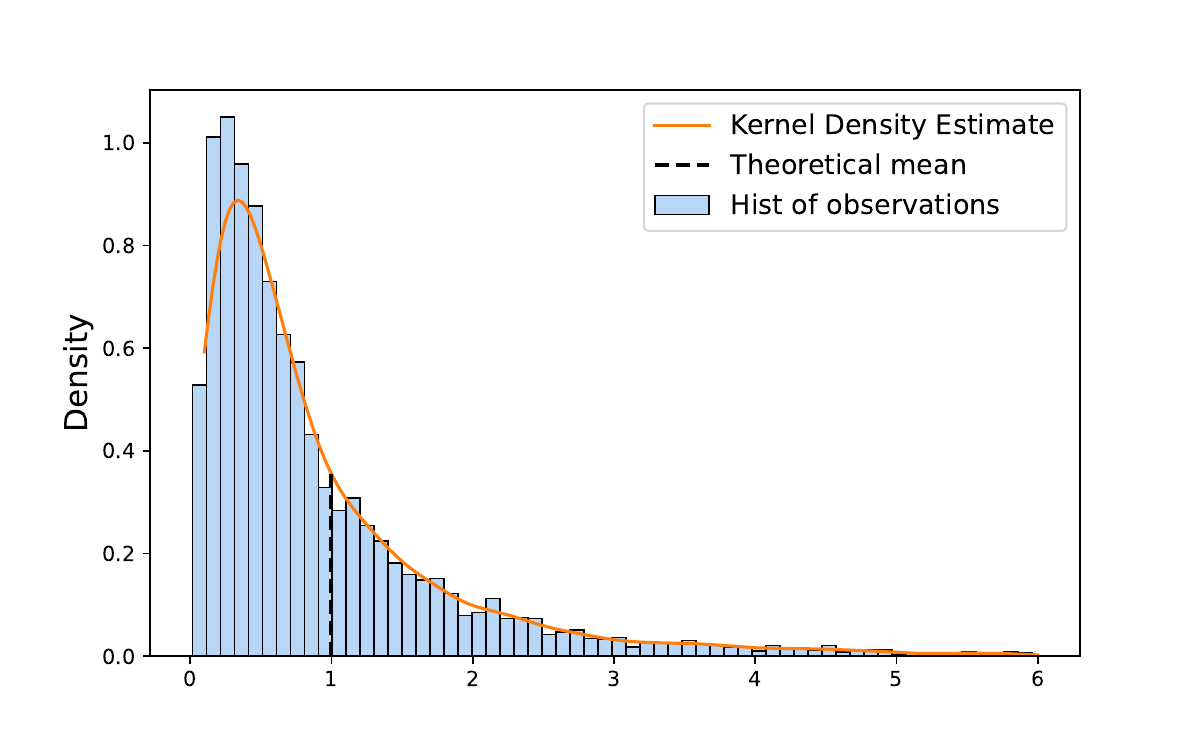}
         \caption{Distribution of $Y_L$ with $L=200$}
         \label{fig:gbm_depth10}
     \end{subfigure}

    \caption{Empirical verification of \cref{prop:gbm}. \textbf{(a), (c)} Histograms of $\log(Y_L)$ and based on $N=5000$ simulations for depths $L \in \{ 100, 200\}$ with $Y_0 = 1$. Estimated density (Gaussian kernel estimate) and theoretical density (Gaussian) are  illustrated on the same graphs. 
    \textbf{(b), (d)} Histograms of $Y_L$ based on $N=5000$ simulations for depths $L \in \{100, 200\}$ with $Y_0 = 1$. }
    \label{fig:additional_gbm_figs_2}
\end{figure}

\newpage

\subsection{Ornstein-Uhlenbeck process}
Additional histograms of $Y_L$ and $g(Y_l)$ (\cref{prop:ou_nn}) are shown in \cref{fig:additional_gbm_figs_1} and \cref{fig:additional_gbm_figs_2}.

\begin{figure}[H]
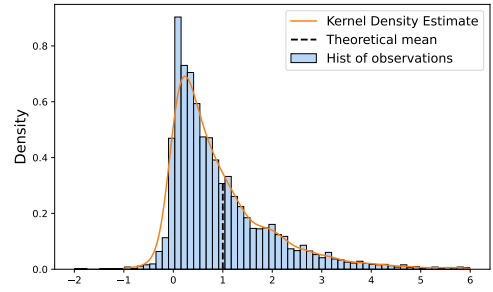
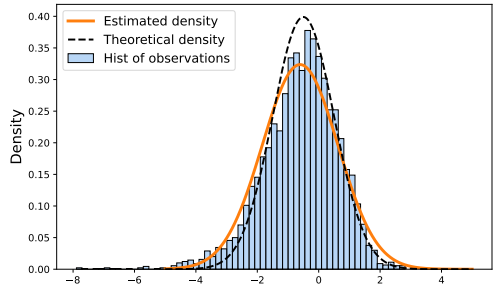
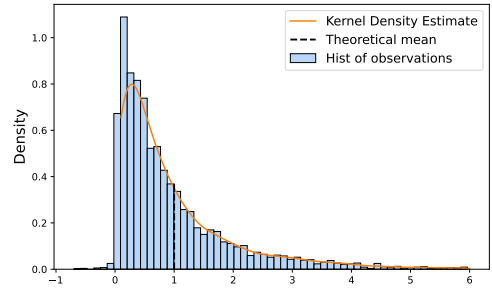
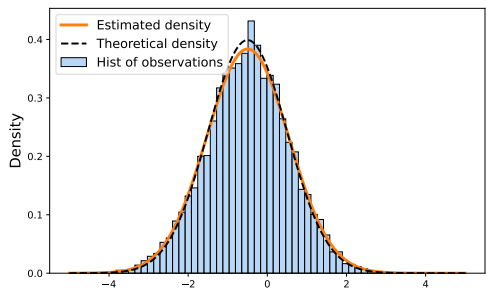
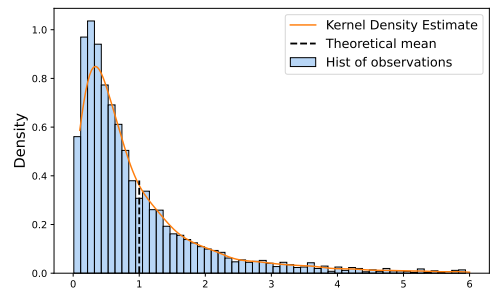

    \centering
    \begin{subfigure}[b]{0.49\textwidth}
         \centering
         \includegraphics[width=\textwidth]{figures/gbm_t1_histogram_logx1_depth5.pdf}
         \caption{Distribution of $\log(Y_L)$ with $L=5$}
         \label{fig:gbm_depth5}
     \end{subfigure}
     \hfill
     \begin{subfigure}[b]{0.49\textwidth}
         \centering
         \includegraphics[width=\textwidth]{figures/gbm_t1_histogram_x1_depth5.pdf}
         \caption{Distribution of $Y_L$ with $L=5$}
         \label{fig:gbm_depth10}
     \end{subfigure}
     \begin{subfigure}[b]{0.49\textwidth}
         \centering
         \includegraphics[width=\textwidth]{figures/gbm_t1_histogram_logx1_depth10.pdf}
         \caption{Distribution of $\log(Y_L)$ with $L=10$}
         \label{fig:gbm_depth10}
     \end{subfigure}
     \begin{subfigure}[b]{0.49\textwidth}
         \centering
         \includegraphics[width=\textwidth]{figures/gbm_t1_histogram_x1_depth10.pdf}
         \caption{Distribution of $Y_L$ with $L=10$}
         \label{fig:gbm_depth10}
     \end{subfigure}
    \begin{subfigure}[b]{0.49\textwidth}
         \centering
         \includegraphics[width=\textwidth]{figures/gbm_t1_histogram_logx1_depth50.pdf}
         \caption{Distribution of $\log(Y_L)$ with $L=50$}
         \label{fig:gbm_depth10}
     \end{subfigure}
     \begin{subfigure}[b]{0.49\textwidth}
         \centering
         \includegraphics[width=\textwidth]{figures/gbm_t1_histogram_x1_depth50.pdf}
         \caption{Distribution of $Y_L$ with $L=50$}
         \label{fig:gbm_depth50}
     \end{subfigure}  
    \caption{Empirical verification of \cref{prop:gbm}. \textbf{(a), (c), (e)} Histograms of $\log(Y_L)$ and based on $N=5000$ simulations for depths $L \in \{5, 10, 50\}$ with $Y_0 = 1$. Estimated density (Gaussian kernel estimate) and theoretical density (Gaussian) are  illustrated on the same graphs. 
    \textbf{(b), (d), (f)} Histograms of $Y_L$ based on $N=5000$ simulations for depths $L \in \{5, 10, 50\}$ with $Y_0 = 1$. }
    \label{fig:additional_ou_figs_1}
\end{figure}

\begin{figure}[H]
    \centering
    \begin{subfigure}[b]{0.49\textwidth}
         \centering
         \includegraphics[width=\textwidth]{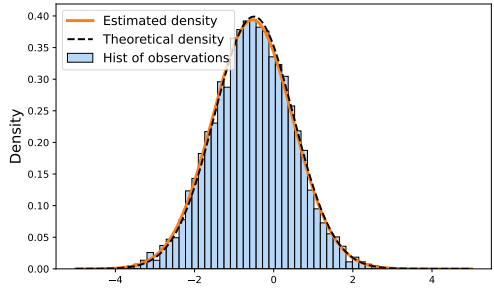}
         \caption{Distribution of $\log(Y_L)$ with $L=100$}
         \label{fig:gbm_depth5}
     \end{subfigure}
     \hfill
     \begin{subfigure}[b]{0.49\textwidth}
         \centering
         \includegraphics[width=\textwidth]{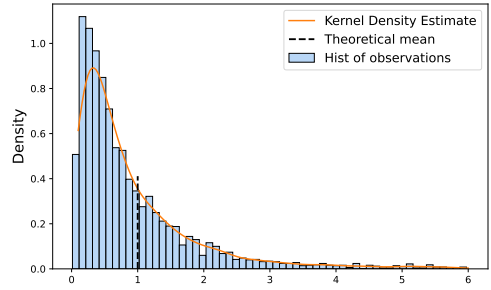}
         \caption{Distribution of $Y_L$ with $L=100$}
         \label{fig:gbm_depth10}
     \end{subfigure}
     \begin{subfigure}[b]{0.49\textwidth}
         \centering
         \includegraphics[width=\textwidth]{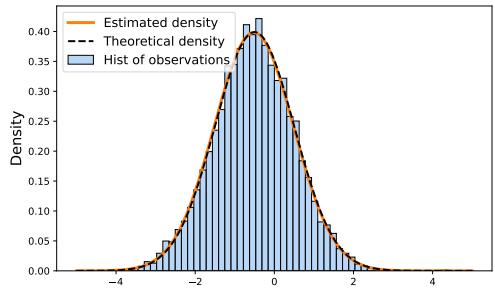}
         \caption{Distribution of $\log(Y_L)$ with $L=200$}
         \label{fig:gbm_depth10}
     \end{subfigure}
     \begin{subfigure}[b]{0.49\textwidth}
         \centering
         \includegraphics[width=\textwidth]{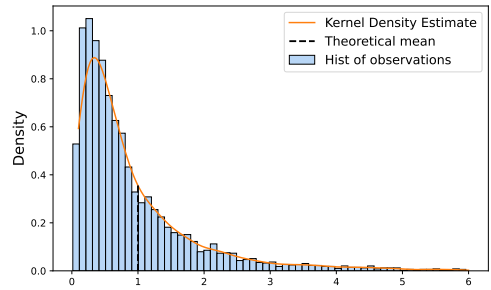}
         \caption{Distribution of $Y_L$ with $L=200$}
         \label{fig:gbm_depth10}
     \end{subfigure}

    \caption{Empirical verification of \cref{prop:gbm}. \textbf{(a), (c)} Histograms of $\log(Y_L)$ and based on $N=5000$ simulations for depths $L \in \{ 100, 200\}$ with $Y_0 = 1$. Estimated density (Gaussian kernel estimate) and theoretical density (Gaussian) are  illustrated on the same graphs. 
    \textbf{(b), (d)} Histograms of $Y_L$ based on $N=5000$ simulations for depths $L \in \{100, 200\}$ with $Y_0 = 1$. }
    \label{fig:additional_ou_figs_2}
\end{figure}

\newpage

\subsection{Histograms of non-scaled log-norm of post-activations}
In \cref{fig:quasi_gaussian_behaviour_non_scaled}, we show the histogram of $\log(\|\phi(Y_L)\|/\|\phi(Y_0)\|)$ based on $N=5000$ simulations. We observe that as the width $n$ increases, the Gaussian approximate is no longer accurate, which is due to the fact that $\|\phi(Y_L)\|/\|\phi(Y_0)\|$ converges to a deterministic value (\cref{thm:infinite_width}).
\begin{figure}[H]
     \centering
     \begin{subfigure}[b]{0.4\textwidth}
         \centering
         \includegraphics[width=\textwidth]{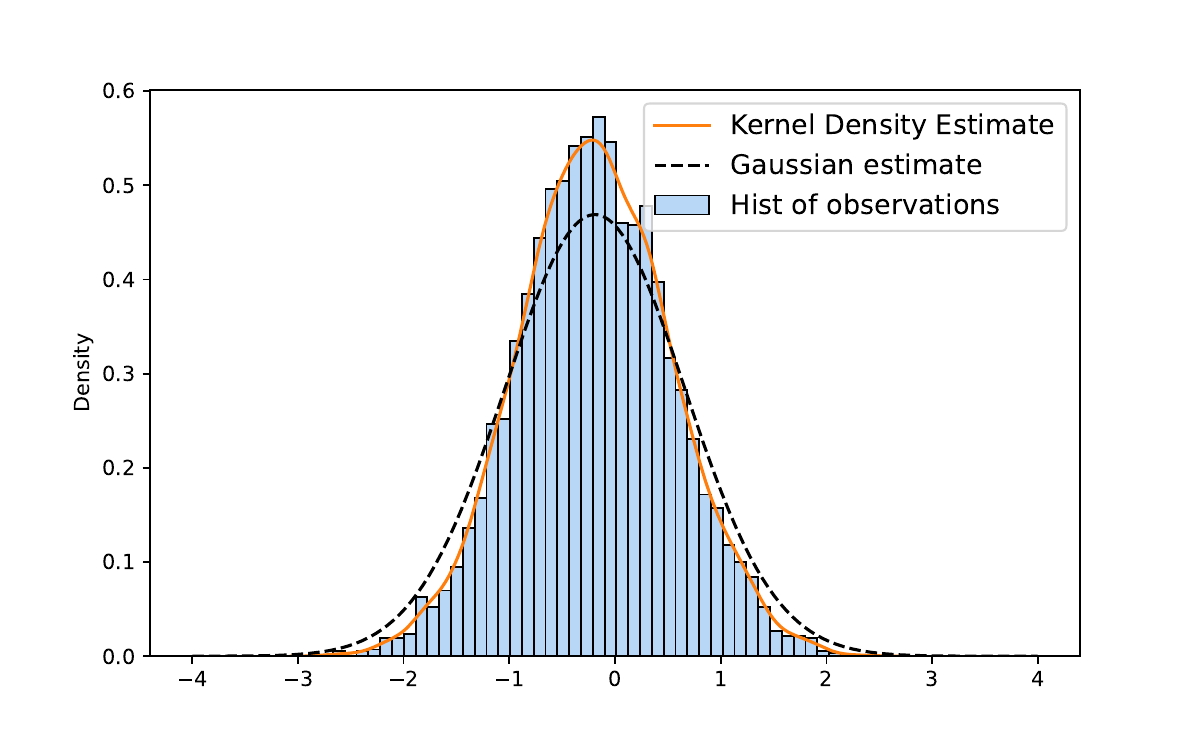}
         \caption{$n=2$}
     \end{subfigure}
     \begin{subfigure}[b]{0.4\textwidth}
         \centering
         \includegraphics[width=\textwidth]{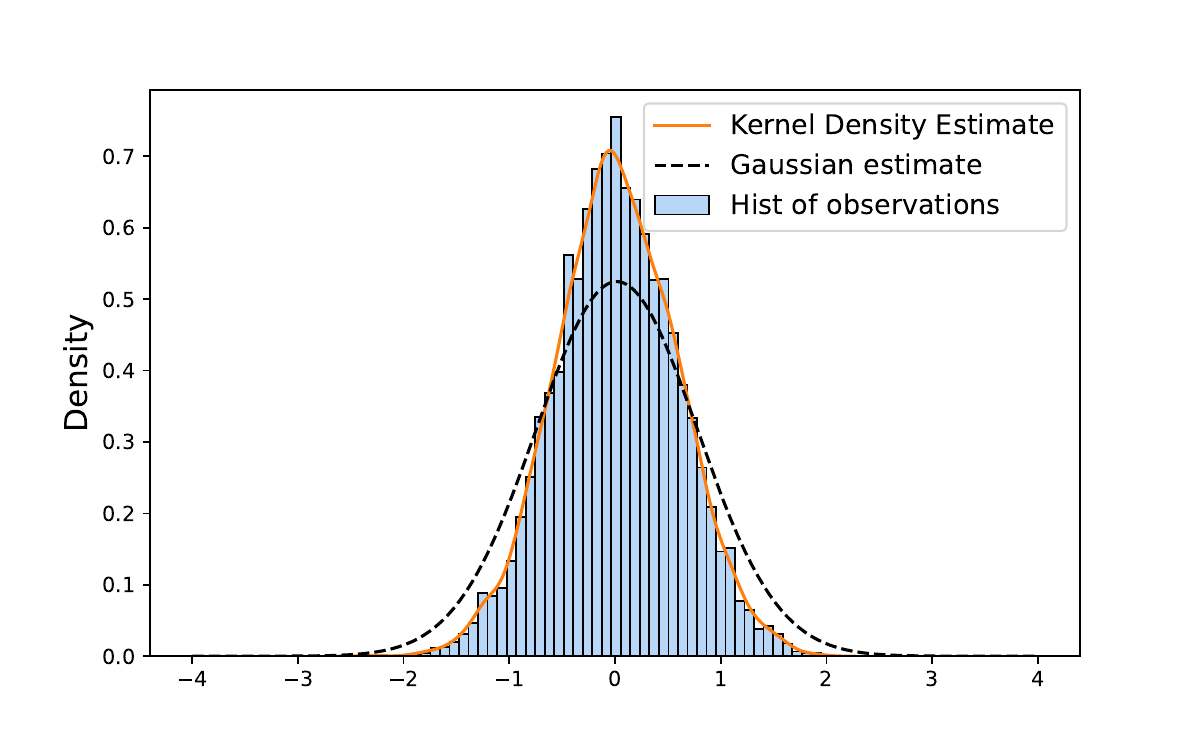}
         \caption{$n=3$}
     \end{subfigure}
     \begin{subfigure}[b]{0.4\textwidth}
         \centering
         \includegraphics[width=\textwidth]{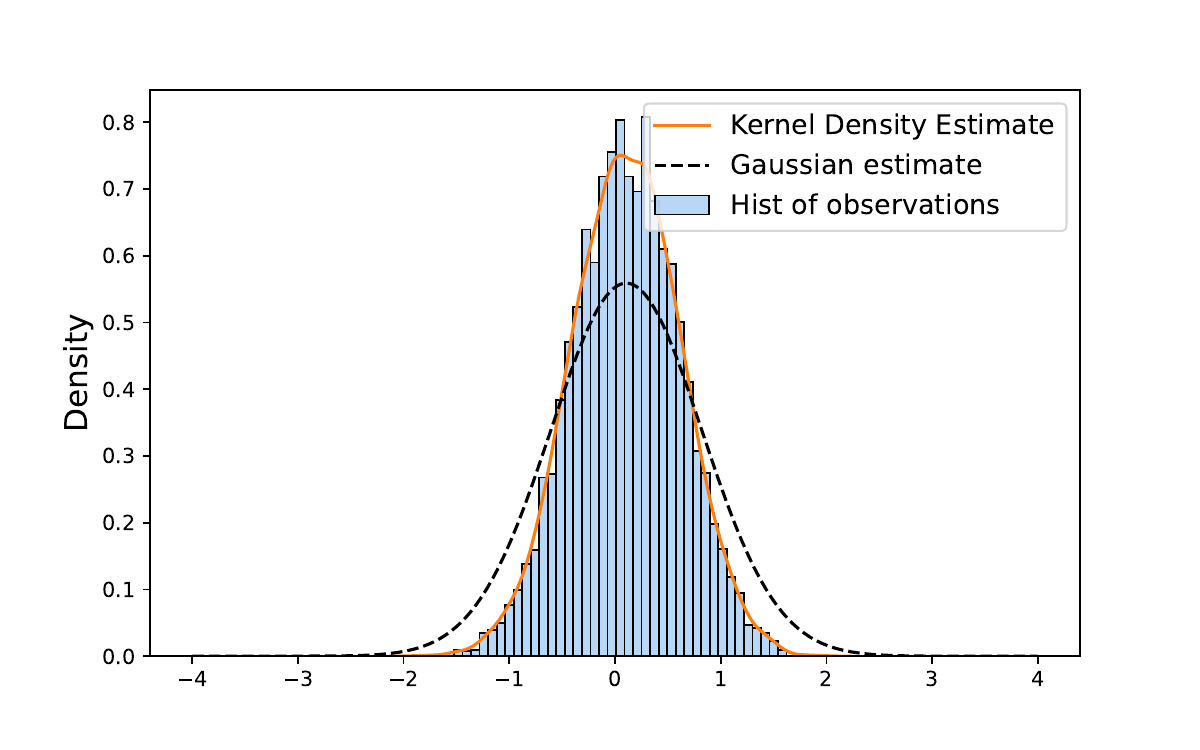}
         \caption{$n=4$}
     \end{subfigure}
     \begin{subfigure}[b]{0.4\textwidth}
         \centering
         \includegraphics[width=\textwidth]{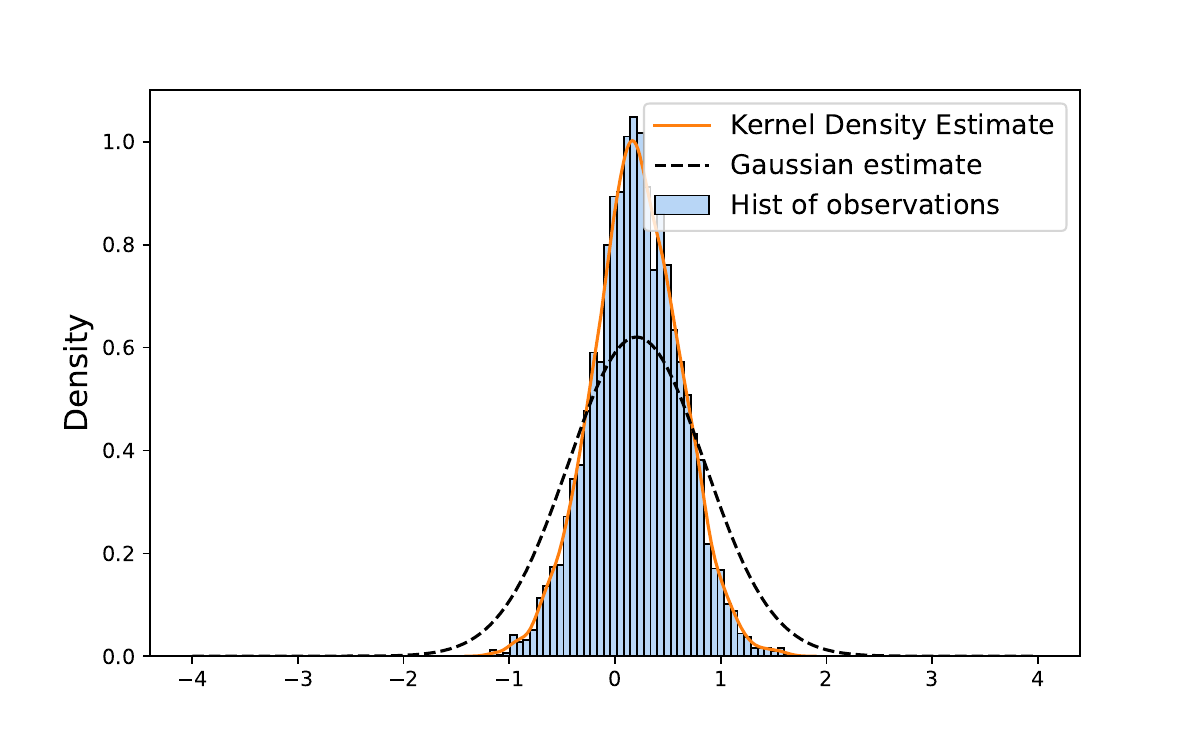}
         \caption{$n=6$}
     \end{subfigure}
     \begin{subfigure}[b]{0.4\textwidth}
         \centering
         \includegraphics[width=\textwidth]{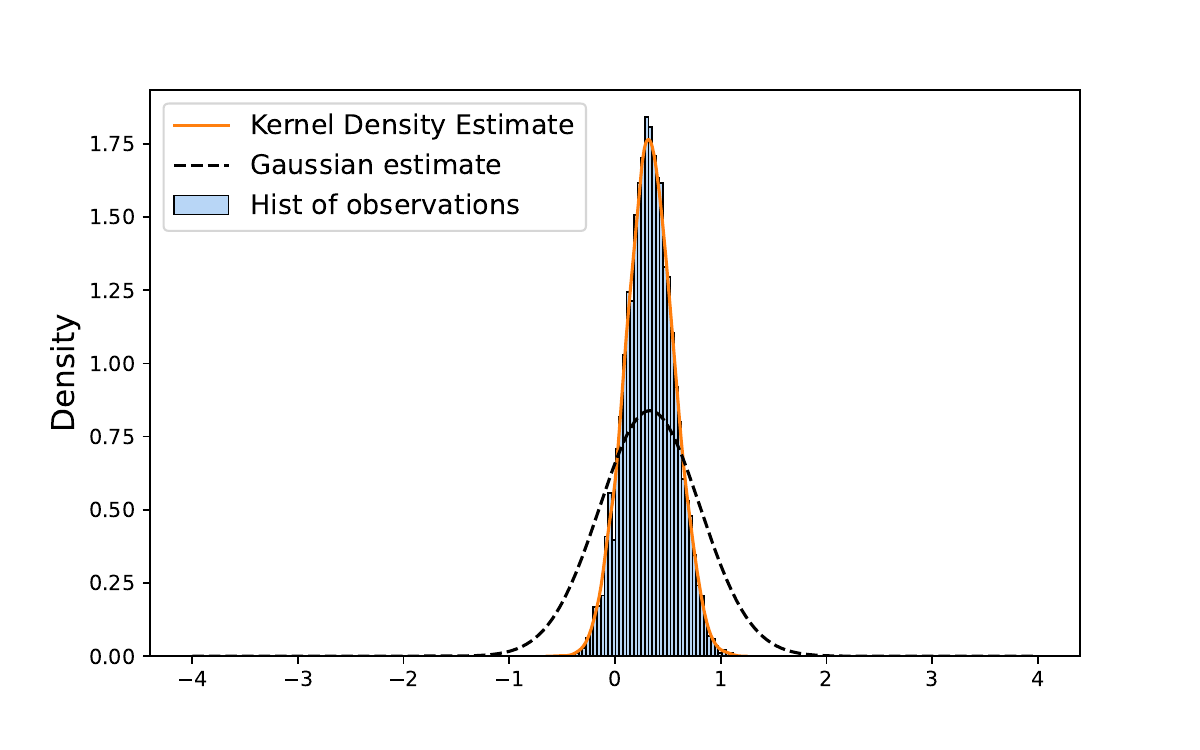}
         \caption{$n=20$}
     \end{subfigure}
     \begin{subfigure}[b]{0.4\textwidth}
         \centering
         \includegraphics[width=\textwidth]{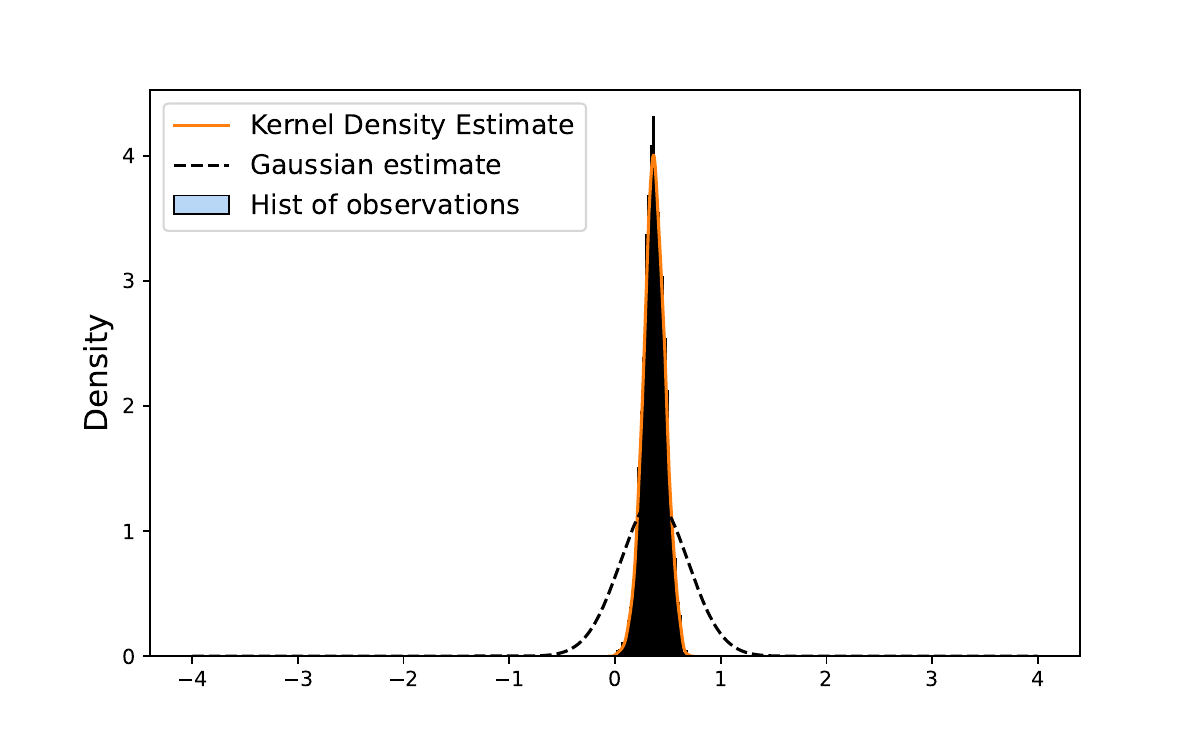}
         \caption{$n=100$}
     \end{subfigure}
        \caption{Histogram of  $ \log(\|\phi(Y_{L})\|/ \|\phi(Y_{0})\|)$ for depth $L = 100$ and different widths $n \in \{2,3,4,6,20, 100\}$. Gaussian density estimate and (Gaussian) kernel density estimate are shown. As the width increases, we observe a deterioration of the match between the best Gaussian estimate and the empirical distribution. This is due to the fact that the norm of the post-activations concentrates around a deterministic value when $n$ goes to infinity (\cref{thm:infinite_width}).}
        \label{fig:quasi_gaussian_behaviour_non_scaled}
\end{figure}

\newpage

\subsection{Evolution of $\sqrt{n}\log(\|\phi(Y_l)\|/ \|\phi(Y_0)\|)$.}

In \cref{fig:hist_series_width2_depth100_scaled}, \cref{fig:hist_series_width3_depth100_scaled}, \cref{fig:hist_series_width20_depth100_scaled}, and \cref{fig:hist_series_width100_depth100_scaled}, we show the histograms of $\sqrt{n} \log(\|\phi(Y_l)\|/\|\phi(Y_0)\|)$ for depth $L=100$, hidden layers $l \in \{10, 30, 40, 60, 70, 90\}$, and widths $n \in \{2, 3, 20, 100\}$. We observe that Gaussian distribution fits better the last layers. This was expected since the limiting distribution (Quasi-GBM) given in \cref{thm:norm_post_act} is only valid for layer indices $\lfloor t L\rfloor$ when $L$ goes to infinity. Thus, for small $l$, it should be expected that the Gaussian distribution would not be a good approximation.
\begin{figure}[H]
    \includegraphics[width=\linewidth]{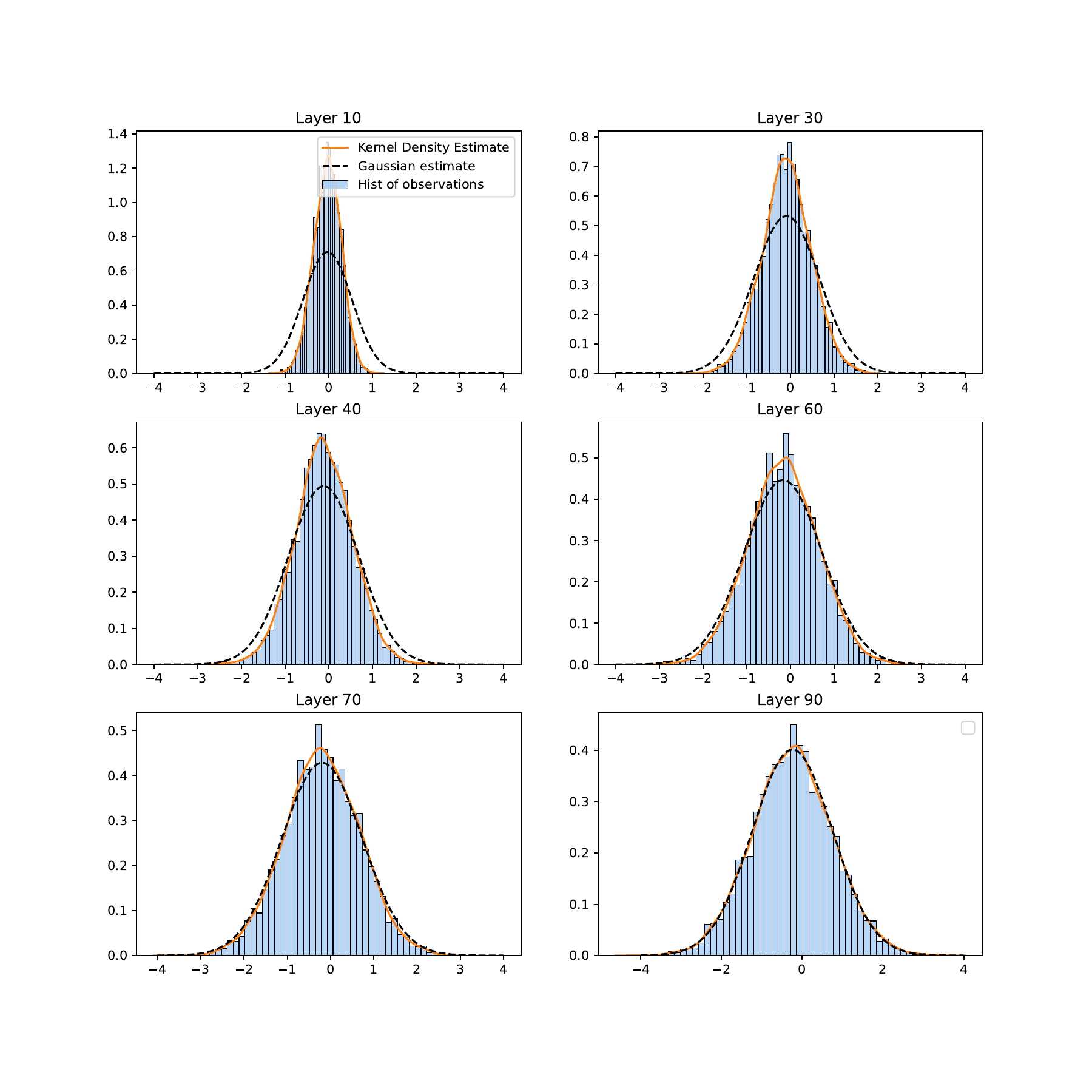}
    \caption{Distribution of $\sqrt{n}\log(\|\phi(Y_l)\|/ \|\phi(Y_0)\|)$ (\cref{eq:resnet}) for different layer indices, with depth $L=100$ and width $n = 2$}
    \label{fig:hist_series_width2_depth100_scaled}
\end{figure}

\begin{figure}[H]
    \centering\includegraphics[width=\linewidth]{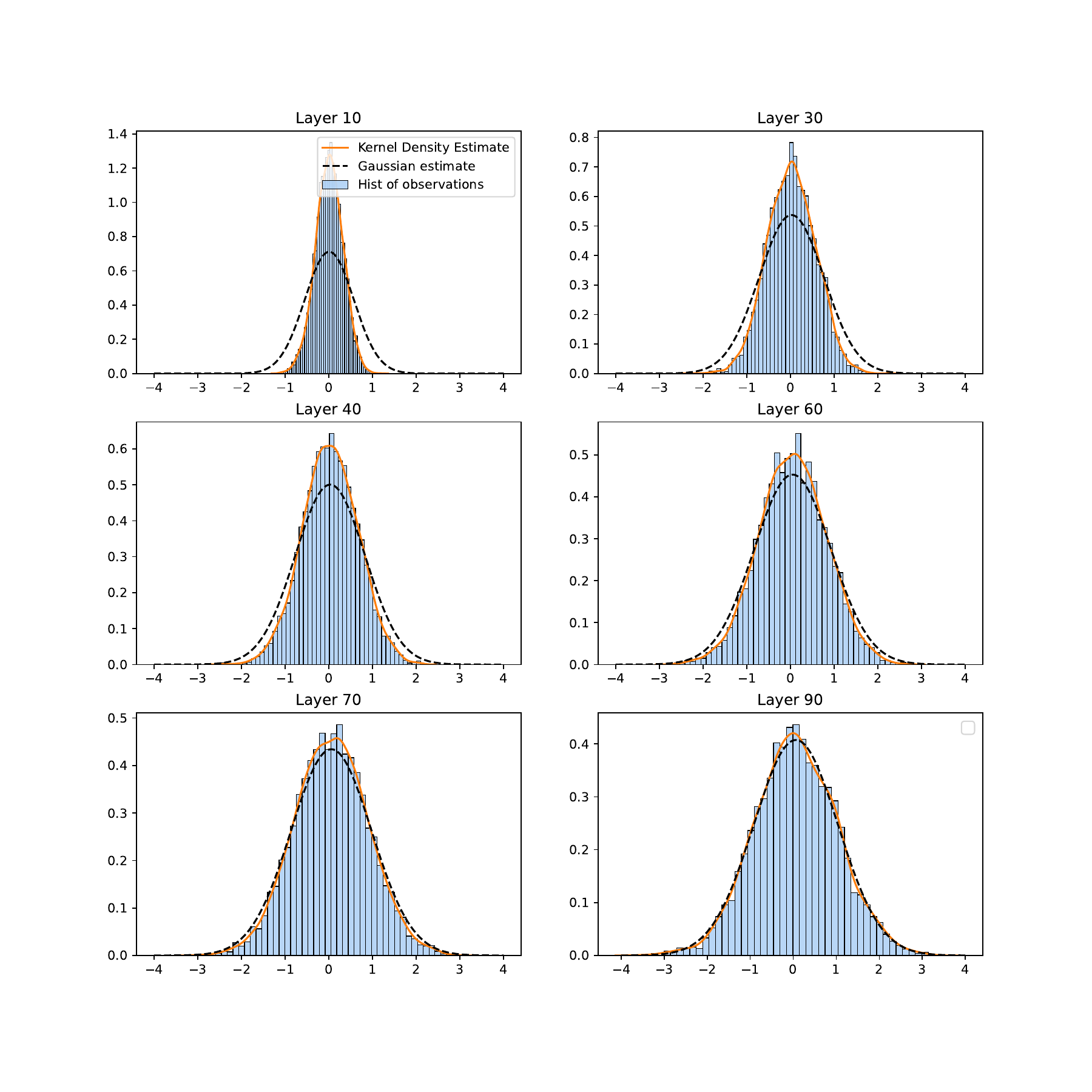}
    \caption{Distribution of $\sqrt{n}\log(\|\phi(Y_l)\|/ \|\phi(Y_0)\|)$ (\cref{eq:resnet}) for different layer indices, with depth $L=100$ and width $n = 3$}
    \label{fig:hist_series_width3_depth100_scaled}
\end{figure}

\begin{figure}[H]
    \centering
    \includegraphics[width=\linewidth]{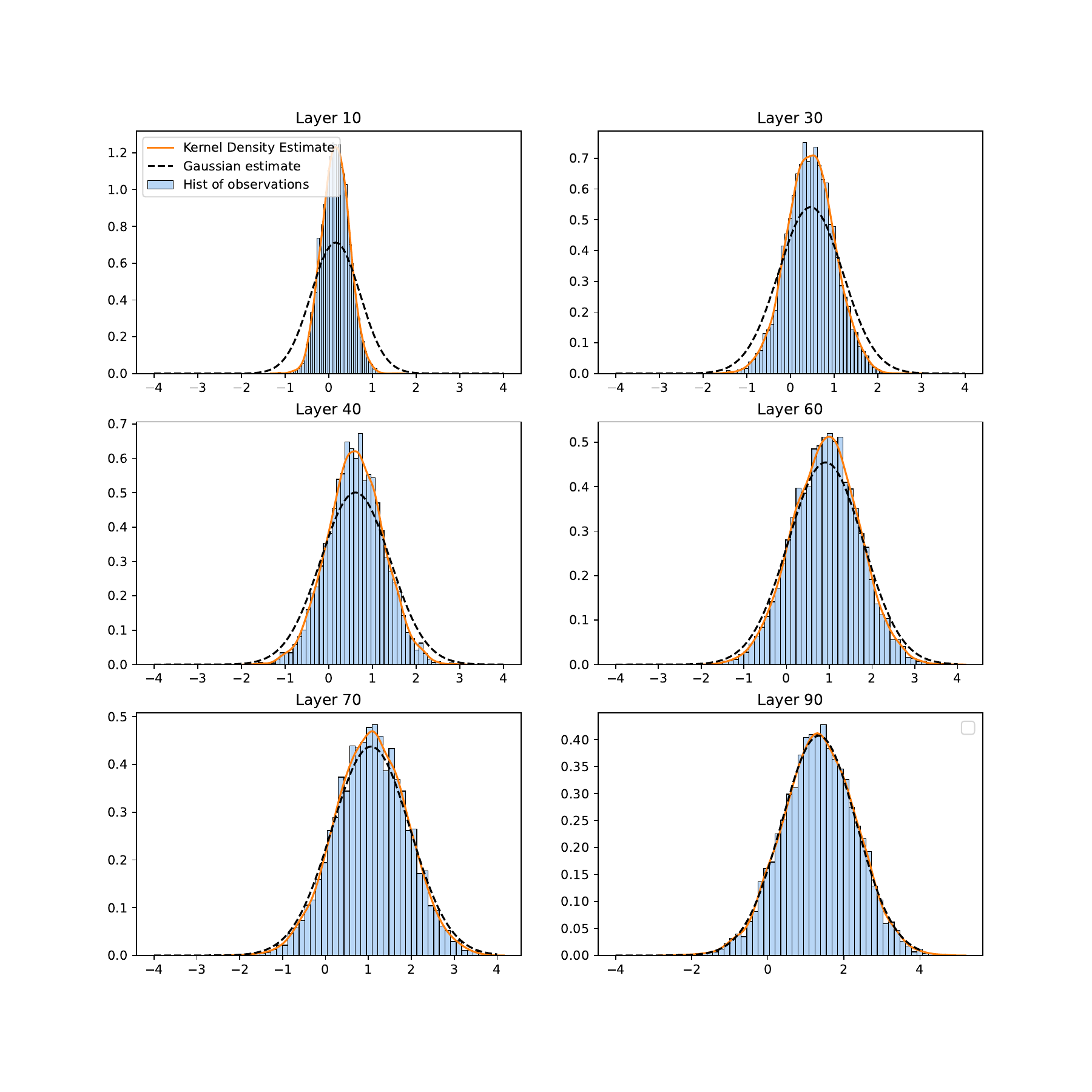}
    \caption{Distribution of $\sqrt{n}\log(\|\phi(Y_l)\|/ \|\phi(Y_0)\|)$ (\cref{eq:resnet}) for different layer indices, with depth $L=100$ and width $n = 20$}
    \label{fig:hist_series_width20_depth100_scaled}
\end{figure}

\begin{figure}[H]
    \centering
    \includegraphics[width=\linewidth]{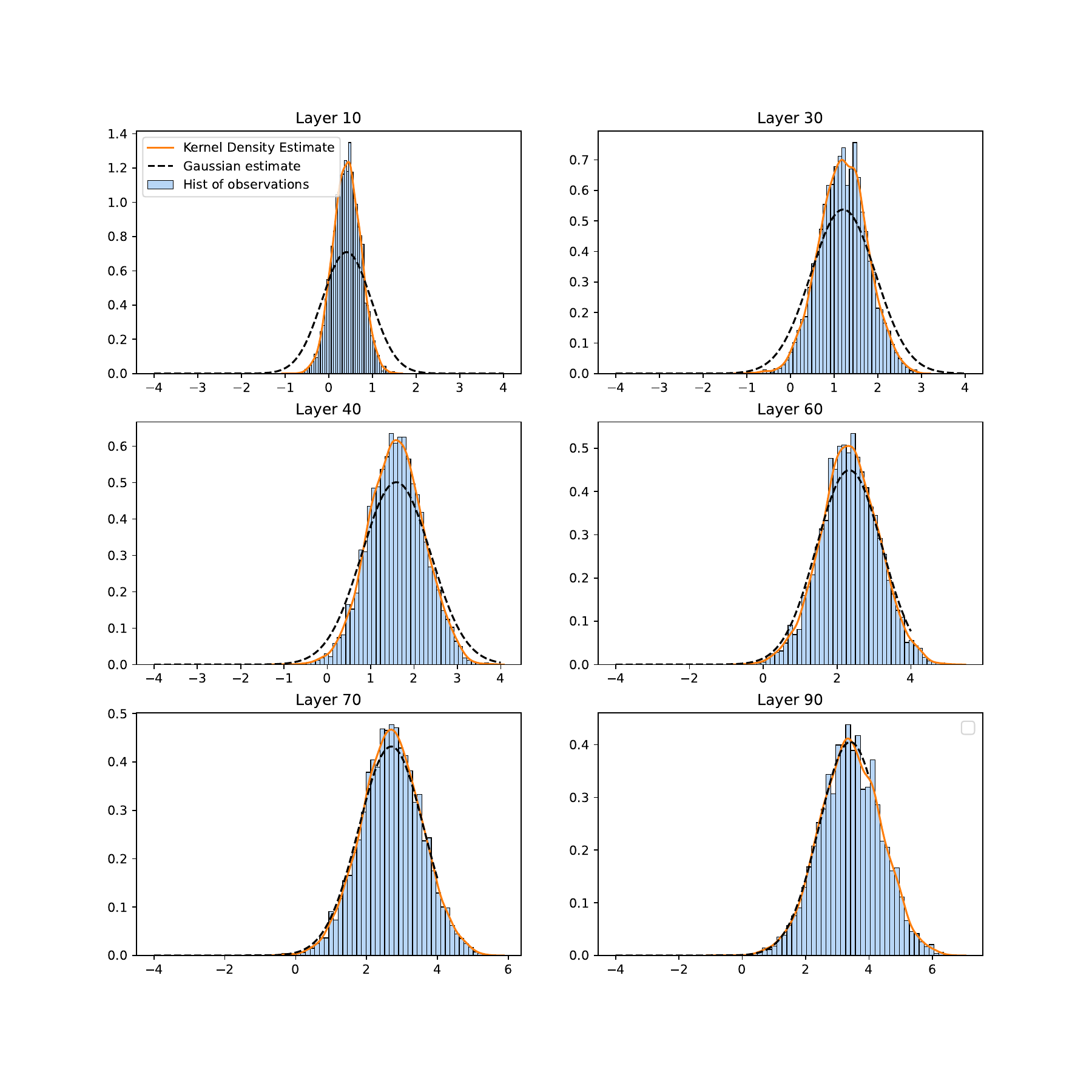}
    \caption{Distribution of $\sqrt{n}\log(\|\phi(Y_l)\|/ \|\phi(Y_0)\|)$ (\cref{eq:resnet}) for different layer indices, with depth $L=100$ and width $n = 100$}
    \label{fig:hist_series_width100_depth100_scaled}
\end{figure}

\newpage
\subsection{Evolution of $\log(\|\phi(Y_l)\|/ \|\phi(Y_0)\|)$ (non-scaled).}\label{sec:non_scaled_log_ratio}
In \cref{fig:hist_series_width2_depth100_ns}, \cref{fig:hist_series_width3_depth100_ns}, \cref{fig:hist_series_width20_depth100_ns}, and \cref{fig:hist_series_width100_depth100_ns}, we show the non-scaled versions of the histograms from the previous section. We observe that the histogram concentrates around a single value (the distribution converges to a Dirac mass) as $n$ increases. This is a result of the asymptotic behaviour of the ResNet in the infinite-depth-then-infinite-width limit as shown in \cref{thm:infinite_width}.
\begin{figure}[H]
    \includegraphics[width=\linewidth]{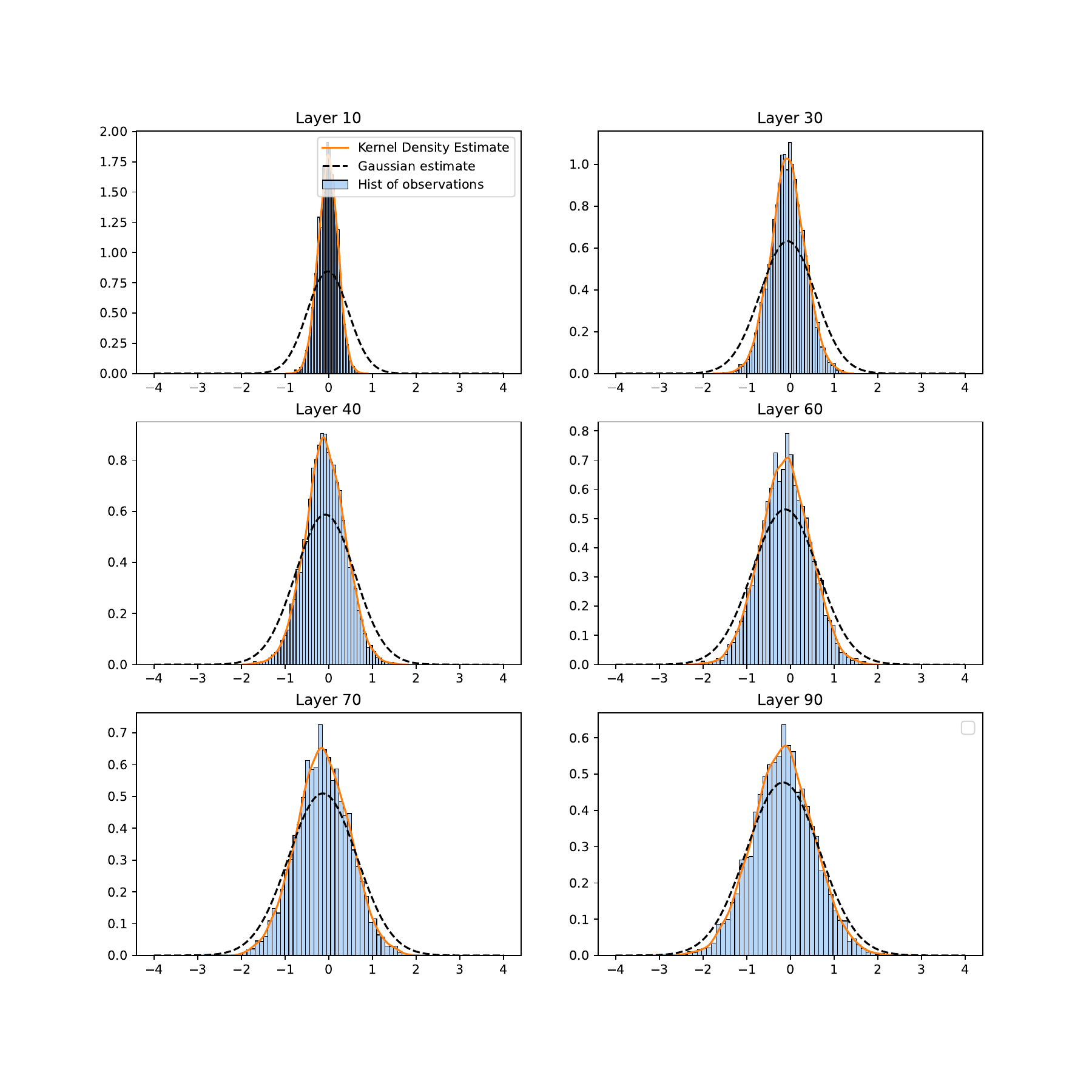}
    \caption{Distribution of $\log(\|\phi(Y_l)\|/ \|\phi(Y_0)\|)$ (\cref{eq:resnet}) for different layer indices, with depth $L=100$ and width $n = 2$}
    \label{fig:hist_series_width2_depth100_ns}
\end{figure}

\begin{figure}[H]
    \centering\includegraphics[width=\linewidth]{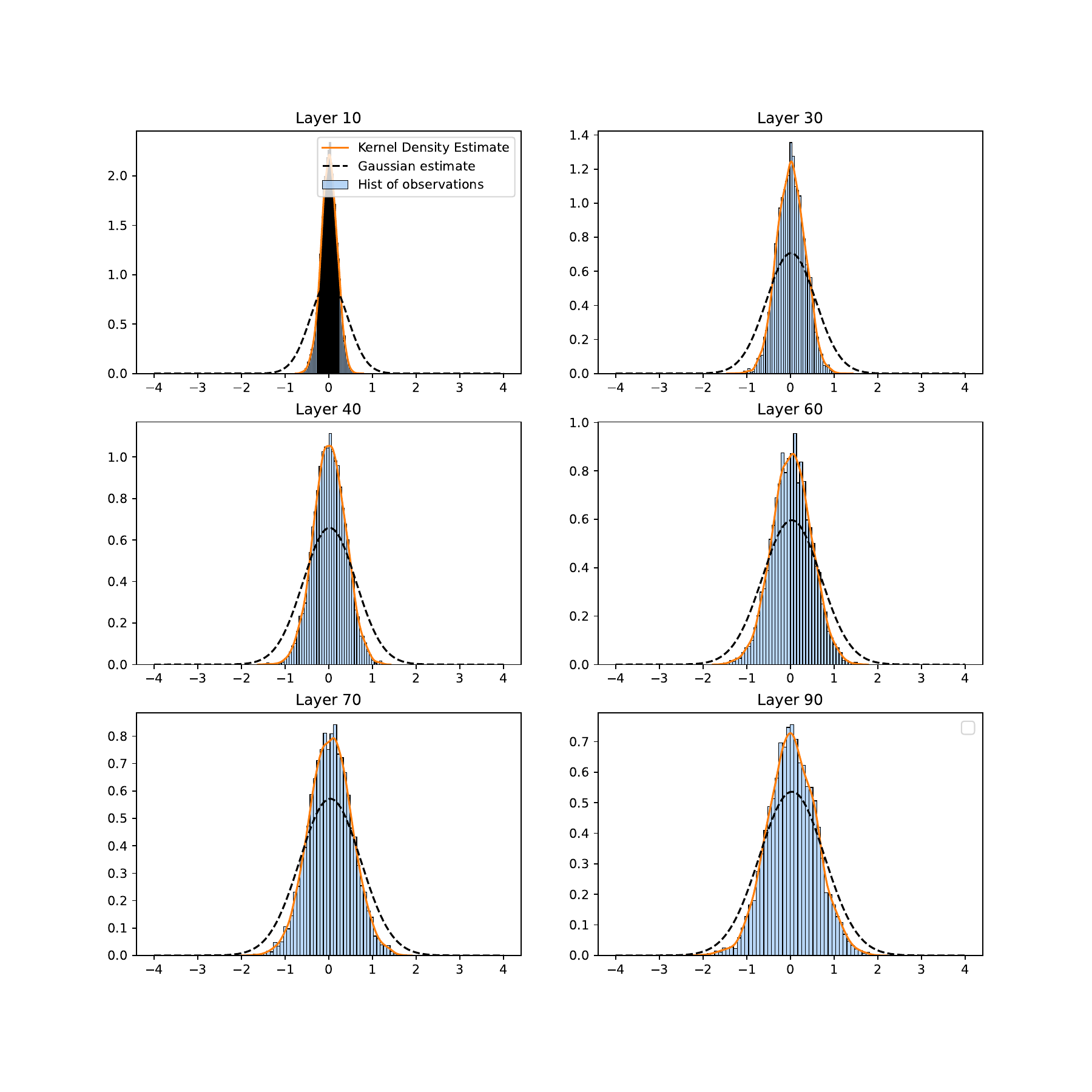}
    \caption{Distribution of $\log(\|\phi(Y_l)\|/ \|\phi(Y_0)\|)$ (\cref{eq:resnet}) for different layer indices, with depth $L=100$ and width $n = 3$}
    \label{fig:hist_series_width3_depth100_ns}
\end{figure}

\begin{figure}[H]
    \centering
    \includegraphics[width=\linewidth]{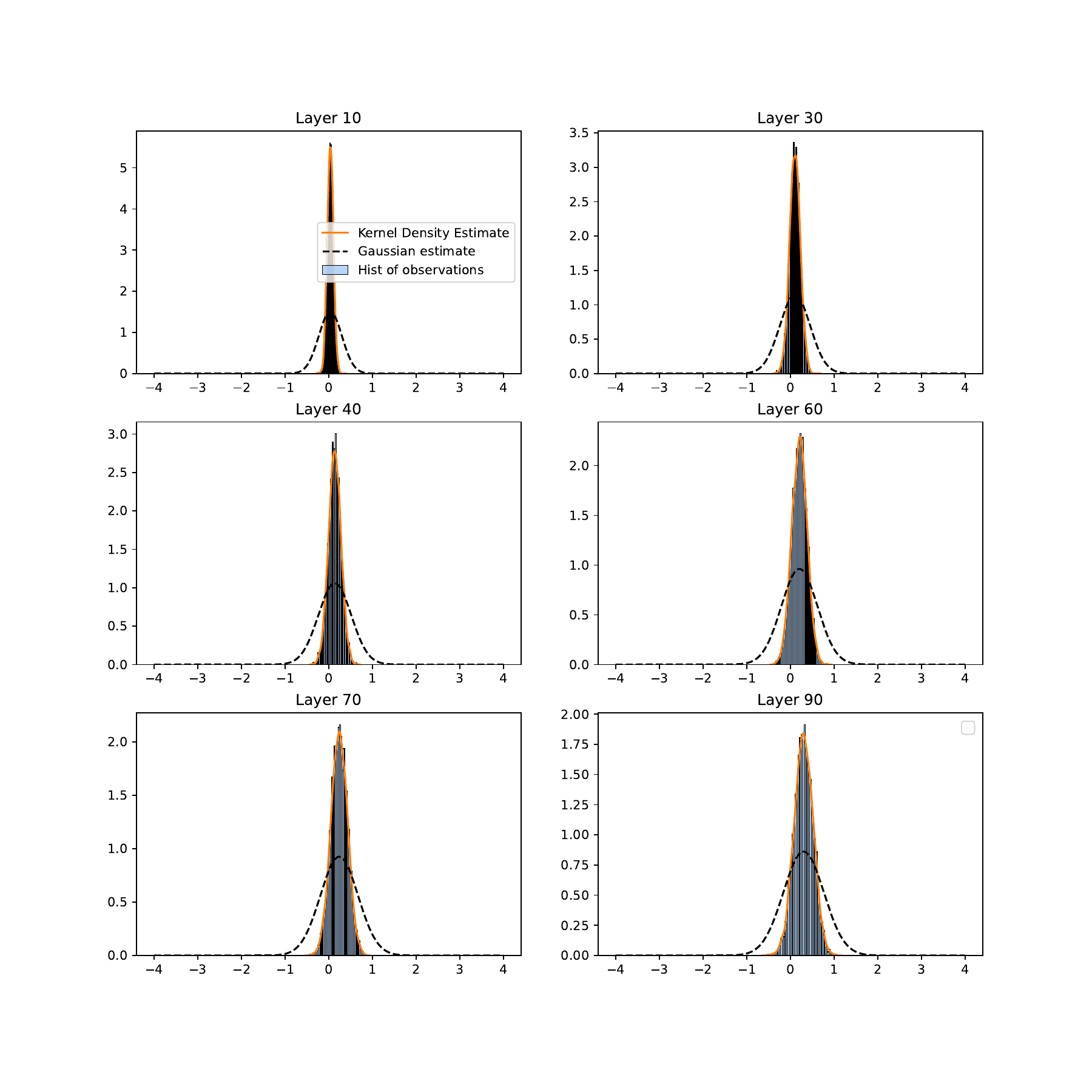}
    \caption{Distribution of $\log(\|\phi(Y_l)\|/ \|\phi(Y_0)\|)$ (\cref{eq:resnet}) for different layer indices, with depth $L=100$ and width $n = 20$}
    \label{fig:hist_series_width20_depth100_ns}
\end{figure}

\begin{figure}[H]
    \centering
    \includegraphics[width=\linewidth]{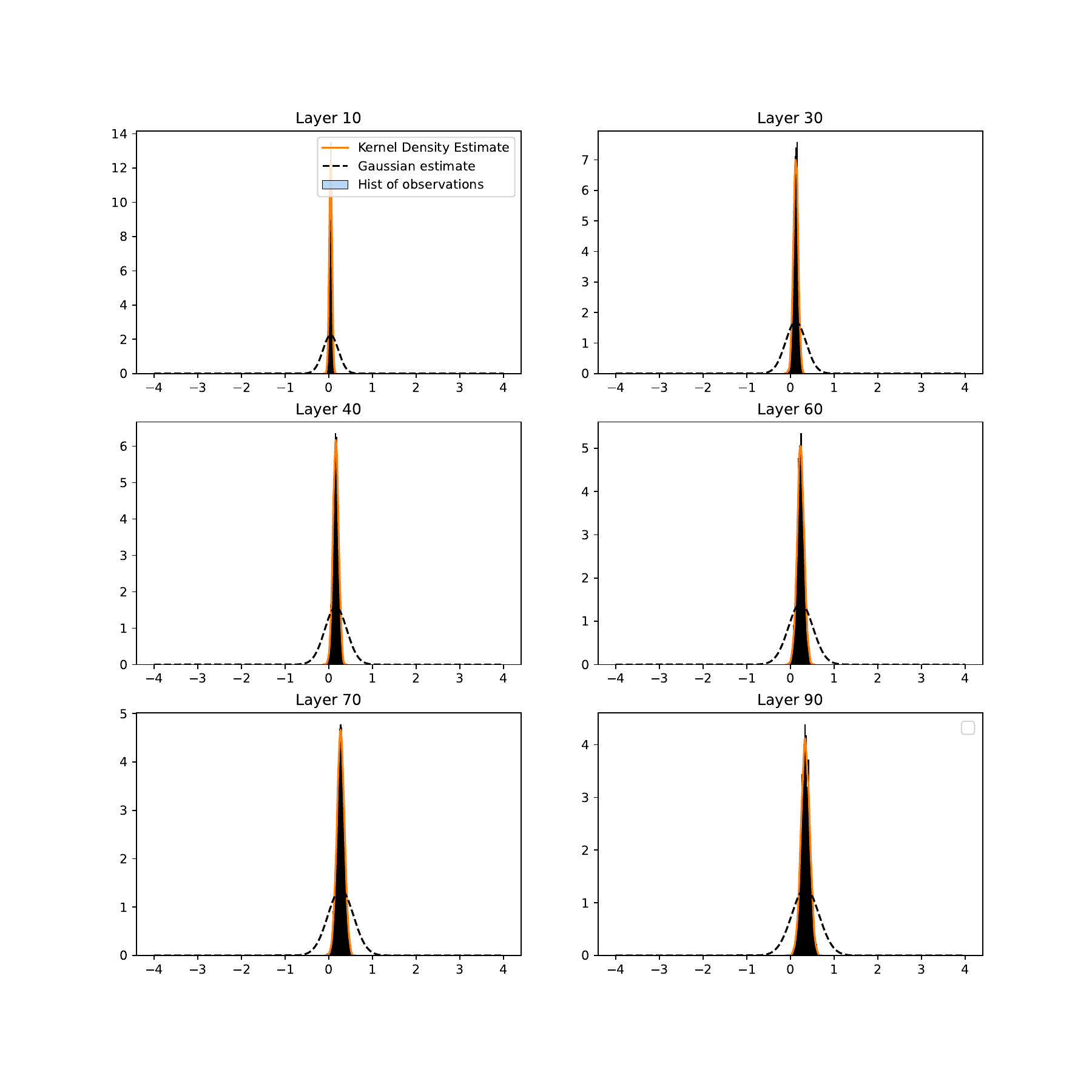}
    \caption{Distribution of $\log(\|\phi(Y_l)\|/ \|\phi(Y_0)\|)$ (\cref{eq:resnet}) for different layer indices, with depth $L=100$ and width $n = 100$.}
    \label{fig:hist_series_width100_depth100_ns}
\end{figure}

\end{document}